\newif\ifconfver
\newif\ifonecoltab
\newif\ifplainver  %declare a plain version
\definecolor{orange}{RGB}{255,107,0}
\def\black{\color{black}}
\def\red{\color{red}}
\newtheorem{Fact}{Fact}
\theoremstyle{definition}
\newtheorem{Assumption}{Assumption}
\algnewcommand{\algorithmicforeach}{\textbf{for each}}
\newcommand{\M}{\boldsymbol{M}}
\newcommand{\A}{\boldsymbol{A}}
\newcommand{\T}{{\!\top\!}}
\DeclareMathOperator*{\minimize}{\textrm{minimize}}
\newtheorem{theorem}{Theorem}
\newtheorem{lemma}{Lemma}
\newtheorem{proposition}{Proposition}
\newtheorem{corollary}{Corollary}
\newtheorem{definition}{Definition}
\newtheorem{remark}{Remark}
\newtheorem{proof}{Proof}
\pretocmd\@bibitem{\color{black}\csname keycolor#1\endcsname}{}{\fail}
\newcommand\citecolor[1]{\@namedef{keycolor#1}{\color{blue}}}
\begin{document}

	%--- I do things quite strangely here to accommodate three style modes.
	%--- input title and abstract here; it applies to all modes
	%--- it's too complex to do authors or they are input for each mode
	\newcommand{\papertitle}{
		%Robust Volume Minimization-Based Matrix Factorization for Remote Sensing and Document Clustering
		%A New Identifiability Result on Nonnegative Matrix Factorization
	\bf	Mixed Membership Graph Clustering via Systematic Edge Query
	}
	
	\newcommand{\paperabstract}{%
	This work considers clustering nodes of a largely incomplete graph. Under the problem setting, only a small amount of queries about the edges can be made, but the entire graph is not observable. This problem finds applications in large-scale data clustering using limited annotations, community detection under restricted survey resources, and graph topology inference under hidden/removed node interactions. {\black Prior works tackled this problem from various perspectives, e.g., convex programming-based low-rank matrix completion and active query-based clique finding.
	Nonetheless, many existing methods are designed for estimating the single-cluster membership of the nodes, but nodes may often have mixed (i.e., multi-cluster) membership in practice.
	Some query and computational paradigms, e.g.,
	the random query patterns and nuclear norm-based optimization  advocated in the convex approaches, may give rise to scalability and implementation challenges.}
	This work aims at learning mixed membership {\black of nodes using queried edges}. 
	{\black The proposed method is developed together with} {a systematic query principle that} can be controlled and adjusted by the system designers to accommodate implementation challenges---e.g., to avoid querying edges that are physically hard to acquire. 
	Our framework also features a lightweight and scalable algorithm with membership learning guarantees. 
	Real-data experiments on crowdclustering and community detection are used to showcase the effectiveness of our method.
	}
	
	%--------
	
	\ifplainver
	
	\date{\today}
	
	\title{\papertitle}
	
	\author{
		 Shahana Ibrahim and Xiao Fu
		\\ ~ \\
		School of Electrical Engineering and Computer Science\\ Oregon State University\\
		Corvallis, OR 97331, USA
		\\~
	}
	
	\maketitle

	\begin{abstract}
		\paperabstract
	\end{abstract}
	
	\else
	\title{\papertitle}
	
	\ifconfver \else {\linespread{1.1} \rm \fi
		
		\author{Shahana Ibrahim and Xiao Fu
			
			\thanks{
				%	Manuscript received December 30, 2017; revised June 22, 2018 and October
				%	1, 2018; accepted October 2, 2018. Date of publication October 29, 2018; date
				%	of current version December 5, 2018. The associate editor coordinating the
				%	review of this manuscript and approving it for publication was Dr. Athanasios
				%	A. Rontogiannis.
				%	
				
				%This work is supported in part by National Science Foundation under Project NSF ECCS-1608961, ECCS 1808159, and III-1910118, the Army Research Office (ARO) under Proejct ARO W911NF-19-1-0247, and by the Chinese University of Hong Kong under the CUHK Direct Grant 4055113.
				
				S. Ibrahim and X. Fu are with the School of Electrical Engineering and Computer Science, Oregon State University, Corvallis, OR 97331, USA. email (ibrahish, xiao.fu)@oregonstate.edu

				This work is supported in part by the National Science Foundation under Project NSF IIS-2007836 and NSF IIS-1910118, and the Army Research Office under Project ARO W911NF-19-1-0407.

			}
		}
		
		\maketitle
		
		\ifconfver \else
		\begin{center} \vspace*{-2\baselineskip}
			%11th Revision, \today \\[2\baselineskip]
		\end{center}
		\fi
		
		\begin{abstract}
			\paperabstract
		\end{abstract}
		
\begin{IEEEkeywords}
	Graph clustering, mixed membership, sampled edge query, nonnegative matrix factorization
\end{IEEEkeywords}
		
		\ifconfver \else \IEEEpeerreviewmaketitle} \fi
	
	\fi
	
	\ifconfver \else
	\ifplainver \else
	\newpage
	\fi \fi
	%---------------------------------------------------------------------------

	%---------------------------------------------------------------------------

%\begin{bibunit}[IEEEtran]

\section{Introduction}
{\black {\it Graph clustering} (GC)} is of broad interest in data science, which aims at associating the nodes of a graph with different clusters in an unsupervised manner \cite{schaeffer2007graph}.
{\black The GC technique}
is well-motivated, since network data frequently arise in various applications (e.g., in social network analysis, brain signal processing, and biological/ecological data mining). %\st{Graph clustering is also an important technique for nonlinear dimensionality reduction; see, e.g., the graph Laplacian technique widely used in spectral clustering}. %\sout{\cite{ng2002spectral}.}

% removed refs: (e.g., in social network analysis \cite{handcock2007model}, brain signal processing \cite{cordes2002hierarchical}, and biological/ecological data mining \cite{pav2011using, xu2002clusteringgene,seo2018PredictingLI})

{\black In the past two decades, much effort was invested onto dealing with the GC problem; see, e.g., \cite{airoldi2008mixed,kuang2012symmetric,van2008graph, saade2016clustering,schaub2020blind}.} {\black Nonetheless,} some major challenges arise in the era of big data. 
Notably, many graph data are highly incomplete for various reasons.
For example, {\black a large social network} follower-followee data could contain billions of nodes, which translates to $\approx 10^{18}$ edges. {\black Data} acquisition at such a scale is a highly nontrivial task. In many cases, instead of collecting edge information of the entire network, data analysts have to {\it sample} edges of interest, and use the sampled network to perform {\black GC} \cite{jure2006sampling,hu2013asurvey}.
Graph sampling (or, edge query)-based network analysis is also well-motivated {\black for other reasons}---e.g., in community detection of networks where edges are intentionally removed or hidden (e.g., terrorist networks or radical group networks) \cite{matthew2004sampling} and in biological/ecological networks where acquiring the complete edge information is too resource-consuming \cite{Malek2008using,gill2015surveysoybean}.

A number of works considered {\black GC} under incomplete edge observation. {\black For example, the works in  \cite{korlakai2014graph,korlakai2016crowdsourced,chen2014clustering} used uniformly random edge queries and proposed convex low-rank matrix completion-based algorithms for GC. Another line of work \cite{arya2017clustering,arya2016clustering, ramya2018graph} proposed active edge query-based methods in which the learning algorithms query the edges on-the-fly, in order to reduce the number of queries needed.}

{\black The approaches in \cite{korlakai2014graph,korlakai2016crowdsourced,chen2014clustering,arya2017clustering,arya2016clustering, ramya2018graph} are insightful, and showed that edge query-based GC is a largely viable task, in both theory and practice.
Nonetheless, a} number of challenges remain. {\black  First, the methods in \cite{korlakai2014graph,korlakai2016crowdsourced,chen2014clustering,arya2017clustering,arya2016clustering, ramya2018graph} were not designed to estimate multiple membership of the nodes.
However, each node is often associated with different clusters in real-world networks} (e.g., a person in a co-author network could belong to the machine learning and statistics communities simultaneously). 
{\black  Second, the theoretical guarantees in  \cite{korlakai2014graph,korlakai2016crowdsourced,chen2014clustering,arya2017clustering,arya2016clustering, ramya2018graph} were also established under single membership models---and oftentimes based on random query strategies (see, e.g., \cite{korlakai2014graph,korlakai2016crowdsourced,chen2014clustering}).
Note that in certain applications (e.g., field surveys based graph/network analysis \cite{sarndal1992model}), systematic edge query strategies may be more preferred. % are easier to be conducted in a systematic way compared to randomly surveying locations scattered geographically. 
Third, the existing methods are not always scalable. In particular, the convex optimization approaches in \cite{korlakai2014graph,korlakai2016crowdsourced,chen2014clustering}}
recast the edge query-based graph clustering problem as a nuclear norm minimization problem, which entails $N^2$ (where $N$ is the number of nodes) optimization variables---making it hard to {\black handle} real-world large-scale graphs.

% for the methods in \cite{korlakai2014graph,korlakai2016crowdsourced,chen2014clustering} were derived under the query pattern that samples edges uniformly at random. 
% It is desirable to come up with guaranteed edge query-based GC methods, since random query may not be easy to implement in some applications. For example, in field survey based network analysis spanning a large geographic area \cite{sarndal1992model},
% surveys are easier to be conducted within local communities, other than randomly scattered geographically. Third, ... }

% Second, the edges in existing works were queried randomly, which may not be easy to implement in some applications; e.g., in field survey based network analysis spanning a large geographic area \cite{sarndal1992model},
% surveys are easier to be conducted within local communities, other than randomly scattered geographically. Random query is also not suitable for handling networks with hidden or intentionally removed edges.

%\reminder{perhaps the change of narrative is not adding or changing one sentence can address?}

\smallskip

\noindent
{\bf Contributions.} 
In this work, we offer an alternative framework for learning the node membership from {\black an} incomplete graph.
Some notable features of our framework are as follows:

\noindent
$\bullet$ {\bf Systematic Edge Query Strategy.} Our algorithm design comes with {\black a} {\it systematic} edge query {\black scheme}, under which the node membership can be provably learned. As discussed, systematic query may be more preferable than random query strategies in a variety of applications.

\noindent
$\bullet$ {\bf {\black Guaranteed} Mixed Membership Identification.} 
Unlike existing provable edge-query based graph clustering methods in{\black \cite{korlakai2014graph,korlakai2016crowdsourced,chen2014clustering,arya2017clustering,arya2016clustering, ramya2018graph}} {\black whose goals are to} learn single membership of nodes belonging to disjoint clusters, we model the undirected adjacency graphs using a mixed membership model that allows nodes to be associated with multiple overlapping clusters.   
{Our model is reminiscent of the \textit{mixed membership stochastic block} (MMSB) model in overlapped community detection \cite{airoldi2008mixed}.}
Accordingly, we offer mixed membership identification guarantees using queried edges.
{\black Another contribution is that we derive a new sufficient condition for mixed membership identification. %Existing MMSB identification conditions in \cite{mao2017mixed,mao2020estimating,Panov2017consistent,huang2019detect} are often direct applications of generic nonnegative matrix factorization (NMF) identifiability theories and context of GC.
Compared to existing MMSB identifiability conditions, e.g., those in \cite{mao2017mixed,mao2020estimating,Panov2017consistent,huang2019detect},
our new condition is more closely tied to key characterizations of the GC problem, e.g., the Dirichlet prior of node membership, the graph size, and the cluster-cluster interaction intensity.}

%, which is arguably more intuitive in the context of GC.}

% \reminder{no need to add this much?}
% {\black The mixed membership model used in our work is reminiscent of some existing models in literature; for e.g., \textit{mixed membership stochastic block} (MMSB) model in overlapped community detection \cite{Airoldi2009mixed,airoldi2005latent} and \textit{latent Dirichlet allocation} (LDA) model in topic modeling \cite{blei2003latent}.}
% \reminder{for airold, we promised to cite the JMLR paper, not the nips and the 2005 paper.}

% \reminder{do we need to cite LDA and topic modeling? We said in the response we do not cite it.}

\noindent
$\bullet$ {\bf Scalable and Lightweight Algorithm.}
We propose a simple procedure that only consists of the truncated {\it singular value decomposition} (SVD) of small matrices and a Gram--Schmidt-type greedy algorithm for {\it simplex-structured matrix factorization} (SSMF). This procedure is {\black computationally very} economical, {\black especially when} compared to the convex optimization approaches in \cite{korlakai2014graph,korlakai2016crowdsourced,chen2014clustering,yi2012semi}. We also provide sample complexity and noise robustness analyses for the proposed framework.

\noindent
$\bullet$ {\bf Evaluation on Real-World Datasets.} We conduct {\black a series of experiments} on real-world datasets. First, we consider the query-based crowdclustering task \cite{korlakai2014graph,korlakai2016crowdsourced}, using annotator-labeled graph data to assist image clustering. The data used in our evaluation was uploaded to the \textit{Amazon Mechanical Turk} (AMT) platform and labeled by unknown human annotators. The acquired AMT data has been made publicly available\footnote{ \url{https://github.com/shahanaibrahimosu/mixed-membership-graph-clustering}}. 
Second, we also use co-authorship network datasets, {\it Digital Bibliography \& Library Project} (DBLP) and \textit{Microsoft Academic Graph} (MAG), to evaluate our method on community detection tasks.

\smallskip

An initial and limited version of this work was {\black published in the proceedings of} IEEE ICASSP 2021 \cite{ibrahim2021icassp}. In this journal version, we additionally include (1) detailed identifiability analysis of the proposed algorithm (Theorem~\ref{prop:subspace_estim} and its proof); (2) the integrated performance characterization of the SVD and SSMF stages (Theorem~\ref{prop:subspace_estim_noisy} and its proof); 
(3) a new application, namely, crowdclustering; (4) a series of new real-data experiments; (5) and a newly acquired dataset for crowdclustering that is made publicly available.

\smallskip

\noindent
{\bf Notation.}
$x,\bm x,\bm X$ denote a scalar, vector and matrix, respectively.  $\kappa(\bm X)$ denotes the condition number of the matrix $\bm X$ and is given by $\kappa(\bm X) = \frac{\sigma_{\max}(\bm X)}{\sigma_{\min}(\bm X)}$ where $\sigma_{\max}$ and $\sigma_{\min}$ are the largest and smallest nonzero singular values of $\bm X$, respectively.  $\bm X \ge \bm 0$ denotes that all the elements of $\bm X$ are nonnegative.  $\|\bm X\|_{2}$ represents the 2-norm of matrix $\bm X$, i.e., $\|\bm X\|_{2} = \sigma_{\max}(\bm X)$.  $\|\bm X\|_{\rm F}$ denotes the Frobenius norm of $\bm X$. {\black $\|\bm X\|_{\ast}$ denotes the nuclear norm of $\bm X$. ${\rm svd}_K(\bm X)$ denotes the top-$K$ singular value decomposition (SVD) of $\bm X$.} ${\sf range}(\bm X)$ denotes the subspace spanned by the columns of $\bm X$. $\bm X(i,j)$ and $x_{i,j}$  both denote the element of $\bm X$ from its $i$th row and $j$th column. 
$\bm X(:,j)$ and $\bm x_j$ both denote the $j$th column of $\bm X$. 
$\|\bm x\|_2$ and $\|\bm x\|_1$ denote $\ell_2$ and $\ell_1$ norms of vector $\bm x$, respectively.  $^\T$ and $^\dag$ denote transpose and pseudo-inverse, respectively. $|{\cal C}|$ denotes the cardinality of set ${\cal C}$. $\cup$ denotes the union operator for sets. { $[T]$ represents the set of positive integers up to $T$, i.e., $[T]=\{1,\ldots,T\}$. }
 ${\rm Diag}(\bm x)$ is a diagonal matrix that holds the entries of $\bm x$ as its diagonal elements.% $[I]$ denotes the set $\{1,\dots,I\}$ for an integer $I$. 

\section{Problem Statement}
Consider $N$ data {entities (e.g., persons in a social network or image data in a sample-sample similarity network)} that are from $K$ clusters. We allow a node to belong to multiple clusters; i.e., we consider the case where the clusters have overlaps and the nodes admits {\it mixed membership} {\black \cite{airoldi2008mixed}}.
Assume that the $n$th {entity} belongs to cluster $k$ with probability $m_{k,n}$, where
\begin{equation}\label{eq:softmember}
   \sum_{k=1}^K m_{k,n}=1,~m_{k,n}\geq 0.
\end{equation}
Then, the vector $\bm m_n =[m_{1,n},\ldots,m_{K,n}]^\T$ is referred to as the {\it membership vector} of data entity $n$, since it reflects the association of entity $n$ with different clusters. All such vectors together constitute the \textit{membership matrix} $\bm M = [\bm m_1,\ldots,\bm m_N] \in \mathbb{R}^{K \times N}$. %Note that we allow $m_{k,n}\in[0,1]$---and this setting corresponds to the so-called {\it soft clustering}.
%One could also restrain $m_{k,n}\in\{0,1\}$ and $\bm 1^\T\bm m_n=1$, which corresponds to the {\it hard clustering} setup (e.g., \texttt{$k$-means}).

In {\black GC}, the entities are the nodes of the graph, and the relationship between the nodes are represented as the edges of the graph. In this work, we consider undirected graphs that are represented as
symmetric adjacency matrices, i.e., $\bm A \in \{0,1\}^{N \times N}$, where each entry $\bm A(i,j)$ encodes the pairwise relationship between nodes $i$ and $j$ in terms of binary values (i.e., $\A(i,j)\in\{0,1\}$). %In addition, we consider the challenging case where only binary relationship is recorded in the adjacency matrix (i.e., $\A(i,j)\in\{0,1\}$).
Our goal is to learn the membership vector $\bm m_n$ for each {node from limited edge queries about the adjacency matrix $\bm A$ (i.e., the number of edges queried is much smaller than total number of edges, i.e., $N(N-1)/2$).}
%\subsection{Comparison-based Data Clustering}
%In this work, we collect the entries of an $N \times N$ symmetric similarity matrix $\bm A \in \mathbb{R}^{N \times N}$ by querying the edge information between the pairs of nodes. 

%If there is no data annotation effort involved, this task is considered classic unsupervised learning.
%Many algorithms exist for handling the problem, e.g., \texttt{$k$-means} \cite{linde1980algorithm}, latent Dirichlet allocation \cite{blei2003latent}, and spectral clustering \cite{ng2002spectral}.
%Unsupervised learning is considered more challenging relative to the supervised counterparts.
%However, supervised learning approaches are built upon (often a large amount of) data annotations that reveal the exact membership for each training sample, and accurate data labeling {\it per se} is not a trivial task.
%Such exact data annotation could be costly and hard, especially when the annotators are not experts and the data samples are from a relatively large number of categories.
\subsection{Motivating Examples}\label{sec:examples}
Our problem setting is motivated by a couple of important applications, namely, the crowdclustering problem and incomplete graph based overlapping community detection.

\noindent
$\bullet$~{\bf Example 1: Crowdclustering.}
\textit{Crowdclustering} is a technique that clusters data samples with the assistance from crowdsourced annotators \cite{korlakai2014graph,korlakai2016crowdsourced, ryan2011crowdclustering,yi2012semi,jinfeng2012crowdclustering,saade2016clustering}{\black \cite{ramya2018graph,arya2016clustering,arya2017clustering}}.
In crowdclustering, the data entities are presented to the annotators {in pairs}. The annotators determine if the pair of entities are from the same category or not; i.e., the annotators apply a rule such that $    \bm A(i,j) =  1$ if entities $i,j~(i<j)$ are believed to be from the same cluster, and $\A(i,j)=0$ otherwise.
%In the above, the annotator deems if two data entities are `similar'. 
Instead of asking the annotators to determine the exact membership of the $n$th data entity (i.e., the $\bm m_n$ vector), the above mentioned rule only asks the annotators to output binary labels (i.e., if two entities are similar or not). This annotation paradigm is arguably more accurate than exact membership labeling, since it can work under much less expertise. 

%If all the pairs are presented to the annotators, an $N\times N$ symmetric similarity matrix $\A\in\mathbb{R}^{N\times N}$ can be formed.

%\subsection{Motivating Example 2: Community Detection}

 %Community detection is widely applied in several domains, social network analysis, protein-protein interaction analysis, ecological network analysis and so on. The nodes of the underlying network belong to different communities and typically the nodes within the same community interact more whereas the nodes from the different communities interact less. Consider academic network analysis

{
Crowdclustering amounts to learning $\bm m_n$ from $\bm A$, which is a {\black GC} problem.
The challenge is that annotating the full adjacency graph {\black requires $O(N^2)$ pairs to be compared and labeled}---a heavy workload if large data sets are considered.
{\black Hence, edge query-based GC techniques are natural to handle the crowdclustering task. This way, only a subset of the $O(N^2)$ sample pairs needs to be compared and annotated.
}

}

%The two lines of work in \cite{korlakai2014graph,korlakai2016crowdsourced, ryan2011crowdclustering,yi2012semi,jinfeng2012crowdclustering,saade2016clustering} {\black and \cite{ramya2018graph,arya2016clustering,arya2017clustering} employed random and active edge query strategies to assist their GC algorithms, respectively.
%randomly choose the data pairs to annotate---i.e., the methods use random edge queries. 
%{\black On the other hand, the works in \cite{ramya2018graph,arya2016clustering,arya2017clustering} employ an adaptive (active) querying mechanism where the queries are not fixed upfront, but selected during the course of the algorithm in an interactive fashion. }
%However, all these methods are not designed to estimate mixed membership of the nodes.
%Moreover, some of the existing algorithms entail scalability challenges. For example, the nuclear norm-based convex optimization algorithms in  \cite{korlakai2014graph,korlakai2016crowdsourced, ryan2011crowdclustering,yi2012semi,jinfeng2012crowdclustering,saade2016clustering} are computationally costly.

%{\red reviewer 3 specifically pointed to this example 1 and said citations and discussions were missing. We also said we would address. }
%We aim to design an alternative querying strategy for the membership learning problem which is not completely random, but is systematic in pattern and is suited for many practical applications as we discussed.

\noindent
$\bullet$~{ \bf Example 2: Edge Sampling-Based Overlapping Community Detection.} {The mixed membership modeling in \eqref{eq:softmember} and the learning problem of interest can also be applied to {\it overlapping community detection} (OCD) \cite{xie2013overlapping}---since the mixed membership setting implies that the communities have overlaps. Classic OCD algorithms work with statistical generative models, e.g., the MMSB \cite{airoldi2008mixed,huang2019detect} {and the \textit{Bayesian nonnegative matrix factorization model}  \cite{ioannis2011overlapping}}. The mixed membership identifiability guarantees were established under full observation of $\A$ in {\black \cite{huang2019detect,Panov2017consistent,mao2017mixed, mao2020estimating }}. Mixed membership identification for OCD under sampled edges is of great interest for applications like field survey based community analysis \cite{sarndal1992model} or community detection in link-removed/hidden networks (e.g., terrorist networks) \cite{matthew2004sampling}.
In both cases, one may not be able to observe the entire $\A$ due to reasons such as resource limitations and difficulty of edge acquisition.
However, theoretical guarantees and provable algorithms for this problem have been elusive.}

\subsection{Prior Work}%: Random Edge Query, SBM and Convex Programming}
A theoretical challenge associated with our problem is as follows: Is it possible to learn the membership vector $\bm m_n$ from {the binary matrix} $\bm A$? This gives rise to a membership {\it identifiability} problem.
If $\bm A$ is fully observed,
answering the identifiability question amounts to understanding theoretical guarantees for similarity graph-based membership learning techniques, e.g., MMSB identification and spectral clustering---which has been thoroughly studied in the machine learning community; see, e.g., {\black \cite{Panov2017consistent,ng2002spectral,huang2019detect,mao2017mixed, mao2020estimating}}. However, when $\A$ is only partially observed, the identifiability problem has not been
fully understood.
%However, if one wishes to fully observe $\bm A$, then $N(N-1)/2$ pairs have to be labeled. In modern machine learning and data analytics problems, $N$ could easily reach the level of millions, and thus labeling $O(N^2)$ pairs quickly becomes unapproachable when $N$ grows. 

To handle the {\black GC} problem with partial observations, the work in \cite{korlakai2014graph,korlakai2016crowdsourced} models the generating process of $\bm A$ using the SBM followed by a random edge query stage for data acquisition. 
The SBM can be summarized as follows. Assume that $\bm B\in\mathbb{R}^{K\times K}$ represents a cluster-cluster similarity matrix which is symmetric and $\bm B(p,q)$ represents the probability that cluster $p$ is connected with cluster $q$. In addition, assume that the nodes are connected through their membership identities. Then, the probability that $\bm A(i,j)=1$ is $\bm P(i,j)=\bm m_i^\T\bm B\bm m_j$, i.e., $    \bm A(i,j) \sim {\sf Bernoulli}\left(  \bm m_i^\T\bm B\bm m_j \right)$ where $i<j$---the adjacency matrix $\bm A$ is sampled from Bernoulli distributions specified by the entries of the matrix $\bm P=\bm M^\T\bm B\bm M$. Note that under SBM, the membership vector $\bm m_i$ is always a unit vector; i.e., only single membership and disjoint clusters are considered under SBM.

To recover $\bm M$ from partially observed $\A$ under SBM, the work in  \cite{Samet2011finding,chen2014clustering,korlakai2014graph,korlakai2016crowdsourced} use the following convex program and its variants:
    \begin{align}\label{eq:convex}
        \minimize_{\bm L\in {\black [0,1]^{N\times N}},\bm S}&~\|\bm L\|_{\ast} +\lambda\|\bm S\|_1  \\
        {\rm subject~to}&~{\bm L}(i,j) + \bm S(i,j) = \bm A(i,j),~\forall (i,j)\in \bm \varOmega.  \nonumber
    \end{align}
In the above, $\bm L$ is used to model the low-rank component of $\bm A$, $\bm S$ is a sparse error matrix, and $\bm \varOmega$ denotes the set of pairs $(i,j)$ such that the corresponding $\bm A(i,j)$'s are observed.
Notably, it was shown in \cite{korlakai2014graph,korlakai2016crowdsourced,chen2014clustering} that if $\bm A$ follows the SBM, then the above convex program (and its variants) recovers $\bm A$ up to certain errors with provable guarantees using {\it random} edge queries.
The results from \cite{korlakai2014graph,korlakai2016crowdsourced,chen2014clustering} have opened many doors for edge query-based graph clustering, but several caveats exist. {In particular, the recoverability is established based on single membership models, but the mixed membership case is of more interest. In addition, the optimization problem involves $O(N^2)$ variables (memory$\approx 3,000$GB when $N=10^6$), and thus is hardly scalable.} {\black Also}, the provable guarantees in \cite{korlakai2014graph,korlakai2016crowdsourced,chen2014clustering} rely on random {edge} query, which may not be easy to implement under some scenarios, as we discussed. 

{\black Query-based GC was also studied in another line of work \cite{arya2017clustering,arya2016clustering, ramya2018graph}. 
There,
the focus is developing active/online edge query-based methods that can discover the clusters on-the-fly, while
attaining certain information-theoretic query lower bounds. 
Unlike the methods in \cite{korlakai2014graph,korlakai2016crowdsourced,chen2014clustering}
that take a latent model identification perspective,
the algorithms in \cite{arya2017clustering,arya2016clustering, ramya2018graph} are more from a successive clique (i.e., fully connected subgraph) finding perspective (although some connections to the SBM were also made). Similar to the convex approaches, the methods in \cite{arya2017clustering,arya2016clustering, ramya2018graph} are designed to learn single membership.
}

%Hence, our work considers systematic edge query patterns for the adjacency graph.

%First, the recoverability analysis only holds under SBM, which does not cover the soft clustering problem. 

%Second, solving the convex optimization problem is not necessarily easy: when $N$ is large, the memory cost is $O(N^2)$ (e.g., when $N=10^6$, then instantiating $\bm L$ and $\bm S$ costs $\approx 3,000$ GB, which means this route is hardly scalable).

%Systematic or deterministic patterns have been explored in the context of low-rank matrix completion in \cite{pimentel2015char} where deterministic sampling conditions for provably recovering a low rank real-valued matrix are established.  The systematic query pattern that we are going to explore in this work carries some resemblance with their deterministic sampling conditions, however, we tailor the pattern to suit the graph clustering problem and additionally provide a scalable algorithm accompanied with noise characterization. %Also, they do not present an algorithm or characterization 

\section{Proposed Approach}
%\reminder{Fig. 1 is great, but what do we say about it ... what kind of application are we talking about. 

%Can we say that all the patterns are equivalent to the rightmost if we permute 

%I think we only need to shade the upper triangle due to symmetry?
%}

%In this section, we propose an alternative framework for graph based clustering under limited number of edge queries. 
%Unlike the prior art discussed {in the previous section}, our framework admits mixed membership identification guarantees under
%systematic query patterns. The formulated clustering problem can also be handled by a simple and lightweight SVD-based procedure followed by a greedy algorithm.

In this work, we relax the SBM assumption on $\bm P=\bm M^\T\bm B\bm M$ by allowing the nodes to have mixed membership, {\black as advocated in  \cite{airoldi2008mixed,anandkumar2014tensor,huang2019detect}}. Hence, the constraints on $\bm M$ becomes
\begin{equation}\label{eq:MMSB_M}
\bm 1^\T\bm M = \bm 1^\T,\quad \bm M\geq \bm 0;
\end{equation}
i.e., $\bm m_n$ resides in the probability simplex, instead of being the vertices of the simplex as in the SBM.
Under the mixed membership assumption in \eqref{eq:MMSB_M}, the following Bernoulli model, i.e.,
\begin{equation}\label{eq:A_gen}
\begin{aligned}
     \begin{cases}
&\bm A(i,j) \sim {\sf Bernoulli}\left(  \bm m_i^\T\bm B\bm m_j \right),~i < j,\\
    &\bm A(i,j)=\bm A(j,i), ~i>j,\\
    &\bm A(i,i)=0,~i\in[N],
    \end{cases}    
\end{aligned}
\end{equation}
{is adopted in our generative model for the complete adjacency matrix. Note that $ \bm m_i^\T\bm B\bm m_j$ physically corresponds to the probability that nodes $i$ and $j$ admit an edge (i.e., ${\sf Pr}(\A(i,j)=1)= \bm m_i^\T\bm B\bm m_j$), and thus this Bernoulli model makes intuitive sense; {\black see more discussions in  \cite{airoldi2008mixed,anandkumar2014tensor,huang2019detect}}.}
%Overall, \eqref{eq:MMSB_M} and \eqref{eq:A_gen} present a model that is {reminiscent} of {the} MMSB model in OCD {\cite{airoldi2008mixed}

\subsection{Systematic Edge Query}

%Technically, the original MMSB in \cite{airoldi2008mixed,Airoldi2009mixed} assumes that all the $\bm m_n$'s are drawn from a Dirichlet distribution. 
%Here, we do not impose the Dirichlet assumption---and thus our model in this work is slightly more general. %However, we keep the terminology MMSB for narrative simplicity.

Our goal is to learn $\bm M$ from {controlled} edge sampling strategies. {To proceed}, 
we first divide the nodes into $L$ disjoint groups $\mathcal{S}_{1}, \mathcal{S}_{2}, \dots, \mathcal{S}_{L}$ such that $\mathcal{S}_{1} \cup \dots \cup \mathcal{S}_{L} = [N]$.
Let $\bm A_{\ell,m}\in\mathbb{R}^{|\mathcal{S}_{\ell}| \times |\mathcal{S}_{m}|}$ denote the adjacency matrix between groups of nodes indexed by $\mathcal{S}_{\ell}$ and $\mathcal{S}_{m}$. 
We propose an edge query principle as described in the following:

%\vspace{-.2cm}

\begin{mdframed}
{\bf Edge Query Principle (EQP).}
Query the blocks of edges such that the sampled $\A_{\ell,m}$'s satisfy the following two conditions:\\
$\bullet$  For every $\ell\in[L]$, 
$    K\leq |\mathcal{S}_{\ell}| $ holds. \\
$\bullet$ Let $m_r \in [L]$ and $\{\ell_r\}_{r=1}^L = [L]$.   For every $\ell_r$, there exists a pair of indices $m_r$ and $\ell_{r+1}$ where ${\ell_{r+1}} \neq {\ell_r}$ such that the edges from the blocks $\bm A_{\ell_r,m_r}$ and $\bm A_{\ell_{r+1},m_r}$   are queried.
\end{mdframed} 

\vspace{.25cm}

Note that since $\bm A$ is symmetric, when we  select a block $\bm A_{\ell,m}$ to be queried, it also covers $\bm A_{m,\ell}$ since $\bm A_{m,\ell} = \bm A_{\ell,m}^{\top}$. %and thus the edge query patterns under EQP are also symmetric.
A couple of remarks are in order: 

\begin{remark}
The proposed EQP covers a large variety of query `masks'---as shown in Fig.~\ref{fig:query}. Since the query pattern can be {\it by design} instead of random, this entails the flexibility to avoid querying edges that are known {\it a priori} hard to acquire, e.g., edges that may have been intentionally removed to conceal information or edges that correspond to interactions between groups that are hard to survey due to various reasons, e.g., long geographical distance. 
\end{remark}

\begin{remark}
Instead of sampling individual edges, we sample {\it blocks} of edges under the proposed EQP. 
As one will see, this simple block query pattern allows us to design a provable and lightweight algorithm for mixed membership learning.
{\black The proposed} query strategy can be easily implemented when the query patterns are controlled by the network analysts. For example, in crowdclustering, one can dispatch blocks of sample pairs for similarity annotation. In community analysis, field surveys can be designed using EQP such that the blocks which are easier to be queried are selected. 
%{\red did not get why this remark was changed. What did we hope to address? Let us discuss. Saying query complexity and ``efficient'' query here is vague and not well defined. Do not recommend. If we hope to address the necessity, we could add some remark after the theorems.}
% Instead of sampling individual edges, we sample {\it blocks} of edges under the proposed EQP. 
% {\black This simple block query pattern allows us to design a provable and lightweight polynomial time algorithm for mixed membership learning, as one will see in the following sections.}

% This block query strategy can be easily implemented when the query patterns are controlled by the network analysts. For example, in crowdclustering, one can dispatch blocks of sample pairs for similarity annotation. In community analysis, field surveys can be designed using EQP such that the blocks which are easier to be queried are selected.% under the designed query patterns.
\end{remark}

%In what follows, we will show that this simple yet flexible query pattern allows us to design a provable and lightweight algorithm to estimate $\bm M$ from the limited sampled edges. In a nutshell, under the designed EQP, one can show that the range space of $\A$ can be reliably estimated under reasonable conditions. As will be seen, ${\sf range}(\A)\approx {\sf range}(\bm P)= {\sf range}(\bm M)$. Then, leveraging nonnegativity, $\bm M$ can be `extracted' from the estimated ${\sf range}(\A)$ using structured matrix factorization (SMF) techniques.

 \begin{figure}
	\centering
	\includegraphics[width=0.32\linewidth]{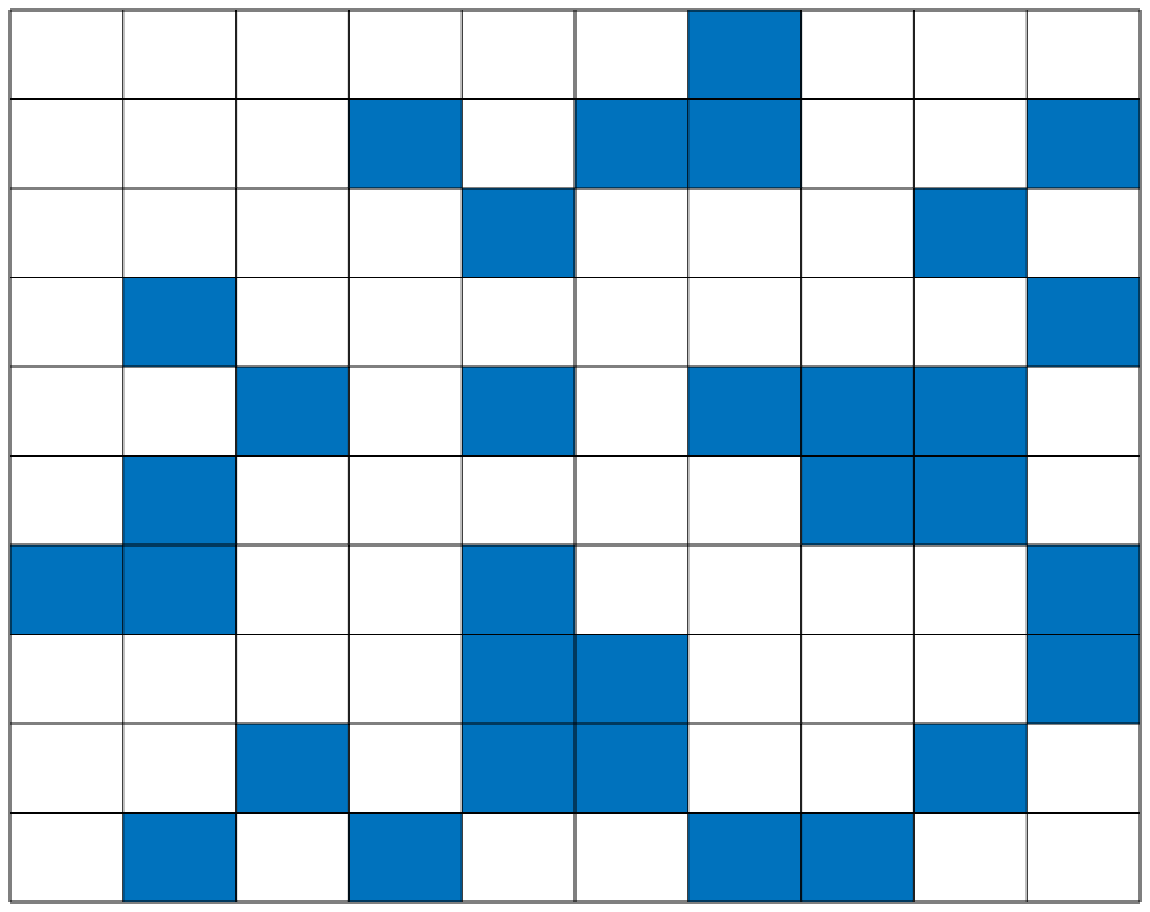}
		\includegraphics[width=0.32\linewidth]{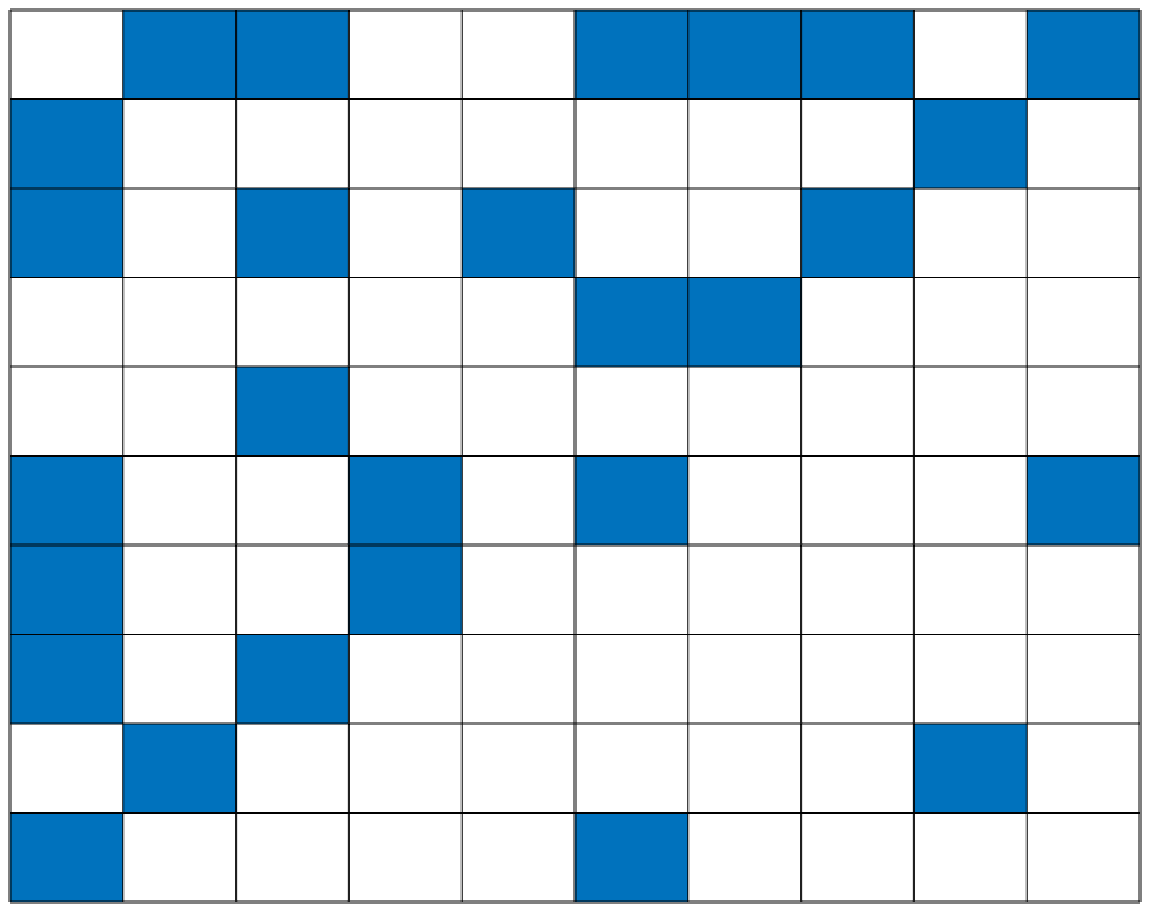}
		\includegraphics[width=0.32\linewidth]{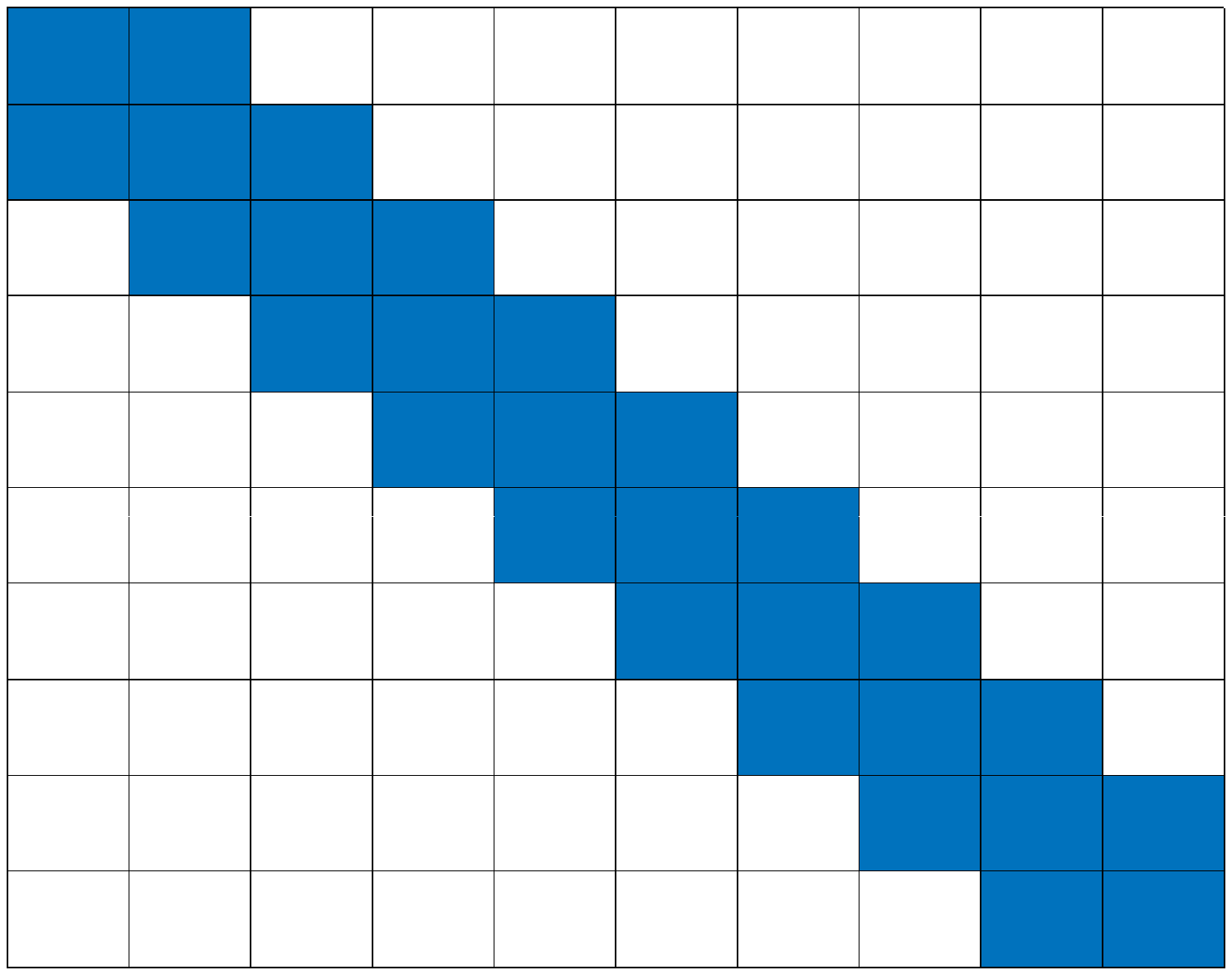}
	\caption{Some example patterns for $\bm A$ following EQP with $N=1000, K=5$ and $L=10$. The shaded blue region represents the blocks queried.}
	\label{fig:query}
	\vspace{-.5cm}
\end{figure}

\subsection{Connections to Matrix Completion}
From a matrix completion viewpoint, our EQP is most closely related to those in \cite{franz2015algebraicjmlr,pimentel2015char} that consider recoverable sampling patterns of matrix completion (MC). The work in \cite{franz2015algebraicjmlr} relates the recoverability of a low-rank matrix to the structural properties of a bipartite graph by considering the rows and columns as nodes. It establishes the ``finite" (not unique) completability of the matrix based on combinatorial conditions on the observed edges in its subgraphs. In a similar spirit, the work in \cite{pimentel2015char} shows that the range space of a low-rank matrix can be uniquely identified under the existence of certain patterns formed by the sampled entries.

From an MC perspective, our block sampling pattern can be considered as special cases covered by those in \cite{franz2015algebraicjmlr,pimentel2015char}---with important distinguishing features. First, the work in \cite{franz2015algebraicjmlr,pimentel2015char} established recoverability under the sampling pattern, but had no tractable algorithms. Our EQP features a polynomial-time lightweight algorithm via exploiting the special block sampling pattern. Second, the recoverability results in \cite{franz2015algebraicjmlr,pimentel2015char} only cover sampled continuous real-valued low-rank matrices without noise. Our approach can provably estimate the range space of binary matrices (or, highly noisy low-rank matrices) under reasonable conditions.

\subsection{Algorithm Design}
In this section, we propose an algorithm under the EQP.
Specifically, under the considered model in \eqref{eq:A_gen}.
we develop an algorithm that consists of simple SVD operations to estimate ${\sf range}(\bm M^\T)$ and a subsequent SSMF stage to `extract' $\bm M$ from the estimated range space.

\subsubsection{Main Idea -- A Toy Example} \label{sec:stitch}
To shed some light on how our algorithm approximately identifies ${\sf range}(\M^\T)$, let us consider the ideal case where $\bm A_{\ell,m}=\bm P_{\ell,m} = \bm M_{\ell}^{\top}\bm B \bm M_{m}$.  We start by analyzing a toy example with $L=3$ and $K\leq N/3$; see Fig.~\ref{fig:pattern_small}. We assume that the blocks $\bm P_{1,2}\in\mathbb{R}^{N/3\times N/3}$, $\bm P_{2,2}\in\mathbb{R}^{N/3\times N/3}$, and $\bm P_{3,1}\in\mathbb{R}^{N/3\times N/3}$ are queried. Hence, by symmetry, the following blocks are known: 
\begin{align} \label{eq:Ps}
\bm P_{1,2}&= \M_{1}^{\top}{\bm B}\M_{2}, ~
\bm P_{2,2}= \M_{2}^{\top}{\bm B}\M_{2},\\\bm P_{2,1}&= \M_{2}^{\top}{\bm B}\M_{1},~
\bm P_{3,1}= \M_{3}^{\top}{\bm B}\M_{1},
\end{align}

%\reminder{Let us replace this part (the red part) with a very simple example. Perhaps 4 by 4 or 3 by 3 case? Draw a figure for this case to illustrate our subspace `stitching idea'. 
%Use exactly the same proof is okay. But connect this to a figure.
%Do not fear that this would `trivialize' our effort. Being clear and insightful is the right thing to do.}

\color{black}
Define 
$
\bm C_1 := [\bm P_{1,2}^{\top}~,~ \bm P_{2,2}^{\top}]^{\top}$ and $\bm C_2 := [\bm P_{2,1}^{\top}~,~ \bm P_{3,1}^{\top}]^{\top}$.
The top-$K$ SVD of $\bm C_1$ and $\bm C_2$ can be represented as follows:
\begin{align} \label{eq:CD}
\bm C_1 = [\bm U_1^{\top}, \bm U_{2}^{\top}]^{\top}\bm \Sigma~{\bm V_2^{\top}},~\bm C_2 = [\overline{\bm U}_{2}^{\top}, \overline{\bm U}_{3}^{\top}]^{\top} \overline{\bm\Sigma}~\overline{\bm V}_1^{\top}.
\end{align}
Combining \eqref{eq:Ps}-\eqref{eq:CD}, and under the assumption that ${\rm rank}(\bm M_\ell)={\rm rank}(\bm B)=K$ and $K\leq |\mathcal{S}_\ell|$, for all $\ell$,
one can express the bases of ${\sf range}(\bm M_{1}^\T)$, ${\sf range}(\bm M_{2}^\T)$ and ${\sf range}(\bm M_{3}^\T)$ as
$
\bm U_1 = \bm M_1^{\top}\bm G$, $\bm U_{2} = \bm M_{2}^{\top}\bm G$,
$\overline{\bm U}_{3} = \bm M_{3}^{\top} \overline{\bm G}$, respectively,
where $\bm G\in \mathbb{R}^{K\times K}$ and $\overline{\bm G} \in \mathbb{R}^{K \times K}$ are certain nonsingular matrices.
Our hope is to ``stitch'' the bases above to have 
\begin{align}
    {\sf range}(\bm U) &= {\sf range}([\bm U_{1}^{\T},\bm U_{2}^{\T},\bm U_{3}^{\T}]^{\top})\nonumber\\
    &= {\sf range}([\bm M_{1},\bm M_{2},\bm M_{3}]^\T),\label{eq:goal_ideal}
\end{align}
with $\bm U_1$ and $\bm U_2$ in \eqref{eq:CD} and a certain $\bm U_3$.
Note that $\overline{\bm U}_{3}$ cannot be directly combined with $\bm U_{1}$ and $\bm U_{2}$ to attain the above, since $\bm G = \overline{\bm G}$ does not generally hold.
To fix this, we define the following operation:
%From the above, we can write
%\begin{align*}
%\bm U_2 = \bm V_2 \bm %\Phi^{\top}\bm \Theta\\
%\implies\bm \Phi^{\top}\bm \Theta %=  \bm V_2^{\dagger} \bm U_2
%\end{align*}
$\bm U_{3}:=  \overline{\bm U}_{3} \overline{\bm U}_{2}^{\dagger} \bm U_{2} .$
It is not hard to see that
$$
   \overline{\bm U}_{3} \overline{\bm U}_{2}^{\dagger} \bm U_{2} =  \bm M_{3}^{\top} \overline{\bm G} \times   \left(\bm M_{2}^{\top} \overline{\bm G} \right)^{\dagger}  \times \bm M_{2}^{\top}\bm G=   \bm M_{3}^{\top} \bm G,
$$
which leads to \eqref{eq:goal_ideal}. 
\color{black}
{The idea conveyed by this simple example can be recursively applied to cover a general $L$ block case under the proposed EQP, and the range space estimation accuracy can be guaranteed even when $\A$ is a noisy (binary) version of $\bm P$---which will be detailed in the next part. }
After $\bm U^\T=\bm G^\T\bm M$ is obtained, extracting $\bm M$ from $\bm U^\T$ is the so-called SSMF problem, which has been extensively studied \cite{fu2018nonnegative}; see {\black a brief discussion in Sec.~\ref{sec:ssmf} and the supplementary material (Sec.~\ref{sup:spa})}.

 \begin{figure}[t]
    \centering
    \includegraphics[scale=0.25]{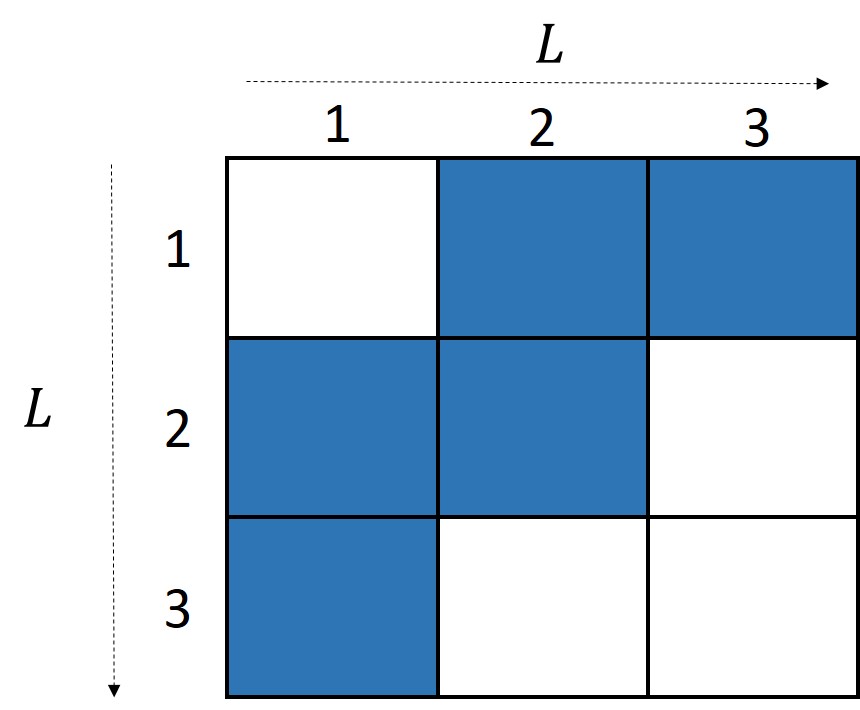}
    \caption{\black An illustrative case for subspace identifiability analysis. The shaded blue region represents the blocks queried.}
\label{fig:pattern_small}
\end{figure}

%\end{minipage}
%\end{wrapfigure}

\subsubsection{Proposed Algorithm} \label{sec:alg}
In a similar spirit as alluded in the toy example,
we propose Algorithm~\ref{algo:proposed} (which is referred to as the {\it \underline{b}lock \underline{e}dge \underline{que}ry-based graph \underline{c}lustering} (\texttt{BeQuec}) algorithm).
One can see that the \texttt{BeQuec} algorithm only consists of top-$K$ SVD and least squares, which can be applied to very large graphs as long as $K$ is of a moderate size. The SSMF stage (carried out by the {\it successive projection algorithm} (\texttt{SPA}) {\black\cite{Gillis2012}}) is a Gram--Schmidt type algorithm that is also scalable (see Algorithm~\ref{algo:SPA} {\black in the supplementary material in Sec. \ref{sup:spa}}).
Note that in Algorithm~\ref{algo:proposed}, $\bm U_{\rm ref1}$ and $\bm U_{\rm ref2}$ are temporary variables defined to pass the subspace estimated in the current iteration to the subsequent iteration.
We further explain the two major stages of \texttt{BeQuec}, namely, the range space estimation stage and the membership extraction stage as follows:

% \reminder{In Algorithm 1, Let us get rid of ``estimate'', ``define'', etc. All these make the algorithm description very dense}

% \reminder{We promised some ``comments''. There are ways to add comments, see, e.g, \url{https://stackoverflow.com/questions/52771732/how-do-i-add-comments-on-an-algorithm-environment-in-latex}}

%\begin{center}
%\begin{minipage}{.85\linewidth}
\begin{algorithm}[t]
\black 
\caption{\small \texttt{Proposed Algorithm: BeQuec}}
\label{algo:proposed}
\begin{algorithmic}[1]
\footnotesize
	\Require $L$, $K$, $\{\bm A_{\ell_r,m_r}\}_{r=1}^L$, $\{\bm A_{\ell_{r+1},m_{r}}\}_{r=1}^{L-1}$; % (where $\ell_r \neq \ell_{r+1}$, $\{\ell_r\}_{r=1}^L = [L]$, $m_r \in [L]$, $K\leq |\mathcal{S}_{\ell_r}|$) 
	
	%\STATE {\bfseries Output:} Estimated membership matrix $\widehat{\bm M}$.

%\STATE divide the blocks as $\{\bm A_{\ell_r,m_r}\}_{r=1}^L$, $\{\bm A_{\ell_{r+1},m_{r}}\}_{r=1}^{L-1}$ (where $\ell_r \neq \ell_{r+1}$, $\{\ell_r\}_{r=1}^L = [L]$, $m_r \in [L]$);

\State  $ T \leftarrow  \left\lfloor L/2 \right\rfloor$; \label{line:Tinit}

\State $\bm C_{T} \leftarrow [\bm A_{\ell_{T},m_{T}}^{\top}~,~\bm A_{\ell_{T+1},m_{T}}^{\top}]^{\top}$; \label{algoline:Ct}

 \State  $ [{\bm U}_{\ell_T}^{\top}, {\bm U}_{\ell_{T+1}}^{\top}]^{\top}\bm \Sigma_T{\bm V}_{m_T}^{\top}\leftarrow {\rm svd}_K(\bm C_{T})$;\label{algoline:Ut}
 
 \State $[{\bm U}_{\rm ref2}^{\top}, {\bm U}_{\rm ref1}^{\top}]^{\top} \leftarrow  [{\bm U}_{\ell_T}^{\top}, {\bm U}_{\ell_{T+1}}^{\top}]^{\top}$;\algorithmiccomment{\parbox[t]{0.33\linewidth}{Init. temp. variables} }

	\ForEach{$r=T+1:1:L-1$}  \label{algoline:iterative1_begin}\algorithmiccomment{\parbox[t]{0.33\linewidth}{Est. $\{{\bm U}_{\ell_{r}}\}_{r=T+2}^{L}$} }

 \State ${\bm U}_{\ell_{r+1}} \leftarrow $  \texttt{PairStitch}$(\bm A_{\ell_r,m_r}, \bm A_{\ell_{r+1},m_r},\bm U_{\rm ref1})$ 
 
 \State $\bm U_{\rm ref1} \leftarrow {\bm U}_{\ell_{r+1}}$;

 \EndForEach   \label{algoline:iterative1_end}
  %\STATE $\bm U_{\rm ref} \leftarrow\bm U_{\ell_T}$; 

	\ForEach{$r=T:-1:2$}\label{algoline:iterative2_begin}\algorithmiccomment{\parbox[t]{0.33\linewidth}{Est. $\{{\bm U}_{\ell_{r}}\}_{r=1}^{T-1}$} }
 %\STATE ${\bm C}_r \leftarrow [\bm A_{\ell_{r-1},m_{r-1}}^{\top}~,~ \bm A_{\ell_{r},m_{r-1}}^{\top}]^{\top}$;
 
 %\STATE $ [\overline{\bm U}_{\ell_{r-1}}^{\top}, \overline{\bm U}_{\ell_{r}}^{\top}]^{\top}\bm \Sigma_r\bm V_{m_{r-1}}^{\top}\leftarrow{\rm svd}_K(\bm C_r)$;
 
 %\STATE ${\bm U}_{\ell_{r-1}} \leftarrow  \overline{\bm U}_{\ell_{r-1}} \overline{\bm U}_{\ell_r}^{\dagger}\bm U_{\rm ref2}$;
  \State ${\bm U}_{\ell_{r-1}} \leftarrow$  \texttt{PairStitch}$(\bm A_{\ell_{r-1},m_{r-1}}, \bm A_{\ell_{r},m_{r-1}},\bm U_{\rm ref2})$

 \State $\bm U_{\rm ref2} \leftarrow {\bm U}_{\ell_{r-1}}$;
 
\EndForEach \label{algoline:iterative2_end}

 \State  $\widehat{\bm U} \leftarrow \begin{bmatrix} \bm U_1^\T,\dots, \bm U_L^\T\end{bmatrix}^\T$; \label{line:Uestim}
 
\State  $\widehat{\bm G} \leftarrow$ {\texttt {SPA}}($\widehat{\bm U} $, $K$) \cite{Gillis2012}; \label{line:Gestim}\algorithmiccomment{\parbox[t]{0.33\linewidth}{Est. $\bm G$ under \eqref{eq:SMF}}}

  \State 
  $\widehat{\M}\leftarrow \widehat{\bm G}^{-\T}\widehat{\bm U}^{\top}$ (or constrained least squares);\label{line:Mestim}
  
%which outputs index set $\Delta=\{l_1,\dots, l_K\}$;
% $\widehat{\bm E} \leftarrow \widehat{\bm U}(\Delta,:)$;
% $\widehat{\bm M} \leftarrow \arg\min_{\bm M \ge 0, \bm 1^{\top}\bm M= \bm 1^{\top}}~\|\widehat{\bm U}-\bm M^{\top} \widehat{\bm E}\|_{\rm F}^2$;
 \State \Return  $\widehat{\bm M}$
\end{algorithmic}

\end{algorithm}
{\black 
\begin{algorithm}[t]
\black
\caption{\small \texttt{PairStitch}}
\label{algo:subspace_stitiching}
\begin{algorithmic}[1]
\footnotesize
%\Function{PairStitch}{$\bm A_{\ell,m}$,$\bm A_{\ell',m}$, $\bm U_{\rm ref}$}
	\Require  $\bm A_{\ell,m}$, $\bm A_{\ell',m}$,  $\bm U_{\rm ref}$
	%\STATE {\bfseries Output:} Estimated membership matrix $\widehat{\bm M}$.

\State  ${\bm C} \leftarrow [\bm A_{\ell,m}^{\top}~,~ \bm A_{\ell',m}^{\top}]^{\top}$;
 
 \State 
 $ [\overline{\bm U}_{\ell}^{\top}, \overline{\bm U}_{\ell'}^{\top}]^{\top}\bm \Sigma\bm V_{m}^{\top}\leftarrow{\rm svd}_K(\bm C)$;
 
 \State  ${\bm U}_{\ell} \leftarrow  \overline{\bm U}_{\ell} \overline{\bm U}_{\ell'}^{\dagger}\bm U_{\rm ref}$;\algorithmiccomment{``{Stitch}'' ${\bm U}_{\ell}$ and $\bm U_{\rm ref}$ }

%which outputs index set $\Delta=\{l_1,\dots, l_K\}$;

% $\widehat{\bm E} \leftarrow \widehat{\bm U}(\Delta,:)$;
 
% $\widehat{\bm M} \leftarrow \arg\min_{\bm M \ge 0, \bm 1^{\top}\bm M= \bm 1^{\top}}~\|\widehat{\bm U}-\bm M^{\top} \widehat{\bm E}\|_{\rm F}^2$;
 
 \State \Return  ${\bm U}_{\ell}$.
\end{algorithmic}

\end{algorithm}
}

%\end{minipage}
    
%\end{center}

\subsection{\black Key Ingredients of \texttt{BeQuec}}
\subsubsection{\black Range Space Estimation  (Lines \ref{line:Tinit}-{\black \ref{line:Uestim}})} 
In a nutshell, \texttt{BeQuec} uses the idea from the toy example in an iterative fashion on the queried blocks $\bm A_{\ell_r,m_r}$ and $\bm A_{\ell_{r+1},m_{r}}$ for $r=1,\dots,L-1$---to estimate $\bm U^\T=\bm G^\T\bm M$, i.e., the range space of $\bm M^\T$. We start the iterations from $r=\left\lfloor L/2 \right\rfloor$ and perform the subspace stitching of the blocks in the ascending and descending orders, respectively---which helps reduce the subspace estimation error from the overall procedure, as will be seen in the analysis of Theorem~\ref{prop:subspace_estim_noisy}.

\subsubsection{ SSMF for Membership Extraction (Lines {\black \ref{line:Gestim}-\ref{line:Mestim}})} \label{sec:ssmf}
If $\bm U$ is perfectly estimated, the second stage estimates $\bm M$ from the following SSMF model:
\begin{equation}\label{eq:SMF}
    \bm U^\T = \bm G^\T \bm M,~\bm M\geq \bm 0,~\bm 1^\T\bm M =\bm 1^\T,
\end{equation}
where $\bm G\in \mathbb{R}^{K\times K}$ is nonsingular.
The SSMF problem is the cornerstone of a number of core tasks in machine learning and signal processing, e.g., hyperspectral unmixing and topic modeling, which admits a plethora of algorithms for provably estimating $\bm M$ under certain conditions \cite{fu2018nonnegative}. To be precise, line \ref{line:Gestim} of Algorithm~\ref{algo:proposed} invokes an SSMF algorithm, namely, \texttt{SPA} \cite{Gillis2012}. {\black
Under the model in \eqref{eq:SMF},
\texttt{SPA} guarantees the identifiability of $\bm M$ utilizing the so-called {\it anchor node assumption} (ANC). Here, ``identifiability'' of $\bm M$ means that learning $\bm M$ up to a row permutation ambiguity can be guaranteed; see \cite{MVES,fu2018identifiability,huang2014non}. 

In essence, the ANC is identical to the so-called \textit{separability} condition \cite{donoho2003does} from the SSMF and nonnegative matrix factorization (NMF) literature:
\begin{definition}(ANC/Separability) \cite{donoho2003does,ibrahim2019crowdsourcing,Panov2017consistent} \label{def:sep}
The nonnegative matrix $\bm M \in \mathbb{R}^{K \times N}$ is said to satisfy the separability condition/ANC if there exists an index set $\bm \varLambda=\{q_1,\dots, q_K\}$ such that $\bm M(:,q_k)= \bm e_k$ for all $k \in [K]$. When there exists $\bm \varLambda =\{q_1,\dots, q_K\}$ such that $\|\bm M(:,q_k)-\bm e_k\|_2\leq \varepsilon$ for all $k \in [K]$, then $\bm M$ is said to satisfy the $\varepsilon$-separability condition.
\end{definition}
In the context of GC, the ANC means that there exists a node $q_k$ (i.e., an anchor node) that only belongs to cluster $k$ for $k\in[K]$.
Under the ANC, one can see that if $\bm \varLambda$ is known, then $\bm G=\bm U(\bm \varLambda,:)$ can be easily ``read out'' from the rows of $\bm U$, which also enables estimating $\bm M$ using (constrained) least squares as well (see line \ref{line:Mestim} in Algorithm~\ref{algo:proposed}).
The \texttt{SPA} algorithm identifies $\bm \varLambda$ using a Gram-Schmidt-like procedure in $K$ iterations and its per-iteration complexity is $O(NK^2)$ operations \cite{fu2018nonnegative,Gillis2012}; also see Sec.~\ref{sup:spa}  in the supplementary material for details.

\begin{remark}
A number of MMSB learning algorithms rely on the ANC to establish identifiability of $\bm M$; see, e.g., \cite{mao2017mixed,Panov2017consistent,mao2020estimating}.  However,
ANC does not reflect many important aspects of the GC problem, other than the existence of some anchor nodes.
In the next subsection, we will present a sufficient condition that has a different flavor, and is arguably more intuitive in the context of GC (e.g., graph size and the cluster-cluster interaction pattern). Another remark is that \texttt{SPA} is not the only algorithm for separable SSMF/NMF; see more options in \cite{kumar2012fast,mizutani2014ellipsoidal,fu2015robust, gillis2014successive,fu2014self,fu2016robust}. Nonetheless, \texttt{SPA} strikes a good balance between complexity and accuracy.
\end{remark}

} 
 %This result is reminiscent of a recent work in \cite{ibrahim2019crowdsourcing} that reveals the relation between the size of a nonnegative matrix and the {\it separability} condition; see more details in supplementary material in Sec. \ref{app:spa}.}
 
%  \texttt{SPA} (see Algorithm \ref{algo:SPA}) is a lightweight algorithm which estimate $\bm M$ from the model in \eqref{eq:SMF} using $K$ iterations {\black and} its per-iteration complexity is $O(NK^2)$ operations.

% {\red we promised a comment on other SPA like algorithms.}

% {\red perhaps we introduce anchor node assumption using "Definition" here?  This is only mentioned but never formally introduced in the paper. Also may be we could give a one line derivation of SPA here for selecting the first index, and then point people to appendix or reference. }

% \color{red}
% \begin{Def}(Separability)
% Let us define it here ...
% \end{Def}

\color{black}

\subsection{Performance Analysis} \label{sec:perf}
\subsubsection{{\black Algorithm} Complexity} Note that Algorithm~\ref{algo:proposed} only consists of top-$K$ SVD on small blocks that have a size of $ 2(N/L) \times (N/L)$ assuming $|\mathcal{S}_\ell|=N/L$ for all $\ell$, which has a complexity of $O((N/L)K^2)$ flops---a similar complexity order of the SPA stage.
In terms of memory, the process never needs to instantiate any $N\times N$ matrix that is used in the convex programming based methods [cf. \eqref{eq:convex}]. Instead, the variables involved are with the size of $N\times K$, thereby being economical in terms of memory (since the number of clusters $K$ is often much smaller than $N$).

\subsubsection{Accuracy} We first use the ideal case where $\bm A(i,j)=\bm P(i,j)$ to {\black analyze} the key idea behind our proposed approach. Building on the understanding to the ideal noiseless case, the binary observation case, i.e, the model in \eqref{eq:A_gen}, will be analyzed by treating it as a noisy version of the ideal case, with extensive care paid to the noise induced by the Bernoulli observation process.

To be specific, we have the following subspace identification result:
\begin{theorem} (Ideal Case) \label{prop:subspace_estim}
	%Assume that $N$ nodes are divided into $L$ disjoint groups $\mathcal{S}_{1}, \mathcal{S}_{2}, \dots, \mathcal{S}_{L}$ such that $\mathcal{S}_{\ell} \cup \dots \cup \mathcal{S}_{L} = \{1,\dots,L\}$ and for every $\ell$, $|\mathcal{S}_{\ell}| \ge K$.
{Assume that $\bm A_{\ell, m}=\bm P_{\ell, m} = \bm M_{\ell}^{\top}\bm B \bm M_{m} \in \mathbb{R}^{|\mathcal{S}_{\ell}|\times |\mathcal{S}_{m}|}$ holds true for all $\ell,m \in [L]$ and ${\rm rank}(\bm M_\ell)={\rm rank}(\bm B)=K$. Suppose that the $\A_{\ell,m}$'s are queried according to the proposed EQP.
 Then, the output $\widehat{\bm U}$ by Algorithm~\ref{algo:proposed} satisfies $
{\sf range}(\widehat{\bm U})  = {\sf range}(\bm M^{\top})$.}
	%estimated as
%$	\widehat{\bm U} = \bm M^{\top}\bm B \bm \Theta,
%$	where $\bm \Theta \in \mathbb{R}^{K \times K}$ is an invertible matrix.
\end{theorem}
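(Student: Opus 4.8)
The plan is to establish the identity $\mathsf{range}(\widehat{\bm U}) = \mathsf{range}(\bm M^\T)$ by induction on the stitching iterations, using the toy example computation in Section~\ref{sec:stitch} as the base case and inductive step. First I would fix notation: under the EQP, the blocks are indexed so that $\bm A_{\ell_r, m_r}$ and $\bm A_{\ell_{r+1}, m_r}$ are queried for $r = 1, \dots, L-1$, and $\{\ell_r\}_{r=1}^L = [L]$. The key structural fact I would record first is that for any queried block pair sharing the column group $m_r$, forming $\bm C = [\bm A_{\ell_r, m_r}^\T, \bm A_{\ell_{r+1}, m_r}^\T]^\T = [\bm M_{\ell_r}^\T \bm B \bm M_{m_r}; \bm M_{\ell_{r+1}}^\T \bm B \bm M_{m_r}]$ and taking its top-$K$ SVD yields orthonormal factors $[\overline{\bm U}_{\ell_r}^\T, \overline{\bm U}_{\ell_{r+1}}^\T]^\T$ whose blocks satisfy $\overline{\bm U}_{\ell_r} = \bm M_{\ell_r}^\T \bm G'$ and $\overline{\bm U}_{\ell_{r+1}} = \bm M_{\ell_{r+1}}^\T \bm G'$ for a common nonsingular $\bm G' \in \mathbb{R}^{K \times K}$. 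This follows because $\mathsf{rank}(\bm M_\ell) = \mathsf{rank}(\bm B) = K$ forces $\bm C$ to have rank exactly $K$ with column space $\mathsf{range}([\bm M_{\ell_r}, \bm M_{\ell_{r+1}}]^\T \bm B)$, and any rank-$K$ factorization of $\bm C$ into a column-orthonormal left factor times the rest differs from $[\bm M_{\ell_r}^\T; \bm M_{\ell_{r+1}}^\T]\bm B$ by a common right-multiplied nonsingular matrix; absorbing $\bm B$ into that matrix gives the claim.

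Next I would verify the \texttt{PairStitch} invariant. Suppose at some iteration $\bm U_{\rm ref}$ is a valid representative in the sense that $\bm U_{\rm ref} = \bm M_{\ell_r}^\T \bm G$ for the \emph{global} reference matrix $\bm G$ (the one fixed at the initialization step, line~\ref{algoline:Ut}). Then \texttt{PairStitch}$(\bm A_{\ell_r, m_r}, \bm A_{\ell_{r+1}, m_r}, \bm U_{\rm ref})$ computes $\overline{\bm U}_{\ell_r}, \overline{\bm U}_{\ell_{r+1}}$ via top-$K$ SVD of the stacked block — these satisfy $\overline{\bm U}_{\ell_r} = \bm M_{\ell_r}^\T \bm G'$, $\overline{\bm U}_{\ell_{r+1}} = \bm M_{\ell_{r+1}}^\T \bm G'$ for some nonsingular $\bm G'$ — and returns $\bm U_{\ell_{r+1}} := \overline{\bm U}_{\ell_{r+1}} \overline{\bm U}_{\ell_r}^\dagger \bm U_{\rm ref} = \bm M_{\ell_{r+1}}^\T \bm G' (\bm M_{\ell_r}^\T \bm G')^\dagger \bm M_{\ell_r}^\T \bm G$. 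Since $\bm M_{\ell_r}^\T$ has full column rank $K$ and $\bm G'$ is nonsingular, $(\bm M_{\ell_r}^\T \bm G')^\dagger \bm M_{\ell_r}^\T \bm G = (\bm G')^{-1} (\bm M_{\ell_r}^\T)^\dagger \bm M_{\ell_r}^\T \bm G = (\bm G')^{-1}\bm G$ (using $(\bm M_{\ell_r}^\T)^\dagger \bm M_{\ell_r}^\T = \bm I_K$), so $\bm U_{\ell_{r+1}} = \bm M_{\ell_{r+1}}^\T (\bm G')(\bm G')^{-1}\bm G = \bm M_{\ell_{r+1}}^\T \bm G$. Thus the invariant "$\bm U_{\ell} = \bm M_\ell^\T \bm G$ with the same global $\bm G$" is preserved. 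The initialization at lines~\ref{algoline:Ct}--\ref{algoline:Ut} establishes the base case: the top-$K$ SVD of $\bm C_T$ gives $\bm U_{\ell_T} = \bm M_{\ell_T}^\T \bm G$ and $\bm U_{\ell_{T+1}} = \bm M_{\ell_{T+1}}^\T \bm G$ for a nonsingular $\bm G$ — this $\bm G$ is exactly the reference that all later iterations inherit.

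Then I would run the induction over both \texttt{for each} loops (the ascending pass $r = T+1, \dots, L-1$ and the descending pass $r = T, \dots, 2$). Because the EQP guarantees that the pairs $(\ell_r, \ell_{r+1})$ form a connected chain covering all of $[L]$, and each pass threads $\bm U_{\rm ref1}$ (resp. $\bm U_{\rm ref2}$) forward so the reference group at each step is the one whose representative was computed in the previous step, the invariant applies at every iteration; hence every $\bm U_\ell$, $\ell \in [L]$, ends up equal to $\bm M_\ell^\T \bm G$ with one and the same nonsingular $\bm G$. Stacking, $\widehat{\bm U} = [\bm U_1^\T, \dots, \bm U_L^\T]^\T = [\bm M_1, \dots, \bm M_L]^\T \bm G = \bm M^\T \bm G$ (after accounting for the node grouping permutation, which only permutes rows and does not change the range). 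Since $\bm G$ is nonsingular, $\mathsf{range}(\widehat{\bm U}) = \mathsf{range}(\bm M^\T \bm G) = \mathsf{range}(\bm M^\T)$, which is the claim.

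The main obstacle I anticipate is the bookkeeping to confirm the invariant is correctly propagated through \emph{both} passes — in particular, making sure the reference variable passed into each \texttt{PairStitch} call is genuinely of the form $\bm M_{\ell_r}^\T \bm G$ for the \emph{original} global $\bm G$ rather than some locally re-derived one, and that the index pattern dictated by EQP (condition two: the existence of the shared $m_r$ linking $\ell_r$ and $\ell_{r+1}$, with $\{\ell_r\} = [L]$) really does let every group be reached. The linear-algebra core — that $(\bm M_{\ell_r}^\T \bm G')^\dagger \bm M_{\ell_r}^\T = (\bm G')^{-1}(\bm M_{\ell_r}^\T)^\dagger \bm M_{\ell_r}^\T$ and $(\bm M_{\ell_r}^\T)^\dagger \bm M_{\ell_r}^\T = \bm I_K$ under full column rank — is routine, as is the fact that top-$K$ SVD of a rank-$K$ matrix recovers its column space exactly; the care lies entirely in the combinatorial/index argument that the whole chain is stitched consistently.
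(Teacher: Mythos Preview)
Your proposal is correct and follows essentially the same approach as the paper's proof: both establish that the top-$K$ SVD of each stacked block yields $\overline{\bm U}_{\ell_r} = \bm M_{\ell_r}^\T \bm G'$ with a common nonsingular $\bm G'$, verify the stitching identity $\overline{\bm U}_{\ell_{r+1}} \overline{\bm U}_{\ell_r}^\dagger \bm U_{\ell_r} = \bm M_{\ell_{r+1}}^\T \bm G$ (your pseudoinverse computation is exactly the paper's equation~\eqref{eq:U2}), and then run induction over the ascending and descending passes to conclude $\widehat{\bm U} = \bm M^\T \bm G$. Your framing in terms of a preserved ``invariant'' is a clean way to organize the same argument the paper makes; the only care needed, which you correctly flag, is tracking that the reference variable passed into each \texttt{PairStitch} call indeed corresponds to the group indexed by $\ell'$ in that subroutine so the pseudoinverse cancellation goes through.
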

The proof of the theorem is relegated to Appendix~\ref{supp:subspace_estim}.
The proof showcases how the range spaces of $\M_\ell^\T$'s are `stitched' together to recover ${\sf range}(\M^\T)$, if $\A_{\ell,m}=\bm P_{\ell,m}$.

{\black However, the Bernoulli parameters (i.e., the $\bm P_{\ell, m}$'s) are not observed in practice}. Instead, one observes $\bm A_{\ell, m}$'s with binary values, following the Bernoulli observation model in \eqref{eq:A_gen}.  
The Bernoulli observation process can be understood as a noisy data acquisition process. To characterize the performance under this noisy case, we consider the following {\black assumption that is also reminiscent of the classic MMSB model \cite{airoldi2008mixed}:}
\begin{Assumption}\label{as:as1}
{\black     The membership vectors $\bm m_n$ $\forall n \in [N]$ are independently drawn from the Dirichlet distribution with parameter $\bm \nu= [\nu_1,\dots,\nu_K]^{\top}$, i.e., $\bm m_n\sim{\sf Dir}(\bm \nu)$ for $n\in[N]$. 
}
\end{Assumption}
{\black We first connect the Dirichlet model with ANC/separability using the following proposition:
}
{\black 
 \begin{proposition}\label{lem:Lm}
	Let $\mu > 0 ,\varepsilon >0$. Under Assumption \ref{as:as1}, assume that the number of nodes $N$ satisfies
	\begin{align} \label{eq:thmLm}
	N = \Omega\left((G(\varepsilon,\bm \nu))^{-2}{\rm log}\left(K/{\mu}\right)\right),
	\end{align}
	where 
	\begin{align} \label{eq:Gdef}
	    G(\varepsilon,\bm \nu) =  \frac{\Gamma (\sum_{i=1}^K \nu_i)}{\prod_{i=1}^K \Gamma(\nu_i)} \min_k\frac{\varepsilon^{\sum_{i \neq k}\nu_i}}{\prod_{i \neq k}\nu_i},
	\end{align}
	and the gamma function $\Gamma(z) = \int_0^{\infty} x^{z-1}\exp(-x)dx$ for $z > 0$.
	  Then, with probability of at least $1-\mu$, $\bm M$ satisfies the $\varepsilon$-separability condition.
	 %{\red define the Gamma function somewhere when it first appears}
\end{proposition}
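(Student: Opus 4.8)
The plan is to treat Proposition~\ref{lem:Lm} as a ``coupon-collector''-type concentration statement about the i.i.d.\ Dirichlet sample $\bm m_1,\dots,\bm m_N$ and to reduce the whole claim to a \emph{small-ball} estimate for the Dirichlet distribution near a vertex of the simplex. For each $k\in[K]$ let $\mathcal R_k^{\varepsilon}:=\{\bm x\ge\bm 0:\bm 1^{\T}\bm x=1,\ \|\bm x-\bm e_k\|_2\le\varepsilon\}$ denote the $\varepsilon$-neighborhood of the $k$-th vertex $\bm e_k$, and set $p_k:=\Pr_{\bm m\sim{\sf Dir}(\bm\nu)}[\bm m\in\mathcal R_k^{\varepsilon}]$. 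By Definition~\ref{def:sep}, $\bm M$ fails to be $\varepsilon$-separable only if some $\mathcal R_k^{\varepsilon}$ contains none of the $\bm m_n$'s (once $\varepsilon<1/\sqrt2$ the sets $\mathcal R_1^{\varepsilon},\dots,\mathcal R_K^{\varepsilon}$ are pairwise disjoint, since $\|\bm e_j-\bm e_k\|_2=\sqrt2$, so whenever each of them contains at least one sample we may choose the anchors $q_1,\dots,q_K$ to be distinct). Since the $\bm m_n$'s are i.i.d., $\Pr[\text{no }\bm m_n\in\mathcal R_k^{\varepsilon}]=(1-p_k)^N\le e^{-Np_k}$, and a union bound over $k$ gives $\Pr[\bm M\text{ not }\varepsilon\text{-separable}]\le K\,e^{-N\min_k p_k}$. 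Hence it suffices to show $N\min_k p_k\ge\log(K/\mu)$, i.e.\ to lower-bound $\min_k p_k$ by a constant multiple of $G(\varepsilon,\bm\nu)$.

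The second and main step is the small-ball bound $p_k=\Omega(G(\varepsilon,\bm\nu))$. I would parametrize the simplex by the $K-1$ free coordinates $(x_i)_{i\ne k}$ with $x_k=1-\sum_{i\ne k}x_i$, so that the Dirichlet density becomes $f(\bm x)=\frac{\Gamma(\sum_i\nu_i)}{\prod_i\Gamma(\nu_i)}\bigl(1-\sum_{i\ne k}x_i\bigr)^{\nu_k-1}\prod_{i\ne k}x_i^{\nu_i-1}$, and then integrate $f$ over an axis-aligned box $\mathcal B_k:=\{0\le x_i\le \varepsilon/K,\ i\ne k\}$ that is contained in $\mathcal R_k^{\varepsilon}$. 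The containment follows from the short estimate $\|\bm x-\bm e_k\|_2^2=(\sum_{i\ne k}x_i)^2+\sum_{i\ne k}x_i^2\le K\sum_{i\ne k}x_i^2\le K(K-1)(\varepsilon/K)^2\le\varepsilon^2$. On $\mathcal B_k$ one has $\sum_{i\ne k}x_i\le(K-1)\varepsilon/K\le 1/2$ (for $\varepsilon$ small), so the ``bulk'' factor satisfies $(1-\sum_{i\ne k}x_i)^{\nu_k-1}\ge c_k:=\min\{1,\,2^{1-\nu_k}\}>0$ (the two cases being $\nu_k\le1$, where a base in $[1/2,1]$ raised to a nonpositive power is $\ge1$, and $\nu_k>1$, where it is $\ge(1/2)^{\nu_k-1}$). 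Therefore
\begin{align*}
p_k \;&\ge\; \int_{\mathcal B_k} f(\bm x)\,d\bm x \;\ge\; c_k\,\frac{\Gamma\!\left(\sum_i\nu_i\right)}{\prod_i\Gamma(\nu_i)}\prod_{i\ne k}\int_0^{\varepsilon/K} x_i^{\nu_i-1}\,dx_i \\
&=\; c_k\,K^{-\sum_{i\ne k}\nu_i}\;\frac{\Gamma\!\left(\sum_i\nu_i\right)}{\prod_i\Gamma(\nu_i)}\;\frac{\varepsilon^{\sum_{i\ne k}\nu_i}}{\prod_{i\ne k}\nu_i} \;\ge\; c\,G(\varepsilon,\bm\nu),
\end{align*}
where, by comparison with the $\min_k$ defining $G$, the constant $c=c(K,\bm\nu):=\min_k c_k\,K^{-\sum_{i\ne k}\nu_i}>0$. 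Taking the minimum over $k$ yields $\min_k p_k\ge c\,G(\varepsilon,\bm\nu)$.

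Combining the two steps, any $N\ge \frac{1}{c\,G(\varepsilon,\bm\nu)}\log(K/\mu)$, i.e.\ $N=\Omega\!\left(G(\varepsilon,\bm\nu)^{-1}\log(K/\mu)\right)$, already forces $\Pr[\bm M\text{ not }\varepsilon\text{-separable}]\le\mu$. In the regime of interest, where $\varepsilon$ is small enough that $G(\varepsilon,\bm\nu)\le1$, this is implied by the stated requirement $N=\Omega\!\left(G(\varepsilon,\bm\nu)^{-2}\log(K/\mu)\right)$, which moreover comfortably absorbs the $(K,\bm\nu)$-dependent factor $c$; so the proposition as written follows.

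The step I expect to be the crux is the small-ball estimate $p_k=\Omega(G(\varepsilon,\bm\nu))$. One must (i) pick a sub-region of $\mathcal R_k^{\varepsilon}$ on which the Dirichlet density factorizes so that the integral collapses to the product $\prod_{i\ne k}\int_0^{t}x_i^{\nu_i-1}dx_i=t^{\sum_{i\ne k}\nu_i}/\prod_{i\ne k}\nu_i$ appearing inside $G$ — a coordinate box is the natural choice, but the containment $\mathcal B_k\subseteq\mathcal R_k^{\varepsilon}$ must be verified and the mild $K$-dependence it introduces tracked — and (ii) bound the factor $(1-\sum_{i\ne k}x_i)^{\nu_k-1}$ from below uniformly on that sub-region, which requires the case split on whether $\nu_k\le1$ or $\nu_k>1$. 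The remaining ingredients (disjointness of the vertex neighborhoods and the i.i.d.\ union bound) are routine.
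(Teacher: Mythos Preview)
Your proof is correct and follows the same architecture as the paper's: a small-ball lower bound on $p_k=\Pr_{\bm m\sim{\sf Dir}(\bm\nu)}[\bm m\in\mathcal R_k^{\varepsilon}]$ obtained by integrating the Dirichlet density over an axis-aligned box near $\bm e_k$, followed by a union bound over $k$. Two technical choices differ and are worth noting. First, for the concentration step you use the direct estimate $(1-p_k)^N\le e^{-Np_k}$, whereas the paper applies Hoeffding's inequality with $t=\sum_n\mathbb E[E_{n,k}]=Np_k$, yielding $\Pr(H_k\le 0)\le e^{-2Np_k^2}$; this is precisely why the paper's condition reads $N=\Omega(G^{-2}\log(K/\mu))$ rather than the tighter $N=\Omega(G^{-1}\log(K/\mu))$ you obtain. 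Second, for the small-ball step the paper integrates over the box $\{z_i\le\varepsilon/\sqrt2,\ i\ne k\}$ and absorbs the factor $(1-\sum_{i\ne k}z_i)^{\nu_k-1}$ by invoking the identity for the Dirichlet CDF in terms of a Lauricella function together with the bound $L_k>1$; you instead take the smaller box $\{x_i\le\varepsilon/K,\ i\ne k\}$ (for which containment in $\mathcal R_k^{\varepsilon}$ is verified explicitly) and handle the bulk factor by a transparent case split on $\nu_k\lessgtr 1$. Your route is more elementary and self-contained, and actually yields a sharper sample-size requirement; the paper's route explains the specific $G^{-2}$ scaling in the stated proposition.
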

}
{\black The proof can be found in Appendix~\ref{app:dirichlet}. This result translates the ANC to a condition that relies on the graph size $N$.}
{\black 
To explain the condition in \eqref{eq:thmLm},
it is critical to understand $G(\varepsilon,\bm \nu)$. The value of $G(\varepsilon,\bm \nu)$ is affected by how well $\bm m_n$'s are spread in the probability simplex. 
To illustrate this, consider a case with $N=1000$, $K=3$ and $\varepsilon=0.1$. Table \ref{tab:Gval} tabulates the value of $G(\varepsilon,\bm \nu)$ under this case for different Dirichlet parameters $\bm \nu$, namely, $[0.5,0.5,0.5]^\T$, $[2.0,0.5,0.5]^\T$, and $[3.0,3.0,3.0]^\T$. The corresponding $\bm m_n$'s distributions in the probability simplex are shown in Fig.~\ref{fig:alpha11}. 
One can see from Table \ref{tab:Gval} and \eqref{eq:thmLm} that $\bm \nu= [0.5,0.5,0.5]^{\top}$ has a larger $G$-function value and thus
requires a smaller $N$ to satisfy the $\varepsilon$-separability condition. 
This is consistent with the illustration in Fig.~\ref{fig:alpha11}.

%The $\{\bm m_n\}_{n=1}^N$ generated with $\bm \nu=[2.0,0.5,0.5]^{\top}$ and $\bm \nu=[3.0,3.0,3.0]^{\top}$ are less spread in the probability simplex.making it harder for all the vertex regions of the probability simplex  to have at least one $\bm m_n$. 
%Note that when $\nu_k$ for every $k$ is small, the nodes are close to having a single membership. Under such cases, the $G$-function value is large.

% Under Assumption \ref{as:as1}, we also have the following result which is regarding the condition numbers of the $\bm M_{\ell}$'s:

% \begin{proposition}
% Let $\gamma = \max_{\ell}\kappa(\bm M_{\ell})$. Then, under Assumption \ref{as:as1}, $\gamma < \infty$ holds with probability 1.
% \end{proposition}

%We also use Assumption \ref{as:as1} to characterize the condition numbers of the membership matrices $\bm M_\ell$'s . 

Under Assumption \ref{as:as1}, if $|\mathcal{S}_{\ell}| \ge K$, then ${\rm rank}(\bm M_\ell) =K$ holds with probability one \cite{sidiropoulos2014parallel,sidiropoulos2012multi}. This means that the following also holds with probability one:
\begin{align*}
    \gamma := \max_{\ell}\kappa(\bm M_{\ell}) < \infty.
\end{align*}

\begin{table}[t]
\black 
  \centering
  \caption{The value of the function $G(\varepsilon,\bm \nu)$ for different Dirichlet parameter $\bm \nu$ fixing $K=3$ and $\varepsilon=0.1$. }
  \resizebox{0.3\linewidth}{!}{
    \begin{tabular}{c|c}
    \hline
    \multicolumn{1}{c|}{$\bm \nu$} & \multicolumn{1}{c}{$G(\varepsilon,\bm \nu)$} \\
    \hline
    \hline
       $[0.5,0.5,0.5]^{\top}$   & $0.045$ \\
    \hline
        $[2.0,0.5,0.5]^{\top}$  & $8.4\times 10^{-4}$ \\
    \hline
        $[3.0,3.0,3.0]^{\top}$  & $7.0\times 10^{-5}$ \\
    \hline
    \hline
    \end{tabular}%
    }
  \label{tab:Gval}%
\end{table}%
 }

\begin{figure}[t]
\black 
	\centering 
	\includegraphics[scale=0.35]{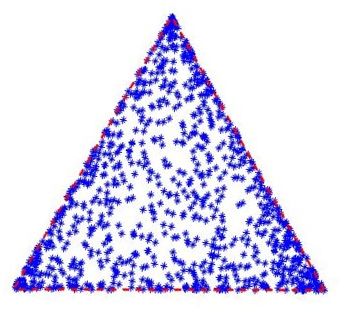} 
	\includegraphics[scale=0.35]{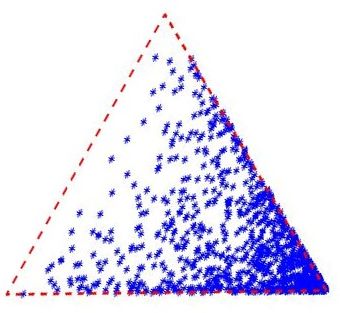}	
	\includegraphics[scale=0.35]{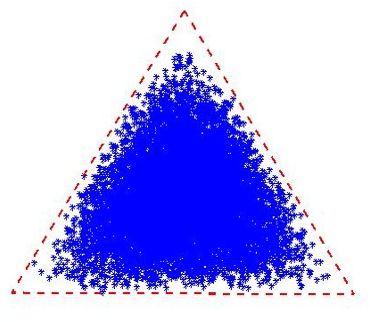}
	\caption{The columns of $\bm M \in \mathbb{R}^{3 \times 1000}$ generated via the Dirichlet distribution with $\bm \nu =[0.5,0.5,0.5]^{\top}$ (left), $\bm \nu =[2.0,0.5,0.5]^{\top}$ (middle)  and $\bm \nu=[3.0,3.0,3.0]^{\top}$ (right) spread in the 2-dimensional probability simplex.}
	\label{fig:alpha11}
\end{figure}
% \st{We also consider the following assumption which implies that the condition numbers of $\bm M_{\ell}$'s are bounded:}
% \begin{Assumption} \label{assump:sigmas}
% \st{There exists positive constants $\alpha_{\min},\alpha_{\max}$ and $\gamma$ such that for every $\ell \in [L]$, we have
% $\sigma_{\max}(\bm M_\ell) \le \alpha_{\max}$, $\sigma_{\min}(\bm M_\ell) \ge \alpha_{\min}$, and $\kappa(\bm M_\ell) \le \gamma := \frac{\alpha_{\max}}{\alpha_{\min}}$.}% {\black Here, $\sigma_{\min}(\cdot)$ means the smallest nonzero singular value.}
% \end{Assumption}
%\reminder{It seems that we use $\alpha$ too much. Also these are automatically satisfied if every block is a full rank matrix.}

%{\black With Proposition~\ref{lem:Lm} and the notation $\gamma$, we proceed to state the main theorem}. 
{\black To proceed,}
we adopt the following definition of node degree that is widely used in network analysis {\black \cite{lux2007tutorial,scott1988social,lei2015consistency}}:
\begin{definition}
The degree of node $i$ is the number of ``similar nodes'' it has in the adjacency graph; i.e., ${\sf degree}(i)= \sum_{j=1}^N\A(i,j)$, where $\A\in\{0,1\}^{N\times N}$.
\end{definition}

{\black Using Assumption~\ref{as:as1}},
{\black Proposition~\ref{lem:Lm}, the notations $\gamma$ and {\sf degree}($i$),
we state the main theorem}:

{\black 
\begin{theorem}
 (Binary Observation Case) \label{prop:subspace_estim_noisy} Assume that the matrix $\A$ is generated following \eqref{eq:MMSB_M} and \eqref{eq:A_gen} and ${\rm rank}(\bm B)=K$. Also assume that $\A_{\ell,m}$'s are queried following the EQP, with $|\mathcal{S}_\ell| = N/L$ for every $\ell$, where $N/L$ is an integer.
	Let $\rho := \max_{i,j}\bm P(i,j)$ be the maximal entry of $\bm P$. Suppose that $\rho = \Omega(L\log(N/L)/N)$ and $L = O(\rho N/d)$ where $d$ is the maximal degree of all the nodes. Also assume that \begin{equation}\label{eq:Ncond}
	    N = \Omega\left(\max\left(L^2,\frac{\log\left(NK/{L^2}\right)}{G(\varepsilon,\bm \nu)^2},\frac{(K\gamma^2)^L\rho\kappa^2(\bm B)}{\sigma_{\min}^2(\bm B)}\right)\right) 
	\end{equation}for a certain constant ${\black 0 < \varepsilon} = O\left( \frac{1}{K\gamma^3}\right)$.
% 	and
% 	\begin{align*}
% 	    G(\varepsilon,\bm \alpha) =  \frac{\Gamma (\sum_{i=1}^K \alpha_i)}{\prod_{i=1}^K \Gamma(\alpha_i)} \min_k\frac{\varepsilon^{\sum_{i \neq k}\alpha_i}}{\prod_{i \neq k}\alpha_i}.
% 	\end{align*}
	Then, the output $\widehat{\bm U}$ and $\widehat{\bm M}$ by Algorithm~\ref{algo:proposed} satisfy the following with probability of at least $1-O(L^2/N)$:
	\begin{align*}
	\|\widehat{\bm U}-\bm{U}\bm O\|_{\rm F} &= \zeta =  O\left(\frac{(K\gamma^2)^{L/2}\kappa(\bm B)\sqrt{\rho}}{\sigma_{\min}(\bm B)\sqrt{N/L} }\right),\\%\label{eq:propU}\\    
   \min_{\bm \Pi}	{\|\widehat{\bm M}-\bm \Pi\bm M\|_{\rm F}} &= 
 O\left(K^2\sigma_{\max}(\bm M)\gamma^3\max\left(\varepsilon, \zeta\right)\right),%\label{eq:propM}
	\end{align*}
	 where $\bm U$ is an orthogonal basis of ${\sf range}(\bm M^\T)$, $\bm O \in \mathbb{R}^{K \times K}$ is an orthogonal matrix and $\bm \Pi \in \mathbb{R}^{K \times K}$ is a permutation matrix. 
\end{theorem}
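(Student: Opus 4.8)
The plan is to analyze \texttt{BeQuec} in two stages mirroring its design: first bound the range-space estimation error $\|\widehat{\bm U}-\bm U\bm O\|_{\rm F}$ produced by lines~\ref{line:Tinit}--\ref{line:Uestim}, and then propagate that error through the \texttt{SPA}-based membership extraction in lines~\ref{line:Gestim}--\ref{line:Mestim}. Throughout, each binary block $\bm A_{\ell,m}$ is treated as a noisy observation of its Bernoulli-parameter block $\bm P_{\ell,m}=\bm M_\ell^\T\bm B\bm M_m$, so Theorem~\ref{prop:subspace_estim} is the exact-arithmetic reference point and the whole argument is a perturbation analysis around it. For the first stage I would begin with a block-wise spectral concentration bound: the entries of $\bm A_{\ell,m}$ are independent Bernoulli with means at most $\rho$, so a random-graph spectral-norm inequality in the spirit of \cite{lei2015consistency} gives $\|\bm A_{\ell,m}-\bm P_{\ell,m}\|_2=O(\sqrt{\rho N/L})$ with probability $1-O(L/N)$ per block, where the hypotheses $\rho=\Omega(L\log(N/L)/N)$ and $L=O(\rho N/d)$ are precisely what make this bound valid and of order $\sqrt{\rho N/L}$ rather than $\sqrt{d}$. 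A union bound over the $O(L)$ queried blocks then gives a global good event of probability $1-O(L^2/N)$. On this event I would, for the initial block $\bm C_T$ and for the block $\bm C$ formed inside each \texttt{PairStitch} call, combine this perturbation with a Wedin/Davis--Kahan $\sin\Theta$ bound whose denominator is $\sigma_K$ of the noiseless $\bm C$; writing $\bm C^{\star}=\begin{bmatrix}\bm M_\ell^\T\\\bm M_{\ell'}^\T\end{bmatrix}\bm B\bm M_m$ (up to the symmetry/transposition bookkeeping) and using ${\rm rank}(\bm B)={\rm rank}(\bm M_\ell)=K$, one lower-bounds $\sigma_K(\bm C^{\star})$ in terms of $\sigma_{\min}(\bm B)$, $\sqrt{N/L}$, $K$ and $\gamma$ — this is how $\sigma_{\min}(\bm B)$ and $\kappa(\bm B)$ eventually enter $\zeta$.

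The crux of Stage 1 is the recursion. I would show that each \texttt{PairStitch} step $\bm U_\ell\leftarrow\overline{\bm U}_\ell\,\overline{\bm U}_{\ell'}^{\dagger}\,\bm U_{\rm ref}$ amplifies the accumulated subspace error by a factor of order $K\gamma^2$. To do this one writes $\overline{\bm U}_\ell=\bm M_\ell^\T\overline{\bm G}+\bm E_\ell$, $\overline{\bm U}_{\ell'}=\bm M_{\ell'}^\T\overline{\bm G}+\bm E_{\ell'}$ and $\bm U_{\rm ref}=\bm M_{\rm ref}^\T\bm G+\bm E_{\rm ref}$, expands the triple product, invokes the standard perturbation lemma for the pseudo-inverse of a full-column-rank matrix, and uses $\|\overline{\bm U}_{\ell'}^{\dagger}\|_2=1/\sigma_K(\overline{\bm U}_{\ell'})=O(K\gamma)$ — which holds because $\overline{\bm U}_{\ell'}=\bm M_{\ell'}^\T\overline{\bm G}$ sits inside a matrix with orthonormal columns, so $\sigma_{\min}(\overline{\bm G})$ is controlled by the $\sigma_{\max}(\bm M_\ell)$'s while $\sigma_{\min}(\bm M_{\ell'})=\Omega(\sqrt{N/L}/(K\gamma))$. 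Since Algorithm~\ref{algo:proposed} starts at the middle index $T=\lfloor L/2\rfloor$ and stitches outward in both directions, each of the two chains has length at most $L/2$, so the amplification compounds to $(K\gamma^2)^{L/2}$; multiplying by the base perturbation $O(\sqrt{\rho N/L})/\sigma_K(\bm C^{\star})$ yields the claimed $\zeta$. One must also invoke the last term of \eqref{eq:Ncond} to certify that $\zeta$ is below a small absolute constant, which keeps every linearized perturbation estimate (pseudo-inverse, $\sin\Theta$) valid along the entire recursion; this requirement is exactly the source of the $(K\gamma^2)^L$-type factor in the sample-complexity condition.

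For Stage 2 I would invoke Proposition~\ref{lem:Lm} with $\mu$ of order $L^2/N$: under Assumption~\ref{as:as1} and the $\log(NK/L^2)/G(\varepsilon,\bm\nu)^2$ term of \eqref{eq:Ncond}, $\bm M$ satisfies the $\varepsilon$-separability condition with probability $1-O(L^2/N)$, and the surviving $\varepsilon=O(1/(K\gamma^3))$ is treated as a fixed slack. The noisy SSMF model is then $\widehat{\bm U}^\T=\bm G^\T\bm M+\bm E'$ with $\|\bm E'\|_{\rm F}=O(\max(\varepsilon,\zeta))$ after folding together the separability slack and the subspace error $\zeta$ (the orthogonal factor $\bm O$ is immaterial, being absorbed into $\bm G$). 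Applying the noise-robustness guarantee of \texttt{SPA} \cite{Gillis2012} — whose hypothesis requires the effective noise to lie below a threshold set by the conditioning of the mixing matrix, again ensured by \eqref{eq:Ncond} — returns an estimated anchor index set, hence $\widehat{\bm G}$ (the corresponding rows of $\widehat{\bm U}$) equal to $\bm\Pi\bm G$ up to an error of order $\kappa(\bm G)^2\max(\varepsilon,\zeta)$, with $\kappa(\bm G)$ controlled through $K$ and $\gamma$ since $\bm G$ is read off the rows of $\bm M^\T$. Substituting into $\widehat{\bm M}=\widehat{\bm G}^{-\T}\widehat{\bm U}^\T$ (or its constrained least-squares variant) and using $\|\widehat{\bm U}\|_2\le\sigma_{\max}(\bm M)\sigma_{\max}(\bm G)$ propagates to $\min_{\bm\Pi}\|\widehat{\bm M}-\bm\Pi\bm M\|_{\rm F}=O(K^2\sigma_{\max}(\bm M)\gamma^3\max(\varepsilon,\zeta))$, with $\bm\Pi$ absorbing the unavoidable label permutation of SSMF. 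Intersecting the three good events (block concentration, $\varepsilon$-separability, \texttt{SPA} success) keeps the overall failure probability at $O(L^2/N)$.

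The main obstacle is the recursive error-propagation analysis of the \texttt{PairStitch} chain: obtaining a \emph{uniform} per-step amplification factor of order $K\gamma^2$ requires simultaneously (i) controlling the perturbation of the pseudo-inverse of a near-orthonormal-columns matrix whose noiseless version is $\bm M_{\ell'}^\T\overline{\bm G}$; (ii) showing that the reference subspace fed into each call does not drift in its $\bm G$-coordinate representation, i.e., the stitching genuinely keeps all blocks in one common coordinate system up to the accumulated additive error; and (iii) guaranteeing that $\sigma_K$ of every intermediate matrix stays bounded away from zero. These three items are coupled and have to be closed by a single induction over the chain, with the $(K\gamma^2)^{L/2}$ growth and the sample-size condition \eqref{eq:Ncond} emerging together. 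By comparison, the block concentration bound and the Davis--Kahan and \texttt{SPA} steps are essentially off-the-shelf; the delicate work is the bookkeeping of how $K$, $\gamma$, $\kappa(\bm B)$ and the exponent $L/2$ combine.
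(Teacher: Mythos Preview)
Your proposal is correct and mirrors the paper's proof almost exactly: the paper also (i) uses a Lei--Rinaldo-type spectral concentration to bound $\|\widehat{\bm C}_r-\bm C_r\|_2=O(\sqrt{\rho N/L})$, (ii) feeds this into a Davis--Kahan variant from \cite{yu2014useful} with the denominator controlled via $\sigma_{\min}(\bm B)$ and concentration of $\bm M_r\bm M_r^\T$, (iii) analyzes \texttt{PairStitch} through a pseudo-inverse perturbation lemma plus a three-factor product expansion to get a per-step amplification of order $\sqrt{K}\,\overline{\kappa}$ with $\overline{\kappa}=\sqrt{2K}\gamma^2$, unrolls the two length-$L/2$ chains from the middle, and (iv) couples Proposition~\ref{lem:Lm} with the \texttt{SPA} robustness theorem of \cite{Gillis2012} and a least-squares perturbation bound for the membership step. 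The only cosmetic discrepancy is your claim $\|\overline{\bm U}_{\ell'}^\dagger\|_2=O(K\gamma)$; the paper obtains $O(\sqrt{K}\gamma)$ via $\sigma_{\min}(\overline{\bm U}_r)\ge\alpha_{\min}/\beta_{\max}$ with $\beta_{\max}=\sqrt{2K}\alpha_{\max}$, but this does not affect the headline orders.
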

%{\red There seems to be no definition of $G(\epsilon,\alpha)$? Again, we used too many $\alpha$, and they have different meaning.}
}

%{\orange Shahana: I could combine the condition on $\zeta$ with the first condition on $N$, but it is hard to write the second condition on $N$ neatly. If I substitute the value of $\zeta$ directly, it goes so bulk and not meaningful. Also, we dont know if $\zeta$ is less than 1 or not. Depending on that substituting (9) onto zeta may or may not make sense..    }

%\reminder{Can we combine this with the uniform distribution in our crowdsourcing paper? Then we can remove the separability assumption.}
The proof can be found in { Appendix}~\ref{supp:subspace_estim_noisy}, {\black which is an integration of (i) range space estimation accuracy under Bernoulli observations, (ii) least squares perturbation analysis, and (iii) noise robustness analysis of \texttt{SPA}.
In particular, the conditions on $\rho$ and $d$ are needed for range space estimation under binary observations, while the conditions on $N$ are used across parts (i)-(iii).

%The key conditions for the latter two key ingredients amounts to $N$ being sufficiently large, leveraging Proposition~\ref{lem:Lm}.

Theorem~\ref{prop:subspace_estim_noisy} can be understood as follows. 
First, under the presumed model, the proposed method favors larger graph sizes (i.e., larger $N$'s) and larger query blocks (i.e., smaller $L$'s). 
Particularly, the nested condition on $N$ in \eqref{eq:Ncond} means that $N$ should be larger than a certain threshold, since $N\geq E_1\log(E_2 N)$ for positive constants $E_1$ and $E_2$ holds if $N$ is sufficiently large.
Second, if the nodes tend to be more focused on a single cluster (i.e., $\nu_k$ is small for every $k$), the $G$-function's value in \eqref{eq:Gdef} becomes larger (cf. Fig.~\ref{fig:alpha11} and Table~\ref{tab:Gval}). Then, our method can reach a good accuracy level under smaller $N$'s and larger $L$'s. Third, the cluster-cluster interaction matrix $\bm B$ also matters: If the intra-cluster interactions dominate all node interactions and the cluster activities are balanced (i.e., $\bm B(k,k)\approx \bm B(j,j)$ for $j\neq k\in[K]$), then the condition number of $\bm B$ is expected to be small, thereby making membership learning easier.
}

{\black Theorem~\ref{prop:subspace_estim_noisy} also sheds light on the impact of $L$.
Ideally, one hopes to increase $L$ since using a larger $L$ means that the EQP uses fewer queries. With a proper $L$, the number of queries under the proposed EQP, i.e., $\Omega(N^2/L)$, can approach $NK$, which is the number of parameters to characterize any (unstructured) $K$-dimensional subspace in an $N$-dimensional space. However,
} 
$L$ cannot be too large, {\black as a large $L$} may lead to the EQP condition $K\leq |\mathcal{S}_{\ell}|=N/L$ being violated. In addition, a larger $L$ makes the error bound in Theorem~\ref{prop:subspace_estim_noisy} looser, since the factor $(K\gamma^2)^{L/2}$ increases with $L$.
Hence, the theorem reveals an important trade-off that is of interest to query designers.

\begin{remark}\label{rmk:baseline}
{\black Given the underlying SSMF structure of each block $\A_{\ell,m}$, a natural question is as follows: Instead of first estimating ${\sf range}(\bm M^\T)$ and then estimating $\bm M$, why not estimate $\bm M_\ell$ (or $\bm M_m$) from $\A_{\ell,m}$ (using SSMF algorithms on the range space of $\A_{\ell,m}$, i.e., $\bm U_{\ell,m}\approx\bm M_\ell^{\top}\bm G_m$) and then stitch them together to recover $\bm M$? This is not infeasible---but is prone to failure.
}
{\black
One reason is that this route needs that every block $\bm M_\ell$ satisfies the ANC (at least approximately).
This is much more unrealistic than requiring the entire $\bm M$ to satisfy the ANC (which is also reflected in Proposition~\ref{lem:Lm}). 
}
\end{remark}

\section{Experiments}\label{sec:experiments}

% Table generated by Excel2LaTeX from sheet 'Sheet1'

% Table generated by Excel2LaTeX from sheet 'Sheet5'
\begin{table}[t]
\black 
  \centering
  \caption{The subspace distance between $\widehat{\bm U}$ and $\bm U$ for \texttt{BeQuec} in ideal and binary observation cases; $K=5$; $L=10$; $\bm \nu = [\frac{1}{5} \frac{1}{5} \frac{1}{5} \frac{1}{5} \frac{1}{5}]^{\top} $.}
   \resizebox{0.55\linewidth}{!}{

    \begin{tabular}{c|c||c}
    \hline
    \multirow{2}[1]{*}{{Graph Size ($N$)}} & {Ideal Case} & {Binary Obs. Case} \\
\cline{2-3}          & {Dist} & {Dist} \\
    \hline
    \hline
    2000  &  2.48$\times 10^{-14}$     & 0.3214 \\
    \hline
    4000  &    4.86$\times 10^{-14}$   & 0.2080 \\
    \hline
    8000  &  2.63$\times 10^{-14}$     & 0.1412 \\
    \hline
    10000 &  1.28$\times 10^{-13}$     & 0.1257 \\
    \hline
    \hline
    \end{tabular}%
  \label{tab:dist}%
  }
\end{table}%

% Table generated by Excel2LaTeX from sheet 'Sheet5'
\begin{table}[t]
\black
  \centering
 \caption{The MSE, RE and averaged SRC of $\bm M$ for $K=5$, $L=10$ and $\bm \nu = [\frac{1}{5} \frac{1}{5} \frac{1}{5} \frac{1}{5} \frac{1}{5}]^{\top} $.} 
 \resizebox{0.7\linewidth}{!}{

    \begin{tabular}{c|c|c|c|c|c}
    \hline
    {Graph Size ($N$)} & {Metric} & \texttt{BeQuec} & \texttt{GeoNMF} & \texttt{CD-MVSI} & \texttt{CD-BNMF} \\
    \hline
    \hline
    \multirow{3}[2]{*}{2000} & {MSE} & \textbf{0.0536} & 0.1060 & 0.8160 & 0.2632 \\
\cline{2-6}          & {RE} & \textbf{0.2403} & {0.2603} & 0.9757 & 0.5330 \\
\cline{2-6}          & {SRC} & \textbf{0.8058 }     & 0.7912 & 0.2060 & 0.6404 \\
    \hline
    \multirow{3}[2]{*}{4000} & {MSE} & \textbf{0.0236} & 0.0894 & 0.7874 & 0.2185 \\
\cline{2-6}          & {RE} & \textbf{0.1607} & 0.2464 & 0.9795 & 0.4407 \\
\cline{2-6}          & {SRC} & \textbf{0.8486}      & 0.7988 & 0.2556 & 0.7095 \\
    \hline
    \multirow{3}[2]{*}{8000} & {MSE} & \textbf{0.0116} & 0.0331 & 0.7459 & 0.2006 \\
\cline{2-6}          & {RE} & \textbf{0.1141} & 0.1338 & 0.9932 & 0.3492 \\
\cline{2-6}          & {SRC} & \textbf{0.8909}      & 0.8905 & 0.2887 & 0.7190 \\
    \hline
    \multirow{3}[2]{*}{10000} & {MSE} & 0.0093 & \textbf{0.0070} & 0.7405 & 0.1668 \\
\cline{2-6}          & {RE} & 0.1012 & \textbf{0.0827} & 0.9947 & 0.2707 \\
\cline{2-6}          & {SRC} &  0.8992     & \textbf{0.9018} & 0.2916 & 0.7321 \\
    \hline
    \hline
    \end{tabular}%
    }
  \label{tab:synthM}%
\end{table}%

\begin{table}[t]
\black 
  \centering
  \caption{The MSE, RE and averaged SRC of $\bm M$ for $K=5$, $L=10$ and $\bm \nu = [\frac{1}{5} \frac{1}{5} \frac{1}{5} \frac{1}{5} 1]^{\top} $}
  \resizebox{0.7\linewidth}{!}{
    \begin{tabular}{c|c|c|c|c|c}
    \hline
    {Graph Size ($N$)} & {Metric} & \texttt{BeQuec} & \texttt{GeoNMF} & \texttt{CD-MVSI} & \texttt{CD-BNMF} \\
    \hline
    \hline
%     \multirow{3}[6]{*}{$N=1000$} & \textbf{MSE} & 0.8589 & 0.9876 & 0.9062 & 0.8293 \\
% \cline{2-6}          & \textbf{MRE} & 0.7630 & 0.9302 & 0.8289 & 0.8117 \\
% \cline{2-6}          & \textbf{SRC} & 0.2089 & 0.1792 & 0.1715 & 0.1929 \\
%     \hline
    \multirow{3}[6]{*}{2000} & {MSE} & \textbf{0.4302} & 0.4895 & 0.9063 & 0.8228 \\
\cline{2-6}          & {RE} & \textbf{0.6060} & 0.6633 & 0.8417 & 0.8156 \\
\cline{2-6}          & {SRC} & \textbf{0.5235} & 0.4739 & 0.1572 & 0.1832 \\
    \hline
    \multirow{3}[6]{*}{4000} & {MSE} & \textbf{0.3037} & 0.3275 & 0.8975 & 0.7774 \\
\cline{2-6}          & {RE} & \textbf{0.4420} & 0.4906 & 0.8515 & 0.8044 \\
\cline{2-6}          & {SRC} & \textbf{0.6013} & 0.5678 & 0.1614 & 0.2089 \\
    \hline
    \multirow{3}[6]{*}{8000} & \textbf{MSE} & \textbf{0.1073} & 0.4198 & 0.8969 & 0.8144 \\
\cline{2-6}          & {RE} & \textbf{0.2631} & 0.5661 & 0.8533 & 0.8171 \\
\cline{2-6}          & {SRC} & \textbf{0.7304} & 0.4948 & 0.1540 & 0.1794 \\
    \hline
    \multirow{3}[6]{*}{10000} & \textbf{MSE} & \textbf{0.0863} & 0.5793 & 0.9013 & 0.8174 \\
\cline{2-6}          & {RE} & \textbf{0.2442} & 0.5724 & 0.8464 & 0.8165 \\
\cline{2-6}          & {SRC} & \textbf{0.7535} & 0.4082 & 0.1489 & 0.1736 \\
    \hline
    \hline
    \end{tabular}%
    }
  \label{tab:synthM1}%
\end{table}%

%\subsection{Baselines} 
\noindent 
For both synthetic and real data experiments, we compare our algorithm with a number of state-of-the-art mixed membership learning algorithms, namely, {\it geometric intuition-based nonnegative matrix factorization} (\texttt{GeoNMF}) \cite{mao2017mixed}, {\it community detection via minimum volume simplex identification} (\texttt{CD-MVSI}) \cite{huang2019detect}, and \textit{community detection via Bayesian nonnegative matrix factorization} (\texttt{CD-BNMF}) \cite{ioannis2011overlapping}. These baselines are applied onto the queried blocks (e.g., $\A_{\ell,m}$) to estimate $\bm M_\ell$ and $\M_m$, as discussed in {\black Remark~\ref{rmk:baseline}}. The estimated $\M_\ell$'s are aligned (by a nontrivial aligning procedure) to unify the row permutation ambiguity. This step is necessary since the estimated $\bm M_\ell$'s from different blocks are subject to different row permutation ambiguities; see details in {\black Appendix~\ref{sup:baselines}}.

We have two different sets of real data experiments, i.e., crowdclustering of dog breed image data and community detection in co-author networks.
 For real data experiments, we additionally apply {\black another mixed membership learning algorithm, namely, \textit{sequential projection after cleaning} (\texttt{SPACL}) \cite{mao2020estimating},} two versions of the spectral clustering algorithm, namely, the unnormalized and normalized spectral clustering algorithms---denoted as \texttt{SC (unnorm.)} and \texttt{SC (norm.)}, respectively \cite{shi2000normalized} and \texttt{$k$-means}. {\black The \texttt{$k$-means} algorithm employed in these methods are repeated with 20 random initializations.} We also use a convex optimization-based graph clustering algorithm with edge queries (denoted as \texttt{Convex-Opt}) \cite{korlakai2014graph} as a benchmark.
We follow the setup in \cite{korlakai2014graph} and apply \texttt{$k$-means} directly on the adjacency graphs [denoted as \texttt{$k$-means} (graph)].
Note that in the crowdclustering experiment in Sec.~\ref{sec:crowd} (also see Sec.~\ref{sec:examples} for the problem definition), we also use \texttt{$k$-means} on the original image samples [denoted as \texttt{$k$-means} (data)], since the raw data samples are available in this particular problem. The \texttt{$k$-means} (data) method serves as a basic benchmark where no annotator input is used. The source code and the real data used in our experiments are made publicly available (see the footnote in page 2).

\subsection{Synthetic Data}We consider $N$ nodes (where $N\in{\black [2000,10000]}$) and $K=5$ clusters. The membership vectors $\bm m_n$ are drawn from
the Dirichlet distribution with parameter {\black $\bm \nu\in\mathbb{R}^5$}. The symmetric cluster-cluster interaction matrix $\bm B \in \mathbb{R}^{K \times K}$ is generated {\black by randomly sampling its diagonal and non-diagonal entries from uniform distributions between 0.8 to 1 and between 0 to $\eta$, respectively. The diagonal entries are chosen to be relatively large because intra-cluster connections are supposed to be more often. The parameter $\eta \in (0,1]$ decides the level of interactions across different clusters, which is expected to be lower than that of the intra-cluster interactions.
}

\noindent 
{\black {\bf Metrics.} To evaluate the performance of the algorithms, we use a number of standard metrics. First, we use the subspace distance (`Dist') between between ${\sf range}(\widehat{\bm U})$ and the ground-truth ${\sf range}(\bm M^\T)$, i.e.,
$$\text{Dist} = \|\bm O_{\bm M}^\perp \bm Q_{\bm U}\|_2,$$
where $\bm O_{\bm M}^\perp = \bm I-\bm M^{\top} (\bm M\bm M^{\top})^{-1}\bm M$ and $\bm Q_{\bm U}$ is the orthogonal basis of $\widehat{\bm U}^{\top}$ \cite{golub2012matrix}.
To measure the accuracy of the estimated membership, we define the {\it mean squared error} (MSE) as
follows \cite{xiao2015blind,fu2020block}:
\begin{equation*}
        \text{MSE} =
    \underset{\bm \Pi}{\text{min}} \frac{1}{K}\sum_{k=1}^K  \left\lVert \frac{\bm \Pi \bm M(k,:)}{\|\bm \Pi \bm M(k,:)\|_2}-\frac{\widehat{\bm M}(k,:)}{\|\widehat{\bm M}(k,:)\|_2}\right\rVert_{2}^2
\end{equation*}
where $\widehat{\bm M}$ denotes the estimate of $\bm M$ and $\bm \Pi \in \mathbb{R}^{K \times K}$ is a permutation matrix (due to the intrinsic row permutation of the outputs of SSMF algorithms such as \texttt{SPA}).
As an additional reference, we also present the {\it relative error} (RE), i.e.,
$$
\text{RE} = \min_{\bm \Pi}\frac{\|\bm \Pi \bm M-\widehat{\bm M}\|_{\rm F}}{\|\bm M\|_{\rm F}}.
$$
We also use the averaged {\it Spearman's rank correlation coefficient} (SRC).
The averaged SRC is often used in the mixed membership learning literature (e.g., \cite{mao2017mixed,huang2019detect}). The SRC takes values between $-1$ and $1$ and has a higher value if the ranking of the entries in two vectors are more similar---which is desired.

}

% as follows: First, a symmetric matrix is generated, whose upper triangular entries are sampled uniformly at random from the range $[0,1]$. Second, an identity matrix $\bm I_K$ is added to the generated matrix. Finally, all the resultant entries are down-scaled by the same scalar to make sure that they lie in between 0 and 1. This way, $\bm B$ is diagonal dominant, since in-cluster interactions is more likely to happen.

\noindent
{\black {\bf Results. }} We first test the identifiability claims under the ideal case (i.e., $\A=\bm P$). The blocks of the adjacency matrix with the leftmost query pattern in Fig.~\ref{fig:query} is used. {\black We choose the number of groups $L=10$ (percentage of queried edges = 27.92\%), $\eta=0.1$ and $\bm \nu = [\frac{1}{5} \frac{1}{5} \frac{1}{5} \frac{1}{5} \frac{1}{5}]^{\top} $.} 
Table~\ref{tab:dist} shows the subspace estimation accuracy of our method measured using `Dist'.
The results are averaged from 20 random trials. One can see that the proposed method estimates the subspace of the membership matrix $\bm M$ very accurately {\black in the ideal case}, which verifies our subspace identifiability analysis in {Theorem}~\ref{prop:subspace_estim}. {\black The right column of Table~\ref{tab:dist} presents the `Dist' measure in the binary observation case. It shows that even under the binary ``quantization'' of $\bm P$, the proposed method can still recover $\bm U$ to a good accuracy, if $N$ is sufficiently large. This is consistent with Theorem~\ref{prop:subspace_estim_noisy}.}

Next, we consider the rightmost pattern in Fig \ref{fig:query} to evaluate the proposed algorithm and the baselines in the binary observation case. %One can see that, the subspace estimation error of the proposed algorithm gets smaller as $N$ grows, as {Theorem}~\ref{prop:subspace_estim_noisy} indicates.
{\black We fix $L=10$ and $\eta=0.1$.
We test two different cases of $\bm \nu$, i.e., $\bm \nu=[\frac{1}{5} \frac{1}{5} \frac{1}{5} \frac{1}{5} \frac{1}{5}]^{\top}$ and $\bm \nu=[\frac{1}{5} \frac{1}{5} \frac{1}{5} \frac{1}{5} 1]^{\top}$. The results, which are averaged from 20 random trials, are presented in Tables~\ref{tab:synthM} and \ref{tab:synthM1}, respectively. 
The former case of $\bm \nu$ is easier, since many nodes' membership vectors tend to reside on the boundaries/corners of the probability simplex (cf. Fig.~\ref{fig:alpha11})---leading to the existence of many anchor nodes.
Hence, every block $\bm M_\ell$ for $\ell\in[L]$ may satisfy the ANC, and \texttt{BeQuec} and \texttt{GeoNMF} both work reasonably well (cf. discussions in Remark~\ref{rmk:baseline}).
The latter case is more challenging due to $\nu_5=1$, which means that there is a cluster that has (much) fewer anchor nodes. This means that many $\bm M_\ell$'s do not satisfy the ANC. Consequently, \texttt{BeQuec} outperforms the baselines, since our method does not need ANC to hold on each block $\bm M_\ell$. Particularly, in the latter case, the best baseline \texttt{GeoNMF} outputs high MSE = 0.579 even when $N=10,000$, while \texttt{BeQuec}'s performance is considerably better under all three evaluation metrics (with MSE = 0.086).
}

{\black More synthetic data experiments with different values of $\eta$ and $\bm \nu$ can be found in Sec. \ref{supp:more_exp} of the supplementary material.}

 \begin{figure}[t]
	\centering
	\includegraphics[width=0.4\linewidth]{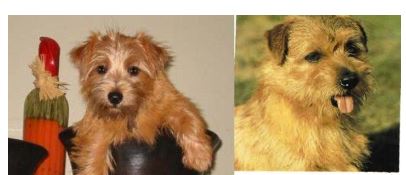}
	\includegraphics[width=0.47\linewidth]{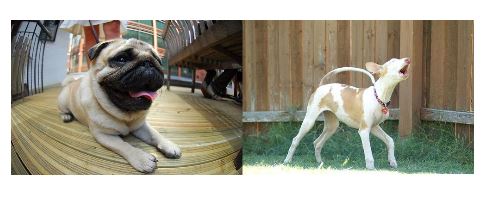}
		\includegraphics[width=0.43\linewidth]{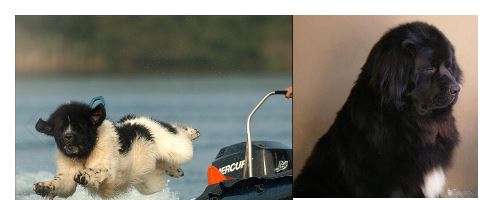}
			\includegraphics[width=0.38\linewidth]{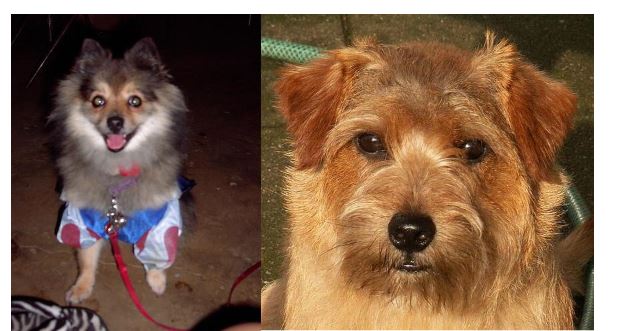}
	\caption{{ Some examples of the pairs presented to the AMT annotators in the crowdclustering experiment. The image pairs have true similarity labels 1,0,0,1, respectively, in clockwise order. The annotators marked 1,0,1,0, respectively.}}
	\label{fig:dogs}
\end{figure}

\begin{table}[t]
  \centering
    %\begin{minipage}{.48\linewidth} 
    
    %{\red I mentioned this in a previous communication: why some tables highlight top 2 but some tables highlight top 1 ... if we do top 2, perhaps also should do it for sythetic data? or at least claim that we do top 2 here for real?}
    
    \color{black}
  \caption{ Graph clustering performance on the dog breed dataset with AMT similarity annotations. $N=900, K=5, L=15$. Annotation proportion = 19.02\%.}
   \resizebox{0.6\linewidth}{!}{\small
    \begin{tabular}{c|c|c|c}
    \hline
    {Algorithms} & \multicolumn{1}{c|}{{ ACC (\%)}}& \multicolumn{1}{c|}{NMI}&
    \multicolumn{1}{c}{{Time (s)}}\\
    \hline
    \hline
    \texttt{BeQuec} (proposed)  & \textbf{99.22} & \textbf{0.972} &\textbf{0.21}\\
    %\hline
    %SPOC  & 99.89 \\
    \hline
    \texttt{\texttt{GeoNMF}} & 96.88 & 0.929 &2.80\\
    \hline
    \texttt{\texttt{\black SPACL}} & \black 97.11 & \black 0.936 & \black \textbf{0.22}\\
    \hline
    \texttt{CD-MVSI} & 38.02 & 0.199 & 1.30\\
   \hline 
   {\texttt{ \texttt{CD-BNMF}}}& 81.89 & 0.683 & 0.28\\
    \hline
    \texttt{SC (unnorm.)} & \black 22.90  & \black 0.032 & \black 1.40 \\
    \hline
    \texttt{SC (norm.)} & \black 94.52 & \black 0.926 & \black 1.05 \\
    \hline
    \texttt{$k$-means} (graph) & \black 65.73 & \black 0.507 & \black 1.14 \\
    \hline
        \texttt{$k$-means} (data) & \black 23.56 &  0.052 &  2034.50\\
    \hline
   \texttt{ Convex-Opt}& \black \textbf{99.78 } & \black \textbf{0.991}& \black 354.22\\

    %\hline
    %direct \texttt{$k$-means} & 23.00& \\
    \hline
    \hline
    \end{tabular}\label{tab:AMT}
    }
    %\end{minipage}
    %\begin{minipage}{.5\linewidth}
\end{table}
 
 \begin{table}[t]
     \centering
      \color{black}
 \caption{ Graph clustering performance on the dog breed dataset for different annotation error rates. $N=900, K=5, L=15$. Annotation proportion = 19.02\%.}     
 \resizebox{0.7\linewidth}{!}{\small
   \begin{tabular}{c|c|c|c|c}
    \hline
    \multicolumn{1}{c|}{Algorithms} & \multicolumn{2}{c|}{annotation error=15\%} & \multicolumn{2}{c}{annotation error=20\%} \\
    \hline
    \hline
          & \multicolumn{1}{c|}{ACC(\%)} & \multicolumn{1}{c|}{NMI} & \multicolumn{1}{c|}{ACC(\%)} & \multicolumn{1}{c}{NMI} \\
    \hline
    \texttt{BeQuec} (proposed) & \textbf{95.83} & \textbf{0.941} & \textbf{87.20} & \textbf{0.740 }\\
    \hline
    \texttt{\texttt{GeoNMF}} & 85.33 & 0.688 & 79.83 & 0.629 \\
    \hline
    \black \texttt{SPACL} & \black 84.10  & \black 0.758 & \black \textbf{83.71} & \black \textbf{0.757} \\
    \hline
    \texttt{ CD-MVSI} & 42.28 & 0.194 & 38.57 & 0.165 \\
    \hline
    {\texttt{ CD-BNMF}} & 67.64 & 0.609 & 66.89 & 0.574 \\
    \hline
    \texttt{SC (unnorm.)} & \black 23.50 & \black 0.033 & \black 23.68 & \black 0.036 \\
    \hline
    \texttt{SC (norm.)} & \black \textbf{87.27} & \black \textbf{0.818} & \black {80.27} & \black {0.732} \\
    \hline
    \texttt{$k$-means} (graph) & \black 64.96 & \black 0.511 & \black 62.48 & \black 0.475 \\
    \hline
   \texttt{ Convex-Opt}& \black 70.98 & \black 0.853 & \black 50.57 & \black 0.582 \\
    \hline
    \hline
    \end{tabular}%
   }
  \label{tab:amt_exp2}%   
    
   % \end{minipage}
  %
\end{table}%

\subsection{Crowdclustering using AMT Annotations} \label{sec:crowd}
We consider the task of clustering a set of images of dogs into different breeds with the assistance of self-registered annotators from a crowdsourcing platform, i.e., the AMT platform. 

\noindent
{\bf AMT Data.} We upload $N=900$ images of $K=5$ different breeds of dogs from the Standford Dogs Dataset \cite{khosla2011novel}. The chosen breeds are Pug (180 images), Ibizan Hound (180 images), Pomeranian (180 images), Norfolk Terrier (172 images) and Newfoundland (188 images). 
We followed the setup in Sec.~\ref{sec:examples} to design the experiment.
Specifically, we showed each annotator a pair of dog images and ask a simple question (query): ``Do this pair of dogs belong to the same category?". If the annotator marks ``Yes", then we fill the corresponding entry of $\bm A$ with ``1", and otherwise ``0". In this case, if one hopes to fill all the entries of $\bm A$, there are approximately $ 4 \times 10^5$ distinct pairs of images to be queried, which is costly. We selected image pairs following the block-diagonal pattern (i.e., the rightmost pattern in Fig.~\ref{fig:query}) by fixing $L=15$. As a result, only 19.02\% of the pairs (76,950) were queried\footnote{As a result, $\$81.60$ was paid for our annotation task to AMT, but annotating all the pairs of images for this experiment would have cost $\$429.02$.}. Fig.~\ref{fig:dogs} shows some pairs of images queried in this experiment.
%We designed 76950 such queries which constitutes 19.02$\%$ of possible 
%Such pairs were uploaded to the popular crowdsourcing platform, Amazon Mechanical Turk (AMT) (\url{https://www.mturk.com/}). We hosted the image pairs using Amazon Web Services (AWS) (\url{https://aws.amazon.com/}). 
We obtained the responses from 642 AMT annotators. Each annotator answered (non-overlapping) 120 queries on average. The error rate of the answers given by the annotators was 9.90$\%$.

\noindent
{\bf Metric.} To quantify the performance, we use standard metrics for evaluating the clustering algorithms.
Specifically, since each image is only labeled with a single cluster label, we round the learned membership vectors to the closest unit vectors. This way, standard metrics, i.e., {\it clustering accuracy} (ACC) and {\it normalized mutual information} (NMI) can be used for performance characterization (see definitions of ACC and NMI in \cite{canyi2013correntropy}). {ACC ranges from 0\% to 100\% where $100\%$ means perfect clustering. NMI takes values in between 0 and 1 with 1 being the best performance.}

\noindent
{\bf Results.}
Table~\ref{tab:AMT} summarizes the results output by the algorithms under test, {\black where the first and the second best results are highlighted.} {\black  One can see that both the proposed method \texttt{BeQuec} and \texttt{Convex-Opt} output surprisingly high ACC and NMI results. %One can also note that \texttt{Convex-Opt} method performs the best in this experiment. 
The baselines \texttt{GeoNMF} and \texttt{SPACL} also yield reasonable ACCs/NMIs.
It is somewhat expected both SBM and MMSB based methods could work, since the nodes in this case admit single membership.
Nonetheless, \texttt{BeQuec} and \texttt{Convex-Opt}'s better clustering performance over other baselines may be due to various reasons. For example, the fact that \texttt{BeQuec} works under less stringent conditions may make it more resilient to modeling mismatches.
The all-at-once optimization strategy of \texttt{Convex-Opt} may help avoid error propagation as in greedy methods used in other baselines.
In terms of runtime, notably, \texttt{BeQuec} runs much faster (approximately 1500 times faster than \texttt{Convex-Opt}, which strikes a good balance between accuracy and efficiency for this particular task.}

{\black Another interesting observation is that} \texttt{BeQuec} largely outperforms directly applying clustering methods to the data samples without annotators' assistance [cf. \texttt{$k$-means} (data)]. {Note that \texttt{$k$-means} (data) sees all the data and their features (pixels in this case). It does not work well perhaps because the dogs across some breeds are naturally confusing. However, with annotating less than 20\% of the data pairs' similarity (without pointing out the exact breeds), the clustering error approaches zero. This experiment may have articulated the value of this simple annotation strategy.}

%Again, \texttt{\texttt{GeoNMF}} follows the paradigm discussed in Sec.~\ref{sec:morediscuss}. The fact that \texttt{\texttt{GeoNMF}}'s output is less accurate relative to the proposed method also echos our discussion in Sec.~\ref{sec:morediscuss}.

In order to better understand the impact of annotation errors on crowdclustering, we also manually ``inject'' some errors to the annotations.
Specifically, we randomly flip a certain percentage of the correct annotations given by the AMT annotators, and observe how this affects the performance of the algorithms. 
%The error rate in original annotations is 9.90\%. In order to simulate a case with an overall error rate of 20\%, we flip 10.10\% of the correct annotations. 

Table~\ref{tab:amt_exp2} presents the clustering performance obtained from two different error rates. The results are averaged over 20 trials, where in each trial the flipped annotations are randomly chosen.  One can observe that the proposed \texttt{BeQuec} method offers the most competitive clustering performance. {When the annotation error rate increases from 9.9\% (Table~\ref{tab:AMT}) to 15\% and 20\%, the performance degradation of \texttt{BeQuec} is much less obvious compared to the baselines. 
In particular, one can see that NMI of \texttt{BeQuec} decreases from 0.972 to 0.941 when the annotation error rate is changed from 9.9\% to 15\%. However, \texttt{GeoNMF}'s NMI changes from 0.929 to 0.688, which is a much more drastic drop.
This shows \texttt{BeQuec}'s robustness to annotation errors.}

\begin{table}[t]
  \centering
  \caption{Details of the co-author network datasets used in community detection experiments }
  \resizebox{0.25\linewidth}{!}{
    \begin{tabular}{c|c|c}
    \hline
    \multicolumn{1}{c|}{Dataset} & \multicolumn{1}{c|}{$N$} & \multicolumn{1}{c}{$K$} \\
    \hline
    \hline
       $\text{DBLP}_{-1}$   & 15,075 & 13 \\
    \hline
       $\text{DBLP}_{-2}$   & 18,106 & 16 \\
    \hline
        $\text{DBLP}_{-3}$  & 18,646 & 16 \\
    \hline
       $\text{DBLP}_{-4}$   & 18,195 & 16 \\
    \hline
       $\text{DBLP}_{-5}$   & 14,982 & 15 \\
    \hline
       MAG1   & 37,680 & 3 \\
    \hline
       MAG2   & 19,457 & 3 \\
    \hline
    \hline
    \end{tabular}%
    }
  \label{tab:dataset}%
\end{table}%

\begin{table}[t]
\centering
  %\begin{minipage}{.45\linewidth}
  \color{black}
  \caption{Averaged SRC and runtime in seconds. { $L=10$ (percentage of queried edges = 27.92\%). } }
\resizebox{1\linewidth}{!}{
    \begin{tabular}{c|c|c|c|c|c|c|c|c|c|c}
    \hline
    Datasets & \multicolumn{2}{c|}{\makecell{\texttt{BeQuec}\\ (proposed)}} & \multicolumn{2}{c|}{{\texttt{GeoNMF}}} &  \multicolumn{2}{c|}{{\texttt{\black SPACL}}} & \multicolumn{2}{c|}{\texttt{ CD-MVSI}} & \multicolumn{2}{c}{\texttt{ CD-BNMF}} \\
    \hline
    \hline
          & SRC   & Time(s.) & SRC   & Time(s.) & {\black SRC}   & {\black Time(s.)} & SRC   & Time(s.) &SRC   & Time(s.)\\
    \hline
    $\text{DBLP}_{-1}$  & \textbf{0.177} & 0.47  & \textbf{0.075} & 15.87 & \black 0.072 & \black 0.73  & 0.074 & 0.95 &0.058 &5.61\\
    \hline
    $\text{DBLP}_{-2}$  & \textbf{0.166} & 0.38  & 0.089 & 11.13 & \black \textbf{0.103} & \black 1.02  & 0.055 & 0.76& 0.070 &8.20 \\
    \hline
    $\text{DBLP}_{-3}$  & \textbf{0.166} & 0.46  & 0.076 & 17.18 & \black \textbf{0.094} & \black 1.01& 0.064   & 0.98 & 0.060 &8.50\\
    \hline
    $\text{DBLP}_{-4}$  & \textbf{0.191} & 0.48  & \textbf{0.098} & 15.76 & \black 0.095 & \black 0.95  & 0.070 & 0.95& 0.079 &8.50 \\
    \hline
    $\text{DBLP}_{-5}$  & \textbf{0.195} & 0.41  & 0.084 & 15.65 & \black \textbf{0.156} & \black 0.67 & 0.073 & 0.88& 0.074 &5.85 \\
    \hline
    MAG1  & \textbf{0.125} & 0.26  & 0.122 & 1.79   & \black \textbf{0.123} & \black 0.97  & 0.089 & 0.59 & 0.069 &37.29\\
    \hline
    MAG2   & \textbf{0.441} & 0.23  & 0.240 & 4.66 & \black \textbf{0.267} & \black 0.50   & 0.249 & 0.53 & 0.122 &9.92\\
    \hline
    \hline
    \end{tabular}%

    \label{tab:src}
   }
  \label{tab:addlabel}%
\end{table}%

% Table generated by Excel2LaTeX from sheet 'Sheet2'
\begin{comment}
\begin{table}[t]
  \centering
 \caption{Clustering accuracy (\%) of MAG2}        \resizebox{0.85\linewidth}{!}{
    \begin{tabular}{r|r|r|r|r}
    \hline
    \textbf{$L$} & \textbf{\texttt{BeQuec} (proposed)} &  \textbf{\texttt{\texttt{GeoNMF}}} & \textbf{\texttt{ CD-MVS}} \\
    \hline
    \hline
    {15} & \textbf{78.70}  & 58.16 & 53.45 \\
    \hline
    {25} & \textbf{77.19}  & 57.87 & 21.82 \\
    \hline
    {50} & \textbf{60.87}  & 56.88 & 14.57 \\
    \hline
    {75} & \textbf{54.87}  & 52.68 & 13.53 \\
    \hline
    {100} & 53.39  & 52.33 & 11.71 \\
    \hline
    \end{tabular}%
    }
  \label{tab:mag2_acc}%
\end{table}%
\end{comment}

% Table generated by Excel2LaTeX from sheet 'Sheet2'
\begin{table}[t]
  \centering
  \color{black}
  \caption{Clustering accuracy (\%) of MAG2 under various $L$ ({percentage of queried edges (\%)}). $N = 19457$, $K=3$.}
   \resizebox{0.8\linewidth}{!}{
    \begin{tabular}{c|c|c|c|c|c}
    \hline
    %{Alorithms} & \small $L=10$ (27.92\%) & \small $L=25$ (11.58\%) & \small $L=50$ {(5.82\%)} & \small $L=75$ (3.86\%) & \small $L=100$ (2.87\%) \\
    {Alorithms} & \makecell{\small $L=10$\\ (27.92\%)}& \makecell{\small $L=25$\\(11.58\%) }  & \makecell{\small $L=50$\\(5.82\%)}  & \makecell{\small $L=75$\\(3.86\%)}  & \makecell{\small $L=100$\\(2.87\%)}  \\
    \hline
    \hline
    {\texttt{BeQuec} (proposed)} & \textbf{78.70} & \textbf{77.19} & \textbf{67.81} & \textbf{61.85} & \textbf{56.98} \\
    \hline
    {\texttt{GeoNMF}} & 58.16 & 57.87 & 56.88 & 52.68 & 52.33 \\
    \hline
    \black \texttt{SPACL} & \black 57.49 & \black 58.17 & \black 55.25 & \black 54.22 & \black 53.07 \\
  \hline
    {\texttt{ CD-MVSI}} & 53.45 & 21.82 & 14.57 & 13.53 & 11.71 \\
    \hline
   {{\texttt{CD-BNMF}}} & 51.19 & 49.35 & 49.37 & 48.99 & 48.97 \\
    \hline
    {\texttt{SC (unnorm.)}} & \black 56.97 & \black 56.49 & \black 56.38 & \black 55.88 & \black 55.51 \\
    \hline
    {\texttt{SC (norm.)}} & \black \textbf{64.24} & \black \textbf{62.15} & \black \textbf{57.15} & \black \textbf{56.45} & \black \textbf{55.80} \\
    \hline
    {\texttt{$k$-means}} & \black 54.31 & \black 53.47 & \black 53.87 & \black 53.15 & \black 51.25 \\
    \hline
    \hline
    \end{tabular}%
    }
  \label{tab:mag2_acc}%
\end{table}%

\subsection{ Co-author Network Community Detection}
We test the edge query-based graph clustering algorithms using the co-authorship network data from MAG \cite{sinha2015AnOverview} and DBLP \cite{ley2002DBLP}. 
In all the datasets, node-node connection (co-authorship) is represented by a binary value (``1'' means that the two nodes have co-authored at least one paper and ``0'' means otherwise).
For our experiments, we use the version of these networks published by the authors of \cite{mao2017mixed}. Each network is provided with ground-truth mixed
membership of the nodes. In these networks, nodes represent the authors of the research papers from different fields of study. Some nodes exhibit multiple memberships, since some authors publish in different fields. 

\noindent
{\bf MAG Data.}
In our experiment, we use a subset of nodes that have a degree of at least 5.
Consequently, two co-author networks, namely, MAG1 and MAG2, have 37,680 and 19,457 authors, respectively. In MAG1 and MAG2, all the authors are from 3 different fields of study, i.e., $K=3$.

\noindent
{\bf DBLP Data.}
The 5 DBLP datasets (denoted as DBLP1, ..., DBLP5) used in \cite{mao2017mixed} have 6176, 3145, 2605, 3056, and 6269
nodes, respectively. In addition, the number of clusters present in the DBLP datasets are 6, 3, 3, 3, and 4, respectively. 
To make the clustering problem more challenging, we combine the smaller original DBLP networks into larger-size networks following the setting in \cite{huang2019detect}. We use the notation $\text{DBLP}_{-i}$ to denote the network where all the 5 DBLP networks except the $i$-th one are combined. This way, different $\text{DBLP}_{-i}$'s can be produced for testing the algorithms. In addition, each $\text{DBLP}_{-i}$, for $i=1,\ldots,5$, contains 13-16 clusters (see Table~\ref{tab:dataset} for details), presenting more challenging scenarios.

\noindent
{\bf Metric.}
The methods are evaluated using the averaged {\black SRC as in the literature \cite{mao2017mixed,huang2019detect}}.
We let all the algorithms to access only part of the network under the block-diagonal query pattern in Fig.~\ref{fig:query}. We randomize the node order in each of the 20 trials and present the averaged performance.

\noindent
{\bf Results.}
We first test the mixed membership learning performance by setting $L=10$ for all the MAG and DBLP datasets under test. The percentage of edges queried in this setting is 27.92\%.
From Table~\ref{tab:src}, one can see that the proposed \texttt{BeQuec} algorithm consistently outperforms the baselines in all the graphs under test. In addition, \texttt{BeQuec}'s runtime performance is also the most competitive.

Table~\ref{tab:mag2_acc} uses MAG2 dataset to show the performance of \texttt{BeQuec} under various $L$'s. We also apply the clustering algorithms used in crowdclustering to benchmark \texttt{BeQuec}'s performance.
The ACC metric is used in this table since the clustering algorithms do not output mixed membership. One can see that the performance of all algorithms decreases along with $L$'s increase---which is consistent with our analysis in Theorem~\ref{prop:subspace_estim_noisy}.
{For each $L$ used, the corresponding queried percentage of edges is listed in the first row.}
Notably, when $L=50$, i.e, only 5.82\% of $\bm A$ is observed, the proposed method still outputs a reasonable clustering accuracy. This demonstrates a promising balance between the query sample complexity and the graph clustering accuracy.

%More results can be seen in the supplemental material in Sec.~\ref{sup:exp}.

% Table generated by Excel2LaTeX from sheet 'Sheet4'
% Table generated by Excel2LaTeX from sheet 'Sheet4'

%\begin{table}[H]
%	\centering
%	\caption{MATLAB classifier performance}
%	\resizebox{.5\linewidth}{!}{
%		\begin{tabular}{|c|c|c|}
%			\hline
%			\textbf{Algorithm} & \textbf{Classification Error (\%)}  \\
%			\hline
%			SVM   & 2.62+/-0.18  \\
%			\hline
%			Linear Regression & 6.49+/-0.41 \\
%			\hline
%			Neural Net  & 0.76+/-0.45  \\
%			\hline
%			SVM-RBF & 0.94+/-0.21  \\
%			\hline
%			Naive Bayes & 5.00+/-0.31 \\
%			\hline
%		\end{tabular}%
%	}
%	\label{tab:mushroom}%
%\end{table}%

%\textbf{Assumption:} A classifier is always better in comparing a pair of items than in assigning a specific label to each item. 

% Table generated by Excel2LaTeX from sheet 'Sheet5'

% Table generated by Excel2LaTeX from sheet 'nursery'

\section{Conclusion}
In this work, we proposed a graph query scheme that enables provable graph clustering with partially observed edges. Unlike previous works relying on random graph edge query and computationally heavy convex programming, our method
features a lightweight algorithm and works with systematic edge query patterns that are arguably more realistic in some applications. Our method also learns mixed membership of nodes from overlapping clusters, which improves upon existing provable graph query-based methods that work with single membership and disjoint clusters. We tested the proposed \texttt{BeQuec} algorithm on two real-world network data analytics problems, namely, crowdclustering and network community detection. \texttt{BeQuec} offers promising results in both experiments. 

{\black A potential future direction is to reduce the number of queries by allowing $N/L <K$. This may be achieved by using third-order graph statistics from the sampled blocks and a tensor decomposition framework. Using tensor-based approaches may likely present a challenging optimization problem without polynomial-time algorithms (as opposed to the matrix based framework as in this work). Studying its query complexity and computational cost trade-off presents an interesting research topic.}  

%\bibliographystyle{IEEEtran}
%\bibliography{refs}
% Generated by IEEEtran.bst, version: 1.14 (2015/08/26)

\bibliographystyle{IEEEtran}

%fr  Future work includes reducing the query sample complexity, e.g., by allowing $N/L<K$, and developing algorithms that may estimate the number of clusters and the membership simultaneously.
%\reminder{A little more discussion on this case? mention tensors?}

%\bibliographystyle{IEEEtran}
%\bibliography{refs}

%\onecolumn

\appendix

\section{Proof of Theorem~\ref{prop:subspace_estim}}\label{supp:subspace_estim}

Algorithm \ref{algo:proposed} aims to output $\widehat{\bm U}$ such that ${\sf range}(\widehat{\bm U}) = {\sf range}([\bm M_1,\ldots,\bm M_{L}]^\T)$. It implies that 
$
\widehat{\bm U} = [\bm U_1^{\top},\ldots,\bm U_L^{\top}]^{\top}= [\bm M_{1},\bm M_{2},\ldots, \bm M_{L}]^{\top}\bm G= \bm M^{\top} \bm G$,
for a certain nonsingular $\bm G \in \mathbb{R}^{K \times K}$. 
To show this, we start by considering the below four blocks for any $r \in [L-2]$ :
\begin{subequations}\label{eq:Ps1}
\begin{align} 
\bm P_{\ell_r,m_r}&= \M_{\ell_r}^{\top}{\bm B}\M_{m_{r}},\\
\bm P_{\ell_{r+1},m_r}&= \M_{\ell_{r+1}}^{\top}{\bm B}\M_{m_r}, \\
\bm P_{\ell_{r+1},m_{r+1}}&= \M_{\ell_{r+1}}^{\top}{\bm B}\M_{m_{r+1}},\\  
\bm P_{\ell_{r+2},m_{r+1}}&= \M_{\ell_{r+2}}^{\top}{\bm B}\M_{m_{r+1}}.
\end{align}
\end{subequations}
Algorithm \ref{algo:proposed} defines the following matrices in $r$th and $(r+1)$th iterations, respectively (recall that we have assumed $\bm A_{\ell,m}=\bm P_{\ell,m} = \bm M_{\ell}^{\top}\bm B \bm M_{m}$ for any $\ell,m \in [L]$ in the ideal case):
$$
\bm C_r := [\bm P_{\ell_r,m_r}^{\top}, \bm P_{\ell_{r+1},m_r}^{\top}]^{\top},~\bm C_{r+1} := [\bm P_{\ell_{r+1},m_{r+1}}^{\T} \bm P_{\ell_{r+2},m_{r+1}}^{\T}]^{\T}.$$
The top-$K$ SVDs of $\bm C_r$ and $\bm C_{r+1}$ can be represented as follows:
\begin{subequations}\label{eq:CD1}
\begin{align} 
\bm C_r &= [{\bm U}_{\ell_r}^{\top}, {\bm U}_{\ell_{r+1}}^{\top}]^{\top}\bm \Sigma_r{\bm V_{m_r}^{\top}},\\
\bm C_{r+1} &= [\overline{\bm U}_{\ell_{r+1}}^{\top}, \overline{\bm U}_{\ell_{r+2}}^{\top}]^{\top}\bm \Sigma_{r+1}{\bm V}_{m_{r+1}}^{\top}.
\end{align}
\end{subequations}
Combining \eqref{eq:Ps1} and \eqref{eq:CD1}, and using the assumption that ${\rm rank}(\bm M_{\ell})={\rm rank}(\bm B)=K$ and $K\leq |\mathcal{S}_{\ell}|$ for every $\ell$, the following equations hold:
\begin{subequations}\label{eq:UVs}
\begin{align} 
{\bm U}_{\ell_r} = \bm M_{\ell_r}^{\top} \bm G_r, \quad &{\bm U}_{\ell_{r+1}} = \bm M_{\ell_{r+1}}^{\top}\bm G_r, \label{eq:UV1}\\
\overline{\bm U}_{\ell_{r+1}} = \bm M_{\ell_{r+1}}^{\top} \overline{\bm G}_{r+1}, \quad &\overline{\bm U}_{\ell_{r+2}} = \bm M_{\ell_{r+2}}^{\top}\overline{\bm G}_{r+1}, \label{eq:UV3}
\end{align}
\end{subequations}
where $\bm G_r\in \mathbb{R}^{K\times K}$ and $\overline{\bm G}_{r+1} \in \mathbb{R}^{K \times K}$ are certain nonsingular matrices. The equations in \eqref{eq:UVs} imply that  ${\bm U}_{\ell_r}$, ${\bm U}_{\ell_{r+1}}$ and $\overline{\bm U}_{\ell_{r+2}}$ are the bases of ${\sf range}(\bm M_{\ell_r}^\T)$, ${\sf range}(\bm M_{\ell_{r+1}}^\T)$ and ${\sf range}(\bm M_{\ell_{r+2}}^\T)$, respectively.  Since $\bm G_r = \overline{\bm G}_{r+1}$ does not generally hold, the basis $\overline{\bm U}_{\ell_{r+2}}$ cannot be directly combined with the bases ${\bm U}_{\ell_r}$ and ${\bm U}_{\ell_{r+1}}$ to obtain
$ {\sf range}([\bm M_{\ell_r},\bm M_{\ell_{r+1}},\bm M_{\ell_{r+2}}]^\T)$.
Therefore, we are interested in obtaining the basis defined as below:
\begin{align} \label{eq:U}
{\bm U}_{\ell_{r+2}} := \bm M_{\ell_{r+2}}^{\top} \bm G_r.
\end{align}
In order to identify ${\bm U}_{\ell_{r+2}}$ as defined in \eqref{eq:U}, we use the second equality in \eqref{eq:UV1} and the first equality in \eqref{eq:UV3} to construct the following:
\begin{align} \label{eq:phis}
%\overline{\bm U}_{i+1} = \bm V_{i+1} \bm \Phi_i^{-1}\bm \Theta_i\\
 \overline{\bm U}_{\ell_{r+1}}^{\dagger} {\bm U}_{\ell_{r+1}} = \overline{\bm G}_{r+1}^{-1}\bm G_r.
\end{align}
Combining the second equality in \eqref{eq:UV3} with \eqref{eq:phis}, we have
\begin{align}
\overline{\bm U}_{\ell_{r+2}} \overline{\bm U}_{\ell_{r+1}}^{\dagger} {\bm U}_{\ell_{r+1}}  &= \left(\bm M_{\ell_{r+2}}^{\top} \overline{\bm G}_{r+1}\right)\left( \overline{\bm G}_{r+1}^{-1}\bm G_r\right) = \bm M_{\ell_{r+2}}^{\top}  \bm G_r . \nonumber %\label{eq:UVU}
\end{align}
Hence, the following holds:
\begin{align} \label{eq:U2}
{\bm U}_{\ell_{r+2}} = \overline{\bm U}_{\ell_{r+2}} \overline{\bm U}_{\ell_{r+1}}^{\dagger} {\bm U}_{\ell_{r+1}}  = \bm M_{\ell_{r+2}}^{\top} \bm G_r,
\end{align}
where $\overline{\bm U}_{\ell_{r+2}}$ and $\overline{\bm U}_{\ell_{r+1}}$ are obtained from the top-$K$ SVD of $\bm C_{r+1}$ and ${\bm U}_{\ell_{r+1}}$ is obtained from the top-$K$ SVD of $\bm C_r$.

Similarly, ${\bm U}_{\ell_{r-1}}$ can be identified using the top-$K$ SVDs of $\bm C_r$ and $\bm C_{r-1}$ by the following:
\begin{align} \label{eq:U2_1}
{\bm U}_{\ell_{r-1}} = \overline{\bm U}_{\ell_{r-1}} \overline{\bm U}_{\ell_{r}}^{\dagger} {\bm U}_{\ell_{r}}  = \bm M_{\ell_{r-1}}^{\top} \bm G_r,
\end{align}
where ${\bm U}_{\ell_{r}}$ is obtained from the top-$K$ SVD of $\bm C_r$ and $\overline{\bm U}_{\ell_{r-1}}$ and $\overline{\bm U}_{\ell_{r}}$ are obtained from the top-$K$ SVD of $\bm C_{r-1} := [\bm P_{\ell_{r-1},m_{r-1}}^{\T}, \bm P_{\ell_{r},m_{r-1}}^{\T}]^{\T}$.

Algorithm~\ref{algo:proposed} first applies top-$K$ SVD for $\bm C_{T}$ where $T= \left\lfloor L/2\right\rfloor$  and the bases ${\bm U}_{\ell_T}$ and ${\bm U}_{\ell_{T+1}}$ are identified (lines \ref{algoline:Ct}-\ref{algoline:Ut}).  By letting $\bm G_T=\bm G$, we thus obtain
\begin{align} \label{eq:UT}
{\bm U}_{\ell_T} = \bm M_{\ell_{T}}^{\top} \bm G\quad \text{and} \quad{\bm U}_{\ell_{T+1}} = \bm M_{\ell_{T+1}}^{\top}\bm G.
\end{align}

Next, we use induction to show how the iterations in Algorithm \ref{algo:proposed} (lines \ref{algoline:iterative1_begin}-\ref{algoline:iterative1_end})  identify $\bm U_{\ell_{T+2}},\ldots,\bm U_L$. 
  The induction hypothesis is that before any iteration at $r = T+1,T+2,\dots,L-1$,  the basis $\bm U_{\ell_{r}}$ is identified from the previous iteration such that $\bm U_{\ell_{r}} = \bm M_{\ell_{r}}^{\top}\bm G$.
 % \begin{align}\label{eq:Uis1}
 % \bm U_{\ell_{r}} = \bm M_{\ell_{r}}^{\top}\bm B \bm \Theta. %\end{align}
  Note that the induction hypothesis holds true for $r=T+1$ from lines \ref{algoline:Ct}-\ref{algoline:Ut} of Algorithm \ref{algo:proposed} by performing top-$K$ SVD of $\bm C_T$.
  From \eqref{eq:U2}, we have seen that given $\bm U_{\ell_r}$, one can identify $\bm U_{\ell_{r+1}}$ using the matrices resulted from the top-$K$ SVD of $\bm C_{r}$.
  Therefore, by induction, we can establish that Algorithm \ref{algo:proposed} identifies
    \begin{align}\label{eq:Uis2}
  \bm U_{\ell_{r}} = \bm M_{\ell_{r}}^{\top}\bm G, ~\text{for every }~r \in \{T+2,\dots, L\}. \end{align}
  
  Similarly, using induction via employing the results in  \eqref{eq:U2_1} and \eqref{eq:UT}, we can establish that lines \ref{algoline:iterative2_begin}-\ref{algoline:iterative2_end} of Algorithm \ref{algo:proposed} identifies
    \begin{align}\label{eq:Uis4}
  \bm U_{\ell_{r}} = \bm M_{\ell_{r}}^{\top}\bm G, ~\text{for every }~r \in \{1,\dots, T-1\}. \end{align}
  
\begin{comment}
Similarly, using induction, we can show how the iterations in Algorithm \ref{algo:proposed} (lines 13-18)  identify $\bm U_{\ell_{1}},\ldots,\bm U_{T-1}$. 
In this case, the induction hypothesis is that before any iteration  at $r = T, T-1,\dots,2$,  the basis $\bm U_{\ell_{r}}$ is identified from the previous iteration such that $\bm U_{\ell_{r}} = \bm M_{\ell_{r}}^{\top}\bm G$.
The hypothesis holds true for $r=T$ from lines 2-4 of Algorithm \ref{algo:proposed} by performing top-$K$ svd of $\bm C_T$.
From \eqref{eq:U2_1}, it is shown that given $\bm U_{\ell_r}$, one can identify $\bm U_{\ell_{r-1}}$ using the matrices resulted from the truncated top-$K$ SVD of $\bm C_{r-1}$.
Thus, using induction, we can establish that Algorithm \ref{algo:proposed} identifies
\begin{align}\label{eq:Uis4}
\bm U_{\ell_{r}} = \bm M_{\ell_{r}}^{\top}\bm G, ~\text{for every }~r \in \{1,\dots, T-1\}. \end{align}
\end{comment}
  
 Combining \eqref{eq:UT}, \eqref{eq:Uis2},  and \eqref{eq:Uis4}, one can see that Algorithm \ref{algo:proposed} identifies $\widehat{\bm U} = [\bm U_1^{\top}, \bm U_2^{\top},~~\ldots ~~, \bm U_L^{\top}]^{\top}$ such that 
$ {\sf range}(\widehat{\bm U})= {\sf range}([\bm M_1,\bm M_{2},\ldots,\bm M_{L}]^\T).$

{\color{black}
\section{Proof of Proposition \ref{lem:Lm} } \label{app:dirichlet}
%The proof of this lemma is the generalization of the result proposed in \cite{ibrahim2019crowdsourcing} where the columns $\bm m_n$ are assumed to be drawn from the probability simplex uniformly at random, i.e., $\alpha = \bm 1$. Here, we consider the general case where $\bm \alpha = [\alpha_1,\dots,\alpha_K]^{\top}$.

\subsection{Preliminaries of Dirichlet Distribution}
Let $\mathcal{P}^K = \{ \bm z \in \mathbb{R}^K | \bm{z}^\top \bm{1} = 1 , \bm z \ge 0\}$ denote the $(K-1)$-dimensional probability simplex.
The Dirichlet distribution with parameter $\bm \nu = [\nu_1,\dots,\nu_K]^{\top}$ has the following probability density function (PDF):
\begin{align} \label{eq:pdfdirich}
    f(\bm z_{-k}; \bm \nu) = \frac{\Gamma (\sum_{i=1}^K \nu_i)}{\prod_{i=1}^K \Gamma(\nu_i)}  (1-\sum_{i\neq k}z_i)^{\nu_k-1}\prod_{i\neq k}z_i^{\nu_i-1},
\end{align}
for any $k \in [K]$ where $\bm z_{-k} = [z_1,\dots,z_{k-1},z_{k+1},\dots,z_K]^{\top}$ and $ [z_1,\dots,z_K]^{\top} \in \mathcal{P}^K$ \cite{frigyik2010introduction}. The choice of $k$ in \eqref{eq:pdfdirich} is arbitrary since $\sum_{i \neq k} z_i = 1-z_k$ holds for any $k$.  
The cumulative distribution function (CDF) of the Dirichlet distribution is given by 
\begin{align} 
F(\bm z_{-k}; \bm \nu) &= 
\int_{0}^{z_1}\dots\int_{0}^{z_{k-1}}\int_{0}^{z_{k+1}}\dots \int_{0}^{z_K} f(\bm z_{-k}; \bm \nu) d\bm z_{-k} \nonumber\\
&=\frac{\Gamma (\sum_{i=1}^K \nu_i)}{\prod_{i=1}^K \Gamma(\nu_i)}L_k(\bm m,\bm \nu)\prod_{i \neq k}\frac{z_i^{\nu_i}}{\nu_i} , \label{eq:cdf1}
%&\ge\frac{\Gamma (\sum_{i=1}^K \alpha_i)}{\prod_{i=1}^K \Gamma(\alpha_i)}\frac{(m_1)^{\alpha_1}}{\alpha_1}\dots\frac{(m_{k-1})^{\alpha_{k-1}}}{\alpha_{k-1}}\frac{(m_{k+1})^{\alpha_{k+1}}}{\alpha_{k+1}}\dots \frac{m_{K}^{\alpha_{K}}}{\alpha_{K}}
\end{align}
where the function $L_k(\bm m,\bm \nu)$ is known as the Lauricella function of the first kind and $L_k(\bm m,\bm \nu)> 1$ for any $k$ \cite{gouda2010numerical}. The CDF $F(\bm z_{-k}; \bm \nu)$ measures the cumulative probability starting from the $k$-th vertex of the probability simplex till each coordinate $i \neq k$ reaches $z_i$.

%{\red We should mention the difference between $\bm z_{-k}$ and $\bm z_{-j}$.
%Also, let us point out what does F measures. In principle, just the $k$th corner prob., right?

%}

\subsection{Proof {of the Proposition}}

 According to Assumption \ref{as:as1}, $\bm m_n \in \mathcal{P}^K$ holds for all $n \in [N]$ and the vectors $\bm m_n$ are sampled randomly from a Dirichlet distribution with parameter $\bm \nu$. Using Assumption \ref{as:as1}, we aim to analyze the conditions under which  the matrix $\bm M \in [\bm m_1,\dots,\bm m_N]$ satisfies $\varepsilon$-separability (see Definition \ref{def:sep}), i.e., there exists $\bm \varLambda =\{ q_1,\ldots,q_K \}$ such that for $k=1,\ldots,K$,
\begin{align} \label{eq:condMvec}
\|\bm m_{q_k}-\bm e_k\|_2\leq \varepsilon, ~ \bm m_{q_k} \in \mathcal{P}^K.
\end{align} 
The criterion in \eqref{eq:condMvec} {means that there exist} $\bm m_{q_k} \in \mathcal{P}_{k}(\varepsilon)$ for $k=1,\dots,K$ where
\begin{align} \label{eq:calPk}
\mathcal{P}_{k}(\varepsilon) = \{\bm z \in \mathcal{P}^K | \|\bm z-\bm e_k\|_2 \le \varepsilon\}.
\end{align}

%Geometrically, $\mathcal{P}_{k}(\varepsilon)$ is the intersection of the probability simplex $\mathcal{P}^K$ and a Euclidean ball of radius $\varepsilon$ with its at $\bm e_k$.

Next, we proceed to check under what conditions there exists at least one $\bm m_{n}$ among $\bm m_1,\dots, \bm m_N$ such that $\bm m_{n} \in \mathcal{P}_{k}(\varepsilon)$. For this, let us define two random variables as below:
\begin{align} 
E_{n,k}&=
\begin{cases}
1, & \text{if~}\ \bm m_n \in \mathcal{P}_{k}(\varepsilon) \\
0, & \text{otherwise}.\label{eq:Endef}
\end{cases}\\
H_k &= \sum_{n=1}^N E_{n,k}.
\end{align}

In order to characterize the probability that there exists at least one $\bm m_n$ among $\bm m_1,\dots, \bm m_N$ such that $\bm m_{n} \in \mathcal{P}_{k}(\varepsilon)$, we need to characterize ${\sf Pr}\left(H_k > 0\right)$. To achieve this, we invoke Chernoff-Hoeffding
bound \cite{wainwright2019high}:

\begin{theorem} \label{lem:chernoff}
	Let $E_{1,k},\dots,E_{N,k}$ be independent bounded random variables such
	that $E_{n,k} \in [a_{n,k},b_{n,k}]$. Then for any $t > 0$,
	\begin{align*}
	{\sf Pr}\left(\sum_{n=1}^N E_{n,k} - \sum_{n=1}^N\mathbb{E}[E_{n,k}] \le -t\right) \le e^{\frac{-2t^2}{\sum_{n=1}^N (b_{n,k}-a_{n,k})^2}}.
	\end{align*}
\end{theorem}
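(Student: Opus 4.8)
The plan is to establish this lower-tail inequality through the classical exponential-moment (Chernoff) method. Write $S = \sum_{n=1}^N \left(E_{n,k} - \mathbb{E}[E_{n,k}]\right)$ for the centered sum, so that the goal is to bound ${\sf Pr}(S \le -t)$. First I would introduce a free parameter $\lambda > 0$ and apply Markov's inequality to the nonnegative random variable $e^{-\lambda S}$:
\begin{align*}
{\sf Pr}(S \le -t) = {\sf Pr}\left(e^{-\lambda S} \ge e^{\lambda t}\right) \le e^{-\lambda t}\, \mathbb{E}\left[e^{-\lambda S}\right].
\end{align*}
This reduces the tail estimate to controlling the moment generating function (MGF) of $S$.

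Next I would exploit the independence of the $E_{n,k}$'s to factor the MGF. Writing $X_n = E_{n,k} - \mathbb{E}[E_{n,k}]$, each $X_n$ is a zero-mean random variable supported on the interval $[a_{n,k}-\mathbb{E}[E_{n,k}],\, b_{n,k}-\mathbb{E}[E_{n,k}]]$, whose width is exactly $b_{n,k}-a_{n,k}$. Independence then gives
\begin{align*}
\mathbb{E}\left[e^{-\lambda S}\right] = \prod_{n=1}^N \mathbb{E}\left[e^{-\lambda X_n}\right].
\end{align*}

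The key technical ingredient, and the main obstacle, is Hoeffding's lemma: for any zero-mean random variable $X$ supported on an interval $[c,d]$, one has $\mathbb{E}[e^{sX}] \le e^{s^2 (d-c)^2/8}$ for every real $s$. I would prove this by analyzing the log-MGF $\psi(s) = \log \mathbb{E}[e^{sX}]$. A direct computation gives $\psi(0)=0$ and $\psi'(0)=\mathbb{E}[X]=0$, while $\psi''(s)$ equals the variance of $X$ under the exponentially tilted law $d\mathbb{P}_s \propto e^{sX}\,d\mathbb{P}$. Since this tilted distribution is still supported on $[c,d]$, its variance is at most $(d-c)^2/4$ by Popoviciu's range bound on the variance. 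A second-order Taylor expansion of $\psi$ about $0$ with this uniform bound on $\psi''$ then yields $\psi(s) \le s^2(d-c)^2/8$. Applying the lemma with $s=-\lambda$ to each $X_n$ gives $\mathbb{E}[e^{-\lambda X_n}] \le e^{\lambda^2 (b_{n,k}-a_{n,k})^2/8}$.

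Finally I would combine the factors and optimize over $\lambda$. Substituting the product bound back into the Chernoff inequality gives
\begin{align*}
{\sf Pr}(S \le -t) \le \exp\left(-\lambda t + \frac{\lambda^2}{8}\sum_{n=1}^N (b_{n,k}-a_{n,k})^2\right).
\end{align*}
The exponent is a univariate quadratic in $\lambda$, minimized at $\lambda^\star = 4t / \sum_{n=1}^N (b_{n,k}-a_{n,k})^2$, whose optimal value equals $-2t^2 / \sum_{n=1}^N (b_{n,k}-a_{n,k})^2$. This is exactly the stated bound, completing the argument. I expect the only delicate step to be the proof of Hoeffding's lemma, specifically the variance bound for the tilted distribution; the Chernoff reduction, the factorization, and the final optimization are all routine.
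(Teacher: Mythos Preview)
Your proof is correct and is the standard derivation of Hoeffding's inequality via the Chernoff method. However, the paper does not actually prove this statement: it is invoked as a known result, cited directly from Wainwright's textbook \emph{High-Dimensional Statistics: A Non-Asymptotic Viewpoint}, and then applied to bound ${\sf Pr}(H_k > 0)$. So there is nothing to compare against---you have supplied a full argument where the paper simply quotes the literature.
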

By letting $t = \sum_{n=1}^N\mathbb{E}[E_{n,k}]$ and using the fact that $E_{n,k} \in [0,1]$, we get the below from Theorem \ref{lem:chernoff}:
\begin{align} \label{eq:chernoff1}
{\sf Pr}\left(H_k > 0\right) &\ge 1- e^{-\frac{2\left(\sum_{n=1}^N\mathbb{E}[E_{n,k}]\right)^2}{N}}.
\end{align}
Next, we need to characterize the term $\mathbb{E}[E_{n,k}]$ which appears in the R.H.S. of \eqref{eq:chernoff1}. From the definition of the random variable $E_{n,k}$ in \eqref{eq:Endef}, we can see that

\begin{align}
\mathbb{E}[E_{n,k}] = {\sf Pr}(\bm m_n \in \mathcal{P}_{k}(\varepsilon)) 
& = \int_{\bm z \in \mathcal{P}_{k}(\varepsilon)} f(\bm z_{-k}; \bm \nu) d\bm z_{-k}, \nonumber\\
&\ge \int_{z_1=0}^{\frac{\varepsilon}{\sqrt{2}}}\dots\int_{z_{k-1}=0}^{\frac{\varepsilon}{\sqrt{2}}}\int_{z_{k+1}=0}^{\frac{\varepsilon}{\sqrt{2}}}\dots \int_{z_K=0}^{\frac{\varepsilon}{\sqrt{2}}} f(\bm z_{-k}; \bm \nu) d\bm z_{-k} \nonumber\\
%&=F(\bm z_{-k}^{\rm upp}; \bm \alpha)  - F(\bm z_{-k}^{\rm low}; \bm \alpha) \nonumber\\
&= F(\bm z_{-k}^{*}; \bm \nu)  \ge   \frac{\Gamma (\sum_{i=1}^K \nu_i)}{\prod_{i=1}^K \Gamma(\nu_i)}
\prod_{i\neq k} \frac{\left(\frac{\varepsilon}{\sqrt{2}}\right)^{\nu_i}}{\nu_i}\nonumber\\
% \frac{\left(\frac{\varepsilon}{\sqrt{2}}\right)^{\alpha_1}}{\alpha_1}
% \dots\frac{\left(\frac{\varepsilon}{\sqrt{2}}\right)^{\alpha_{k-1}}}{\alpha_{k-1}} \frac{\left(\frac{\varepsilon}{\sqrt{2}}\right)^{\alpha_{k+1}}}{\alpha_{k+1}}\dots \frac{\left(\frac{\varepsilon}{\sqrt{2}}\right)^{\alpha_{K}}}{\alpha_{K}} \nonumber \\
&= \frac{\Gamma (\sum_{i=1}^K \nu_i)}{\prod_{i=1}^K \Gamma(\nu_i)} \frac{(\varepsilon/\sqrt{2})^{\sum_{i \neq k}\nu_i}}{\prod_{i \neq k}\nu_i} := G_k(\varepsilon,\bm \nu), \label{eq:cdf}
\end{align}
where $\bm z_{-k}^{*} = [z_1^*,\dots,z^*_{k-1},z^*_{k+1},z^*_K]^{\top} = [\frac{\varepsilon}{\sqrt{2}},\dots,\frac{\varepsilon}{\sqrt{2}}]^{\top}$. The first inequality is obtained by generalizing the geometric property as shown in Fig.~\ref{fig:simplex} for any $K-1$ dimensional probability simplex and the second inequality is obtained by combining \eqref{eq:cdf1} and the property that $L_k(\bm m,\bm \alpha)> 1$ for any $k$ \cite{gouda2010numerical}.
\color{black}

% where $f(\cdot)$ is the probability distribution function of the Dirichlet distribution with parameter $\bm \alpha = [\alpha_1,\dots, \alpha_K]$ and  the first inequality is obtained by generalizing the geometric property as shown in Fig.~\ref{fig:simplex} for any $K-1$ dimensional probability simplex. The function  $F(\cdot)$ is the 
%  cumulative distribution function of the Dirichlet distribution which is given by \cite{gouda2010numerical}
% \begin{align*}
% F(m_1,\dots,m_{k-1},m_{k+1},\dots,m_{K}; \alpha_1,\dots, \alpha_K) &:= \frac{\Gamma (\sum_{i=1}^K \alpha_i)}{\prod_{i=1}^K \Gamma(\alpha_i)}\frac{(m_1)^{\alpha_1}}{\alpha_1}\dots\frac{(m_{k-1})^{\alpha_{k-1}}}{\alpha_{k-1}}\frac{(m_{k+1})^{\alpha_{k+1}}}{\alpha_{k+1}}\dots \frac{m_{K}^{\alpha_{K}}}{\alpha_{K}} L_k(\bm m,\bm \alpha)
% %&\ge\frac{\Gamma (\sum_{i=1}^K \alpha_i)}{\prod_{i=1}^K \Gamma(\alpha_i)}\frac{(m_1)^{\alpha_1}}{\alpha_1}\dots\frac{(m_{k-1})^{\alpha_{k-1}}}{\alpha_{k-1}}\frac{(m_{k+1})^{\alpha_{k+1}}}{\alpha_{k+1}}\dots \frac{m_{K}^{\alpha_{K}}}{\alpha_{K}}
% \end{align*}
% where the function $L_k(\bm m,\bm \alpha)$ is known as the Lauricella function of the first kind. The second inequality is obtained using the property that $L_k(\bm m,\bm \alpha)> 1$ for any $k$ \cite{gouda2010numerical}.

% {\red suggestion: put this paragraph, F, f, L functions into the preliminary. Then when we derive things, we could simply cite.}

% and $F(\varepsilon,K,\bm \alpha)$ is the cumulative distribution function (CDF) of the Dirichlet distribution evaluated at $m_k = \varepsilon/\sqrt{2},k=1,\dots,K$.
\begin{figure}[t!]
\black 
	\centering 
	\includegraphics[scale=0.15]{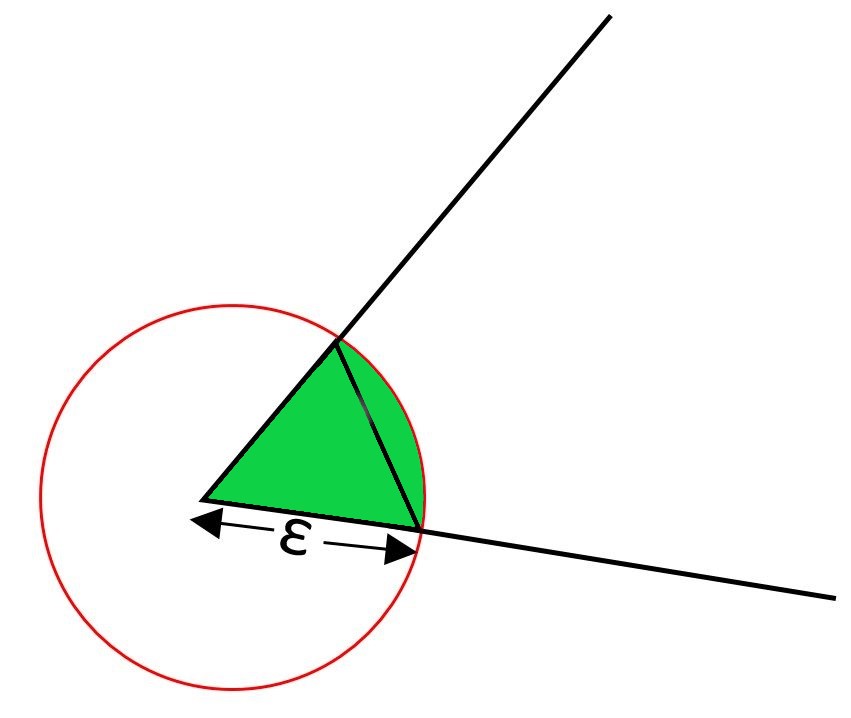}
	\caption{The figure shows any vertex $k \in \{1,2,3\}$ of the probability simplex $\mathcal{P}^3$. The green shaded region represents $\mathcal{P}_{k}(\varepsilon)$ (see \eqref{eq:calPk}) which is the intersection of the probability simplex $\mathcal{P}^3$ and a Euclidean ball of radius $\varepsilon$ with its center at $\bm e_k$. The small triangle near the vertex is $2$-dimensional probability simplex with edge lengths $\varepsilon$; i.e, it intersects the co-ordinate axes at $\frac{\varepsilon}{\sqrt{2}}\bm e_k$. It can be observed that the volume of the green shaded region is larger than the volume of the small triangle.}
	\label{fig:simplex}
\end{figure}

Applying \eqref{eq:cdf} in \eqref{eq:chernoff1}, we get the probability such that there exists at least one $\bm m_n$ such that $\bm m_n \in \mathcal{P}_{k}(\varepsilon)$ as below:
\begin{align*}
{\sf Pr}\left(H_k > 0\right) &\ge 1- e^{-2N(G_k(\varepsilon,\bm \nu))^2}.
\end{align*}
To characterize the corresponding probability for all the $K$ vertices, we can apply the union bound to obtain the below
\begin{align} \label{eq:eventprob}
{\sf Pr}\left(\bigcap_{k=1}^K \left(H_k > 0\right)\right) &\ge 1- \sum_{k=1}^Ke^{-2N(G_k(\varepsilon,\bm \nu))^2} \\
&\ge 1- Ke^{-2N(G(\varepsilon,\bm \nu))^2} ,
\end{align}
where $G(\varepsilon,\bm \nu) = \min_k G_k(\varepsilon,\bm \nu)$.

Eq.~\eqref{eq:eventprob} implies that with probability greater than or equal to $1- Ke^{-2N(G(\varepsilon,\bm \nu))^2} $, the matrix $\bm M=[\bm m_1,\dots, \bm m_N]$ satisfies $\varepsilon$-separability. By letting $\mu =Ke^{-2N(G(\varepsilon,\bm \nu))^2}$, one can see that if 
	$N \ge {\rm log}\left(\frac{K}{\mu}\right)/(2G(\varepsilon,\bm \nu))^2$,
the matrix $\bm M=[\bm m_1,\dots, \bm m_N]$ satisfies $\varepsilon$-separability with probbaility greater than $1-\mu$ where
\begin{align*}
G(\varepsilon,\bm \nu) =  \frac{\Gamma (\sum_{i=1}^K \nu_i)}{\prod_{i=1}^K \Gamma(\nu_i)} \min_k\frac{(\varepsilon/\sqrt{2})^{\sum_{i \neq k}\nu_i}}{\prod_{i \neq k}\nu_i}.
\end{align*}

}

\section{Proof of Theorem \ref{prop:subspace_estim_noisy}} \label{supp:subspace_estim_noisy}

{\black The analysis for Theorem  \ref{prop:subspace_estim_noisy}
%for establishing the conditions for provably learning $\bm M$ by Algorithm \ref{algo:proposed} under the binary observation case 
consists of three major steps. We provide an overview to the proof as follows:

\noindent    
{\bf (A) Range Space Estimation.}    In this step, lines \ref{line:Tinit}-\ref{line:Uestim} of Algorithm \ref{algo:proposed} that learn the matrix $\bm U$ such that ${\sf range}({\bm U})= {\sf range}(\bm M^\T)$ is analyzed.
 \begin{itemize}
     \item[-]     {\bf A1)  }
    Under the binary observation model, the top-$K$ SVD's are performed on the adjacency blocks $\bm A_{\ell,m}$'s not on blocks $\bm P_{\ell,m}$'s. The noise associated with this is characterized by using Lemmas \ref{lem:svds} and \ref{lem:adjacency}.
    \item[-]     {\bf A2)  }
    Each matrix block $\bm U_{r}$ is estimated using the sub-algorithm \ref{algo:subspace_stitiching}. 
    Lemmas~\ref{lem:betaminmax}-\ref{lem:matrixmult} are derived to characterize this basis stitching operation. A number of lemmas (Lemmas \ref{lem:pseudoinverse}-\ref{lem:sigmamaxUr}) are employed to assist this step, for e.g., to characterize the condition numbers of the latent matrices.
    
    \item[-]     {\bf A3)  }
     The final estimation error bound for the matrix $
{\bm U} = [\bm U_1^{\top},\ldots,\bm U_L^{\top}]^{\top}$ is obtained by recursively applying the error bound for each $\bm U_r$ and by employing Lemma \ref{lem:sigmabound}.
 \end{itemize}   

\noindent    
{\bf (B) Membership Estimation. } In this step, the error bound for the membership matrix $\bm M$ (see lines \ref{line:Gestim}-\ref{line:Mestim} of Algorithm \ref{algo:proposed}) is analyzed.
\begin{itemize}
    \item[-]     {\bf B1)  } The conditions for $\bm M$ to approximately satisfy the ANC is characterized by utilizing Proposition \ref{lem:Lm}.
    \item[-]     {\bf B2)  }
    The final estimation error bound for $\bm M$ is derived using Lemma \ref{lem:Mestim}. Lemmas \ref{lem:Mestim} characterizes the combined noise in the model which is contributed by both the noise in the estimate of $\bm U$ and the deviation from the ANC.
    
\end{itemize}

\noindent \textbf{(C) Putting Together.}  In this step, various conditions used in Lemma \ref{lem:svds}-\ref{lem:Mestim} and Proposition \ref{lem:Lm} are combined to derive the final conditions of Theorem \ref{prop:subspace_estim_noisy}.

}
\subsection{\black Range Space Estimation}

%To simplify the notations in the proof, we first analyze a simpler EQP pattern where $\ell_r= r$, $\ell_{r+1}=m_r=r+1$ and $|\mathcal{S}_{r}|=N/L$ for every $r \in \{1,\dots,L\}$ where $N/L$ is an integer; the analysis can be readily generalized to cover all the patterns under the proposed EQP. Note that this simplified pattern is same as the diagonal pattern as shown in the right side of Fig.~\ref{fig:query}.  

%First we consider the following lemma which characterizes the estimation accuracy of the factors resulting from the SVD of a noisy matrix:

\begin{comment}
In Theorem \ref{prop:subspace_estim},
the subspace identifiability is established via  applying top-$K$ SVD on $
\bm C_r := [\bm P_{\ell_r,m_r}^{\top}, \bm P_{\ell_{r+1},m_{r}}^{\top}]^{\top}$ where $\{\ell_r\}_{r=1}^L= \{1,\dots,L\}$, $m_r \in \{1,\dots,L\}$ and $\ell_{r+1} \neq \ell_r$. Here, we aim to characterize the accuracy of the subspace estimation when truncated top-$K$ SVD is performed on 
$\widehat{\bm C}_r := [\bm A_{\ell_r,m_r}^{\top}, \bm A_{\ell_{r+1},m_{r}}^{\top}]^{\top}$.
\end{comment}

In Theorem \ref{prop:subspace_estim},
the subspace identifiability is established via successively applying the top-$K$ SVD on $
\bm C_r := [\bm P_{\ell_r,m_r}^{\top}, \bm P_{\ell_{r+1},m_r}^{\top}]^{\top}$, i.e.,
$\bm C_r = [\overline{\bm U}_{\ell_r}^{\top}, \overline{\bm U}_{\ell_{r+1}}^{\top}]^{\top}\bm \Sigma_r{\bm V^{\top}_{m_r}}.
$ However, since $\bm C_r$ is not observed in practice, Algorithm~\ref{algo:proposed} performs the top-$K$ SVD on its estimates $\widehat{\bm C}_r$, which is defined as
$
\widehat{\bm C}_r := [\bm A_{\ell_r,m_r}^{\top}, \bm A_{\ell_{r+1},m_r}^{\top}]^{\top}.$ 
%Since $\{\ell_r\}_{r=1}^L=\{1,\dots,L\}$ according to EQP, let us consider a simpler EQP pattern such that that $\ell_r=r$ for every $r$. Then, 
The factors extracted from the top-$K$ SVD of $\widehat{\bm C}_r$ are denoted as $\widehat{\overline{\bm U}}_{\ell_r}$ and $\widehat{\overline{\bm U}}_{\ell_{r+1}}$, which correspond to the `noiseless versions' $\overline{\bm U}_{\ell_r}$ and $\overline{\bm U}_{\ell_{r+1}}$, respectively.
To proceed, we invoke the following lemma to bound the estimation accuracy of $\widehat{\overline{\bm U}}_{\ell_r}$ and $\widehat{\overline{\bm U}}_{\ell_{r+1}}$.

\begin{lemma} \label{lem:svds}
	%Consider the top-$K$ SVD of $\bm C_r$ as defined in \eqref{eq:CD2}. Let $\widehat{\overline{\bm U}}_{r}$ and $\widehat{\overline{\bm U}}_{r+1}$ be the factors obtained from the truncated top-$K$ SVD of $\widehat{\bm C}_r$  corresponding to $\overline{\bm U}_{r}$ and $\overline{\bm U}_{r+1}$, respectively. 
	Denote $\sigma_1 := \underset{r\in [L-1]}{\max}~ \sigma_{\max}(\bm C_r)$ and $\sigma_K := \underset{r\in [L-1]}{\min}~ \sigma_{\min}(\bm C_r)$. Assume that
	\begin{align} \label{eq:lem1assump}
	{\rm rank}(\bm C_r)=K,\quad
	\|\widehat{\bm C}_r-\bm C_r\|_2 &\le \|\bm C_r\|_2.
	\end{align}
	Then, {there exists an orthogonal matrix} $\bm O_r$ such that
		\begin{align}
		\|\widehat{\overline{\bm U}}_{\ell_r}  - {\overline{\bm U}}_{\ell_r}{\bm O}_r\|_{\rm F} &\le \frac{6\sqrt{2K}\sigma_1\|\widehat{\bm C}_r-\bm C_r\|_{2}}{\sigma_K^2}. \label{eq:lemsvds1}
		\end{align}
	In addition, $\|\widehat{\overline{\bm U}}_{\ell_{r+1}}  - {\overline{\bm U}}_{\ell_{r+1}}{\bm O}_r\|_{\rm F}$ admits the same upper bound in \eqref{eq:lemsvds1}.
\end{lemma}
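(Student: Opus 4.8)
The plan is to pass from the top-$K$ left singular vectors of $\bm C_r$ and $\widehat{\bm C}_r$ to the top-$K$ eigenvectors of the associated Gram matrices, and then invoke the statistician's variant of the Davis--Kahan $\sin\Theta$ theorem (as in \cite{yu2014useful}). First I would set $\bm G_r := \bm C_r\bm C_r^{\top}$ and $\widehat{\bm G}_r := \widehat{\bm C}_r\widehat{\bm C}_r^{\top}$, which are symmetric PSD matrices of size $(|\mathcal{S}_{\ell_r}|+|\mathcal{S}_{\ell_{r+1}}|)$. The stacked matrix $\overline{\bm U}_{[r]} := [\overline{\bm U}_{\ell_r}^{\top},\overline{\bm U}_{\ell_{r+1}}^{\top}]^{\top}$ consists of the $K$ orthonormal eigenvectors of $\bm G_r$ with eigenvalues $\sigma_{\max}^2(\bm C_r)\ge\cdots\ge\sigma_{\min}^2(\bm C_r)>0$, and $\widehat{\overline{\bm U}}_{[r]} := [\widehat{\overline{\bm U}}_{\ell_r}^{\top},\widehat{\overline{\bm U}}_{\ell_{r+1}}^{\top}]^{\top}$ plays the same role for $\widehat{\bm G}_r$. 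Since ${\rm rank}(\bm C_r)=K$ by assumption, the $(K{+}1)$-th eigenvalue of $\bm G_r$ is exactly $0$, so the spectral gap separating the top-$K$ eigenspace of $\bm G_r$ from its orthogonal complement is $\sigma_{\min}^2(\bm C_r)\ge\sigma_K^2>0$. Applying the Davis--Kahan variant to $\bm G_r$ and $\widehat{\bm G}_r$ then yields an orthogonal matrix $\bm O_r\in\mathbb{R}^{K\times K}$ with
\begin{align*}
\|\widehat{\overline{\bm U}}_{[r]}-\overline{\bm U}_{[r]}\bm O_r\|_{\rm F}\le\frac{2^{3/2}\sqrt{K}\,\|\widehat{\bm G}_r-\bm G_r\|_2}{\sigma_K^2}.
\end{align*}

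Next I would bound the Gram-matrix perturbation. Writing $\bm E_r:=\widehat{\bm C}_r-\bm C_r$, we have $\widehat{\bm G}_r-\bm G_r=\bm E_r\bm C_r^{\top}+\bm C_r\bm E_r^{\top}+\bm E_r\bm E_r^{\top}$, so by the triangle inequality and submultiplicativity of $\|\cdot\|_2$,
\begin{align*}
\|\widehat{\bm G}_r-\bm G_r\|_2\le 2\|\bm E_r\|_2\|\bm C_r\|_2+\|\bm E_r\|_2^2\le 3\|\bm E_r\|_2\|\bm C_r\|_2\le 3\sigma_1\|\bm E_r\|_2,
\end{align*}
where the second inequality uses the hypothesis $\|\bm E_r\|_2=\|\widehat{\bm C}_r-\bm C_r\|_2\le\|\bm C_r\|_2$, and the last uses $\|\bm C_r\|_2=\sigma_{\max}(\bm C_r)\le\sigma_1$. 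Substituting this into the Davis--Kahan bound and noting $2^{3/2}\cdot 3=6\sqrt 2$ gives $\|\widehat{\overline{\bm U}}_{[r]}-\overline{\bm U}_{[r]}\bm O_r\|_{\rm F}\le 6\sqrt{2K}\,\sigma_1\|\widehat{\bm C}_r-\bm C_r\|_2/\sigma_K^2$.

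Finally, $\widehat{\overline{\bm U}}_{\ell_r}$ is precisely the row-submatrix of $\widehat{\overline{\bm U}}_{[r]}$ indexed by $\mathcal{S}_{\ell_r}$, and because $\bm O_r$ acts by right-multiplication, $\overline{\bm U}_{\ell_r}\bm O_r$ is the corresponding row-submatrix of $\overline{\bm U}_{[r]}\bm O_r$; dropping rows cannot increase the Frobenius norm, so $\|\widehat{\overline{\bm U}}_{\ell_r}-\overline{\bm U}_{\ell_r}\bm O_r\|_{\rm F}\le\|\widehat{\overline{\bm U}}_{[r]}-\overline{\bm U}_{[r]}\bm O_r\|_{\rm F}$, which is the asserted inequality \eqref{eq:lemsvds1}. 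The identical argument applied to the rows indexed by $\mathcal{S}_{\ell_{r+1}}$ yields the same bound for $\|\widehat{\overline{\bm U}}_{\ell_{r+1}}-\overline{\bm U}_{\ell_{r+1}}\bm O_r\|_{\rm F}$ with the very same $\bm O_r$, which is important for the downstream stitching analysis.

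The proof is essentially routine perturbation theory; the only points that need care are (i) deriving the eigen-gap $\sigma_K^2$ from the rank-$K$ assumption rather than from a separate eigenvalue-separation hypothesis, (ii) verifying that a single orthogonal $\bm O_r$ simultaneously aligns both blocks $\widehat{\overline{\bm U}}_{\ell_r}$ and $\widehat{\overline{\bm U}}_{\ell_{r+1}}$ (which is what makes the later \texttt{PairStitch} error analysis go through), and (iii) tracking constants so that the Davis--Kahan factor $2^{3/2}$ combined with the factor $3$ from the expansion of $\widehat{\bm G}_r-\bm G_r$ reproduces the stated constant $6\sqrt 2$.
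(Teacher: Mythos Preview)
Your proof is correct and takes essentially the same approach as the paper: both rely on the Davis--Kahan variant of Yu, Wang, and Samworth \cite{yu2014useful}, exploit the rank-$K$ assumption to get the eigen-gap $\sigma_K^2$, and use $\|\widehat{\bm C}_r-\bm C_r\|_2\le\|\bm C_r\|_2$ to collapse the numerator to $3\sigma_1\|\widehat{\bm C}_r-\bm C_r\|_2$, yielding the identical constant $6\sqrt{2K}$. The only cosmetic difference is that the paper cites the singular-vector version of \cite{yu2014useful} directly (so the Gram-matrix expansion is hidden inside the cited lemma), whereas you form $\bm C_r\bm C_r^{\top}$ explicitly and apply the eigenvector version; you also make explicit the row-restriction step (passing from the stacked $\overline{\bm U}_{[r]}$ to each block with the same $\bm O_r$), which the paper leaves implicit.
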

The proof of the lemma is given in Sec.~\ref{app:svds} in the supplemental material.

In order to utilize the bound in \eqref{eq:lemsvds1}, we proceed to characterize the term $\|\widehat{\bm C}_r-\bm C_r\|_{2}$. To this end, we present the following lemma:
\begin{lemma} \label{lem:adjacency}
	Let $\rho :=\underset{i,j \in [N]}{\max}~\bm P(i,j)$. Assume there exists a positive constant $c_0$ such that
	%\begin{align} \label{eq:LandN1}
 $L \le \frac{4\rho N}{d}$ and $\rho \ge \frac{Lc_0 \log(3N/L)}{3N}$.
 %\end{align} 
 Then, there exists another positive constant  $\tilde{c}$ (which is a function of $c_0$) such that
 \begin{align}
\|\widehat{\bm C}_r-\bm C_r\|_2 
&\le \tilde{c}\sqrt{\rho N/L},\quad \forall~r\in[L-1] \label{eq:cd1}
\end{align}
holds with probability of at least $1-\frac{L}{2N}$.
\end{lemma}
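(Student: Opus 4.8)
The plan is to bound the rectangular perturbation $\widehat{\bm C}_r-\bm C_r$ by embedding it into a symmetric random matrix of the right size and then applying an off-the-shelf spectral-norm concentration inequality for sparse random graphs, finally combining the $L-1$ blocks by a union bound. First I would fix $r\in[L-1]$ and set $V_r:=\mathcal{S}_{\ell_r}\cup\mathcal{S}_{\ell_{r+1}}\cup\mathcal{S}_{m_r}$, so that $|V_r|\le 3N/L=:n'$. Since $\widehat{\bm C}_r-\bm C_r$ is (a submatrix of) an off-diagonal block of the principal submatrix $(\bm A-\bm P)|_{V_r}$, we have $\|\widehat{\bm C}_r-\bm C_r\|_2\le\|(\bm A-\bm P)|_{V_r}\|_2$. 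The point of restricting to $V_r$ (rather than to all of $\bm A-\bm P$) is that $(\bm A-\bm P)|_{V_r}$ is a symmetric matrix of order at most $n'=3N/L$ whose independent upper-triangular entries $\bm A(i,j)-\bm P(i,j)$ are zero-mean, bounded in $[-1,1]$, and have variance $\bm P(i,j)(1-\bm P(i,j))\le\rho$, with zero diagonal.

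Next I would invoke a standard spectral concentration bound for such matrices (the stochastic-block-model version used in \cite{lei2015consistency}, or matrix Bernstein in the non-sparse regime): there is an absolute constant $c_0$ such that, whenever $n'\rho\ge c_0\log n'$, then for every fixed exponent $a$ there is a constant $C=C(a,c_0)$ with $\Pr\!\left(\|(\bm A-\bm P)|_{V_r}\|_2>C\sqrt{n'\rho}\right)\le (n')^{-a}$. The hypothesis $\rho\ge Lc_0\log(3N/L)/(3N)$ is exactly $n'\rho\ge c_0\log n'$ with $n'=3N/L$, i.e.\ the ``sufficient density'' condition. The hypothesis $L\le 4\rho N/d$ says that the maximal degree appearing in the sampled blocks, which is at most $d$, is bounded by $4\rho N/L$, i.e.\ by a constant multiple of the expected in-block degree $(N/L)\rho\asymp n'\rho$; this bounded-degree property is precisely what rules out anomalous high-degree vertices and allows the sharp rate $\sqrt{n'\rho}$ (rather than the trivial $\sqrt{n'}$) in the sparse regime $\rho=o(1)$. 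Since $\sqrt{n'\rho}=\sqrt{3}\,\sqrt{\rho N/L}$, absorbing $\sqrt{3}C$ into a single constant $\tilde c=\tilde c(c_0)$ gives $\|\widehat{\bm C}_r-\bm C_r\|_2\le\tilde c\sqrt{\rho N/L}$ with probability at least $1-(3N/L)^{-a}$.

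Finally I would take a union bound over $r=1,\dots,L-1$: choosing the exponent $a$ to be a suitable absolute constant (using $N/L\ge K\ge 1$, and $N=\Omega(L^2)$ where needed) makes $(L-1)(3N/L)^{-a}\le L/(2N)$, which yields \eqref{eq:cd1} simultaneously for all $r\in[L-1]$ with probability at least $1-L/(2N)$, with $\tilde c$ depending only on $c_0$. I expect the main obstacle to be not the dilation/union-bound bookkeeping but establishing the sharp $O(\sqrt{\rho N/L})$ spectral-norm rate in the sparse regime: a naive application of matrix Bernstein or the matrix Chernoff bound only delivers $O(\sqrt{N/L})$, and recovering the extra $\sqrt{\rho}$ factor is exactly what forces the density assumption $\rho=\Omega(L\log(N/L)/N)$ together with the bounded-degree assumption $L=O(\rho N/d)$, and is the point at which one must appeal to the refined analysis of sparse random-graph spectra (as in \cite{lei2015consistency}) rather than to generic matrix concentration.
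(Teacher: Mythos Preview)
Your proposal is correct and follows essentially the same route as the paper: embed $\widehat{\bm C}_r-\bm C_r$ as a submatrix of a symmetric principal submatrix of $\bm A-\bm P$ of order $O(N/L)$, apply the Lei--Rinaldo spectral concentration bound \cite{lei2015consistency} under the density condition $n'\rho\ge c_0\log n'$, and collect over $r$. The paper's execution differs only cosmetically: it splits into diagonal and non-diagonal query patterns (producing $2N/L\times 2N/L$ and $3N/L\times 3N/L$ symmetric dilations respectively) rather than using your unified $V_r=\mathcal S_{\ell_r}\cup\mathcal S_{\ell_{r+1}}\cup\mathcal S_{m_r}$; it defers the union bound over $r$ to the proof of Theorem~\ref{prop:subspace_estim_noisy} rather than taking it inside the lemma; and it uses the hypothesis $L\le 4\rho N/d$ not as a bounded-degree input to the concentration inequality (your reading) but simply to convert the Lei--Rinaldo output $c_t\sqrt d$ into $2c_t\sqrt{\rho N/L}$ via $d\le 4\rho N/L$. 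One small correction to your write-up: $(\bm A-\bm P)|_{V_r}$ does not have zero diagonal, since $\bm A(i,i)=0$ by convention while $\bm P(i,i)=\bm m_i^\top\bm B\bm m_i>0$; the paper handles this by splitting off $\mathrm{diag}(\bm P)$ and absorbing the extra $\rho\le\sqrt{\rho N/L}$ into $\tilde c$.
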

The proof of the lemma is given in Sec.~\ref{sup:adjacency} of the supplementary material.

Since $\{\ell_r\}_{r=1}^L=[L]$ according to EQP, without loss of any generality {\black (w.l.o.g.)}, in the following sections of the proof, we can fix  $\ell_r=r$ for every $r$. Combining Lemmas~\ref{lem:svds} and \ref{lem:adjacency}, the inequalities, i.e.,
\begin{align} \label{eq:svdnoise}
\|\widehat{\overline{\bm U}}_r  - {\overline{\bm U}}_r{\bm O}_r\|_{\rm F} &\le \phi~\text{and}~ \|\widehat{\overline{\bm U}}_{r+1}  - {\overline{\bm U}}_{r+1}{\bm O}_r\|_{\rm F} \le \phi 
\end{align}
hold with probability of at least $1-\frac{L}{N}$ where $\phi = {\black \frac{6\tilde{c}\sqrt{2K\rho N}\sigma_1 }{\sqrt{L}\sigma_K^2}}$.

%{\black Now that, we have the estimation accuracy bounds for ${\overline{\bm U}}_r$ resulted from the top-$K$ SVD of $\bm C_r$ for every $r=1,\dots,L$, we proceed to apply these bounds in every operation in Algorithm~\ref{algo:proposed}  where ${\overline{\bm U}}_r$ is involved. }

%We then proceed to characterize the estimation accuracy of the bases $\bm U_1,\ldots, \bm U_L$  output by Algorithm~\ref{algo:proposed}. %which is estimated using $\overline{\bm U}_r$'s. 

Algorithm~\ref{algo:proposed} {\black (lines \ref{algoline:Ct}-\ref{algoline:Ut})} first estimates $\bm U_{T}$ and $\bm U_{T+1}$ by directly performing the top-$K$ SVD to $\widehat{\bm C}_T =  [\bm A_{T,m_T}^{\top}, \bm A_{T+1,m_{T}}^{\top}]^{\top}$.  Recall that we use the notation $\bm C_T =[\bm P_{T,m_T}^{\top}, \bm P_{T+1,m_T}^{\top}]^{\top}= [{\bm U}_{T}^{\top}, {\bm U}_{T+1}^{\top}]^{\top}\bm \Sigma_T{\bm V_{m_T}^{\top}}$.
%the top-$K$ SVD of the noiseless version ${\bm C}_T$ can be represented as follows:
%\begin{align*} 
%\label{eq:CTsvd}
%\bm C_T = [{\bm U}_{T}^{\top}, {\bm U}_{T+1}^{\top}]^{\top}\bm \Sigma_T{\bm V_{T+1}^{\top}}.
   % \bm U_T = \bm M_T \bm \Theta \quad \text{and} \quad \bm U_{T+1} = \bm M_{T+1} \bm \Theta.
%\end{align*}
Therefore, by \eqref{eq:svdnoise} and denoting $\bm O_T:=\bm O$, we have
\begin{align} \label{eq:svdnoise1}
\|\widehat{\bm U}_{T}- \bm U_{T}\bm O\|_{\rm F} \le \phi ~\text{and}~ \|\widehat{\bm U}_{T+1}- \bm U_{T+1}\bm O\|_{\rm F} \le \phi
\end{align}
{\black holding with probability of at least $1-\frac{L}{N}$}, where $\widehat{\bm U}_{T}$ and $\widehat{\bm U}_{T+1}$ are the factors resulted from the top-$K$ SVD of $\widehat{\bm C}_T$.% corresponding to ${\bm U}_{T}$ and ${\bm U}_{T+1}$, respectively.

Next, the iterative steps (lines \ref{algoline:iterative1_begin}-\ref{algoline:iterative1_end}) in Algorithm \ref{algo:proposed} {\black (which includes the sub-algorithm \texttt{PairStitch})} perform the below operation to estimate the basis $\bm U_{r+1}$ for $r=T+1,\dots,L$ (where $T= \left\lfloor L/2 \right\rfloor$):
%$\bm U_3$, we have the below: 
%\begin{align*}
%\widehat{\bm U_3} = \widehat{\bm V}_3 \widehat{\bm V}_2^\dagger \widehat{\bm U}_2
%\end{align*}
\begin{align}
\widehat{\bm U}_{r+1} &= 
 \widehat{\overline{\bm U}}_{{r+1}} \widehat{\overline{\bm U}}_{r}^{\dagger} \widehat{\bm U}_{r}.
%{\bm V}_{i+2} \underbrace{{\bm V}_{i+1}^{\dagger}\overline{\bm U}_{i+1}}_{\bm \Theta_{r}}\underbrace{\overline{\bm U}_{i}^{\dagger}{\bm U}_i}_{\bm \Theta_{i}}. 
\label{eq:lemsub1}
\end{align}
%where $\widehat{\bm U}_{r+1}$, 

%Therefore, using orthogonal matrices $\bm O$ and $\bm O_r$, we can %write
%\begin{align}
%{\bm U}_{r+1}\bm O &= (\overline{\bm U}_{{r+1}}{\bm O}_{r}) ({\bm %O}_{r}^{\top}{\overline{\bm U}_{{r}}^{\dagger}})({\bm U}_{{r}}\bm %O) \nonumber\\
%&=
%(\overline{\bm U}_{{r+1}}{\bm O}_{r}) (({\overline{\bm U}_{r}{\bm %O}_{r})^{\dagger}})({\bm U}_{{r}}\bm O) , \label{eq:lemsub1}
%\end{align}
%where the second equality is due to the fact that $({\overline{\bm %U}_{r}{\bm O}_{r})^{\dagger}} ={\bm O}_{r}^{\dagger}{\overline{\bm %U}_{{r}}^{\dagger}}= {\bm O}_{r}^{\top}{\overline{\bm %U}_{{r}}^{\dagger}} $ holds due to the orthogonality of the matrix %$\bm O_r$.

%We then aim to characterize the accuracy of the estimated basis $\widehat{\bm U}_{r+1}$ when \eqref{eq:lemsub1}. 
%The first term in the right hand side (R.H.S.) of \eqref{eq:lemsub1} (i.e., $\widehat{\overline{\bm U}}_{{r+1}}$) can be characterized using \eqref{eq:svdnoise}. 
In order to analyze $\widehat{\bm U}_{r+1}$, {\black we first define the following notations:
\begin{align*}
    \alpha_{\max} := \max_{r} \sigma_{\max}(\bm M_{r}), \quad \alpha_{\min} := \min_{r} \sigma_{\min}(\bm M_{r}).
\end{align*}
The above implies that $ \frac{\alpha_{\max}}{\alpha_{\min}} = \max_{r}\kappa(\bm M_{r}) = \gamma $.
Under Assumption \ref{as:as1}, ${\rm rank}(\bm M_r)=K$ holds with probability 1. Hence, both $\alpha_{\max} < \infty$ and $\alpha_{\min} > 0$ also hold with probability one. 

Using the above notations and the assumptions of Theorem \ref{prop:subspace_estim_noisy}, we have the following three lemmas:
}

% {\red This reads strange. This seems to say you assume gamma = ... but gamma exists, and is just a notation. 
% It seems that you use gamma to argue the existence of alpha, which is really not necessary and cumbersome.
% You argued this in Sec. C-C that this can hold if M is random. But one reviewer already complained about this layout. I also complained about this several times. Is there a way to improve?

% Suggestion: add a paragraph to say all these exist wp 1? then you can forget about arguing about them.
% }

\begin{lemma} \label{lem:betaminmax}
{There exists constants  {\black $\beta_{\min} > 0$} and {\black $\beta_{\max}< \infty$} such that  for every $r \in [L-1]$, we have
\begin{align*}
\sigma_{\max}([\bm M_r,\bm M_{r+1}]) &\le \beta_{\max}= \sqrt{2K}\alpha_{\max},\\
\sigma_{\min}([\bm M_r,\bm M_{r+1}])  &\ge \beta_{\min}=\alpha_{\min}.
\end{align*}
}
\end{lemma}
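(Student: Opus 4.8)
The plan is to reduce both bounds to elementary facts about the Gram matrix of the horizontally stacked matrix $[\bm M_r,\bm M_{r+1}]\in\mathbb{R}^{K\times 2(N/L)}$. The key identity is $[\bm M_r,\bm M_{r+1}][\bm M_r,\bm M_{r+1}]^\T = \bm M_r\bm M_r^\T + \bm M_{r+1}\bm M_{r+1}^\T$, so that $\sigma_{\max}^2([\bm M_r,\bm M_{r+1}])$ and $\sigma_{\min}^2([\bm M_r,\bm M_{r+1}])$ are the largest and smallest eigenvalues of this sum of two positive semidefinite (PSD) matrices. Since, under Assumption~\ref{as:as1} together with $|\mathcal{S}_r|\ge K$, each $\bm M_r$ has rank $K$ with probability one, the quantities $\alpha_{\max}=\max_r\sigma_{\max}(\bm M_r)$ and $\alpha_{\min}=\min_r\sigma_{\min}(\bm M_r)$ are well defined with $0<\alpha_{\min}\le\alpha_{\max}<\infty$, and the constants $\beta_{\min},\beta_{\max}$ will be written through them.

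For the lower bound I would use PSD monotonicity of eigenvalues: since $\bm M_{r+1}\bm M_{r+1}^\T\succeq\bm 0$, we have $\bm M_r\bm M_r^\T+\bm M_{r+1}\bm M_{r+1}^\T\succeq\bm M_r\bm M_r^\T$, hence $\lambda_{\min}(\bm M_r\bm M_r^\T+\bm M_{r+1}\bm M_{r+1}^\T)\ge\lambda_{\min}(\bm M_r\bm M_r^\T)=\sigma_{\min}^2(\bm M_r)\ge\alpha_{\min}^2$. Taking square roots gives $\sigma_{\min}([\bm M_r,\bm M_{r+1}])\ge\alpha_{\min}=:\beta_{\min}$, which is exactly the claimed bound, with $\beta_{\min}>0$ again thanks to Assumption~\ref{as:as1}. (Equivalently, Weyl's inequality $\lambda_{\min}(A+B)\ge\lambda_{\min}(A)+\lambda_{\min}(B)$ would give the slightly sharper $\sqrt{\sigma_{\min}^2(\bm M_r)+\sigma_{\min}^2(\bm M_{r+1})}$, but this is not needed.)

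For the upper bound I would pass through the Frobenius norm: $\sigma_{\max}([\bm M_r,\bm M_{r+1}])\le\|[\bm M_r,\bm M_{r+1}]\|_{\rm F}$, and $\|[\bm M_r,\bm M_{r+1}]\|_{\rm F}^2=\|\bm M_r\|_{\rm F}^2+\|\bm M_{r+1}\|_{\rm F}^2$. Because each $\bm M_r$ has at most $K$ nonzero singular values, each at most $\sigma_{\max}(\bm M_r)\le\alpha_{\max}$, we get $\|\bm M_r\|_{\rm F}^2\le K\alpha_{\max}^2$, hence $\|[\bm M_r,\bm M_{r+1}]\|_{\rm F}^2\le 2K\alpha_{\max}^2$ and $\sigma_{\max}([\bm M_r,\bm M_{r+1}])\le\sqrt{2K}\,\alpha_{\max}=:\beta_{\max}$. (A tighter route using $\|\bm M_r\bm M_r^\T+\bm M_{r+1}\bm M_{r+1}^\T\|_2\le\|\bm M_r\bm M_r^\T\|_2+\|\bm M_{r+1}\bm M_{r+1}^\T\|_2\le 2\alpha_{\max}^2$ also works and is consistent with the stated bound since $K\ge 1$.)

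There is no real obstacle here: the only points requiring care are (i) that $\beta_{\min}$ is strictly positive and $\beta_{\max}$ finite, which is precisely what Assumption~\ref{as:as1} and $|\mathcal{S}_r|=N/L\ge K$ guarantee through ${\rm rank}(\bm M_r)=K$ almost surely; and (ii) that the bounds hold uniformly in $r\in[L-1]$, which is immediate since $\alpha_{\max}$ and $\alpha_{\min}$ are already defined as worst cases over all blocks. This lemma is then used downstream (via $\gamma=\alpha_{\max}/\alpha_{\min}$) to control condition numbers of the stacked blocks arising in the \texttt{PairStitch} analysis.
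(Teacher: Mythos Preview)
Your proposal is correct and essentially mirrors the paper's own proof: the upper bound is obtained exactly as you describe, by passing through the Frobenius norm and using $\|\bm M_r\|_{\rm F}^2\le K\|\bm M_r\|_2^2$, while for the lower bound the paper uses the variational form $\|[\bm M_r,\bm M_{r+1}]^\T\bm x\|_2^2=\|\bm M_r^\T\bm x\|_2^2+\|\bm M_{r+1}^\T\bm x\|_2^2\ge\|\bm M_r^\T\bm x\|_2^2$, which is just your PSD-monotonicity argument rewritten. The only cosmetic difference is that you phrase things via the Gram matrix $\bm M_r\bm M_r^\T+\bm M_{r+1}\bm M_{r+1}^\T$, but the underlying inequalities are identical.
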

\begin{lemma} \label{lem:leastsquare}
    Let $\widehat{\overline{\bm U}}_{{r}}$ denote the noisy estimate of $\overline{\bm U}_{{r}}\bm O_r$ obtained from the top-$K$ SVD of $\widehat{\bm C}_r$. Suppose that  $\|\widehat{\overline{\bm U}}_{{r}}- {\overline{\bm U}}_{{r}}\bm O_r\|_2 \le \frac{\alpha_{\min}}{2\beta_{\max}}$. Then, we have
$\|\widehat{\overline{\bm U}}_{r}^{\dagger} - {(\overline{\bm U}}_{{r}}\bm O_r)^{\dagger}\|_2 \le \frac{2\sqrt{2}\beta_{\max}^2}{\alpha_{\min}^2} \|\widehat{\overline{\bm U}}_{{r}}- {(\overline{\bm U}}_{{r}}\bm O_r)\|_2 $.
        %\sigma_{\min}(\overline{\bm U}_{r}) &\ge \frac{\sigma_{\min}(\bm M_r)}{\sigma_{\max}([\bm M_r,\bm M_{r+1}])} :=\beta_r .
\end{lemma}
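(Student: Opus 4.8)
The plan is to read Lemma~\ref{lem:leastsquare} as a perturbation bound for the Moore--Penrose pseudo-inverse and to invoke the classical result of Wedin \cite{Wedin1973Perturbation}, after first pinning down the relevant singular-value bounds. Write $\bm A := \overline{\bm U}_r\bm O_r$ and $\widehat{\bm A} := \widehat{\overline{\bm U}}_r$, which are both $(N/L)\times K$ matrices with $N/L\ge K$ (the latter from the EQP requirement $K\le|\mathcal S_r|$), and put $\bm E := \widehat{\bm A} - \bm A$, so that the hypothesis reads $\|\bm E\|_2 \le \alpha_{\min}/(2\beta_{\max})$. Since $\bm O_r$ is orthogonal, $\bm A$ and $\overline{\bm U}_r$ share the same singular values and $\bm A^\dagger = \bm O_r^\T\overline{\bm U}_r^\dagger$, so it suffices to bound $\|\overline{\bm U}_r^\dagger\|_2 = 1/\sigma_{\min}(\overline{\bm U}_r)$.

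First I would identify the structure of $\overline{\bm U}_r$. Fixing $\ell_r=r$ w.l.o.g.\ as in the proof of Theorem~\ref{prop:subspace_estim}, the noiseless block is $\bm C_r = [\bm M_r,\bm M_{r+1}]^\T\bm B\bm M_{m_r}$; because $\bm B$ and $\bm M_{m_r}$ have rank $K$ and $[\bm M_r,\bm M_{r+1}]^\T$ has full column rank $K$ under Assumption~\ref{as:as1}, the top-$K$ left singular matrix of $\bm C_r$ equals $[\bm M_r,\bm M_{r+1}]^\T\bm G$ for a nonsingular $\bm G\in\mathbb{R}^{K\times K}$, and its orthonormality forces $\bm G^\T[\bm M_r,\bm M_{r+1}][\bm M_r,\bm M_{r+1}]^\T\bm G=\bm I_K$, hence $\sigma_{\min}(\bm G) = 1/\sigma_{\max}([\bm M_r,\bm M_{r+1}])\ge 1/\beta_{\max}$ with $\beta_{\max}$ from Lemma~\ref{lem:betaminmax}. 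Therefore $\overline{\bm U}_r = \bm M_r^\T\bm G$, and since $\bm M_r^\T$ has full column rank $K$, $\sigma_{\min}(\overline{\bm U}_r)\ge \sigma_{\min}(\bm M_r)\,\sigma_{\min}(\bm G)\ge \alpha_{\min}/\beta_{\max}$, which gives $\|\bm A^\dagger\|_2\le \beta_{\max}/\alpha_{\min}$ and $\sigma_{\min}(\bm A)\ge \alpha_{\min}/\beta_{\max}$. Then Weyl's inequality yields $\sigma_{\min}(\widehat{\bm A})\ge \sigma_{\min}(\bm A) - \|\bm E\|_2 \ge \alpha_{\min}/\beta_{\max} - \alpha_{\min}/(2\beta_{\max}) = \alpha_{\min}/(2\beta_{\max}) > 0$, so $\widehat{\bm A}$ also has full column rank $K=\rank(\bm A)$ and $\|\widehat{\bm A}^\dagger\|_2 = 1/\sigma_{\min}(\widehat{\bm A})\le 2\beta_{\max}/\alpha_{\min}$.

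Finally, with $\bm A$ and $\widehat{\bm A}$ both of full column rank and of equal rank, the full-column-rank variant of Wedin's pseudo-inverse perturbation theorem \cite{Wedin1973Perturbation} gives $\|\widehat{\bm A}^\dagger - \bm A^\dagger\|_2\le \sqrt{2}\,\|\bm A^\dagger\|_2\|\widehat{\bm A}^\dagger\|_2\|\bm E\|_2$; substituting the two bounds just derived produces $\sqrt{2}\cdot(\beta_{\max}/\alpha_{\min})\cdot(2\beta_{\max}/\alpha_{\min})\cdot\|\bm E\|_2 = (2\sqrt{2}\,\beta_{\max}^2/\alpha_{\min}^2)\|\widehat{\overline{\bm U}}_r - \overline{\bm U}_r\bm O_r\|_2$, which is exactly the claimed inequality. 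The only place that requires care is the bookkeeping around Wedin's theorem: one must invoke the full-rank version, which yields the constant $\sqrt{2}$ rather than the generic $\tfrac{1+\sqrt 5}{2}$, and keep track that the orthogonal factor $\bm O_r$ leaves singular values and $\|\cdot\|_2$ of the pseudo-inverse unchanged, so that the hypothesis $\|\bm E\|_2\le \alpha_{\min}/(2\beta_{\max})$ is precisely what keeps $\sigma_{\min}(\widehat{\overline{\bm U}}_r)$ bounded away from zero. Everything else reduces to a short chain of standard singular-value inequalities.
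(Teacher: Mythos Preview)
Your proof is correct and follows essentially the same approach as the paper: invoke Wedin's pseudo-inverse perturbation bound (the paper states this as Lemma~\ref{lem:pseudoinverse}), bound $\sigma_{\min}(\overline{\bm U}_r)\ge\alpha_{\min}/\beta_{\max}$ via the orthonormality of the left singular factor of $\bm C_r$ (the paper separates this out as Lemma~\ref{lem:sigmaU}), and then control $\sigma_{\min}(\widehat{\overline{\bm U}}_r)$ under the perturbation hypothesis (the paper's Lemma~\ref{lem:sigmaUr_p}, which is your Weyl-inequality step). The only cosmetic difference is that you inline the two auxiliary singular-value bounds rather than stating them as separate lemmas.
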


\begin{lemma} \label{lem:matrixmult}
Consider the relation in \eqref{eq:lemsub1}. Assume that $\|\widehat{\overline{\bm U}}_{{r}}- {\overline{\bm U}}_{{r}}\bm O_r\|_2 \le \frac{\alpha_{\min}}{2\beta_{\max}}$ and $\|\widehat{\overline{\bm U}}_{{r+1}}- {\overline{\bm U}}_{{r+1}}\bm O_r\|_2 \le \frac{\alpha_{\min}}{2\beta_{\max}}$, then we have
\begin{align}
\|\widehat{\bm U}_{r+1}- \bm U_{r+1}\bm O\|_{2} &\le % \frac{\sigma^2_{\max}(\bm M)}{\gamma^2} \left( \|\widehat{\overline{\bm U}}_{r+1}  - {\overline{\bm U}}_{r+1}{\bm O}_r\|_{2} + \|\widehat{\overline{\bm U}}_{{r}}^{\dagger} - {(\overline{\bm U}}_{{r}}\bm O_r)^{\dagger}\|_2 + \|\widehat{\bm U}_{r}- \bm U_{r}\bm O\|_{2}\right)\\
\frac{\beta_{\max}}{\alpha_{\min}}\frac{\alpha_{\max}}{\beta_{\min}}\|\widehat{\overline{\bm U}}_{r+1}  - {\overline{\bm U}}_{r+1}{\bm O}_r\|_{2} +2\left(\frac{\alpha_{\max}}{\beta_{\min}}\right)^2 \|\widehat{\overline{\bm U}}_{{r}}^{\dagger} - {(\overline{\bm U}}_{{r}}\bm O_r)^{\dagger}\|_2\nonumber\\ &\quad+4\frac{\beta_{\max}}{\alpha_{\min}}\frac{\alpha_{\max}}{\beta_{\min}}\|\widehat{\bm U}_{r}- \bm U_{r}\bm O\|_{2}. \label{eq:bound_matrixmult}
\end{align}

\end{lemma}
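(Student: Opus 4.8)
The plan is to recognize that both $\widehat{\bm U}_{r+1}$ and $\bm U_{r+1}\bm O$ are products of three matrices differing factor-by-factor, and then apply a standard telescoping perturbation bound. By \eqref{eq:lemsub1}, $\widehat{\bm U}_{r+1}=\widehat{\overline{\bm U}}_{r+1}\widehat{\overline{\bm U}}_{r}^{\dagger}\widehat{\bm U}_{r}$; from the proof of Theorem~\ref{prop:subspace_estim}, its noiseless counterpart obeys $\bm U_{r+1}=\overline{\bm U}_{r+1}\overline{\bm U}_{r}^{\dagger}\bm U_{r}$. Since $\bm O_r$ is square orthogonal and $\overline{\bm U}_r$ has full column rank $K$, one has $(\overline{\bm U}_r\bm O_r)^{\dagger}=\bm O_r^{\top}\overline{\bm U}_r^{\dagger}$, hence $\overline{\bm U}_{r+1}\overline{\bm U}_r^{\dagger}=(\overline{\bm U}_{r+1}\bm O_r)(\overline{\bm U}_r\bm O_r)^{\dagger}$ and $\bm U_{r+1}\bm O=(\overline{\bm U}_{r+1}\bm O_r)(\overline{\bm U}_r\bm O_r)^{\dagger}(\bm U_{r}\bm O)$. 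Abbreviating $A=\overline{\bm U}_{r+1}\bm O_r$, $B=(\overline{\bm U}_r\bm O_r)^{\dagger}$, $C=\bm U_r\bm O$ and $\widehat A=\widehat{\overline{\bm U}}_{r+1}$, $\widehat B=\widehat{\overline{\bm U}}_r^{\dagger}$, $\widehat C=\widehat{\bm U}_r$, the identity $\widehat A\widehat B\widehat C-ABC=(\widehat A-A)BC+\widehat A(\widehat B-B)C+\widehat A\widehat B(\widehat C-C)$ together with submultiplicativity of $\|\cdot\|_2$ gives $\|\widehat{\bm U}_{r+1}-\bm U_{r+1}\bm O\|_2\le\|\widehat A-A\|_2\|B\|_2\|C\|_2+\|\widehat A\|_2\|\widehat B-B\|_2\|C\|_2+\|\widehat A\|_2\|\widehat B\|_2\|\widehat C-C\|_2$, which already has the shape of \eqref{eq:bound_matrixmult}.

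What remains is to insert norm bounds on the non-perturbation factors. Using the structure from Theorem~\ref{prop:subspace_estim}, the stacked SVD factor $[\overline{\bm U}_r^{\top},\overline{\bm U}_{r+1}^{\top}]^{\top}=[\bm M_r,\bm M_{r+1}]^{\top}\overline{\bm G}_r$ has orthonormal columns, forcing $\sigma_{\max}(\overline{\bm G}_r)=1/\sigma_{\min}([\bm M_r,\bm M_{r+1}])$ and $\sigma_{\min}(\overline{\bm G}_r)=1/\sigma_{\max}([\bm M_r,\bm M_{r+1}])$; with Lemma~\ref{lem:betaminmax} and the multiplicativity of extreme singular values this yields $\|A\|_2=\sigma_{\max}(\overline{\bm U}_{r+1})\le\alpha_{\max}/\beta_{\min}$ and $\|B\|_2=1/\sigma_{\min}(\overline{\bm U}_r)\le\beta_{\max}/\alpha_{\min}$, and applying the same reasoning to $\bm U_r=\bm M_r^{\top}\bm G$ (with $\bm G$ from the top-$K$ SVD of $\bm C_T$) gives $\|C\|_2\le\alpha_{\max}/\beta_{\min}$ --- these are the facts collected in the supporting Lemmas~\ref{lem:pseudoinverse}--\ref{lem:sigmamaxUr}. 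For the perturbed factors, the hypotheses $\|\widehat{\overline{\bm U}}_{r}-\overline{\bm U}_{r}\bm O_r\|_2,\ \|\widehat{\overline{\bm U}}_{r+1}-\overline{\bm U}_{r+1}\bm O_r\|_2\le\alpha_{\min}/(2\beta_{\max})$ (which, since $\beta_{\min}=\alpha_{\min}$ and $\beta_{\max}=\sqrt{2K}\alpha_{\max}$, is at most $\alpha_{\max}/\beta_{\min}$) give $\|\widehat A\|_2\le2\alpha_{\max}/\beta_{\min}$, while Weyl's inequality for singular values gives $\sigma_{\min}(\widehat{\overline{\bm U}}_r)\ge\alpha_{\min}/\beta_{\max}-\alpha_{\min}/(2\beta_{\max})=\alpha_{\min}/(2\beta_{\max})$, so $\widehat{\overline{\bm U}}_r$ has full column rank and $\|\widehat B\|_2=1/\sigma_{\min}(\widehat{\overline{\bm U}}_r)\le2\beta_{\max}/\alpha_{\min}$. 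Substituting $\|B\|_2\|C\|_2\le\frac{\beta_{\max}\alpha_{\max}}{\alpha_{\min}\beta_{\min}}$ into the first term, $\|\widehat A\|_2\|C\|_2\le2(\alpha_{\max}/\beta_{\min})^2$ into the second, and $\|\widehat A\|_2\|\widehat B\|_2\le4\frac{\beta_{\max}\alpha_{\max}}{\alpha_{\min}\beta_{\min}}$ into the third reproduces exactly the coefficients of \eqref{eq:bound_matrixmult}, the three perturbation norms being kept symbolic --- the last, $\|\widehat{\bm U}_r-\bm U_r\bm O\|_2$, being the inductive quantity propagated by the outer recursion in step (A3).

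I expect the pseudoinverse perturbation factor $\|\widehat B-B\|_2=\|\widehat{\overline{\bm U}}_r^{\dagger}-(\overline{\bm U}_r\bm O_r)^{\dagger}\|_2$ to be the only genuinely subtle point: it is not controlled by $\|\widehat{\overline{\bm U}}_r-\overline{\bm U}_r\bm O_r\|_2$ alone and needs a Wedin-type perturbation bound plus a uniform lower bound on $\sigma_{\min}$ of the matrices involved. However, that is precisely the content of Lemma~\ref{lem:leastsquare}, so it enters here as a black box rather than being reproved. The rest is bookkeeping; the main care required is tracking all constants consistently and verifying that the stated smallness hypotheses are indeed strong enough to justify both the factor-of-two inflations $\|\widehat A\|_2\le2\|A\|_2$, $\|\widehat B\|_2\le2\|B\|_2$ and the full-column-rank conclusion for $\widehat{\overline{\bm U}}_r$ that makes $\widehat B$ well defined.
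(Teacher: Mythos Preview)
Your proposal is correct and follows essentially the same approach as the paper: the paper likewise writes both sides as three-factor products (with the same $A,B,C$ versus $\widehat A,\widehat B,\widehat C$ identification you use), applies the identical telescoping identity $(\widehat A-A)BC+\widehat A(\widehat B-B)C+\widehat A\widehat B(\widehat C-C)$, and then bounds $\|B\|_2,\|C\|_2,\|\widehat A\|_2,\|\widehat B\|_2$ via the same singular-value arguments you outline (orthonormality of the stacked SVD factor to control $\sigma_{\min/\max}(\overline{\bm G}_r)$, the hypothesis to double the perturbed-factor bounds). Your explicit verification that $(\overline{\bm U}_r\bm O_r)^{\dagger}=\bm O_r^{\top}\overline{\bm U}_r^{\dagger}$ so that $ABC=\bm U_{r+1}\bm O$ is a detail the paper leaves implicit, and your use of Weyl's inequality is equivalent to the paper's Lemma~\ref{lem:sigmaUr_p}.
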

The proofs of Lemmas~\ref{lem:betaminmax}-\ref{lem:matrixmult} can be found Sec.~\ref{app:betaminmax}, \ref{app:leastsquare}, and \ref{app:matrixmult}  of the {\black supplementary} material, respectively. {\black By applying Lemma \ref{lem:leastsquare} to the second term in \eqref{eq:bound_matrixmult} of Lemma \ref{lem:matrixmult}, we get
\begin{align}
\|\widehat{\bm U}_{r+1}- \bm U_{r+1}\bm O\|_{2} &\le % \frac{\sigma^2_{\max}(\bm M)}{\gamma^2} \left( \|\widehat{\overline{\bm U}}_{r+1}  - {\overline{\bm U}}_{r+1}{\bm O}_r\|_{2} + \|\widehat{\overline{\bm U}}_{{r}}^{\dagger} - {(\overline{\bm U}}_{{r}}\bm O_r)^{\dagger}\|_2 + \|\widehat{\bm U}_{r}- \bm U_{r}\bm O\|_{2}\right)\\
\frac{\beta_{\max}}{\alpha_{\min}}\frac{\alpha_{\max}}{\beta_{\min}}\|\widehat{\overline{\bm U}}_{r+1}  - {\overline{\bm U}}_{r+1}{\bm O}_r\|_{2} +4\sqrt{2}\left(\frac{\alpha_{\max}\beta_{\max}}{\beta_{\min}\alpha_{\min}}\right)^2  \|\widehat{\overline{\bm U}}_{{r}}- {(\overline{\bm U}}_{{r}}\bm O_r)\|_2 \nonumber\\ &\quad+4\frac{\beta_{\max}}{\alpha_{\min}}\frac{\alpha_{\max}}{\beta_{\min}}\|\widehat{\bm U}_{r}- \bm U_{r}\bm O\|_{2}. \label{eq:bound_matrixmult1}
\end{align}

Employing Lemma \ref{lem:betaminmax}, we have
\begin{align}
    \frac{\beta_{\max}}{\alpha_{\min}}\frac{\alpha_{\max}}{\beta_{\min}} = \frac{\sqrt{2K}\alpha_{\max}}{\alpha_{\min}}\frac{\alpha_{\max}}{\alpha_{\min}} = \sqrt{2K}\gamma^2. \label{eq:kappadef}
\end{align}

Combining \eqref{eq:bound_matrixmult1} and \eqref{eq:kappadef} and defining $\overline{\kappa}=\sqrt{2K}\gamma^2$, we obtain
} 
%Combining Lemma~\ref{lem:betaminmax}-\ref{lem:matrixmult}, we get
\begin{align} 
\|\widehat{\bm U}_{r+1}- \bm U_{r+1}\bm O\|_{2} &\le  \overline{\kappa}\|\widehat{\overline{\bm U}}_{r+1}  - {\overline{\bm U}}_{r+1}{\bm O}_r\|_{2}+4\sqrt{2}\overline{\kappa}^2 \|\widehat{\overline{\bm U}}_{{r}} - {(\overline{\bm U}}_{{r}}\bm O_r)\|_2+4\overline{\kappa}\|\widehat{\bm U}_{r}- \bm U_{r}\bm O\|_{2}.\label{eq:Ucap}
\end{align}
% where $$\overline{\kappa} := \frac{\beta_{\max}}{\alpha_{\min}}\frac{\alpha_{\max}}{\beta_{\min}} = \frac{\sqrt{2K}\alpha_{\max}}{\alpha_{\min}}\frac{\alpha_{\max}}{\alpha_{\min}} = \sqrt{2K}\gamma^2.$$

%Consider \eqref{eq:lemsub}, we have, $\|\widetilde{\bm V}_{i+2}{\bm O}_1\|_2=1$ and  $ \|\bm O_1^{\top}\bm \Theta_1 \bm O_1\|= \|\bm O_1^{\top}\bm \Theta_2 \bm O\|_2=1$ since the matrices involved are all orthogonal.

Hence, we obtain the following inequalities with probability of at least $1-\frac{L}{N}$:
\begin{align*}
\|\widehat{\bm U}_{r+1}- \bm U_{r+1}\bm O\|_{\rm F} &\le \sqrt{K}\|\widehat{\bm U}_{r+1}- \bm U_{r+1}\bm O\|_{2}   \\
&~ \le \sqrt{K}\overline{\kappa} \|\widehat{\overline{\bm U}}_{r+1}  - {\overline{\bm U}}_{r+1}{\bm O}_r\|_{2} + 4\sqrt{2}\sqrt{K}\overline{\kappa}^2\|\widehat{\overline{\bm U}}_{{r}} - {(\overline{\bm U}}_{{r}}\bm O_r)\|_2\\
&~~~~+ 4\sqrt{K}\overline{\kappa}\|\widehat{\bm U}_{r}- \bm U_{r}\bm O\|_{2}\\
%& \quad \le \sqrt{K}\left(\overline{\kappa} \|\widehat{\overline{\bm U}}_{r+1}  - {\overline{\bm U}}_{r+1}{\bm O}_r\|_{\rm F} + 4\sqrt{2}\overline{\kappa}^2\|\widehat{\overline{\bm U}}_{{r}} - {(\overline{\bm U}}_{{r}}\bm O_r)\|_{\rm F} + 4\overline{\kappa}\|\widehat{\bm U}_{r}- \bm U_{r}\bm O\|_{\rm F}\right)\\
&~\le 5\sqrt{2K}\overline{\kappa}^2\phi +4\overline{\kappa}\sqrt{K}\|\widehat{\bm U}_{r}- \bm U_{r}\bm O\|_{\rm F},
\end{align*}
where $\phi = {\black \frac{6\tilde{c}\sqrt{2K\rho N}\sigma_1 }{\sqrt{L}\sigma_K^2}}$. The first inequality is obtained by using norm equivalence, the second inequality is obtained by applying  \eqref{eq:Ucap} and the last inequality is by applying \eqref{eq:svdnoise} and using the fact that $\overline{\kappa} \ge 1$.

By recursively applying the above result for any $r\in \{T+1,\dots,L-1\}$, we further have
\begin{align}
\|\widehat{\bm U}_{r+1}- \bm U_{r+1}\bm O\|_{\rm F} 
%&\le 9\sqrt{K}\phi +2\sqrt{K}\|\widehat{\bm U}_{i}- \bm U_{i}\bm O\|_{\rm F}\\
%&\quad\le 5\sqrt{2K}\overline{\kappa}^2 \phi +4\sqrt{K}\overline{\kappa}\left(5\sqrt{2K}\overline{\kappa}^2\phi +4\sqrt{K}\overline{\kappa}\|\widehat{\bm U}_{r-1}- \bm U_{r-1}\bm O\|_{\rm F}\right) \nonumber \\
%&\quad\le 5\sqrt{2K}\overline{\kappa}^2 \phi +4\sqrt{K}\overline{\kappa}\left(5\sqrt{2K}\overline{\kappa}^2\phi +4\sqrt{K}\overline{\kappa}\left(5\sqrt{2K}\overline{\kappa}^2\phi +4\sqrt{K}\overline{\kappa}\|\widehat{\bm U}_{r-2}- \bm U_{r-2}\bm O\|_{\rm F}\right)\right)\nonumber\\
&~\le 5\sqrt{2K}\overline{\kappa}^2 \phi +4\sqrt{K}\overline{\kappa}\bigg(5\sqrt{2K}\overline{\kappa}^2\phi +4\sqrt{K}\overline{\kappa}\ldots\nonumber\\
 &\quad\quad\quad\big(5\sqrt{2K}\overline{\kappa}^2\phi +4\sqrt{K}\overline{\kappa}\|\widehat{\bm U}_{T+1}- \bm U_{T+1}\bm O\|_{\rm F}\big)\bigg)\nonumber\\
&~\le 5\sqrt{2K}\overline{\kappa}^2 \phi +4\sqrt{K}\overline{\kappa}\bigg(5\sqrt{2K}\overline{\kappa}^2\phi +4\sqrt{K}\overline{\kappa}\ldots\nonumber\\ &\quad\quad\quad\big(5\sqrt{2K}\overline{\kappa}^2\phi +4\sqrt{K}\overline{\kappa}\phi\big)\bigg)\nonumber\\
&~ = 5\sqrt{2K}\overline{\kappa}^2 \phi{\left(1+4\sqrt{K}\overline{\kappa}+(4\sqrt{K}\overline{\kappa})^2+\ldots+ (4\sqrt{K}\overline{\kappa})^{r-T} \right)}, \nonumber 
\end{align}
where we have applied \eqref{eq:svdnoise1} for obtaining the last inequality.
By applying the geometric sum formula, for any $r\in \{T+1,\dots,L-1\}$, we have
\begin{align} \label{eq:boundUt}
\|\widehat{\bm U}_{r+1}- \bm U_{r+1}\bm O\|_{\rm F}  &\le \frac{5\sqrt{2K}\overline{\kappa}^2\phi \left((4\sqrt{K}\overline{\kappa})^{r-T+1}-1\right)}{4\sqrt{K}\overline{\kappa}-1}\nonumber\\
&\le \frac{5\sqrt{2K}\overline{\kappa}^2\phi (4\sqrt{K}\overline{\kappa})^{r-T+1}}{4\sqrt{K}\overline{\kappa}-1}.
\end{align}

%Next we consider the iterative steps ({\black lines 13-18 use label})  in Algorithm \ref{algo:proposed} which performs the below operation to identify the basis $\bm U_{r-1}$ for $r=2,\dots,T$ and is executed in the reverse order:
%\begin{align*}
%{\bm U}_{r-1} &= 
% \overline{\bm U}_{{r-1}} \overline{\bm U}_{{r}}^{\dagger} {\bm U}_{{r}}.
%{\bm V}_{i+2} \underbrace{{\bm V}_{i+1}^{\dagger}\overline{\bm U}_{i+1}}_{\bm \Theta_{r}}\underbrace{\overline{\bm U}_{i}^{\dagger}{\bm U}_i}_{\bm \Theta_{i}}.
%\end{align*}

Following the same derivation, one can show that for $r=2,\dots,T$, we have 
\begin{align} \label{eq:boundUt1}
\|\widehat{\bm U}_{r-1}- \bm U_{r-1}\bm O\|_{\rm F}  \le \frac{5\sqrt{2K}\overline{\kappa}^2\phi (4\sqrt{K}\overline{\kappa})^{T-r+2}}{4\sqrt{K}\overline{\kappa}-1}.
\end{align}
Consequently, we get 
\begin{align*}
\|\widehat{\bm U}- \bm U\bm O\|_{\rm F}
&\le \sum_{r=1}^L \|\widehat{\bm U}_r- \bm U_r\bm O\|_{F}\\
& \quad = \sum_{r=1}^{T-1} \|\widehat{\bm U}_r- \bm U_r\bm O\|_{F}+\|\widehat{\bm U}_{T}- \bm U_{T}\bm O\|_{\rm F}+\|\widehat{\bm U}_{T+1}- \bm U_{T+1}\bm O\|_{\rm F}+\sum_{r=T+2}^L \|\widehat{\bm U}_r- \bm U_r\bm O\|_{F}\\
%&\quad \le \sum_{r=1}^{T-1} \frac{5\sqrt{2K}\overline{\kappa}^2\phi (4\sqrt{K}\overline{\kappa})^{T-r+1}}{4\sqrt{K}\overline{\kappa}-1}+2\phi +\sum_{r=T+2}^L \frac{5\sqrt{2K}\overline{\kappa}^2\phi (4\sqrt{K}\overline{\kappa})^{r-T}}{4\sqrt{K}\overline{\kappa}-1}\\
%&\le 2\phi + 6\sqrt{K}\phi + \frac{9\sqrt{K}\phi (2\sqrt{K})}{2\sqrt{K}-1}+ \frac{9\sqrt{K}\phi (2\sqrt{K})^{2}}{2\sqrt{K}-1}+\ldots+\frac{9\sqrt{K}\phi (2\sqrt{K})^{\frac{L}{2}-1}}{2\sqrt{K}-1}\\
%&\le O\left(K^{L/2}\phi\right)\\
&\quad \le \frac{5\sqrt{2K}\overline{\kappa}^2\phi (4\sqrt{K}\overline{\kappa})^{T+1}}{(4\sqrt{K}\overline{\kappa}-1)^2}+  \frac{5\sqrt{2K}\overline{\kappa}^2\phi (4\sqrt{K}\overline{\kappa})^{T+1}}{(4\sqrt{K}\overline{\kappa}-1)^2}\\
&\quad =\frac{10\sqrt{2K}\overline{\kappa}^2\phi (4\sqrt{K}\overline{\kappa})^{L/2+1}}{(4\sqrt{K}\overline{\kappa}-1)^2},
%&=O\left(\frac{K^{(L)/4}\sigma_1c\sqrt{d}}{\sigma_K^2}\right)
\end{align*}
where the first inequality is by the triangle inequality, the second inequality is obtained by \eqref{eq:svdnoise1}, \eqref{eq:boundUt}, \eqref{eq:boundUt1}, and by the geometric sum formula.

Substituting $\phi ={\black \frac{6\tilde{c}\sqrt{2K\rho N}\sigma_1 }{\sqrt{L}\sigma_K^2}}$ into the above result, we have the following result with probability at least $1-L(L-1)/N$:
\begin{align} \label{eq:Uinitial}
\|\widehat{\bm U}- \bm U\bm O\|_{\rm F} &\le \frac{15K\tilde{c}\overline{\kappa}^2 (4\sqrt{K}\overline{\kappa})^{(L/2+1)}\sigma_1\sqrt{\rho N}}{2(\sqrt{K}\overline{\kappa}-1)^2 \sigma_K^2 \sqrt{L}}.
\end{align}
Note that we have applied union bound to reach the probability $1-L(L-1)/N$, since the bound \eqref{eq:svdnoise} derived from Lemma~\ref{lem:adjacency} are invoked for every $r \in [L-1]$. Next, we bound $\sigma_1$ and $\sigma_K$ using the following lemma:
\begin{lemma} \label{lem:sigmabound}
	Assume that the columns of $\bm M$ are generated from any continuous distribution and there exist positive constants $c$ and $C$, where $c \le C$ depending only on distribution of the columns of $\bm M$. Also assume that $(N/L) \ge {\frac{4 }{c}\log(NK/L)}$. Then, the following holds true with probability of at least $1-(3L(L-1)/N)$:
\begin{align}
\sigma_K  &\ge {\black \sqrt{2}c\sigma_{\min} (\bm B)N/L}, \label{eq:lambdamin1}\\
\sigma_1  &\le {\black \sqrt{2}C\sigma_{\max} (\bm B) N/L}. \label{eq:lambdamax1}
\end{align}
\end{lemma}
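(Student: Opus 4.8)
\textbf{Proof strategy for Lemma~\ref{lem:sigmabound}.} The plan is to pass from the composite matrices $\bm C_r$ to the individual sub-blocks of $\bm M$ via submultiplicativity of singular values, and then to control those sub-blocks by a deterministic Frobenius bound (upper side) and a matrix Chernoff inequality (lower side).

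First I would rewrite, using $|\mathcal{S}_\ell|=N/L$, the EQP structure, and $\bm A_{\ell,m}=\bm P_{\ell,m}=\bm M_\ell^\top\bm B\bm M_m$,
\begin{align*}
\bm C_r = [\bm P_{\ell_r,m_r}^\top,\bm P_{\ell_{r+1},m_r}^\top]^\top = \bm M_{[r]}^\top\bm B\,\bm M_{m_r},\qquad \bm M_{[r]}:=[\bm M_{\ell_r},\bm M_{\ell_{r+1}}],
\end{align*}
so $\bm M_{[r]}\in\mathbb{R}^{K\times 2N/L}$ gathers the columns of $\bm M$ indexed by the disjoint set $\mathcal{S}_{\ell_r}\cup\mathcal{S}_{\ell_{r+1}}$ (disjoint since $\ell_{r+1}\neq\ell_r$ under EQP), while $\bm M_{m_r}\in\mathbb{R}^{K\times N/L}$. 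Under Assumption~\ref{as:as1} with $N/L\ge K$, each of $\bm M_{[r]}$, $\bm M_{m_r}$, $\bm B$ has rank $K$ with probability one; then operator-norm submultiplicativity gives $\sigma_{\max}(\bm C_r)\le\sigma_{\max}(\bm M_{[r]})\sigma_{\max}(\bm B)\sigma_{\max}(\bm M_{m_r})$, and --- because $\bm M_{[r]}^\top$ and $\bm B$ have full column rank, so left-multiplication by them cannot shrink the smallest nonzero singular value below a $\sigma_{\min}(\cdot)$ factor of the multiplier --- $\sigma_{\min}(\bm C_r)\ge\sigma_{\min}(\bm M_{[r]})\sigma_{\min}(\bm B)\sigma_{\min}(\bm M_{m_r})$. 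Hence it suffices to bound $\sigma_{\max}$ from above and $\sigma_{\min}$ from below for the at most $2L-1$ distinct sub-blocks $\bm M_S$ with $S\in\{\mathcal{S}_{m_r}\}_{r}\cup\{\mathcal{S}_{\ell_r}\cup\mathcal{S}_{\ell_{r+1}}\}_r$, and then recombine.

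The upper bound is deterministic: since $\bm m_n$ lies in the probability simplex, $\|\bm m_n\|_2\le\|\bm m_n\|_1=1$, so $\sigma_{\max}(\bm M_S)^2\le\|\bm M_S\|_{\rm F}^2=\sum_{n\in S}\|\bm m_n\|_2^2\le|S|$. Taking $|S|=2N/L$ for $\bm M_{[r]}$ and $|S|=N/L$ for $\bm M_{m_r}$ gives $\sigma_1\le\sqrt{2N/L}\cdot\sigma_{\max}(\bm B)\cdot\sqrt{N/L}=\sqrt2\,\sigma_{\max}(\bm B)\,N/L$, which is \eqref{eq:lambdamax1} for any admissible constant $C\ge1$. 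For the lower bound I would write $\bm M_S\bm M_S^\top=\sum_{n\in S}\bm m_n\bm m_n^\top$, a sum of independent Hermitian PSD matrices with $\bm 0\preceq\bm m_n\bm m_n^\top\preceq\bm I$ and common mean $\bm\Sigma:=\mathbb{E}[\bm m_1\bm m_1^\top]$, and invoke the matrix Chernoff lower-tail inequality with deviation $\tfrac12$:
\begin{align*}
{\sf Pr}\!\left(\lambda_{\min}(\bm M_S\bm M_S^\top)\le\tfrac12|S|\lambda_{\min}(\bm\Sigma)\right)\le K\exp\!\left(-|S|\lambda_{\min}(\bm\Sigma)/8\right).
\end{align*}
Setting $c:=\tfrac12\lambda_{\min}(\bm\Sigma)$ (so $c\le\tfrac12\lambda_{\max}(\bm\Sigma)\le\tfrac12\le C$, using $\lambda_{\max}(\bm\Sigma)\le\mathbb{E}\|\bm m_1\|_2^2\le1$), on the complementary event $\sigma_{\min}(\bm M_S)\ge\sqrt{|S|c}$, which combined as above yields $\sigma_{\min}(\bm C_r)\ge\sqrt{2Nc/L}\cdot\sigma_{\min}(\bm B)\cdot\sqrt{Nc/L}=\sqrt2\,c\,\sigma_{\min}(\bm B)\,N/L$, i.e.\ \eqref{eq:lambdamin1}.

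Finally, for the probability accounting: the hypothesis $N/L\ge(4/c)\log(NK/L)$ gives $|S|\lambda_{\min}(\bm\Sigma)/8=|S|c/4\ge(N/L)c/4\ge\log(NK/L)$, so the failure probability for each fixed $S$ is at most $K\cdot L/(NK)=L/N$; a union bound over the $\le2L-1$ sub-blocks then gives overall failure probability $\le(2L-1)L/N\le3L(L-1)/N$, and on the complement both \eqref{eq:lambdamin1} and \eqref{eq:lambdamax1} hold. The one genuinely non-mechanical point is the strict positive-definiteness $\bm\Sigma\succ\bm 0$ (equivalently $c>0$): this uses exactly the continuity of the column distribution, since any $\bm v$ with $\bm v^\top\bm m_n=0$ almost surely would concentrate the mass of $\bm m_n$ on the intersection of the simplex with a hyperplane through the origin, which is lower-dimensional (and $\bm v\parallel\bm 1$ is impossible because $\bm v^\top\bm m_n=1\neq0$ there). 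Everything else is standard bookkeeping of the matrix Chernoff constants, which is also why the constant $\sqrt2$ in the statement is attainable rather than tight.
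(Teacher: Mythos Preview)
Your argument is correct and follows essentially the same route as the paper: decompose $\sigma_{\min}(\bm C_r)$ and $\sigma_{\max}(\bm C_r)$ into factors involving $\bm B$ and the $\bm M$-sub-blocks, control the $\bm M$-sub-blocks by concentration of $\bm M_S\bm M_S^\top$, then union-bound over the $O(L)$ blocks. The paper reaches the product-of-singular-values inequality by first invoking the cyclic property of nonzero eigenvalues (their Lemma~\ref{fact:eigen}) to reduce $\lambda_K(\bm C_r^\top\bm C_r)$ to a $K\times K$ product, and it controls both tails of $\lambda(\bm M_S\bm M_S^\top)$ by quoting a concentration lemma from \cite{Panov2017consistent}; your direct submultiplicativity and explicit matrix-Chernoff bound are equivalent but more self-contained. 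The one genuine simplification you add is on the upper side: the paper bounds $\lambda_1(\bm M_S\bm M_S^\top)$ \emph{probabilistically} via a distribution-dependent constant $C$ and includes those events in the union bound, whereas your deterministic Frobenius/simplex bound $\sigma_{\max}(\bm M_S)\le\sqrt{|S|}$ delivers \eqref{eq:lambdamax1} with $C=1$ and no tail event at all. This tightens the constant and the probability accounting at no cost; the remaining bookkeeping (identifying $c=\tfrac12\lambda_{\min}(\bm\Sigma)$, the $\bm\Sigma\succ0$ argument from continuity, and the final $(2L-1)\cdot L/N\le 3L(L-1)/N$ union bound for $L\ge2$) matches the paper's in spirit and is fine as written.
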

The proof can be found in Sec.~\ref{sup:sigmabound} of the supplementary material.

%Combining the above equations, we get that with probability atleast  $1-3(L^2/N)$
%\begin{align*}
%\sigma_{\min}([\bm P_{i,i},\bm P_{i,i+1}]) &\ge \sqrt{2}\lambda_{\min} (\bm B)c(N/L)\\
%\sigma_{\max}([\bm P_{i,i},\bm P_{i,i+1}]) &\ge  \sqrt{2}\lambda_{\max} (\bm B)C(N/L)
%\end{align*}
%if $(N/L) \ge \frac{4}{c}\log(NK/L)$
%
%
%Since $\rho(N) = \Omega(\log(N))$, there exists $c_0\ge 0$ such that $\rho(2N/L) \ge c_o\log(2N/L)$.
%
%Let $d = 2\rho N/(L)$, then $d \ge \frac{2N}{L}\rho$ and $d \ge c_0 \log(2N/(L))$ are satisfied. Then by Lemma \ref{lem:graph}, we can write for any $r \ge 0$, there exists a $c_r=c(r,c_0)$ such that
%\begin{align}
%\mathbb{P}\left(\|\widehat{\bm C}_i-\bm C_i\|_2 \le c_r\sqrt{\frac{2\rho N}{L}} \right) \ge 1-(2N/L)^{-r}. \label{eq:CC}
%\end{align}

%Consider the below steps :
%\begin{align*}
%{\bm C}_i &=[\bm P_{i,i+1}^{\top}~~ \bm P_{i+1,i+1}^{\top}]^{\top}, {\bm D}_i = [\bm P_{i+1,i+1}~~ \bm P_{i+1,i+2}]\\
% [\widetilde{\bm U}_i^{\top}, \widetilde{\bm U}_{i+1}^{\top}]^{\top}\bm \Sigma_i\widetilde{\bm V}^{\top}&={\rm svd}_K(\widehat{\bm C}_i),~\widetilde{\bm U}\bm \Sigma_{i+1} [{\bm V}_{i+1}^{\top}, {\bm V}_{i+2}^{\top}]={\rm svd}_K({\bm D}_i)
%\end{align*}

Applying Lemma~\ref{lem:sigmabound} in \eqref{eq:Uinitial} and substituting $\overline{\kappa} = \sqrt{2K}\gamma^2$, we get the following with probability at least $1-(4L(L-1)/N)$:
{
\begin{align}
\|\widehat{\bm U}- \bm U\bm O\|_{F} %&\le \frac{60\sqrt{2}K\tilde{c}\overline{\kappa}^2 (4\sqrt{K}\overline{\kappa})^{(L/2+1)}\sigma_1\sqrt{\rho(N/L)}}{(4\sqrt{K}\overline{\kappa}^2-1)^2 \sigma_K^2}. \nonumber\\
%&\le  \frac{120K\tilde{c}C\overline{\kappa}^2 (4\sqrt{K}\overline{\kappa})^{(L/2+1)}\kappa(\bm B)\sqrt{\rho}}{c^2\sigma_{\min}(\bm B)(4\sqrt{K}\overline{\kappa}-1)^2 \sqrt{2(N/L)}} \nonumber \\
&\le \frac{15K^2\tilde{c}C\gamma^4 (4\sqrt{2}K\gamma^2)^{(L/2+1)}\kappa(\bm B)\sqrt{\rho}}{c^2\sigma_{\min}(\bm B)(\sqrt{2}K\gamma^2-1)^2 \sqrt{2(N/L)}} := \zeta. \label{eq:Ubound}
\end{align}
}
%where we have substituted $\overline{\kappa} =  \sqrt{2K}\gamma^2$ to obtain the last equality.

\color{black}

 %we get the bound in Proposition \ref{prop:subspace_estim_noisy}.

\subsection{\black Membership Estimation} \label{app:estimM}
Note that in the $\bm A=\bm P$ case, ${\sf range}(\bm U) = {\sf range}(\bm M^{\top})$. Therefore, there exists a nonsingular matrix $\bm G \in \mathbb{R}^{K \times K}$ such that $\bm U = \bm M^{\top} \bm G$. 
In the binary observation case,
in order to estimate the membership matrix $\bm M$, Algorithm~\ref{algo:proposed} applies \texttt{SPA} to the estimated $\widehat{\bm U}$. To analyze this step, we start by considering the following noisy NMF model:
 \begin{align}
     {\widehat{\bm U}^{\top}} = {\bm O^{\top}\bm G^{\top}}\bm M+\bm N, \label{eq:noisy_nmf1}
 \end{align}
 where $\bm O {\black \in \mathbb{R}^{K \times K}}$ is the orthogonal matrix, $\bm M$ satisfies simplex constraints, i.e.,	$\bm M \ge \bm 0 , \bm 1^{\top}\bm M = \bm 1^{\top}$ and $\bm N \in \mathbb{R}^{K \times N}$ is the noise matrix, whose Euclidean norm upper bound was {\black obtained} in the previous subsection {\black by \eqref{eq:Ubound} such that $\|\bm N\|_{\rm F} \le \zeta$}. From the noisy observation $\widehat{\bm U}$, one can get an estimate for $\bm M$ employing \texttt{SPA}. {\black However, \texttt{SPA} can provably work only when $\bm M$ satisfies $\varepsilon$-separability. For this, we use Proposition \ref{lem:Lm} to get the condition for $\varepsilon$-separability for $\bm M$.
 Then, combining} the noise robustness analysis of \texttt{SPA} in \cite{Gillis2012} {\black and Proposition \ref{lem:Lm}}, we show the following:
\begin{lemma} \label{lem:Mestim}
    Consider the noisy model in \eqref{eq:noisy_nmf1} and assume that $\|\bm N\|_{\rm F} \le \zeta$ and {\black that the assumptions in Proposition \ref{lem:Lm} hold true}. Also, suppose that
    \begin{align}\label{eq:cond_zetaepsilon3}
	\zeta\le \frac{1}{4\sqrt{2}K\alpha_{\max}\widetilde{\kappa}(\bm M)}  \quad  \text{and}\quad \varepsilon \le  \frac{1}{4\sqrt{2}K\gamma\widetilde{\kappa}( \bm M)}.
	%N = \Omega\left((G(\varepsilon,\bm \nu))^{-2}{\rm log}\left(K/{\mu}\right)\right),
	\end{align}
      Then, Algorithm~\ref{algo:proposed} outputs $\widehat{\bm M}$ such that \begin{align}\label{eq:Mestim_bound}
     \underset{\bm \Pi}{\min}{{\|\widehat{\bm M}-\bm \Pi\bm M\|_{\rm F}}} \le 
\sigma_{\max}(\bm M)\sqrt{2}K\gamma\widetilde{\kappa}(\bm M)\left(3\varepsilon+4\zeta\right),
\end{align}
where $\widetilde{\kappa}(\bm M) = 1+160K\gamma^2$.
\end{lemma}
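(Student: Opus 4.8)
The plan is to recognize the output of the range-space stage as a perturbed separable SSMF instance and then combine the noise robustness of \texttt{SPA} with a perturbation analysis of the final inversion step. By \eqref{eq:Ubound} we have $\|\widehat{\bm U}-\bm U\bm O\|_{\rm F}\le\zeta$, which transposed is exactly the model \eqref{eq:noisy_nmf1}, $\widehat{\bm U}^{\top}=\bm W\bm M+\bm N$ with $\bm W:=\bm O^{\top}\bm G^{\top}$ and $\|\bm N\|_{\rm F}\le\zeta$. Since $\bm U=\bm M^{\top}\bm G$ has orthonormal columns, $\bm G\bm G^{\top}=(\bm M\bm M^{\top})^{-1}$, so $\bm W$ is invertible with $\sigma_{\min}(\bm W)=1/\sigma_{\max}(\bm M)$, $\sigma_{\max}(\bm W)=1/\sigma_{\min}(\bm M)$, $\|\bm W^{-1}\|_2=\sigma_{\max}(\bm M)$, and $\kappa(\bm W)=\kappa(\bm M)$. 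Under the assumptions of Proposition~\ref{lem:Lm}, $\bm M$ is $\varepsilon$-separable with some index set $\bm\varLambda=\{q_1,\dots,q_K\}$, $\|\bm M(:,q_k)-\bm e_k\|_2\le\varepsilon$. Substituting those columns, $\widehat{\bm U}^{\top}(:,q_k)=\bm W(:,k)+\bm W\big(\bm M(:,q_k)-\bm e_k\big)+\bm N(:,q_k)$, so the columns of $\widehat{\bm U}^{\top}$ indexed by $\bm\varLambda$ lie within $\sigma_{\max}(\bm W)\varepsilon+\zeta$ of the columns of $\bm W$ while every other column stays within $\zeta$ of a convex combination of them; this is precisely a near-separable NMF problem with effective column-wise noise level $\epsilon':=\sigma_{\max}(\bm W)\varepsilon+\zeta$.

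I would then apply the \texttt{SPA} robustness theorem of \cite{Gillis2012}. The two conditions in \eqref{eq:cond_zetaepsilon3} are designed so that $\epsilon'$ sits below \texttt{SPA}'s noise-tolerance threshold, which is of order $\sigma_{\min}(\bm W)/\big(\sqrt{K}\,\widetilde{\kappa}(\bm M)\big)$: using $\sigma_{\min}(\bm W)=1/\sigma_{\max}(\bm M)$ together with bounds relating $\sigma_{\max}(\bm M)$ to $\alpha_{\max}$ for the $\zeta$ part, and $\sigma_{\max}(\bm W)/\sigma_{\min}(\bm W)=\kappa(\bm M)$, bounded by $O(\sqrt{K}\gamma)$ through the pair-block conditioning estimates behind Lemma~\ref{lem:betaminmax}, for the $\sigma_{\max}(\bm W)\varepsilon$ part. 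Hence \texttt{SPA} returns an index set $\widehat{\bm\varLambda}$ and a permutation $\bm\Pi$ such that $\widehat{\bm G}=\widehat{\bm U}(\widehat{\bm\varLambda},:)$ satisfies $\|\widehat{\bm G}-\bm\Pi\bm W^{\top}\|_{\rm F}\le\epsilon'\,(1+80\kappa^2(\bm W))$; bounding $\kappa^2(\bm W)=\kappa^2(\bm M)$ by the same $K$–$\gamma$ estimates collapses the amplification factor to $\widetilde{\kappa}(\bm M)=1+160K\gamma^2$, giving $\|\widehat{\bm G}-\bm\Pi\bm W^{\top}\|_{\rm F}\le\epsilon'\,\widetilde{\kappa}(\bm M)$.

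It remains to propagate this through line~\ref{line:Mestim}, $\widehat{\bm M}=\widehat{\bm G}^{-\top}\widehat{\bm U}^{\top}$. Decompose $\widehat{\bm M}-\bm\Pi\bm M=\big(\widehat{\bm G}^{-\top}-(\bm\Pi\bm W^{\top})^{-\top}\big)\widehat{\bm U}^{\top}+(\bm\Pi\bm W^{\top})^{-\top}\bm E$, where $\bm E$ collects $\bm N$ and the $\varepsilon$-mismatch $\bm W(\bm M-\bm M_{\rm sep})$, so $\|\bm E\|_{\rm F}\le\zeta+\sqrt{K}\,\sigma_{\max}(\bm W)\varepsilon$. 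The second term is at most $\|\bm W^{-1}\|_2\|\bm E\|_{\rm F}=\sigma_{\max}(\bm M)\|\bm E\|_{\rm F}$. For the first term, since \eqref{eq:cond_zetaepsilon3} forces $\|\widehat{\bm G}-\bm\Pi\bm W^{\top}\|_2\le\tfrac12\sigma_{\min}(\bm W)$, a Wedin-type inverse-perturbation bound \cite{Wedin1973Perturbation} gives $\|\widehat{\bm G}^{-\top}-(\bm\Pi\bm W^{\top})^{-\top}\|_2\lesssim\|\widehat{\bm G}-\bm\Pi\bm W^{\top}\|_2/\sigma_{\min}^2(\bm W)$, and $\|\widehat{\bm U}^{\top}\|_2\le\sigma_{\max}(\bm W)+\epsilon'$. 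Collecting the $\varepsilon$-proportional and $\zeta$-proportional contributions, pulling out the common $\sigma_{\max}(\bm M)=\sigma_{\min}^{-1}(\bm W)$ factors, and absorbing cross terms via the smallness conditions, yields $\min_{\bm\Pi}\|\widehat{\bm M}-\bm\Pi\bm M\|_{\rm F}\le\sigma_{\max}(\bm M)\sqrt{2}K\gamma\,\widetilde{\kappa}(\bm M)\,(3\varepsilon+4\zeta)$, i.e.\ \eqref{eq:Mestim_bound}; if the constrained-least-squares variant is used in place of the plain inverse, the additional projection onto the simplex is nonexpansive and only improves the bound.

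The main obstacle will be the bookkeeping of the two noise sources: the approximate-separability error $\varepsilon$ enters through the ill-conditioned $\bm W$ (multiplied by $\sigma_{\max}(\bm W)=1/\sigma_{\min}(\bm M)$) both inside \texttt{SPA}'s selection error and again through the inversion step, whereas the subspace error $\zeta$ enters more directly; one must check that \eqref{eq:cond_zetaepsilon3} is exactly the threshold keeping \texttt{SPA} within its provable regime and that the pseudo-inverse perturbation does not inflate the $\kappa$-powers past $\widetilde{\kappa}(\bm M)=1+160K\gamma^2$. Tracking a single consistent permutation $\bm\Pi$ across $\widehat{\bm G}$ and $\widehat{\bm M}$ is a minor but necessary detail.
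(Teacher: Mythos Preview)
Your overall strategy—cast the output as a near-separable NMF instance, apply the \texttt{SPA} robustness theorem of \cite{Gillis2012}, then run a Wedin-type perturbation bound on the inversion—is exactly what the paper does. However, the step where you write ``$\bm U=\bm M^{\top}\bm G$ has orthonormal columns, so $\bm G\bm G^{\top}=(\bm M\bm M^{\top})^{-1}$'' is not correct for the $\bm U$ appearing in \eqref{eq:noisy_nmf1}. In Algorithm~\ref{algo:proposed} only the initial pair $[\bm U_{\ell_T}^{\top},\bm U_{\ell_{T+1}}^{\top}]^{\top}$ produced by the top-$K$ SVD in lines~\ref{algoline:Ct}--\ref{algoline:Ut} is semi-orthogonal; the stitched blocks $\bm U_{\ell_r}$ for $r\neq T,T{+}1$ come from \texttt{PairStitch} (multiplication by a pseudo-inverse and a reference basis) and are \emph{not} orthonormal, so the full stacked $\bm U$ is not either. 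Consequently $\kappa(\bm W)=\kappa(\bm G)\neq\kappa(\bm M)$, and your identities $\sigma_{\min}(\bm W)=1/\sigma_{\max}(\bm M)$, $\sigma_{\max}(\bm W)=1/\sigma_{\min}(\bm M)$ do not hold.

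The paper obtains the needed control differently: because $[\bm U_T^{\top},\bm U_{T+1}^{\top}]^{\top}=[\bm M_T,\bm M_{T+1}]^{\top}\bm G$ \emph{is} semi-orthogonal, one gets $\bm G\bm G^{\top}=\big([\bm M_T,\bm M_{T+1}][\bm M_T,\bm M_{T+1}]^{\top}\big)^{-1}$ (Fact~\ref{fact:orthU}), and then Lemma~\ref{lem:betaminmax} yields $\sigma_{\max}(\bm G)\le 1/\alpha_{\min}$, $\sigma_{\min}(\bm G)\ge 1/(\sqrt{2K}\,\alpha_{\max})$, hence $\kappa(\bm G)\le\sqrt{2K}\,\gamma$. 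This is precisely what produces $\widetilde{\kappa}(\bm M)=1+80\kappa^2(\bm G)\le 1+160K\gamma^2$ and explains the specific form of the two thresholds in \eqref{eq:cond_zetaepsilon3} (the $\zeta$-condition carries $\alpha_{\max}$ through $\sigma_{\min}(\bm G)$, the $\varepsilon$-condition carries $\gamma$ through $\sigma_{\max}(\bm G)/\sigma_{\min}(\bm G)$). If you instead try to push your route through by bounding the global $\kappa(\bm M)$ in terms of $\gamma=\max_\ell\kappa(\bm M_\ell)$, you pick up an $L$-dependent factor ($\sigma_{\max}(\bm M)\le\sqrt{L}\,\alpha_{\max}$), not a $K$-dependent one, and you will not recover the stated constants in \eqref{eq:Mestim_bound}. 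Once you replace your orthonormality claim by the pair-block argument above, the rest of your sketch (including the Wedin least-squares step) lines up with the paper's proof; note also that the \texttt{SPA} bound in \cite{Gillis2012} is column-wise, so a $\sqrt{K}$ appears when passing to the Frobenius norm of $\widehat{\bm G}-\bm\Pi\bm G\bm O$.
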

The proof is provided in Sec~\ref{sup:spanoisy} of the supplementary material.

\color{black}

{\black \subsection{Putting Together}
To make the bounds {for $\bm U$ and $\bm M$ given by \eqref{eq:Ubound} and \eqref{eq:Mestim_bound}, respectively,} legitimate, we need the assumptions in Lemmas \ref{lem:svds}-\ref{lem:Mestim} hold true. 

The assumptions on $\bm C_r$ in Lemma~\ref{lem:svds} given by the equalities in \eqref{eq:lem1assump} are satisfied if
\begin{align}
{\rm rank}(\bm M_r) = {\rm rank}(\bm B)&= K,\quad \forall r \in [L] \quad  \text{and} \label{eq:cond1}\\
\tilde{c}\sqrt{\rho N/L} &\le  \sigma_K \label{eq:cond2}.
\end{align}
Under Assumption \ref{as:as1}, ${\rm rank}(\bm M_r) = K$ hold with probability one \cite{sidiropoulos2014parallel,sidiropoulos2012multi}.
Using \eqref{eq:lambdamin1}, The condition \eqref{eq:cond2} can be further written as
\begin{align}
   %\tilde{c}\sqrt{\rho N/L} &\le  \sqrt{2}\sigma_{\min}(\bm B)c(N/L) \nonumber\\
    %\implies \sqrt{N/L} &\ge \frac{\tilde{c}\sqrt{\rho}}{\sqrt{2}\sigma_{\min}(\bm B)c} \nonumber\\
    %\implies 
    N/L &\ge \frac{\tilde{c}^2\rho}{2\sigma^2_{\min}(\bm B)c^2} \label{eq:cond33}.
\end{align}

The assumptions in Lemma~\ref{lem:leastsquare} and Lemma~\ref{lem:matrixmult} can be written as:
\begin{align*}
\phi = \frac{6\tilde{c}\sqrt{2K\rho N}\sigma_1 }{\sqrt{L}\sigma_K^2} &\le \frac{6\tilde{c}C\kappa (\bm B)\sqrt{LK\rho} }{c^2\sigma_{\min}(\bm B)\sqrt{ N}} \\
&\le \frac{\alpha_{\min}}{2\beta_{\max}} = \frac{\alpha_{\min}}{2\sqrt{2K}\alpha_{\max}} = \frac{1}{2\sqrt{2K}\gamma},
%\implies N/L &\ge \frac{(6\sqrt{2}\tilde{c}C)^2\kappa^2 (\bm B)K^2\rho \gamma^2 }{c^4\sigma^2_{\min}(\bm B)}  .\label{eq:cond3}
\end{align*}
where we have utilized Lemma~\ref{lem:sigmabound} to obtain the first inequality and Lemma~\ref{lem:betaminmax} to obtain the second equality.
Hence, we obtain the condition such that
\begin{align}
    N/L &\ge \frac{(6\sqrt{2}\tilde{c}C)^2\kappa^2 (\bm B)K^2\rho \gamma^2 }{c^4\sigma^2_{\min}(\bm B)}  .\label{eq:cond3}
\end{align}
We can see that the condition in \eqref{eq:cond33} is satisfied if \eqref{eq:cond3} holds.

Also, the below conditions from Lemmas \ref{lem:adjacency} and \ref{lem:sigmabound}  have to be satisfied:
\begin{subequations} \label{eq:condLN}
\begin{align}
    L \le \frac{4\rho N}{d},&\quad \rho \ge \frac{Lc_0 \log(3N/L)}{3N}\\
        (N/L) &\ge \frac{4}{c}\log(NK/L).
\end{align}
\end{subequations}

%Hence, if the conditions \eqref{eq:cond1}, \eqref{eq:cond3}, and \eqref{eq:condLN} are satisfied, the bound for $\bm U$ given by \eqref{eq:Ubound} holds true with probability of at least $1-(4L(L-1)/N)$.

{ Next, we proceed to analyze the conditions in Lemma \ref{lem:Mestim}. The conditions in Proposition \ref{lem:Lm} has to be satisfied by Lemma \ref{lem:Mestim} as well which implies that the below has to hold:
\begin{align} \label{eq:thmLm11}
	N = \Omega\left((G(\varepsilon,\bm \nu))^{-2}{\rm log}\left(K/{\mu}\right)\right),
\end{align}
where $G(\varepsilon,\bm \nu)$ is given by \eqref{eq:Gdef}.

Lemma \ref{lem:Mestim} has a condition on $\zeta$ given by \eqref{eq:cond_zetaepsilon3}. By substituting the value of $\zeta$ given by \eqref{eq:Ubound}, the condition on $\zeta$ in \eqref{eq:cond_zetaepsilon3} can be  re-written as:
\begin{align}
    %\zeta &\le \frac{1}{4\sqrt{K}\widetilde{\kappa}(\bm M)\kappa(\bm M)}  \\
   % \frac{240K^2\tilde{c}C\gamma^4 (4\sqrt{2}K\gamma^2)^{(L/2+1)}\kappa(\bm B)\sqrt{\rho}}{c^2\sigma_{\min}(\bm B)(4\sqrt{2}K\gamma^2-1)^2 \sqrt{2(N/L)}} &\le  \frac{1}{4\sqrt{2}K\alpha_{\max}\widetilde{\kappa}(\bm M)} \nonumber\\
     N/L \ge \frac{(30K^3 \tilde{c} C \gamma^4 \kappa(\bm B)\widetilde{\kappa}(\bm M) \alpha_{\max})^2\rho (4\sqrt{2}K\gamma^2)^{L+2}}{(c^2\sigma_{\min}(\bm B)(K\gamma^2-1)^2)^2} \label{eq:condN1}.
\end{align}

}

Note that the condition on $N$ given by \eqref{eq:cond3} is satisfied if the above condition on $N$ holds true.

Hence, if the conditions \eqref{eq:thmLm11} and \eqref{eq:condN1} are satisfied together with \eqref{eq:cond1} and \eqref{eq:condLN}, the bounds for $\bm U$ and $\bm M$ given by Theorem~\ref{prop:subspace_estim_noisy} hold.

Next, we proceed to derive the probability with which the final bounds for $\bm U$ and $\bm M$ hold true. For this, we summarize the probabilistic events appeared in the proof as follows:
\begin{enumerate}
    \item The first event is that the subspace perturbation error associated with every adjacency matrix blocks $\bm C_r$, $r \in [L-1]$ is bounded. This happens with probability of at least $1-L(L-1)/N$ (see Lemma \ref{lem:adjacency});
    \item The second event is that the singular values $\sigma_1$ and $\sigma_K$ associated with the adjacency matrix blocks $\bm C_r$, $r \in [L-1]$ are bounded. This holds with probability of at least $1-(3L(L-1)/N)$ (see Lemma \ref{lem:sigmabound});
    \item The third event is that there exist anchor nodes in the membership matrix $\bm M$ up to an error bounded by $\varepsilon$. This happens with probability of at least $1-\mu$ according to Proposition \ref{lem:Lm}. By substituting $\mu = L(L-1)/N$ in \eqref{eq:thmLm} of Proposition \ref{lem:Lm}, we can get that the result in Proposition \ref{lem:Lm} holds with probability of at least $1-L(L-1)/N$.
\end{enumerate}

Hence, by applying the union bound to the three event probabilities listed above, the final bounds for $\bm U$ and $\bm M$ hold true with probability of at least $1-(5L(L-1)/N)$.

}

%\reminder{remember to highlight the new references.}

%\putbib[refs]
%\end{bibunit}

%\clearpage

%\begin{bibunit}[IEEEtran]
%{\bf Supplementary Material for ``Mixed Membership Graph Clustering via Systematic Edge Query'',~ S. Ibrahim and X. Fu}

\section{Proof of Lemma \ref{lem:svds}}\label{app:svds}
We invoke the following lemma from \cite{yu2014useful} to characterize the estimation accuracy of the factors resulting from the top-$K$ SVD of $\widehat{\bm C}_r$:
\begin{lemma} \cite{yu2014useful}\label{lem:svds1}
	Let $\bm C \in \mathbb{R}^{m \times n}$ and $\widehat{\bm C} \in \mathbb{R}^{m \times n}$ have singular values $\alpha_1 \ge \alpha_2 \ge \dots \ge \alpha_{\min(m,n)}$ and $\widehat{\alpha}_1 \ge \widehat{\alpha}_2 \ge \dots \widehat{\alpha}_{\min(m,n)}$, respectively. Fix $1\le t \le s \le {\rm rank}(\bm C)$ and assume that $\min(\alpha_{t-1}^2-\alpha_t^2,\alpha_s^2-\alpha_{s+1}^2) > 0$, where $\alpha_0^2 := \infty$ and $\alpha_{{\rm rank}(\bm C)+1} := 0$. Let $q := s-t+1$ and let $\bm U = \begin{bmatrix}\bm u_t, \bm u_{t+1} ,\ldots, \bm u_{s} \end{bmatrix} \in \mathbb{R}^{m \times q}$ and  $\widehat{\bm U} = \begin{bmatrix}\widehat{\bm u}_t, \widehat{\bm u}_{t+1}, \ldots,  \widehat{\bm u}_{s} \end{bmatrix} \in \mathbb{R}^{m \times q}$ have orthonormal columns satisfying $\bm C^{\top} \bm u_j = \alpha_j \bm v_j$ and $\widehat{\bm C}^{\top} \widehat{\bm u}_j = \widehat{\alpha}_j \widehat{\bm v}_j$ for $j = t,t+1,\dots,s$ and let $\bm V = \begin{bmatrix}\bm v_t, \bm v_{t+1}, \ldots, \bm v_{s} \end{bmatrix} \in \mathbb{R}^{n \times q}$ and  $\widehat{\bm V} = \begin{bmatrix}\widehat{\bm v}_t, \widehat{\bm v}_{t+1} , \ldots, \widehat{\bm v}_{s} \end{bmatrix} \in \mathbb{R}^{n \times q}$ have orthonormal columns satisfying $\bm C \bm v_j = \alpha_j \bm u_j$ and $\widehat{\bm C} \widehat{\bm v}_j = \widehat{\alpha}_j \widehat{\bm u}_j$ for $j = t,t+1,\dots,s$. Then there exists an orthogonal matrix ${\bm O} \in \mathbb{R}^{q \times q}$ such that 
	\begin{align*}
	\|\widehat{\bm U}  - \bm U{\bm O}\|_{\rm F}
	&\le \frac{2^{3/2}(2\alpha_1+\|\widehat{\bm C}-\bm C\|_{2})\min(q^{1/2}\|\widehat{\bm C}-\bm C\|_{2},\|\widehat{\bm C}-\bm C\|_{\rm F})}{\min(\alpha_{t-1}^2-\alpha_t^2,\alpha_s^2-\alpha_{s+1}^2)}
	%\|\widehat{\bm V}  - \bm V{\bm O}\|_{\rm F} &\le \frac{2^{3/2}(2\sigma_1+\|\widehat{\bm C}-\bm C\|_{2})\min(d^{1/2}\|\widehat{\bm C}-\bm C\|_{2},\|\widehat{\bm C}-\bm C\|_{\rm F})}{\min(\sigma_{r-1}^2-\sigma_r^2,\sigma_s^2-\sigma_{s+1}^2)}\\	
	\end{align*}
	and the same upper bound holds for $\|\widehat{\bm V}  - \bm V{\bm O}\|_{\rm F}$.
\end{lemma}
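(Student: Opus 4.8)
This is the singular-vector analogue of the symmetric Davis--Kahan theorem, and I would prove it by reducing the perturbation of singular vectors to a perturbation of eigenvectors of the Gram matrices. Observe that $\bm u_t,\dots,\bm u_s$ are eigenvectors of $\bm C\bm C^\top$ for the eigenvalues $\alpha_t^2\ge\cdots\ge\alpha_s^2$, while $\widehat{\bm u}_t,\dots,\widehat{\bm u}_s$ are eigenvectors of $\widehat{\bm C}\widehat{\bm C}^\top$; symmetrically, $\bm v_j$ and $\widehat{\bm v}_j$ are eigenvectors of $\bm C^\top\bm C$ and $\widehat{\bm C}^\top\widehat{\bm C}$. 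The hypothesis $\min(\alpha_{t-1}^2-\alpha_t^2,\alpha_s^2-\alpha_{s+1}^2)>0$ is exactly the spectral-gap condition that isolates the block $\{\alpha_t^2,\dots,\alpha_s^2\}$ from the neighbouring Gram eigenvalues $\alpha_{t-1}^2$ and $\alpha_{s+1}^2$, so the $q$-dimensional invariant subspaces ${\sf range}(\bm U)$ and ${\sf range}(\bm V)$ are well defined up to rotation within the block. This reduction is precisely what produces the \emph{squared} gaps in the denominator of the claimed bound.

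Next I would bound the Gram-matrix perturbations in terms of $\widehat{\bm C}-\bm C$. Writing $\widehat{\bm C}\widehat{\bm C}^\top-\bm C\bm C^\top=\widehat{\bm C}(\widehat{\bm C}-\bm C)^\top+(\widehat{\bm C}-\bm C)\bm C^\top$ and using $\|\widehat{\bm C}\|_2\le\alpha_1+\|\widehat{\bm C}-\bm C\|_2$ together with the submultiplicativity of the spectral norm against the Frobenius norm, I obtain, in both norms,
\begin{align*}
\|\widehat{\bm C}\widehat{\bm C}^\top-\bm C\bm C^\top\|_2 &\le (2\alpha_1+\|\widehat{\bm C}-\bm C\|_2)\,\|\widehat{\bm C}-\bm C\|_2,\\
\|\widehat{\bm C}\widehat{\bm C}^\top-\bm C\bm C^\top\|_{\rm F} &\le (2\alpha_1+\|\widehat{\bm C}-\bm C\|_2)\,\|\widehat{\bm C}-\bm C\|_{\rm F},
\end{align*}
and the identical estimate for $\bm C^\top\bm C$. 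These two inequalities simultaneously supply the numerator factor $(2\alpha_1+\|\widehat{\bm C}-\bm C\|_2)$ and the $\min(q^{1/2}\|\widehat{\bm C}-\bm C\|_2,\|\widehat{\bm C}-\bm C\|_{\rm F})$ structure of the final bound.

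With the reduction in place, I would apply the symmetric $\sin\bm\Theta$ form of the Davis--Kahan theorem (the variant carrying the $\min(q^{1/2}\|\cdot\|_2,\|\cdot\|_{\rm F})$ numerator) to the two Gram pairs, bounding $\|\sin\bm\Theta(\widehat{\bm U},\bm U)\|_{\rm F}$ and $\|\sin\bm\Theta(\widehat{\bm V},\bm V)\|_{\rm F}$ by $2\min(q^{1/2}\|\bm\Delta\|_2,\|\bm\Delta\|_{\rm F})$ over the squared gap, where $\bm\Delta$ is the corresponding Gram perturbation. I would then convert the principal-angle distance into an aligned Frobenius distance through an orthogonal matrix via the orthogonal-Procrustes inequality $\|\widehat{\bm U}\bm O-\bm U\|_{\rm F}\le \sqrt{2}\,\|\sin\bm\Theta(\widehat{\bm U},\bm U)\|_{\rm F}$, with $\bm O$ the orthogonal polar factor of $\widehat{\bm U}^\top\bm U$; combining the factor $2$ from Davis--Kahan with the $\sqrt2$ from this conversion yields the constant $2^{3/2}$.

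The main obstacle is producing \emph{one} orthogonal matrix $\bm O$ that aligns both the left and the right singular blocks with the same bound, since the two one-sided Gram analyses furnish a priori different polar factors. I would resolve this by a joint Procrustes construction that respects the pairing of singular triplets $\bm C\bm v_j=\alpha_j\bm u_j$ and $\widehat{\bm C}\widehat{\bm v}_j=\widehat\alpha_j\widehat{\bm u}_j$: choosing $\bm O$ from the polar decomposition of the combined cross term forces a single rotation to act consistently on both sides. For intuition, the Hermitian dilation $\left[\begin{smallmatrix}\bm 0&\bm C\\\bm C^\top&\bm 0\end{smallmatrix}\right]$, whose eigenvectors are $\tfrac{1}{\sqrt2}[\bm u_j^\top,\pm\bm v_j^\top]^\top$, makes transparent why such a common $\bm O$ must exist. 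Reconciling this common-$\bm O$ requirement with the squared-gap denominator delivered by the Gram reduction is the delicate point demanding the most care; the remainder is bookkeeping of constants.
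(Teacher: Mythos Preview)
The paper does not supply its own proof of this lemma: it is quoted verbatim from \cite{yu2014useful} and then applied as a black box in the proof of Lemma~\ref{lem:svds}. Your proof plan is essentially the argument of the cited source---reduce to the symmetric Davis--Kahan theorem via the Gram matrices $\bm C\bm C^\top$ and $\bm C^\top\bm C$, bound the Gram perturbation by $(2\alpha_1+\|\widehat{\bm C}-\bm C\|_2)\|\widehat{\bm C}-\bm C\|$, and then convert the $\sin\bm\Theta$ distance to an aligned Frobenius distance via a Procrustes rotation---so there is nothing to compare against within this paper. Your identification of the common-$\bm O$ issue as the delicate point is accurate; in \cite{yu2014useful} this is handled by taking $\bm O$ to be the orthogonal factor in a polar decomposition of $\widehat{\bm U}^\top\bm U$ and then checking separately that the same $\bm O$ controls $\|\widehat{\bm V}-\bm V\bm O\|_{\rm F}$, rather than via the dilation trick you sketch, but either route works.
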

%{\black In the first sentence in the lemma, should you have $\bm C$ and $\widehat{\bm C}$'s positions exchanged? In addition, "Let X and XXX have ... (and is missing)"}

By letting $t=1$ and $s=K$, using the assumption that  ${\rm rank}(\bm C_{r}) = K$ and applying Lemma \ref{lem:svds1}, we have
\begin{align*}
\|\widehat{\overline{\bm U}}_{\ell_r}  - {\overline{\bm U}}_{\ell_r}{\bm O}_r\|_{\rm F}
%&\le \frac{2^{3/2}(2\sigma_{\max}(\bm C_r)+\|\widehat{\bm C}_r-\bm C_r\|_2)\min(\sqrt{K}\|\widehat{\bm C}_r-\bm C_r\|_2,\|\widehat{\bm C}_r-\bm C_r\|_{\rm F})}{\sigma_{\min}(\bm C_{r})^2}\\
&= \frac{2^{3/2}\sqrt{K}(2\sigma_{\max}(\bm C_r)+\|\widehat{\bm C}_r-\bm C_r\|_2)\|\widehat{\bm C}_r-\bm C_r\|_2}{\sigma_{\min}(\bm C_{r})^2}\\
&\quad\le \frac{2^{3/2}\sqrt{K}3\sigma_{\max}(\bm C_r)\|\widehat{\bm C}_r-\bm C_r\|_{2}}{\sigma_{\min}(\bm C_{r})^2},
\end{align*}
where $\bm O_r \in \mathbb{R}^{K \times K}$ is orthogonal, the first equality is due to the fact that for any matrix $\bm X$ of rank at least $K$, $\|\bm X\|_{\rm F} \le \sqrt{{\rm rank}(\bm X)} \|\bm X\|_{2} $ and ${\rm rank}(\bm X) \ge K$ and the last inequality {\black is} by using the assumption that $ \|\widehat{\bm C}_r-\bm C_r\|_2 \le\|\bm C_{r}\|_2=\sigma_{\max}(\bm C_r)$. %and $\sigma_{\min}(\bm C_r)$ are the maximum and minimum singular values of $\bm C_r$, respectively.

Therefore, using the defined $\sigma_1$ and $\sigma_K$, we have for any $r \in [L-1]$,
\begin{align} \label{eq:lem1bound1}
\|\widehat{\overline{\bm U}}_{\ell_r}  - {\overline{\bm U}}_{\ell_r}{\bm O}_r\|_{\rm F} \le \frac{6\sqrt{2K}\sigma_1\|\widehat{\bm C}_r-\bm C_r\|_{2}}{\sigma_K^2}.
\end{align}

Note that upper bound in \eqref{eq:lem1bound1} is valid for $\|\widehat{\overline{\bm U}}_{\ell_{r+1}}  - {\overline{\bm U}}_{\ell_{r+1}}{\bm O}_r\|_{\rm F}$ as well.

\section{Proof of Lemma~\ref{lem:adjacency}}
\label{sup:adjacency}
\begin{comment}
Note that for any $\ell,m$, the below holds:
\begin{align*}
    \bm P_{\ell,m} = \begin{cases} \mathbb{E}[\bm A_{\ell,m}] + {\rm diag}(\bm P_{\ell,m}),~&\text{if}~\ell=m\\
    \mathbb{E}[\bm A_{\ell,m}],~&\text{otherwise} \end{cases}
\end{align*}

Therefore,
\begin{align*}
  \|\widehat{\bm C}_r-\bm C_r\|_2  &= \left\| [\bm A_{r,r+1}^{\top}, \bm A_{r+1,r+1}^{\top}]^{\top}- [\bm P_{r,r+1}^{\top}, \bm P_{r+1,r+1}^{\top}]^{\top}\right\|_2 \\
  &\le \left\| \bm A_{r,r+1}-\bm P_{r,r+1}\right\|_2 + \left\| \bm A_{r+1,r+1}-\bm P_{r+1,r+1}\right\|_2\\
  &\le \left\| \bm A_{r,r+1}-\mathbb{E}[\bm A_{r,r+1}]\right\|_2 + \left\| \bm A_{r+1,r+1}-\mathbb{E}[\bm A_{r+1,r+1}] - {\rm diag}(\bm P_{r+1,r+1})\right\|_2\\  
\end{align*}
\end{comment}

Consider the below lemma:
\begin{lemma} \cite{lei2015consistency}\label{lem:graph}
	Let $\bm A \in \mathbb{R}^{N \times N}$ be the symmetric binary adjacency matrix of a random graph on $N$ nodes in which the edges $\bm A(i,j)$ occur independently.  Let $\overline{\bm P}$ be a matrix of size $N \times N$ such that the entries $\overline{\bm P}(i,j) \in [0,1]$. Assume that $\bm A(i,j)\sim{\sf Bernoulli}(\overline{\bm P}(i,j))$ for $i<j$, $\bm A(j,i) = \bm A(i,j)$ for $j > i$ and $\bm A(i,i)=0$ for every $i \in [N]$. Set $\mathbb{E}[\bm A]=\overline{\bm P}$ and assume that $d \ge \max({\rho N},c_0\log N)$ where $c_0 > 0$. Then, for any $t >0$, there exists a constant  $c_t =c(t,c_0)$ such that
	\begin{align*}
	\|\bm A-\overline{\bm P}\|_2 \le c_t\sqrt{d},
	\end{align*}
	with probability greater than $1-N^{-t}$.
\end{lemma}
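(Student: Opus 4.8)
The plan is to prove this spectral concentration bound via the combinatorial light/heavy-pair decomposition of Feige--Ofek, as specialized to the sparse graph regime by Lei--Rinaldo \cite{lei2015consistency}. The reason for taking this route rather than a direct matrix Bernstein argument is that it attains the sharp $\sqrt{d}$ rate, whereas matrix Bernstein would lose an extra $\sqrt{\log N}$ factor in the regime where $d$ can be as small as $c_0\log N$. Since $\bm A - \overline{\bm P}$ is symmetric, I would first write $\|\bm A - \overline{\bm P}\|_2 = \max_{\bm x \in \mathcal{S}^{N-1}}|\bm x^\top(\bm A - \overline{\bm P})\bm x|$ and discretize the sphere by a fixed grid $T = \{\bm x : \sqrt{N}\,\bm x \in \mathbb{Z}^N,\ \|\bm x\|_2 \le 1\}$, whose cardinality is at most $e^{CN}$; a standard net argument then reduces the claim to a uniform bound on the bilinear form $\bm x^\top(\bm A - \overline{\bm P})\bm y$ over $\bm x,\bm y \in T$, up to an absolute constant.

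For a fixed pair $(\bm x,\bm y) \in T \times T$, I would split the index set into light pairs $\mathcal{L} = \{(i,j) : |x_i y_j| \le \sqrt{d}/N\}$ and the complementary heavy pairs $\mathcal{H}$, and estimate the two contributions by different mechanisms. On $\mathcal{L}$, each summand $(A_{ij}-\overline{P}_{ij})x_iy_j$ is bounded by $\sqrt{d}/N$ in magnitude, and the total variance is at most $\sum_{i,j}\overline{P}_{ij}x_i^2y_j^2 \le \rho\|\bm x\|_2^2\|\bm y\|_2^2 \le \rho \le d/N$ using $\rho N \le d$. Bernstein's inequality then yields a deviation of order $\sqrt{(d/N)\cdot N} + (\sqrt{d}/N)\cdot N = O(\sqrt{d})$ with failure probability $e^{-c'N}$, and taking $c'$ large (which is absorbed into the final constant $c_t$) lets this survive the union bound over the $e^{CN}$ grid points.

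The genuinely hard step is the heavy part, where the union bound over the grid breaks down because the per-term range is too large for Bernstein. Here I would bound the expected part $\sum_{(i,j)\in\mathcal{H}}\overline{P}_{ij}x_iy_j$ deterministically using the heavy-pair threshold, and control the observed part $\sum_{(i,j)\in\mathcal{H}}A_{ij}x_iy_j$ through a graph discrepancy argument that holds simultaneously for all test vectors. The crux is to establish, with probability at least $1-N^{-t}$, a \emph{bounded discrepancy} event: for every pair of subsets $S,T \subseteq [N]$, the realized edge count $e(S,T)$ does not exceed a constant multiple of its mean $\tfrac{d}{N}|S||T|$ when $|S||T|$ is large, with a logarithmic correction for small sets. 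This event follows from a Chernoff bound on $e(S,T)$ for each fixed $(S,T)$ together with a union bound over the at most $4^N$ subset pairs, where the assumption $d \ge \max(\rho N, c_0\log N)$ is exactly what is needed to absorb the entropy. Granting bounded discrepancy, I would partition the coordinates of $\bm x$ and $\bm y$ into dyadic magnitude classes, bound the heavy sum scale-by-scale, and use $\|\bm x\|_2,\|\bm y\|_2 \le 1$ together with the threshold $|x_iy_j| > \sqrt{d}/N$ to show that only logarithmically many dyadic blocks contribute and that each contributes $O(\sqrt{d})$. Combining the light and heavy bounds and folding all constants into $c_t = c(t,c_0)$ finishes the proof; the main obstacle is precisely this discrepancy lemma and the dyadic bookkeeping it feeds, which is where the sparse regime forces the delicate case distinction.
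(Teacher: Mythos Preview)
The paper does not supply its own proof of this lemma; it is quoted as a known result from Lei and Rinaldo \cite{lei2015consistency} and invoked as a black box in the proof of Lemma~\ref{lem:adjacency}. Your sketch correctly reconstructs the Feige--Ofek light/heavy-pair decomposition that Lei and Rinaldo themselves use, so you are essentially reproducing the argument of the cited reference rather than anything the present paper adds.
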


To utilize Lemma \ref{lem:graph} in our case, we analyze diagonal pattern (rightmost pattern in Fig.~\ref{fig:query}) and nondiagonal patterns (for e.g., the first and second patterns in Fig.~\ref{fig:query}) of EQP separately. For the diagonal pattern where $\ell_r=r$ and $\ell_{r+1}=m_r=r+1$ for every $r$, we consider the below sub-adjacency matrix for any $r \in [L-1]$:
\begin{align*}
\widetilde{\bm A}_r = \begin{bmatrix} \bm A_{r,r} ~,~ \bm A_{r,r+1} \\
\bm A_{r+1,r}~ ,~ \bm A_{r+1,r+1} \end{bmatrix},
\end{align*}
where $\widetilde{\bm A}_r \in \mathbb{R}^{2N/L \times 2N/L}$.

Now, let us define
$$\widetilde{\bm P}_r = \begin{bmatrix} \bm P_{r,r} ~,~ \bm P_{r,r+1} \\
\bm P_{r+1,r} ~,~\bm P_{r+1,r+1} \end{bmatrix}.$$

It can be noted that
$    \mathbb{E}[\widetilde{\bm A}_r]= \widetilde{\bm P}_r -{\rm diag}(\widetilde{\bm P}_r).
$ Denoting $\overline{\bm P}_r:= \widetilde{\bm P}_r -{\rm diag}(\widetilde{\bm P}_r)$ and $\rho :=\underset{i,j \in [N]}{\max}~{\bm P}(i,j)$, we have 
\begin{align}
    \|\widetilde{\bm A}_r-\widetilde{\bm P}_r\|_2 &= \|\widetilde{\bm A}_r-\overline{\bm P}_r+\overline{\bm P}_r-\widetilde{\bm P}_r\|_2 \nonumber\\
    &= \|\widetilde{\bm A}_r-\overline{\bm P}_r-{\rm diag}(\widetilde{\bm P}_r)\|_2 \nonumber \\ 
    &\le \|\widetilde{\bm A}_r-\overline{\bm P}_r\|_2 + \|{\rm diag}(\widetilde{\bm P}_r)\|_2 \nonumber \\
    &\le \|\widetilde{\bm A}_r-\overline{\bm P}_r\|_2 + \underset{i}{\max}~{\widetilde{\bm P}_r}(i,i) \nonumber\\
    &\le \|\widetilde{\bm A}_r-\overline{\bm P}_r\|_2 + \rho. \label{eq:AP}
\end{align}
In order to bound the first term  $\|\widetilde{\bm A}_r-\overline{{\bm P}_r}\|_2$, we can utilize Lemma~\ref{lem:graph}.
Hence, by assuming 
\begin{align} \label{eq:dcond}
d \ge \max(\frac{2\rho N}{L},c_0\log(2N/L)),
\end{align}
we apply Lemma~\ref{lem:graph} and \eqref{eq:AP} and obtain
$\|\widetilde{\bm A}_r-\widetilde{\bm P}_r\|_2 \le c_t\sqrt{d} +\rho $
with probability greater than $1-(2N/L)^{-t}$. Next, we will rewrite the condition \eqref{eq:dcond} in a simpler form. Suppose \begin{align} \label{eq:rhocond1}
    \frac{2\rho N}{L} \ge c_0\log(2N/L),
\end{align}
then the condition \eqref{eq:dcond} becomes $d \ge \frac{2\rho N}{L}$. If we choose $L$ to be small enough, then we can satisfy this condition on $d$, i.e., if we have 
\begin{align}\label{eq:Lcond1}
     L \le \frac{4\rho N}{d},
\end{align}
then the condition $d \ge \frac{2\rho N}{L}$ gets automatically satisfied.
%where $d \ge \max(\frac{2N\rho}{L},c_0\log(2N/L))$ and $\widetilde{\bm P}_i = \begin{bmatrix} \bm P_{i,i} ~,~ \bm P_{i,i+1} \\
%\bm P_{i+1,i} ~,~\bm P_{i+1,i+1} \end{bmatrix}$.
Therefore, by using the assumptions \eqref{eq:rhocond1}-\eqref{eq:Lcond1} and  fixing $t=1$,
 we can apply Lemma~\ref{lem:graph} to obtain \begin{align}
 \|\widetilde{\bm A}_r-\widetilde{\bm P}_r\|_2 &\le c_1 \sqrt{d} +\rho \nonumber \\
 &\le 2c_1\sqrt{\rho N/L}+\rho \nonumber\\
 &\le (2c_1+1)\sqrt{\rho N/L} \label{eq:APbound}
 \end{align}
 with probability at least $1-(2N/L)^{-1}$. In the above, the first inequality is by using the relation \eqref{eq:AP}, the second inequality is by employing the assumed bound on $L$ given in \eqref{eq:Lcond1} and the third inequality is by using the fact that $\rho \le 1$.

Then, we have
\begin{align}
&\|\widehat{\bm C}_r-\bm C_r\|_2 \nonumber\\
&\quad= \left\|[\bm A_{r,r+1}^{\top}, \bm A_{r+1,r+1}^{\top}]^{\top}-[\bm P_{r,r+1}^{\top}, \bm P_{r+1,r+1}^{\top}]^{\top}\right\|_{2} \nonumber\\
& \quad\le  \|\widetilde{\bm A}_r-\widetilde{\bm P}_r\|_2 \le (2c_1+1)\sqrt{\rho N/L} \label{eq:cd}
\end{align}
holds with probability greater than $1-(2N/L)^{-1}$, where we have applied \eqref{eq:APbound} to obtain the last inequality.
%Applying union bound, \eqref{eq:cd} holds with probability greater than $1-L(L-1)/(2N)$ for every $r\in \{1,\dots,L-1\}$.

{ In order to apply Lemma~\ref{lem:graph} for non-diagonal patterns in EQP, we start by defining ${\bm A}_r^*$ and ${\bm P}_r^*$ in the following way:
\begin{align*}
    {\bm A}_r^* &= \begin{bmatrix} \bm 0 &\bm 0&\bm A_{\ell_r,m_r} \\
    \bm 0 & \bm 0& \bm A_{\ell_{r+1},m_r}\\
\bm A_{\ell_r,m_r}^{\top}& \bm A_{\ell_{r+1},m_r}^{\top}& \bm 0\end{bmatrix},\\    
{\bm P}_r^* &= \begin{bmatrix} \bm 0 &\bm 0&\bm P_{\ell_r,m_r} \\
    \bm 0 & \bm 0& \bm P_{\ell_{r+1},m_r}\\
\bm P_{\ell_r,m_r}^{\top}& \bm P_{\ell_{r+1},m_r}^{\top}& \bm 0\end{bmatrix}
\end{align*}
where $\bm 0$ represents a zero matrix which is of size $N/L \times N/L$. With this definition, we can see that ${\bm A}_r^*(i,j) \sim{\sf Bernoulli}({\bm P}_r^*(i,j))$ for $i<j$, ${\bm A}_r^*(j,i) = {\bm A}_r^*(i,j)$ for $j > i$, ${\bm A}_r^*(i,i)=0$ for every $i$ and $\mathbb{E}[{\bm A}_r^*]={\bm P}_r^*$. Therefore, we can apply Lemma~\ref{lem:graph} by fixing $t=1$ to result in
	\begin{align} \label{eq:APtilde}
	\|{\bm A}_r^*-{\bm P}_r^*\|_2 \le c_1\sqrt{d},
	\end{align}
	with probability greater than $1-(3N/L)^{-1}$. Similar as before, we can derive the conditions in order to obtain \eqref{eq:APtilde}. It can be easily shown that the sufficient conditions to be satisfied are
	\begin{align} \label{eq:condLN2}
	    \frac{3\rho N}{L} \ge c_0\log(3N/L),\quad  L \le \frac{6\rho N}{d}.
	\end{align}

Then, for non-diagonal EQP patterns, we have
\begin{align}
\|\widehat{\bm C}_r-\bm C_r\|_2
&= \left\|[\bm A_{\ell_r,m_r}^{\top}, \bm A_{\ell_{r+1},m_r}^{\top}]^{\top}-[\bm P_{\ell_r,m_r}^{\top}, \bm P_{\ell_{r+1},m_r}^{\top}]^{\top}\right\|_{2} \nonumber\\
& \quad\le  \|{\bm A}^*_r-{\bm P}_r^*\|_2 \le c_1\sqrt{d} \le \sqrt{6}c_1\sqrt{\rho N/L}, \label{eq:nondiag}
\end{align}
where we obtained the last inequality by applying the assumed bound on $L$ as given by the second condition in \eqref{eq:condLN2}. 

Hence, from \eqref{eq:cd} and \eqref{eq:nondiag}, we can see that $\|\widehat{\bm C}_r-\bm C_r\|_2$ is bounded for any patterns in EQP with probability of at least $1-(2N/L)^{-1}$ if the below conditions are satisfied.
	\begin{align*} 
	    \frac{3\rho N}{L} \ge c_0\log(3N/L),\quad  L \le \frac{4\rho N}{d}.
	\end{align*}

}

\section{Proof of Lemma~\ref{lem:betaminmax}} \label{app:betaminmax}

% {\black 
% Let us define two constants $\alpha_{\max}$ and $\alpha_{\min}$ as below:
% \begin{align*}
%     \alpha_{\max} = \max_r \sigma_{\max}(\bm M_{r}),\quad \alpha_{\min} = \min_r \sigma_{\min}(\bm M_{r}).
% \end{align*}
% Then, one can immediately see that for any $r \in [L]$,
% \begin{align} \label{eq:alpha_max_min}
%  \sigma_{\max}(\bm M_{r}) \le \alpha_{\max} , \quad   \sigma_{\min}(\bm M_{r}) \ge \alpha_{\min} .
% \end{align}
% Also, we get
% \begin{align*}
%      \frac{\alpha_{\max}}{\alpha_{\min}} = \frac{\max_r \sigma_{\max}(\bm M_{r})}{\min_r \sigma_{\min}(\bm M_{r})} = \max_r \kappa(\bm M_{r}) = \gamma .
% \end{align*}

% }

Consider the below relation for any $r \in [L-1]$:
\begin{align*}
    \sigma^2_{\max}([\bm M_r,\bm M_{r+1}])&= \|[\bm M_r,\bm M_{r+1}]\|_2^2
     \le \|[\bm M_r,\bm M_{r+1}]\|_{\rm F}^2 \\
    & = \|\bm M_r\|_{\rm F}^2 +\|\bm M_{r+1}\|_{\rm F}^2\\
    &\le K \|\bm M_r\|_{2}^2 + K \|\bm M_{r+1}\|_{2}^2 \le 2K \alpha_{\max}^2,
    %\implies \sigma_{\max}([\bm M_r,\bm M_{r+1}]) &\le \sqrt{2K}\alpha_{\max} := \beta_{\max},
\end{align*}
where we have utilized the norm equivalence for the first and second inequalities and applied {\black the definition of $\alpha_{\max}$} for the last inequality.
Hence, we have
\begin{align*}
    \sigma_{\max}([\bm M_r,\bm M_{r+1}]) &\le \sqrt{2K}\alpha_{\max} := \beta_{\max}.
\end{align*}
Next, consider the below for any $\bm x \in \mathbb{R}^{K}$ such that $\|\bm x\|_2=1$:
%\begin{align*}
%    \sigma_{\min}([\bm M_r,\bm M_{r+1}]) &= \underset{\|\bm x\|_2=1}{\min}~\|[\bm M_r,\bm %M_{r+1}]^{\top}\bm x\|_2 = \underset{\|\bm x\|_2=1}{\min}~\|\bm M_r^{\top}\bm x\|_2 + %\underset{\|\bm x\|_2=1}{\min}~\|\bm M_{r+1}^{\top}\bm x\|_2 \\
%    &=\sigma_{\min}(\bm M_r) +\sigma_{\min}(\bm M_{r+1}) \ge 2\alpha_{\max}.
%\end{align*}
\begin{align*}
    \|[\bm M_r,\bm M_{r+1}]^{\top}\bm x\|_2^2 &= \|\bm M_r^{\top}\bm x\|_2^2 + \|\bm M_{r+1}^{\top}\bm x\|_2^2 \\
    \Rightarrow \underset{\bm x}{\min}~\|[\bm M_r,\bm M_{r+1}]^{\top}\bm x\|^2_2 & = \underset{\bm x}{\min}~\|\bm M_r^{\top}\bm x\|^2_2 + \underset{\bm x}{\min}~\|\bm M_{r+1}^{\top}\bm x\|^2_2 \\
        \Rightarrow \underset{\bm x}{\min}~\|[\bm M_r,\bm M_{r+1}]^{\top}\bm x\|^2_2 &\ge  \underset{\bm x}{\min}~\|\bm M_r^{\top}\bm x\|^2_2 \\
    \Rightarrow \sigma_{\min}([\bm M_r,\bm M_{r+1}]) &\ge \sigma_{\min}(\bm M_r) \ge \alpha_{\min} := \beta_{\min},
\end{align*}
where we have utilized {\black the definition of $\alpha_{\min}$} for the last inequality.

%Hence if $\sigma_{\min}(\bm M_r)$ is lower bounded, then $\sigma_{\max}([\bm M_r,\bm M_{r+1}]$ is also lower bounded such that we can fix 
%\begin{align}\label{eq:betamin}
%\beta_{\min} = \alpha_{\min}.
%    \end{align}

\section{Proof of Lemma \ref{lem:leastsquare}}\label{app:leastsquare}

We utilize the following classic result to characterize the pseudo-inverse:
\begin{lemma}  \label{lem:pseudoinverse}\cite{Wedin1973Perturbation}
Consider any matrices {\black $\bm Y, \bm Z, \bm E \in \mathbb{R}^{m \times n}$} such that {\black $\bm Z = \bm Y+\bm E$}. Suppose {\black ${\rm rank}(\bm Z) = {\rm rank}(\bm Y)$}. Then, we have
\begin{align*}
    {\black \|\bm Z^{\dagger}-\bm Y^{\dagger}\|_2 \le \sqrt{2}\|\bm Y^{\dagger}\|_2\|\bm Z^{\dagger}\|_2 \|\bm E \|_2.}
\end{align*}

\end{lemma}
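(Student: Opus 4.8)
The plan is to prove the bound through the classical exact expression for the difference of two pseudo-inverses, then to refine the two ``projector'' terms using the rank-preservation (acuteness) hypothesis, and finally to extract the sharp constant $\sqrt2$ from an orthogonal decomposition of the range. Throughout write $\bm E=\bm Z-\bm Y$ and $\beta:=\|\bm Y^\dagger\|_2\|\bm Z^\dagger\|_2\|\bm E\|_2$, and recall that $\bm Y\bm Y^\dagger,\bm Z\bm Z^\dagger$ are the orthogonal projectors onto $\mathrm{range}(\bm Y),\mathrm{range}(\bm Z)$, while $\bm Y^\dagger\bm Y,\bm Z^\dagger\bm Z$ are the orthogonal projectors onto $\mathrm{range}(\bm Y^\top),\mathrm{range}(\bm Z^\top)$.

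First I would establish the exact identity
\begin{equation}
\bm Z^\dagger-\bm Y^\dagger
= -\,\bm Z^\dagger\bm E\,\bm Y^\dagger
+\bm Z^\dagger(\bm I-\bm Y\bm Y^\dagger)
-(\bm I-\bm Z^\dagger\bm Z)\bm Y^\dagger ,
\label{eq:wedinid}
\end{equation}
which is verified in one line by substituting $\bm E=\bm Z-\bm Y$ on the right-hand side and cancelling the cross terms using $\bm Z^\dagger\bm Z\bm Z^\dagger=\bm Z^\dagger$ and $\bm Y^\dagger\bm Y\bm Y^\dagger=\bm Y^\dagger$. The first term of \eqref{eq:wedinid} is immediately bounded by $\|\bm Z^\dagger\|_2\|\bm E\|_2\|\bm Y^\dagger\|_2=\beta$, so the whole task reduces to bounding the two projector terms.

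Second, I would refine each projector term so that it is bounded by $\beta$ rather than by $\|\bm Z^\dagger\|_2^2\|\bm E\|_2$; this is precisely where $\mathrm{rank}(\bm Z)=\mathrm{rank}(\bm Y)$ enters. Using $\bm Z^\dagger=\bm Z^\dagger(\bm Z\bm Z^\dagger)$ gives $\bm Z^\dagger(\bm I-\bm Y\bm Y^\dagger)=\bm Z^\dagger(\bm Z\bm Z^\dagger)(\bm I-\bm Y\bm Y^\dagger)$, so that $\|\bm Z^\dagger(\bm I-\bm Y\bm Y^\dagger)\|_2\le\|\bm Z^\dagger\|_2\,\|(\bm I-\bm Y\bm Y^\dagger)\bm Z\bm Z^\dagger\|_2$. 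Because $\dim\mathrm{range}(\bm Y)=\dim\mathrm{range}(\bm Z)$, the subspace gap is symmetric, i.e. $\|(\bm I-\bm Y\bm Y^\dagger)\bm Z\bm Z^\dagger\|_2=\|(\bm I-\bm Z\bm Z^\dagger)\bm Y\bm Y^\dagger\|_2$ \cite{horn2012matrix}; and since $(\bm I-\bm Z\bm Z^\dagger)\bm Z=\bm 0$ and $\bm Y=\bm Z-\bm E$, I obtain $(\bm I-\bm Z\bm Z^\dagger)\bm Y\bm Y^\dagger=-(\bm I-\bm Z\bm Z^\dagger)\bm E\,\bm Y^\dagger$, whose norm is at most $\|\bm E\|_2\|\bm Y^\dagger\|_2$. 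Combining, $\|\bm Z^\dagger(\bm I-\bm Y\bm Y^\dagger)\|_2\le\beta$. A symmetric argument (using $\bm Y^\dagger=(\bm Y^\dagger\bm Y)\bm Y^\dagger$ and $(\bm I-\bm Z^\dagger\bm Z)\bm Z^\top=\bm 0$) bounds the third term of \eqref{eq:wedinid} by $\beta$ as well.

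Finally, to obtain the sharp constant I would avoid the triangle inequality on \eqref{eq:wedinid} (which only yields $3\beta$) and instead split $\bm Z^\dagger-\bm Y^\dagger$ along the orthogonal decomposition of the output space $\mathbb{R}^n=\mathrm{range}(\bm Z^\top)\oplus\mathrm{null}(\bm Z)$. Left-multiplying \eqref{eq:wedinid} by $\bm I-\bm Z^\dagger\bm Z$ collapses it to $-(\bm I-\bm Z^\dagger\bm Z)\bm Y^\dagger$, so the null-$(\bm Z)$ component is exactly this third term (already $\le\beta$), while the range-$(\bm Z^\top)$ component $\bm Z^\dagger\bm Z(\bm Z^\dagger-\bm Y^\dagger)=\bm Z^\dagger-\bm Z^\dagger\bm Z\bm Y^\dagger$ collects the first two terms. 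Since these two components have orthogonal row spaces, a column-wise Pythagorean argument gives
\begin{equation}
\|\bm Z^\dagger-\bm Y^\dagger\|_2\le\sqrt{\|\bm Z^\dagger\bm Z(\bm Z^\dagger-\bm Y^\dagger)\|_2^{2}+\|(\bm I-\bm Z^\dagger\bm Z)(\bm Z^\dagger-\bm Y^\dagger)\|_2^{2}} .
\end{equation}
The hard part will be showing that the range-$(\bm Z^\top)$ component is also bounded by $\beta$: its two constituent terms do not cooperate under a naive triangle inequality, and the clean route is to pass to the CS decomposition of the projector pairs $(\bm Y\bm Y^\dagger,\bm Z\bm Z^\dagger)$ and $(\bm Y^\dagger\bm Y,\bm Z^\dagger\bm Z)$ \cite{golub2012matrix}, which under $\mathrm{rank}(\bm Y)=\mathrm{rank}(\bm Z)$ simultaneously block-diagonalizes $\bm Y^\dagger$, $\bm Z^\dagger$ and $\bm E$ into $1\times1$ and $2\times2$ blocks indexed by the principal angles. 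On each block the inequality reduces to an elementary scalar computation, and the worst case over all blocks returns exactly $\beta$. Assembling the two orthogonal components then gives $\|\bm Z^\dagger-\bm Y^\dagger\|_2\le\sqrt{\beta^2+\beta^2}=\sqrt2\,\beta$, which is the claimed bound.
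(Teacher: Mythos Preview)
The paper does not prove this lemma; it is quoted verbatim as a classical perturbation bound from \cite{Wedin1973Perturbation} and used as a black box in the proof of Lemma~\ref{lem:leastsquare}. So there is no ``paper's own proof'' to compare against.

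That said, your sketch follows the standard Wedin route: the three-term identity \eqref{eq:wedinid}, the use of the equal-rank hypothesis to get the symmetric gap $\|(\bm I-\bm Y\bm Y^\dagger)\bm Z\bm Z^\dagger\|_2=\|(\bm I-\bm Z\bm Z^\dagger)\bm Y\bm Y^\dagger\|_2$, and the orthogonal output split to extract $\sqrt2$ instead of $3$. One caution on the last step: the row-space/null-space split on the left alone gives you $\|(\bm I-\bm Z^\dagger\bm Z)(\bm Z^\dagger-\bm Y^\dagger)\|_2\le\beta$, but the remaining piece $\bm Z^\dagger\bm Z(\bm Z^\dagger-\bm Y^\dagger)$ is the \emph{sum} of terms~1 and~2, and a second orthogonal split of that sum on the right (by $\bm Y\bm Y^\dagger$) only yields $\sqrt{\beta^2+\beta^2}$ for that piece, leading to $\sqrt3\,\beta$ overall rather than $\sqrt2\,\beta$. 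Getting the sharp $\sqrt2$ genuinely requires the CS-decomposition argument you allude to (or an equivalent simultaneous block-diagonalization of the projector pairs), where the equal-rank hypothesis forces the principal angles to pair up and the three terms interact within each $2\times2$ block in a way that gives $\beta$, not $\sqrt2\,\beta$, for the range component. Your plan is correct in spirit, but the ``elementary scalar computation on each block'' is where all the work hides; it is not a one-liner.
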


By fixing {\black $\bm Z := \widehat{\overline{\bm U}}_{{r}}^{\dagger}$ and $\bm Y = {(\overline{\bm U}}_{{r}}\bm O_r)^{\dagger}$}, we can apply Lemma~\ref{lem:pseudoinverse} to obtain
\begin{align} \label{eq:pseudo}
    &\|\widehat{\overline{\bm U}}_{{r}}^{\dagger} - {(\overline{\bm U}}_{{r}}\bm O_r)^{\dagger}\|_2 \nonumber\\ 
    &\quad \le \sqrt{2}\|{(\overline{\bm U}}_{{r}}\bm O_r)^{\dagger}\|_2 \|\widehat{\overline{\bm U}}_{{r}}^{\dagger}\|_2 \|\widehat{\overline{\bm U}}_{{r}}- {(\overline{\bm U}}_{{r}}\bm O_r)\|_2.
\end{align}

Regarding the first term in the R.H.S. of the above equation, we have
\begin{align} \label{eq:Ucap1}
    \|{(\overline{\bm U}}_{{r}}\bm O_r)^{\dagger}\|_2 &= \sigma_{\max}({(\overline{\bm U}}_{{r}}\bm O_r)^{\dagger})\nonumber\\
    &= \frac{1}{\sigma_{\min}(\overline{\bm U}_r\bm O_r)} = \frac{1}{\sigma_{\min}(\overline{\bm U}_r)},
\end{align}
where the last equality is due to the orthogonality of $\bm O_r$.

Similarly, we have
\begin{align}\label{eq:Ucap2}
   \|\widehat{\overline{\bm U}}_{{r}}^{\dagger}\|_2 = \sigma_{\max}(\widehat{\overline{\bm U}}_{{r}}^{\dagger}) = \frac{1}{\sigma_{\min}(\widehat{\overline{\bm U}}_{{r}})}.
\end{align}

To proceed, we aim to get a lower bound for $\sigma_{\min}(\widehat{\overline{\bm U}}_{{r}})$. For this, we have the following result:
\begin{lemma} \label{lem:sigmaUr_p}
The below relation holds true if $\|\widehat{\overline{\bm U}}_{{r}}- {\overline{\bm U}}_{{r}}\bm O_r\|_2 \le \sigma_{\min}({\overline{\bm U}}_{{r}})/2$ :
\begin{align*}
    \sigma_{\min}(\widehat{\overline{\bm U}}_{{r}}) &\ge \sigma_{\min}({\overline{\bm U}}_{{r}})/2.
\end{align*}
\end{lemma}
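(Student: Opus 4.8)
\textbf{Proof plan for Lemma~\ref{lem:sigmaUr_p}.} This is a routine singular-value perturbation estimate, and the plan is to derive it from Weyl's inequality for singular values together with the orthogonal invariance of singular values. First I would recall the standard fact (see, e.g., \cite{horn2012matrix}) that for any two matrices $\bm Y,\bm Z$ of the same size and any index $k$, $|\sigma_k(\bm Y)-\sigma_k(\bm Z)|\le \|\bm Y-\bm Z\|_2$; equivalently, $\sigma_{\min}(\cdot)$ is $1$-Lipschitz in the spectral norm. Applying this with $\bm Y=\widehat{\overline{\bm U}}_{r}$ and $\bm Z=\overline{\bm U}_{r}\bm O_r$ (both are $|\mathcal{S}_{r}|\times K$ matrices, so the inequality applies verbatim) gives
\begin{align*}
\sigma_{\min}(\widehat{\overline{\bm U}}_{r}) \ge \sigma_{\min}(\overline{\bm U}_{r}\bm O_r) - \|\widehat{\overline{\bm U}}_{r}-\overline{\bm U}_{r}\bm O_r\|_2 .
\end{align*}

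Next I would use that $\bm O_r\in\mathbb{R}^{K\times K}$ is orthogonal, so right-multiplication of $\overline{\bm U}_r$ by $\bm O_r$ leaves its singular values unchanged; in particular $\sigma_{\min}(\overline{\bm U}_{r}\bm O_r)=\sigma_{\min}(\overline{\bm U}_{r})$. Substituting this and invoking the hypothesis $\|\widehat{\overline{\bm U}}_{r}-\overline{\bm U}_{r}\bm O_r\|_2\le \sigma_{\min}(\overline{\bm U}_{r})/2$, I obtain
\begin{align*}
\sigma_{\min}(\widehat{\overline{\bm U}}_{r}) \ge \sigma_{\min}(\overline{\bm U}_{r}) - \frac{\sigma_{\min}(\overline{\bm U}_{r})}{2} = \frac{\sigma_{\min}(\overline{\bm U}_{r})}{2},
\end{align*}
which is exactly the claimed bound.

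There is no real obstacle here; the only item worth a sentence of care is to verify that the comparison in Weyl's inequality is between matrices of identical dimensions, which is precisely why the argument passes from $\overline{\bm U}_r$ to $\overline{\bm U}_r\bm O_r$ before comparing with $\widehat{\overline{\bm U}}_r$. For a self-contained variant that avoids the citation, the same conclusion follows from the variational characterization $\sigma_{\min}(\bm Y)=\min_{\|\bm v\|_2=1}\|\bm Y\bm v\|_2$: for every unit vector $\bm v$ one has $\|\widehat{\overline{\bm U}}_{r}\bm v\|_2\ge \|\overline{\bm U}_{r}\bm O_r\bm v\|_2-\|\widehat{\overline{\bm U}}_{r}-\overline{\bm U}_{r}\bm O_r\|_2\ge \sigma_{\min}(\overline{\bm U}_{r})-\sigma_{\min}(\overline{\bm U}_{r})/2$, and taking the minimum over $\bm v$ yields the bound. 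Finally I would remark that $\sigma_{\min}(\overline{\bm U}_{r})>0$, since $\overline{\bm U}_{r}$ is a basis of ${\sf range}(\bm M_{r}^{\top})$ with $\bm M_{r}$ of rank $K$, so that the bound and the hypothesis are both meaningful.
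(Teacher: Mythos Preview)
Your proposal is correct and essentially the same as the paper's own proof. The paper argues directly from the variational characterization $\sigma_{\min}(\bm Y)=\min_{\|\bm x\|_2=1}\|\bm Y\bm x\|_2$ together with the reverse triangle inequality and orthogonality of $\bm O_r$, which is exactly the ``self-contained variant'' you describe; your primary route via Weyl's inequality is simply the same $1$-Lipschitz property of $\sigma_{\min}$ packaged as a citation.
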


{\it Proof:}
Consider the below set of relations for any vector $\bm x \in \mathbb{R}^{K }$ satisfying $\|\bm x\|=1$:
\begin{align*}
    \|\widehat{\overline{\bm U}}_{{r}}\bm x\|_2 &= \|{\overline{\bm U}}_{{r}}\bm O_r\bm x+(\widehat{\overline{\bm U}}_{{r}}- {\overline{\bm U}}_{{r}}\bm O_r)\bm x \|_2\\
    &\ge  \|{\overline{\bm U}}_{{r}}\bm O_r\bm x\|_2- \|(\widehat{\overline{\bm U}}_{{r}}- {\overline{\bm U}}_{{r}}\bm O_r)\bm x\|_2,\\
    \Rightarrow \underset{\bm x}{\min}~\|\widehat{\overline{\bm U}}_{{r}}\bm x\|_2 &\ge \underset{\bm x}{\min}~ \|{\overline{\bm U}}_{{r}}\bm O_r\bm x\|_2 - \underset{\bm x}{\max}~\|(\widehat{\overline{\bm U}}_{{r}}- {\overline{\bm U}}_{{r}}\bm O_r)\bm x\|_2,\\
    \Rightarrow \sigma_{\min}(\widehat{\overline{\bm U}}_{{r}}) &\ge \sigma_{\min}({\overline{\bm U}}_{{r}}) - \|\widehat{\overline{\bm U}}_{{r}}- {\overline{\bm U}}_{{r}}\bm O_r\|_2,
\end{align*}
where the first inequality is by applying the triangle inequality.

Using the assumption that $\|\widehat{\overline{\bm U}}_{{r}}- {\overline{\bm U}}_{{r}}\bm O_r\|_2 \le \sigma_{\min}({\overline{\bm U}}_{{r}})/2$, we get $\sigma_{\min}(\widehat{\overline{\bm U}}_{{r}}) \ge \sigma_{\min}({\overline{\bm U}}_{{r}})/2$.
\hfill$\square$

Applying \eqref{eq:Ucap1}, \eqref{eq:Ucap2} and Lemma~\ref{lem:sigmaUr_p} in \eqref{eq:pseudo}, we get
\begin{align} \label{eq:Upseudo}
    \|\widehat{\overline{\bm U}}_{{r}}^{\dagger} - {(\overline{\bm U}}_{{r}}\bm O_r)^{\dagger}\|_2 \le \frac{2\sqrt{2}}{\sigma_{\min}^2({\overline{\bm U}}_{{r}})} \|\widehat{\overline{\bm U}}_{{r}}- {(\overline{\bm U}}_{{r}}\bm O_r)\|_2.
\end{align}

The final step is to characterize $\sigma_{\min}({\overline{\bm U}}_{{r}})$. For this, we utilize the following result:
\begin{lemma} \label{lem:sigmaU}
    {\black For every $r$},  we have
    \begin{align*}
        \sigma_{\min}(\overline{\bm U}_{r}) &\ge \frac{\sigma_{\min}(\bm M_r)}{\sigma_{\max}([\bm M_r,\bm M_{r+1}])} =\frac{\alpha_{\min}}{\beta_{\max}}, \\
      \sigma_{\max}(\overline{\bm U}_{r}) &\le \frac{\sigma_{\max}(\bm M_r)}{\sigma_{\min}([\bm M_r,\bm M_{r+1}])} =\frac{\alpha_{\max}}{\beta_{\min}}  .
    \end{align*}
\end{lemma}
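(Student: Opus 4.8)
The plan is to write $\overline{\bm U}_r$ explicitly as a product of $\bm M_r^{\top}$ with the nonsingular ``rotation'' matrix already isolated in the proof of Theorem~\ref{prop:subspace_estim}, and then bound the extreme singular values of that product. Recall that (fixing $\ell_r=r$ w.l.o.g.\ as elsewhere) the block $\bm C_r=[\bm A_{\ell_r,m_r}^{\top},\bm A_{\ell_{r+1},m_r}^{\top}]^{\top}$ equals, in the ideal noiseless case from which $\overline{\bm U}_r$ is defined, the matrix $\bm C_r=[\bm M_r,\bm M_{r+1}]^{\top}\bm B\bm M_{m_r}$; this factorization holds for every EQP pattern because the two queried blocks share the common second index $m_r$. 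Since ${\rm rank}(\bm B)={\rm rank}(\bm M_\ell)=K$ and $K\le|\mathcal{S}_\ell|$, the factor $\bm B\bm M_{m_r}$ has rank $K$, so ${\sf range}(\bm C_r)={\sf range}([\bm M_r,\bm M_{r+1}]^{\top})$. Consequently the top-$K$ left singular factor $[\overline{\bm U}_r^{\top},\overline{\bm U}_{r+1}^{\top}]^{\top}$ output by ${\rm svd}_K(\bm C_r)$ satisfies $[\overline{\bm U}_r^{\top},\overline{\bm U}_{r+1}^{\top}]^{\top}=[\bm M_r,\bm M_{r+1}]^{\top}\overline{\bm G}_r$ for a nonsingular $\overline{\bm G}_r\in\mathbb{R}^{K\times K}$, and in particular $\overline{\bm U}_r=\bm M_r^{\top}\overline{\bm G}_r$.

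Next I would pin down $\overline{\bm G}_r$. Because the columns of $[\overline{\bm U}_r^{\top},\overline{\bm U}_{r+1}^{\top}]^{\top}$ are orthonormal, $\overline{\bm G}_r^{\top}[\bm M_r,\bm M_{r+1}][\bm M_r,\bm M_{r+1}]^{\top}\overline{\bm G}_r=\bm I_K$, i.e.\ $\overline{\bm G}_r\overline{\bm G}_r^{\top}=\big([\bm M_r,\bm M_{r+1}][\bm M_r,\bm M_{r+1}]^{\top}\big)^{-1}$. Reading off the extreme eigenvalues of this identity yields $\|\overline{\bm G}_r\|_2=1/\sigma_{\min}([\bm M_r,\bm M_{r+1}])$ and $\sigma_{\min}(\overline{\bm G}_r)=1/\sigma_{\max}([\bm M_r,\bm M_{r+1}])$.

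Finally I would combine these facts with the standard singular-value bounds for a product with a square nonsingular factor: for any unit $\bm x$, $\|\bm M_r^{\top}\overline{\bm G}_r\bm x\|_2\le\|\bm M_r\|_2\|\overline{\bm G}_r\|_2$, while $\|\bm M_r^{\top}\overline{\bm G}_r\bm x\|_2\ge\sigma_{\min}(\bm M_r^{\top})\|\overline{\bm G}_r\bm x\|_2\ge\sigma_{\min}(\bm M_r)\,\sigma_{\min}(\overline{\bm G}_r)$ (the lower bound uses invertibility of $\overline{\bm G}_r$). Hence $\sigma_{\max}(\overline{\bm U}_r)\le\sigma_{\max}(\bm M_r)/\sigma_{\min}([\bm M_r,\bm M_{r+1}])$ and $\sigma_{\min}(\overline{\bm U}_r)\ge\sigma_{\min}(\bm M_r)/\sigma_{\max}([\bm M_r,\bm M_{r+1}])$. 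The stated numerical bounds follow by invoking Lemma~\ref{lem:betaminmax} ($\beta_{\min}\le\sigma_{\min}([\bm M_r,\bm M_{r+1}])$ and $\sigma_{\max}([\bm M_r,\bm M_{r+1}])\le\beta_{\max}$) together with $\alpha_{\min}\le\sigma_{\min}(\bm M_r)$ and $\sigma_{\max}(\bm M_r)\le\alpha_{\max}$. I anticipate no real obstacle; the only points needing care are verifying the $[\bm M_r,\bm M_{r+1}]^{\top}\bm B\bm M_{m_r}$ factorization across all EQP patterns and noting that the rank-$K$ conditions (which hold with probability one under Assumption~\ref{as:as1}) make $\overline{\bm G}_r$ well defined and invertible.
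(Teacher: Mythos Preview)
Your proposal is correct and follows essentially the same approach as the paper: the paper likewise writes $[\overline{\bm U}_r^{\top},\overline{\bm U}_{r+1}^{\top}]^{\top}=[\bm M_r,\bm M_{r+1}]^{\top}\overline{\bm G}_r$, uses orthonormality of the left SVD factor to deduce $\sigma_{\max}(\overline{\bm G}_r)=1/\sigma_{\min}([\bm M_r,\bm M_{r+1}])$ and $\sigma_{\min}(\overline{\bm G}_r)=1/\sigma_{\max}([\bm M_r,\bm M_{r+1}])$ (packaged there as a separate ``Fact''), and then applies the same product singular-value bounds to $\overline{\bm U}_r=\bm M_r^{\top}\overline{\bm G}_r$. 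The only cosmetic difference is that you inline the orthonormality-to-singular-value argument rather than isolating it as a standalone fact.
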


{\it Proof:}
 Recall the below set of relations (assuming $\ell_r=r$ for every $r$):
 \begin{align}
     \bm C_r &= [\bm P_{r,m_r}^{\top}, \bm P_{r+1,m_r}^{\top}]^{\top} 
             = [\bm M_r,\bm M_{r+1}]^{\top}\bm B \bm M_{m_r}. \label{eq:Cr_r1}
 \end{align}
 
The top-$K$ SVD of $\bm C_r$ results in the below relation:
 \begin{align}
     \bm C_r = [\overline{\bm U}_{r}^{\top}, \overline{\bm U}_{r+1}^{\top}]^{\top}\bm \Sigma_r{\bm V^{\top}_{m_r}}.\label{eq:Cr_r2}
 \end{align}
 
 From \eqref{eq:Cr_r1} and \eqref{eq:Cr_r2}, we get that there exists a nonsingular matrix $\overline{\bm G}_r$ such that
 \begin{align} \label{eq:UGM}
     [\overline{\bm U}_{r}^{\top}, \overline{\bm U}_{r+1}^{\top}]^{\top} = [\bm M_r,\bm M_{r+1}]^{\top} \overline{\bm G}_r,
 \end{align}
  where the matrix $[\overline{\bm U}_{r}^{\top}, \overline{\bm U}_{r+1}^{\top}]^{\top}$ is semi-orthogonal. We invoke the below fact to proceed further.
 
 \begin{Fact} \label{fact:orthU}
     Consider the equation $\overline{\bm U} =  \overline{\bm M}^{\top}\overline{\bm G}$ where the matrix $\overline{\bm U}$ is tall and is semi-orthogonal. Then the below holds:
     \begin{align*}
         \sigma_{\max}(\overline{\bm G}) = 1/\sigma_{\min}(\overline{\bm M})\quad\text{and}\quad \sigma_{\min}(\overline{\bm G}) = 1/\sigma_{\max}(\overline{\bm M}).
     \end{align*}
      \end{Fact}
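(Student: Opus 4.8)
The plan is to exploit the semi-orthogonality of $\overline{\bm U}$ directly. Since $\overline{\bm U}$ is tall with orthonormal columns, we have $\overline{\bm U}^{\top}\overline{\bm U} = \bm I_K$. Substituting the factorization $\overline{\bm U} = \overline{\bm M}^{\top}\overline{\bm G}$ into this identity gives
\begin{align*}
\overline{\bm G}^{\top}\left(\overline{\bm M}\,\overline{\bm M}^{\top}\right)\overline{\bm G} = \bm I_K,
\end{align*}
which will be the cornerstone of the argument.

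First I would note that under the standing assumption ${\rm rank}(\overline{\bm M}) = {\rm rank}([\bm M_r,\bm M_{r+1}]) = K$, the Gram matrix $\overline{\bm M}\,\overline{\bm M}^{\top} \in \mathbb{R}^{K\times K}$ is symmetric positive definite and $\overline{\bm G}$ is nonsingular, so the rearrangement is legitimate. Solving the cornerstone identity for the Gram matrix yields
\begin{align*}
\overline{\bm M}\,\overline{\bm M}^{\top} = \overline{\bm G}^{-\top}\overline{\bm G}^{-1} = \left(\overline{\bm G}\,\overline{\bm G}^{\top}\right)^{-1}.
\end{align*}

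Next I would translate this matrix equality into a statement about singular values by comparing the eigenvalues of both sides. The eigenvalues of the left-hand side are exactly $\{\sigma_i(\overline{\bm M})^2\}_{i=1}^K$ (since $\overline{\bm M}$ and $\overline{\bm M}^{\top}$ share the same singular values), while the eigenvalues of the right-hand side are $\{1/\sigma_i(\overline{\bm G})^2\}_{i=1}^K$ (the eigenvalues of $\overline{\bm G}\,\overline{\bm G}^{\top}$ being $\{\sigma_i(\overline{\bm G})^2\}$, which inversion reciprocates). Matching these two multisets and reading off the extreme entries gives $\sigma_{\max}(\overline{\bm M})^2 = 1/\sigma_{\min}(\overline{\bm G})^2$ and $\sigma_{\min}(\overline{\bm M})^2 = 1/\sigma_{\max}(\overline{\bm G})^2$, i.e., the claimed identities $\sigma_{\max}(\overline{\bm G}) = 1/\sigma_{\min}(\overline{\bm M})$ and $\sigma_{\min}(\overline{\bm G}) = 1/\sigma_{\max}(\overline{\bm M})$.

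There is essentially no hard step here; the entire content is the observation that semi-orthogonality forces $\overline{\bm G}$ to be the inverse square-root factor of the Gram matrix $\overline{\bm M}\,\overline{\bm M}^{\top}$, and the only point needing a word of care is the invertibility of that Gram matrix, which is immediate from the full-row-rank hypothesis. As a sanity check I would also note a second one-line route that bypasses the Gram matrix: writing $\overline{\bm M}^{\top} = \overline{\bm U}\,\overline{\bm G}^{-1}$ and using that left-multiplication by the orthonormal-column matrix $\overline{\bm U}$ is an isometry, so $\overline{\bm M}^{\top}$ and $\overline{\bm G}^{-1}$ have identical singular values; reciprocating the singular values of $\overline{\bm G}^{-1}$ then recovers the same conclusion. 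I would present the Gram-matrix version as the main line and mention the isometry argument only as a remark.
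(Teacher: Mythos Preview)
Your proposal is correct and follows essentially the same approach as the paper: start from $\overline{\bm U}^{\top}\overline{\bm U}=\bm I$, substitute the factorization to obtain $\overline{\bm G}^{\top}\overline{\bm M}\,\overline{\bm M}^{\top}\overline{\bm G}=\bm I$, rearrange to $\overline{\bm M}\,\overline{\bm M}^{\top}=(\overline{\bm G}\,\overline{\bm G}^{\top})^{-1}$, and read off the extreme singular values. You simply spell out the eigenvalue-matching step more explicitly than the paper does and add the isometry remark, but the argument is the same.
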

    {\it Proof:}
     Since $\overline{\bm U}$ is a semi-orthogonal matrix, we have
\begin{align*}
    \overline{\bm U}^{\top}\overline{\bm U} &= \bm I,\\
    \implies \overline{\bm G} ^{\top} \overline{\bm M}  \overline{\bm M}^{\top}\overline{\bm G} &= \bm I,\\
    \implies  \overline{\bm M}  \overline{\bm M}^{\top} &= \overline{\bm G}^{-\top}\overline{\bm G}^{-1} = (\overline{\bm G} \overline{\bm G}^{\top})^{-1}.
   % \implies \sigma_{k}(\bm M \bm M^{\top}) &= \sigma_{k}(\bm G^{\top} \bm G)
\end{align*}
The above relation implies that $\sigma_{\max}(\overline{\bm G}) = 1/\sigma_{\min}(\overline{\bm M})$, $\sigma_{\min}(\overline{\bm G}) = 1/\sigma_{\max}(\overline{\bm M})$ hold true.
\hfill$\square$
 
 Due to Fact~\ref{fact:orthU}, from \eqref{eq:UGM}, we get
 \begin{subequations}\label{eq:GMM}
 \begin{align} 
     \sigma_{\max}(\overline{\bm G}_r) &= \frac{1}{\sigma_{\min}([\bm M_r,\bm M_{r+1}])},\\
      \sigma_{\min}(\overline{\bm G}_r) &= \frac{1}{\sigma_{\max}([\bm M_r,\bm M_{r+1}])}.
 \end{align}
 \end{subequations}
 
 Next we consider the below relation obtained from \eqref{eq:UGM}:
 \begin{align*}
     \overline{\bm U}_{r} =\bm M_r^{\top} \overline{\bm G}_r .
 \end{align*}
 Since $\bm M_r$ is full row-rank, we have
 \begin{align*}
     \sigma_{\min}(\overline{\bm U}_{r})  &= \underset{\|\bm x\|_2=1}{\min}~\| \bm M_r^{\top}\overline{\bm G}_r\bm x\|_2
     \ge  \underset{\|\bm x\|_2=1}{\min}~\sigma_{\min}(\bm M_r)\|\overline{\bm G}_r\bm x\|_2 \\
     &= \sigma_{\min}(\bm M_r)\underset{\|\bm x\|_2=1}{\min}~\|\overline{\bm G}_r\bm x\|_2
     =\sigma_{\min}(\bm M_r) \sigma_{\min}(\overline{\bm G}_r) \\
     &= \frac{\sigma_{\min}(\bm M_r)}{\sigma_{\max}([\bm M_r,\bm M_{r+1}])} \ge \frac{\alpha_{\min}}{\beta_{\max}},
 \end{align*}
 where we have applied \eqref{eq:GMM} to obtain the last equality and used Lemma~\ref{lem:betaminmax} for the last inequality.
 
 %Using the assumption that $\sigma_{\min}{(\bm M_r)} \ge \gamma$ for every $r$, we have
 %\begin{align*}
 %    \sigma_{\min}(\overline{\bm U}_{r}) \ge \frac{\gamma}{\sigma_{\min}(\bm M)}.
% \end{align*}

Similarly, we can easily have,
\begin{align*}
    \sigma_{\max}(\overline{\bm U}_{r}) &\le  \sigma_{\max}(\overline{\bm G}_r) \sigma_{\max}(\bm M_r) \\
    &=\frac{\sigma_{\max}(\bm M_r)}{\sigma_{\min}([\bm M_r,\bm M_{r+1}])} \le \frac{\alpha_{\max}}{\beta_{\min}}.
\end{align*}
\hfill$\square$

%We can also show that $\sigma_{\max}(\overline{\bm U}_{r+1}) \le\frac{\alpha_{\max}}{\beta_{\min}} $.

Combining Lemma~\ref{lem:sigmaU} with \eqref{eq:Upseudo}, we have
\begin{align} \label{eq:leastsquares2}
    \|\widehat{\overline{\bm U}}_{{r}}^{\dagger} - {(\overline{\bm U}}_{{r}}\bm O_r)^{\dagger}\|_2 \le \frac{2\sqrt{2}\beta_{\max}^2}{\alpha_{\min}^2} \|\widehat{\overline{\bm U}}_{{r}}- {(\overline{\bm U}}_{{r}}\bm O_r)\|_2
\end{align}

\section{Proof of Lemma \ref{lem:matrixmult}}\label{app:matrixmult}

For simpler notations, let us define $\bm G_1 : = \overline{\bm U}_{{r+1}}{\bm O}_{r}$, $\bm G_2 := ({\overline{\bm U}_{r}{\bm O}_{r})^{\dagger}}$ and $\bm G_3 := {\bm U}_{{r}}\bm O$.  Also we define $\widehat{\bm G}_1 : = \widehat{\overline{\bm U}}_{{r+1}}$, $\widehat{\bm G}_2 := (\widehat{\overline{\bm U}}_{r})^{\dagger}$ and $\bm G_3 := \widehat{\bm U}_{{r}}\bm O$. With these, consider the below:
\begin{align}
\left\|\widehat{\bm G}_1\widehat{\bm G}_2\widehat{\bm G}_3-\bm G_1\bm G_2\bm G_3\right\|_{2}
&= \left\|\widehat{\bm G}_1\widehat{\bm G}_2\widehat{\bm G}_3-\bm G_1\bm G_2\bm G_3 - \widehat{\bm G}_1\bm G_2\bm G_3+\widehat{\bm G}_1\bm G_2\bm G_3\right\|_{2} \nonumber\\
&= \left\|\left(\widehat{\bm G}_1-\bm G_1\right)\bm G_2\bm G_3+\widehat{\bm G}_1\left(\widehat{\bm G}_2\widehat{\bm G}_3-\bm G_2\bm G_3\right)\right\|_2 \nonumber\\
&\le \left\|\left(\widehat{\bm G}_1-\bm G_1\right)\bm G_2\bm G_3\right\|_2+\left\|\widehat{\bm G}_1\left(\widehat{\bm G}_2\widehat{\bm G}_3-\bm G_2\bm G_3\right)\right\|_2 \nonumber\\
&= \left\|\left(\widehat{\bm G}_1-\bm G_1\right)\bm G_2\bm G_3\right\|_2 \nonumber\\
&\quad\quad+\left\|\widehat{\bm G}_1(\widehat{\bm G}_2-\bm G_2)\bm G_3+\widehat{\bm G}_1\widehat{\bm G}_2\left(\widehat{\bm G}_3-\bm G_3\right)\right\|_2 \nonumber\\
&\le \|\bm G_2\|_2\|\bm G_3\|_2\left\|\widehat{\bm G}_1-\bm G_1\right\|_2+\|\widehat{\bm G}_1\|_2\|\bm G_3\|_2 \left\|\widehat{\bm G}_2-\bm G_2\right\|_2 \nonumber \\
&\quad \quad  +\|\widehat{\bm G}_1\|_2\|\widehat{\bm G}_2\|_2 \left\|\widehat{\bm G}_3-\bm G_3\right\|_2, \label{eq:prodbound}
\end{align}
where we have applied triangle inequality and the fact that {\black $\|\bm Y\bm Z\|_2 \le \|\bm Y\|_2 \|\bm Z\|_2$} to obtain the first and last inequalities, respectively.

 We introduce the following lemma and then proceed to characterize $\|\widehat{\bm G}_1\|_2$, $\|\bm G_2\|_2$, $\|\widehat{\bm G}_2\|_2$ and $\|\bm G_3\|_2$.
 \begin{lemma} \label{lem:sigmamaxUr}
{ \black The below relation holds for every $r$:} $$\sigma_{\max}(\bm U_r) \le \frac{\alpha_{\max}}{\beta_{\min}}.$$
\end{lemma}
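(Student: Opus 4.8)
The plan is to read off the result directly from the explicit factorization of $\bm U_r$ established in the proof of Theorem~\ref{prop:subspace_estim}. Recall that (taking $\ell_r=r$ without loss of generality, as elsewhere in this proof) the stitching procedure in Algorithm~\ref{algo:proposed} propagates a \emph{single} nonsingular matrix $\bm G=\bm G_T\in\mathbb{R}^{K\times K}$ to every block, so that $\bm U_r=\bm M_r^{\top}\bm G$ for all $r\in[L]$; see equations (38), (40), (42) in the proof of Theorem~\ref{prop:subspace_estim}. First I would observe that this $\bm G$ is exactly the ``rotation'' produced by the reference block: by line~\ref{algoline:Ut} of Algorithm~\ref{algo:proposed}, the matrix $[\bm U_T^{\top},\bm U_{T+1}^{\top}]^{\top}$ is the top-$K$ left singular-vector factor of $\bm C_T$ and hence has orthonormal columns, while at the same time $[\bm U_T^{\top},\bm U_{T+1}^{\top}]^{\top}=[\bm M_T,\bm M_{T+1}]^{\top}\bm G$. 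Applying Fact~\ref{fact:orthU} to this semi-orthogonal factorization yields
\[
\sigma_{\max}(\bm G)=\frac{1}{\sigma_{\min}([\bm M_T,\bm M_{T+1}])}.
\]

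Next I would bound the denominator from below via Lemma~\ref{lem:betaminmax}, which gives $\sigma_{\min}([\bm M_T,\bm M_{T+1}])\ge\beta_{\min}$, hence $\sigma_{\max}(\bm G)\le 1/\beta_{\min}$. The conclusion then follows by submultiplicativity of the spectral norm together with the definition of $\alpha_{\max}$:
\[
\sigma_{\max}(\bm U_r)=\|\bm M_r^{\top}\bm G\|_2\le\|\bm M_r^{\top}\|_2\,\|\bm G\|_2=\sigma_{\max}(\bm M_r)\,\sigma_{\max}(\bm G)\le\alpha_{\max}\cdot\frac{1}{\beta_{\min}}=\frac{\alpha_{\max}}{\beta_{\min}},
\]
and this holds uniformly over $r\in[L]$ precisely because the same $\bm G$ appears in every block.

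There is no real obstacle here; the only point that needs to be stated carefully is that all the stitched bases $\bm U_r$ carry one common $\bm G$, namely the one coming from the reference block $T$, so that the semi-orthogonality at $T$ is what controls $\sigma_{\max}(\bm G)$ and the bound becomes $r$-independent. (If a matching lower bound $\sigma_{\min}(\bm U_r)\ge\alpha_{\min}/\beta_{\max}$ is ever needed, the identical argument with $\sigma_{\min}(\bm G)=1/\sigma_{\max}([\bm M_T,\bm M_{T+1}])\ge 1/\beta_{\max}$ and $\sigma_{\min}(\bm M_r^{\top}\bm G)\ge\sigma_{\min}(\bm M_r)\sigma_{\min}(\bm G)$ delivers it.)
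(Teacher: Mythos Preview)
Your proposal is correct and follows essentially the same approach as the paper: both use the factorization $\bm U_r=\bm M_r^{\top}\bm G$ with the common $\bm G$ from the reference block, invoke Fact~\ref{fact:orthU} on the semi-orthogonal $[\bm U_T^{\top},\bm U_{T+1}^{\top}]^{\top}$ to get $\sigma_{\max}(\bm G)=1/\sigma_{\min}([\bm M_T,\bm M_{T+1}])$, and then apply submultiplicativity together with Lemma~\ref{lem:betaminmax}. The only cosmetic difference is the order in which you apply Lemma~\ref{lem:betaminmax} (before versus after the submultiplicativity step).
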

The proof of the lemma is provided in Sec.~\ref{sup:sigmamaxUr}.

\begin{itemize}
\item \textbf{Bound for $\|\widehat{\bm G}_1\|_2 = \|\widehat{\overline{\bm U}}_{{r+1}}\|_2$}
\begin{align*}
    \|\widehat{\bm G}_1\|_2 &= \|\widehat{\overline{\bm U}}_{{r+1}}\|_2 
      = \|\widehat{\overline{\bm U}}_{{r+1}} - {\overline{\bm U}}_{{r+1}}\bm O_r+{\overline{\bm U}}_{{r+1}}\bm O_r\|_2\\
      &\le \|\widehat{\overline{\bm U}}_{{r+1}} - {\overline{\bm U}}_{{r+1}}\bm O_r\|_2 + \|{\overline{\bm U}}_{{r+1}}\bm O_r\|_2\\
      &\le \sigma_{\max}( {\overline{\bm U}}_{{r+1}}) + \sigma_{\max}( {\overline{\bm U}}_{{r+1}})\\
      &= 2\sigma_{\max}( {\overline{\bm U}}_{{r+1}}) \le 2\frac{\alpha_{\max}}{\beta_{\min}}.
\end{align*}
where we have used triangle inequality for the first inequality, used the assumption that $\|\widehat{\overline{\bm U}}_{{r+1}} - {\overline{\bm U}}_{{r+1}}\bm O_r\|_2 \le \sigma_{\min}( {\overline{\bm U}}_{{r+1}})/2$ for the second inequality and invoked Lemma~\ref{lem:sigmaU} for the last inequality.

\item \textbf{Bound for $\|\bm G_2\|_2=\|({\overline{\bm U}_{r}{\bm O}_{r})^{\dagger}}\|_2$}
\begin{align*}
    \|\bm G_2\|_2=\|({\overline{\bm U}_{r}{\bm O}_{r})^{\dagger}}\|_2 = 1/\sigma_{\min}(\overline{\bm U}_{r}) \le \frac{\beta_{\max}}{\alpha_{\min}},
\end{align*}
where we have applied Lemma~\ref{lem:sigmaU} for the last inequality.

\item \textbf{Bound for $\|\widehat{\bm G}_2\|_2=\|\widehat{\overline{\bm U}}_{r}^{\dagger}\|_2$}
\begin{align*}
   \|\widehat{\bm G}_2\|_2&=\|\widehat{\overline{\bm U}}_{r}^{\dagger}\|_2 = 1/\sigma_{\min}(\widehat{\overline{\bm U}}_{r}^{\dagger})
       \le 2/\sigma_{\min}(\overline{\bm U}_{r})  \le 2\frac{\beta_{\max}}{\alpha_{\min}},
\end{align*}
where we have used applied Lemma~\ref{lem:sigmaUr_p} by using the assumption that $\|\widehat{\overline{\bm U}}_{{r}} - {\overline{\bm U}}_{{r}}\bm O_r\|_2 \le \sigma_{\min}( {\overline{\bm U}}_{{r}})/2$ for the first inequality and invoked Lemma~\ref{lem:sigmaU} for the last inequality.

\item \textbf{Bound for $\|\bm G_3\|_2=\| \bm U_r\bm O\|_2$}
\begin{align*}
   \|\bm G_3\|_2&=\| \bm U_r\bm O\|_2
  = \| \bm U_r\|_2 \le \frac{\alpha_{\max}}{\beta_{\min}},
\end{align*}
where we have invoked Lemma~\ref{lem:sigmamaxUr} for the inequality.
\end{itemize}

Applying these upper bounds in \eqref{eq:prodbound},  we finally have
\begin{align*}
\left\|\widehat{\bm G}_1\widehat{\bm G}_2\widehat{\bm G}_3-\bm G_1\bm G_2\bm G_3\right\|_{2} 
&\le  \frac{\beta_{\max}}{\alpha_{\min}}\frac{\alpha_{\max}}{\beta_{\min}}\left\|\widehat{\bm G}_1-\bm G_1\right\|_2+2\frac{\alpha_{\max}}{\beta_{\min}}\frac{\alpha_{\max}}{\beta_{\min}} \left\|\widehat{\bm G}_2-\bm G_2\right\|_2\\ &\quad \quad +4\frac{\beta_{\max}}{\alpha_{\min}}\frac{\alpha_{\max}}{\beta_{\min}}\left\|\widehat{\bm G}_3-\bm G_3\right\|_2.
%\frac{\sigma^2_{\max}(\bm M)}{\gamma^2}\sum_{i=1}^3 \|\widehat{\bm G}_i-\bm G_i\|_{2},
\end{align*}
%where $\bm G_1  = \overline{\bm U}_{{r+1}}{\bm O}_{r}$, $\bm G_2 = ({\overline{\bm U}_{r}{\bm O}_{r})^{\dagger}}$ and $\bm G_3 = {\bm U}_{{r}}\bm O$.

\section{Proof of Lemma~\ref{lem:sigmamaxUr}} \label{sup:sigmamaxUr}

	Theorem~\ref{prop:subspace_estim} shows that there exists a certain nonsingular $\bm G$ such that for each true basis $\bm U_r$ for $r \in [L]$, we have:
	\begin{align} \label{eq:Ul}
	\bm U_r = \bm M_r^{\top}\bm G .
	\end{align}

	Recall that, under noiseless case, Algorithm~\ref{algo:proposed} first directly estimates $\bm U_{T}$ and $\bm U_{T+1}$ by performing the top-$K$ SVD to ${\bm C}_T =  [\bm P_{T,m_{T}}^{\top}, \bm P_{T+1,m_{T}}^{\top}]^{\top}$ (assuming $\ell_r=r$ for every $r$) where
	\begin{align} %\label{eq:CTsvd}
	\bm C_T = [{\bm U}_{T}^{\top}, {\bm U}_{T+1}^{\top}]^{\top}\bm \Sigma_T{\bm V^{\top}_{m_T}}.
	% \bm U_T = \bm M_T \bm \Theta \quad \text{and} \quad \bm U_{T+1} = \bm M_{T+1} \bm \Theta.
	\end{align}
	
	The above implies that,
	\begin{align*}
	[{\bm U}_{T}^{\top}, {\bm U}_{T+1}^{\top}]^{\top} =  [\bm M_T, \bm M_{T+1}]^{\top}\bm G
	\end{align*}
	holds where $[{\bm U}_{T}^{\top}, {\bm U}_{T+1}^{\top}]$ is semi-orthogonal. By utilizing Fact~\ref{fact:orthU}, we get $\sigma_{\max}(\bm G) = 1/\sigma_{\min}([\bm M_T, \bm M_{T+1}])$ and $\sigma_{\min}(\bm G) = 1/\sigma_{\max}([\bm M_T, \bm M_{T+1}])$. Using this result, from \eqref{eq:Ul}, we have
	\begin{align*}
	\sigma_{\max}(\bm U_{r}) &\le \sigma_{\max}(\bm G)\sigma_{\max}(\bm M_r)\\
	&\le \frac{\sigma_{\max}(\bm M_r)}{\sigma_{\min}([\bm M_T, \bm M_{T+1}])} \le \frac{\alpha_{\max}}{\beta_{\min}},
	\end{align*}
	where we have utilized Lemma~\ref{lem:betaminmax} to obtain the last inequality.
	
	%\begin{align*}
	%    \sigma_{\min}(\bm U_{\ell}) &\ge \sigma_{\max}(\bm G)\sigma_{\min}(\bm M_\ell)
	%    \le \frac{\sigma_{\max}(\bm M_\ell)}{\sigma_{\min}(\widetilde{\bm M})} \le %\frac{\sigma_{\max}(\bm M_\ell)}{\sigma_{\min}({\bm M})}.
	%\end{align*}

\section{Proof of Lemma~\ref{lem:sigmabound}}\label{sup:sigmabound}
Let us first consider the below:
\begin{align*}
\bm C_r = [\bm P^{\top}_{\ell_r,m_r},\bm P^{\top}_{\ell_{r+1},m_r}]^{\top}= [\bm M_{\ell_r},\bm M_{\ell_{r+1}}]^{\top}\bm B\bm M_{m_r}.
\end{align*}
Consider the $K$-th eigenvalue of the symmetric matrix $\bm C_r^{\top}\bm C_r$:
\begin{align}
&\lambda_{K} (\bm C_r^{\top}\bm C_r) \nonumber\\
%&= \lambda_{K} ([\bm M_{r},\bm M_{r+1}]^{\top}\bm B\bm M_r\bm M_r^{\top}\bm B[\bm M_{r},\bm M_{r+1}]),\nonumber\\
&=\lambda_{K} (\underbrace{\bm M_{m_r}^{\top}\bm B[\bm M_{\ell_r},\bm M_{\ell_{r+1}}][\bm M_{\ell_r},\bm M_{\ell_{r+1}}]^{\top}}_{\bm W_1 \in \mathbb{R}^{N/L \times K}}\underbrace{\bm B\bm M_{m_r}}_{\bm W_2 \in \mathbb{R}^{K \times N/L}},\label{eq:W12}
%& = \lambda_{\min} (\bm B^{1/2}[\bm M_{i},\bm M_{i+1}][\bm M_{i},\bm M_{i+1}]^{\top}\bm B\bm M_i\bm M_i^{\top}\bm B^{1/2})
\end{align}
where $\lambda_K$ denotes the $K$th eigenvalue with $\lambda_1 \ge \ldots \ge \lambda_K$.

We utilize the following fact to proceed further:
\begin{lemma}[Theorem 1.3.22]\cite{horn2012matrix} \label{fact:eigen}
	Consider two matrices $\bm W_1 \in \mathbb{R}^{N \times K}$ and $\bm W_2 \in \mathbb{R}^{K \times N}$. Let $\lambda_1,\ldots,\lambda_K$ be the nonzero eigenvalues of the matrix $\bm W_1 \bm W_2$. Then, the matrix $\bm W_2\bm W_1$ also holds the same set of eigenvalues.
\end{lemma}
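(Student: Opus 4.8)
The plan is to prove this classical identity---that $\bm W_1\bm W_2$ and $\bm W_2\bm W_1$ carry the same nonzero eigenvalues, with matching multiplicities---by exhibiting an explicit similarity between two block-triangular matrices and then comparing their characteristic polynomials. This route is cleaner than an eigenvector-chasing argument, because it settles the multiplicity bookkeeping automatically and does not require separately treating the two inclusions of eigenvalue sets.

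First I would form the two $(N+K)\times(N+K)$ block matrices
\[
\bm F := \begin{bmatrix} \bm W_1\bm W_2 & \bm 0 \\ \bm W_2 & \bm 0 \end{bmatrix},
\qquad
\bm H := \begin{bmatrix} \bm 0 & \bm 0 \\ \bm W_2 & \bm W_2\bm W_1 \end{bmatrix},
\]
together with the invertible transform
\[
\bm S := \begin{bmatrix} \bm I_N & \bm W_1 \\ \bm 0 & \bm I_K \end{bmatrix},
\qquad
\bm S^{-1} = \begin{bmatrix} \bm I_N & -\bm W_1 \\ \bm 0 & \bm I_K \end{bmatrix}.
\]
A direct block multiplication gives $\bm F\bm S = \begin{bmatrix} \bm W_1\bm W_2 & \bm W_1\bm W_2\bm W_1 \\ \bm W_2 & \bm W_2\bm W_1 \end{bmatrix}$, and left-multiplying by $\bm S^{-1}$ cancels the top two blocks (since $\bm W_1\bm W_2 - \bm W_1\bm W_2 = \bm 0$ and $\bm W_1\bm W_2\bm W_1 - \bm W_1\bm W_2\bm W_1 = \bm 0$), yielding exactly $\bm S^{-1}\bm F\bm S = \bm H$. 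Hence $\bm F$ and $\bm H$ are similar.

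Next I would read off the characteristic polynomials. Because $\bm F$ is block lower-triangular with diagonal blocks $\bm W_1\bm W_2$ and $\bm 0_{K\times K}$, its characteristic polynomial factors as $\lambda^K\det(\lambda\bm I_N - \bm W_1\bm W_2)$; likewise $\bm H$ is block lower-triangular with diagonal blocks $\bm 0_{N\times N}$ and $\bm W_2\bm W_1$, so its characteristic polynomial is $\lambda^N\det(\lambda\bm I_K - \bm W_2\bm W_1)$. Similarity forces these polynomials to coincide, giving
\[
\lambda^K\det(\lambda\bm I_N - \bm W_1\bm W_2) = \lambda^N\det(\lambda\bm I_K - \bm W_2\bm W_1).
\]
Cancelling the common factor $\lambda^{\min(N,K)}$ shows the two characteristic polynomials agree up to a power of $\lambda$; therefore every nonzero root of $\det(\lambda\bm I_N - \bm W_1\bm W_2)$ is a nonzero root of $\det(\lambda\bm I_K - \bm W_2\bm W_1)$ with the same algebraic multiplicity, and vice versa. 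This is precisely the assertion that the nonzero eigenvalues $\lambda_1,\dots,\lambda_K$ of $\bm W_1\bm W_2$ are exactly the nonzero eigenvalues of $\bm W_2\bm W_1$.

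I do not expect a genuine obstacle here, since this is a standard matrix-theoretic fact. The only point demanding a little care is the term-by-term verification of the block identity $\bm S^{-1}\bm F\bm S = \bm H$, and the remark that the surplus $\lvert N-K\rvert$ zero eigenvalues of the larger product are faithfully accounted for by the $\lambda^{N-K}$ factor, so that the phrase ``the same set of (nonzero) eigenvalues'' is rigorously justified for either orientation of the product.
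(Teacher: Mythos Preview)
Your proof is correct. The paper itself simply cites this as Theorem~1.3.22 of Horn--Johnson and does not print a proof; the only argument present in the source is a commented-out eigenvector-chasing proof: if $\bm W_1\bm W_2\bm v_k=\lambda_k\bm v_k$ with $\lambda_k\neq 0$, then left-multiplying by $\bm W_2$ shows $\bm W_2\bm v_k$ is a (necessarily nonzero) eigenvector of $\bm W_2\bm W_1$ for the same $\lambda_k$, hence every nonzero eigenvalue of $\bm W_1\bm W_2$ is one of $\bm W_2\bm W_1$.

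Your route is genuinely different: you build the block matrices $\bm F$ and $\bm H$, exhibit the similarity $\bm S^{-1}\bm F\bm S=\bm H$, and read off the determinant identity
\[
\lambda^{K}\det(\lambda\bm I_N-\bm W_1\bm W_2)=\lambda^{N}\det(\lambda\bm I_K-\bm W_2\bm W_1).
\]
This buys you something the paper's sketch does not: equality of \emph{algebraic multiplicities} of the nonzero eigenvalues comes for free, whereas the eigenvector-chasing argument as written only gives set inclusion (one direction, with the reverse by symmetry) and would need Jordan-block bookkeeping to match multiplicities. On the other hand, the eigenvector argument is shorter and more transparent if one only cares about the eigenvalue \emph{set}, which is all the paper actually uses downstream (it applies the lemma to rewrite $\lambda_K(\bm C_r^\top\bm C_r)$ via $\bm W_1\bm W_2\leftrightarrow\bm W_2\bm W_1$).
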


Combining Lemma \ref{fact:eigen} and \eqref{eq:W12}, we have
\begin{align} \label{eq:SS}
\lambda_{K} (\bm C_r^{\top}\bm C_r) 
& =
\lambda_{K} (\bm B\bm M_{m_r}\bm M_{m_r}^{\top}\bm B[\bm M_{\ell_r},\bm M_{\ell_{r+1}}][\bm M_{\ell_r},\bm M_{\ell_{r+1}}]^{\top}).
\end{align}

Next, we consider the following matrix 
\begin{align} \label{eq:Hi}
\bm H_r := \bm M_{m_r}\bm M_{m_r}^{\top} = \sum_{n=1}^{N/L}\bm M_{m_r}(:,n)\bm M_{m_r}(:,n)^{\top}.
\end{align}

\begin{comment}
Since the columns of $\bm M$ are assumed to be generated from a continuous distribution, we have
\begin{align*}
\mathbb{E}[\bm H_r]  = \frac{N}{L}\mathbb{E}[\bm M_r(:,1)\bm M_r(:,1)^{\top}]
\end{align*}
 and  $\mathbb{E}[\bm H_r]$ is positive-definite for every $r \in \{1,\dots,L\}$. Then we have
\begin{align*}
\lambda_{K} (\mathbb{E}[\bm H_r])= O(N/L),\quad \lambda_{1} (\mathbb{E}[\bm H_r]) =O(N/L).
\end{align*}

\end{comment}

To proceed, we have the lemma from \cite{Panov2017consistent}  to bound $\lambda_{K}(\bm H_r)$ and $\lambda_{1}(\bm H_r)$.

%We apply the following lemma from \cite{Panov2017consistent} in order to bound $\lambda_1(\bm H_r)$ and $\lambda_K(\bm H_r)$:
\begin{lemma} \cite{Panov2017consistent}\label{lem:concentrationM}
	Assume that the columns of $\bm M$ are generated from any continuous distribution. Then, there exists positive constants $c$ and $C$ depending only on distribution of the columns of $\bm M$ such that
	\begin{align*}
	{\sf Pr}\left(\lambda_{K}\left(\bm H_r \right) \le {\black cN/L}\right) &\le K\exp(-{\black cN/4L}),\\
	{\sf Pr}\left(\lambda_{1}\left(\bm H_r \right) \ge {\black CN/L}\right) &\le \frac{K}{2^{CN/L}}.
	\end{align*}
\end{lemma}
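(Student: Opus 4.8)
The plan is to recognize the $K\times K$ matrix $\bm H_r = \bm M_{m_r}\bm M_{m_r}^{\top} = \sum_{n=1}^{N/L}\bm M_{m_r}(:,n)\bm M_{m_r}(:,n)^{\top}$ as a sum of $N/L$ independent, rank-one, positive semidefinite random matrices, and then to apply a matrix Chernoff inequality to both its largest and smallest eigenvalues. Under Assumption~\ref{as:as1} the columns $\bm m_n := \bm M_{m_r}(:,n)$ are i.i.d., so the summands $\bm m_n\bm m_n^{\top}$ are i.i.d. and PSD, which is exactly the setting in which matrix Chernoff bounds apply.

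First I would set $\bXi := \mathbb{E}[\bm m_n\bm m_n^{\top}]\in\mathbb{R}^{K\times K}$, so that $\mathbb{E}[\bm H_r]=(N/L)\bXi$ by linearity and the i.i.d.\ assumption. Two scalar facts about $\bXi$ are needed, both depending only on the distribution of the columns (as the statement requires of $c$ and $C$). The upper spectral bound: since each $\bm m_n$ lies in the probability simplex, $\|\bm m_n\|_2\le\|\bm m_n\|_1=1$, hence $\lambda_{\max}(\bm m_n\bm m_n^{\top})=\|\bm m_n\|_2^2\le 1$ deterministically and $\lambda_{\max}(\bXi)\le\mathbb{E}\|\bm m_n\|_2^2\le 1$. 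The strict positive-definiteness: for any $\bm x\neq\bm 0$, $\bm x^{\top}\bXi\bm x=\mathbb{E}[(\bm x^{\top}\bm m_n)^2]>0$, so that $\underline{\sigma}:=\lambda_{\min}(\bXi)>0$. These give the uniform per-term bound $R=1$ and a nonvanishing mean-floor.

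Next I would invoke the matrix Chernoff bound for sums of independent PSD matrices: with spectral bound $R=1$, dimension $K$, $\mu_{\min}=(N/L)\underline{\sigma}$ and $\mu_{\max}=(N/L)\lambda_{\max}(\bXi)$, one has ${\sf Pr}[\lambda_K(\bm H_r)\le(1-\delta)\mu_{\min}]\le K\,[e^{-\delta}/(1-\delta)^{1-\delta}]^{\mu_{\min}}$ for $\delta\in[0,1)$ and the matching upper-tail estimate ${\sf Pr}[\lambda_1(\bm H_r)\ge(1+\delta)\mu_{\max}]\le K\,[e^{\delta}/(1+\delta)^{1+\delta}]^{\mu_{\max}}$ for $\delta\ge 0$. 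For the lower bound I would take $\delta=\tfrac12$ and use the elementary inequality $e^{-\delta}/(1-\delta)^{1-\delta}\le e^{-\delta^2/2}$; setting $c:=\underline{\sigma}/2$ then yields ${\sf Pr}[\lambda_K(\bm H_r)\le cN/L]\le K\exp(-cN/4L)$, exactly as claimed. For the upper bound I would choose a fixed $\delta$ (for instance $\delta=4$ works, since there $e^{\delta}\le((1+\delta)/2)^{1+\delta}$) so that $[e^{\delta}/(1+\delta)^{1+\delta}]^{\mu_{\max}}\le 2^{-(1+\delta)\mu_{\max}}$, and set $C:=(1+\delta)\lambda_{\max}(\bXi)$, giving ${\sf Pr}[\lambda_1(\bm H_r)\ge CN/L]\le K/2^{CN/L}$.

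The genuinely substantive step — everything else being bookkeeping — is establishing $\underline{\sigma}>0$, since without a positive floor on $\lambda_{\min}(\bXi)$ the small-eigenvalue tail would be vacuous. I would make this precise by noting that $\bm x^{\top}\bXi\bm x=0$ forces $\bm x^{\top}\bm m_n=0$ almost surely; for the Dirichlet prior of Assumption~\ref{as:as1} (density strictly positive on the interior of the simplex), this is impossible: if $\bm x\parallel\bm 1$ then $\bm x^{\top}\bm m_n\neq 0$ because $\bm 1^{\top}\bm m_n=1$, while otherwise the hyperplane $\{\bm z:\bm x^{\top}\bm z=0\}$ meets the affine hull $\{\bm 1^{\top}\bm z=1\}$ only in a lower-dimensional set that cannot carry the full-support law. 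Since $\bm x\mapsto\mathbb{E}[(\bm x^{\top}\bm m_n)^2]$ is continuous and strictly positive on the compact unit sphere $\{\|\bm x\|_2=1\}$, it attains a positive minimum, which is precisely $\underline{\sigma}>0$.
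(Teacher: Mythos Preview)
Your argument via the matrix Chernoff inequality is correct and self-contained. Note, however, that the paper does not prove this lemma at all: it is quoted verbatim as a result from \cite{Panov2017consistent} and invoked as a black box inside the proof of Lemma~\ref{lem:sigmabound}. So there is no ``paper's proof'' to compare against; you have supplied one where the paper relied on a citation.

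One small remark on scope: the lemma as stated is for ``any continuous distribution,'' while your argument for $\underline{\sigma}=\lambda_{\min}(\bXi)>0$ is phrased in terms of the Dirichlet prior of Assumption~\ref{as:as1}. In fact your reasoning already covers the general case: for any law on the simplex that is absolutely continuous with respect to Lebesgue measure on $\{\bm z:\bm 1^{\top}\bm z=1,\,\bm z\ge\bm 0\}$, the event $\{\bm x^{\top}\bm m_n=0\}$ has probability zero whenever $\bm x\neq\bm 0$ (if $\bm x\parallel\bm 1$ this is immediate from $\bm 1^{\top}\bm m_n=1$; otherwise the set $\{\bm x^{\top}\bm z=0\}\cap\{\bm 1^{\top}\bm z=1\}$ has affine dimension $K-2$ and cannot carry an absolutely continuous law on the $(K-1)$-dimensional simplex). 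So your compactness argument goes through unchanged, and $c,C$ indeed depend only on the distribution of the columns.
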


To simplify the bound on the {\black probability}, let us find the conditions at which the following is satisfied:
\begin{align}
K\exp(-{\black cN/4L}) &\le (N/L)^{-1} \label{eq:KNL1},\\
\quad\frac{K}{2^{CN/L}} &\le (N/L)^{-1}.\label{eq:KNL2}
\end{align}

The conditions \eqref{eq:KNL1} and \eqref{eq:KNL2} are equivalent to the following:
\begin{align}
(N/L) &\ge \frac{4}{c}\log(NK/L), \label{eq:NL1}\\
(N/L) &\ge \frac{\log_2 e}{C}\log(NK/L). \label{eq:NL2}
\end{align}
We can immediately see that if \eqref{eq:NL1} is satisfied, \eqref{eq:NL2} is also satisfied since $c \le C$.

Therefore, 
%applying condition \eqref{eq:NL1} in Lemma~\ref{lem:concentrationM},
we get that if $(N/L) \ge \frac{4}{c}\log(NK/L)$, with probability of at least $1-(N/L)^{-1}$,
\begin{subequations} \label{eq:H_r}
\begin{align}
\lambda_{K}\left(\bm M_{m_r}\bm M_{m_r}^{\top} \right) &{\black \ge  cN/L},\\
\lambda_{1}\left(\bm M_{m_r}\bm M_{m_r}^{\top} \right) &{\black \le  CN/L}.
\end{align}
\end{subequations}

%\begin{lemma} \label{lem:concentrationM1}
%	Consider the definition of $\bm H_r$ given by \eqref{eq:Hi}. Suppose $(N/L) \ge %\frac{4}{c}\log(NK/L)$. Also assume that the columns of $\bm M$ are generated from a %continuous distribution. Then, there exists constants $c$ and $C$ where $c \le C$ %depending only on distribution of the columns of $\bm M$ such that the following holds %true :
%	\begin{align*}
%	{\sf Pr}\left(\lambda_{K}(\bm H_r)  \ge c(N/L)\right) &\ge 1-(N/L)^{-1},\\
%	{\sf Pr}(\lambda_{1}(\bm H_r) \le C(N/L)) &\ge 1-(N/L)^{-1}.
%	\end{align*}
%\end{lemma}
%
%The proof of the lemma is given in Section~\ref{supp:lem_matrixchernoff}.

%From Lemma \ref{lem:concentrationM}, we can see that there exists constants $c$ and $C$ such that with probability atleast $1-\left(K\exp(-c(N/4L))+\frac{K}{2^{CN/L}}\right)$, the following holds:
%\begin{align*}
%\lambda_{K}(\bm H_i) \ge c(N/L), \quad \lambda_{1}(\bm H_i) \le C(N/L)
%\end{align*}

Using the above result, we can also have
\begin{subequations} \label{eq:H_2r}
\begin{align}
\lambda_{K} ([\bm M_{\ell_r},\bm M_{\ell_{r+1}}][\bm M_{\ell_r},\bm M_{\ell_{r+1}}]^{\top}) \ge {\black 2cN/L},\\
\lambda_{1} ([\bm M_{\ell_r},\bm M_{\ell_{r+1}}][\bm M_{\ell_r},\bm M_{\ell_{r+1}}]^{\top}) \le {\black 2CN/L},
\end{align}
\end{subequations}
each holding with probability at least $1-(2N/L)^{-1}${\black , if $(2N/L) \ge  \frac{4}{c}\log(2NK/L)$}.

Thus, applying \eqref{eq:H_r} and \eqref{eq:H_2r} in \eqref{eq:SS}, we get
\begin{align*}
\lambda_{K} (\bm C_r^{\top}\bm C_r)
&\ge \lambda_{K}^2 (\bm B)\lambda_{K} ([\bm M_{\ell_r},\bm M_{\ell_{r+1}}][\bm M_{\ell_r},\bm M_{\ell_{r+1}}]^{\top})\lambda_{K} (\bm M_{m_r}\bm M_{m_r}^{\top})\\
&\ge \lambda_{K}^2 (\bm B) c(2N/L)c(N/L) = 2\lambda_{K}^2 (\bm B){\black c^2N^2/L^2},\\
&= 2\sigma_{\min}^2 (\bm B){\black c^2N^2/L^2},
\end{align*}
with probability at least $1-(2N/3L)^{-1}$ where we have used the facts that $\lambda_{1}((\bm A\bm B)^{-1}) \le \lambda_{1}(\bm A^{-1})\lambda_{1}(\bm B^{-1})$ and $\lambda_{1}(\bm A^{-1}) = 1/\lambda_{K}(\bm A)$ for obtaining the first inequality and used $\lambda_{K} (\bm B)=\sigma_{\min} (\bm B)$ for the last equality.

Similarly, we can get the below with probability at least $1-(2N/3L)^{-1}$:
\begin{align*}
\lambda_{1} (\bm C_r^{\top}\bm C_r) 
&\le \lambda_{1}^2 (\bm B)\lambda_{1} ([\bm M_{\ell_r},\bm M_{\ell_{r+1}}][\bm M_{\ell_r},\bm M_{\ell_{r+1}}]^{\top})\lambda_{1} (\bm M_{m_r}\bm M_{m_r}^{\top})\\
&\le \lambda_{1}^2 (\bm B) C(2N/L)C(N/L) = 2\lambda_{1}^2 (\bm B){\black C^2N^2/L^2},\\
&=2\sigma_{\max}^2 (\bm B){\black C^2N^2/L^2}.
\end{align*}

Therefore, by taking union bound over every $r \in [L-1]$, with probability at least $1-(3L(L-1)/N)$, the following equations hold simultaneously for every $r$:
\begin{align}
\sigma_{\min}([\bm P^{\top}_{\ell_r,m_r},\bm P^{\top}_{\ell_{r+1},m_r}]^{\top})=\sigma_{\min}(\bm C_r)
&=  \sqrt{\lambda_{K} (\bm C_r^{\top}\bm C_r)} \ge \sqrt{2}\sigma_{\min} (\bm B){\black cN/L} \label{eq:lambdamin},\\
\sigma_{\max}([\bm P^{\top}_{\ell_r,m_r},\bm P^{\top}_{\ell_{r+1},m_r}]^{\top})=\sigma_{\max}(\bm C_r)
&=  \sqrt{\lambda_{1} (\bm C_r^{\top}\bm C_r)}\le \sqrt{2}\sigma_{\max} (\bm B){\black CN/L} \label{eq:lambdamax}.
\end{align}

The equations \eqref{eq:lambdamin} and \eqref{eq:lambdamax} immediately follow that the below holds with probability at least $1-(3L(L-1)/N)$:
\begin{align*}
\sigma_K &= \underset{r\in \{1,\ldots,L-1\}}{\min}~ \sigma_{\min}([\bm P^{\top}_{\ell_r,m_r},\bm P^{\top}_{\ell_{r+1},m_r}]^{\top})\\
&\ge \sqrt{2}\sigma_{\min} (\bm B){\black cN/L} ,\\
\sigma_1 &= \underset{r\in \{1,\ldots,L-1\}}{\max}~ \sigma_{\max}([\bm P^{\top}_{\ell_r,m_r},\bm P^{\top}_{\ell_{r+1},m_r}]^{\top})\\
&\le \sqrt{2}\sigma_{\max} (\bm B){\black CN/L} .
\end{align*}
{\black Combining the conditions to obtain \eqref{eq:H_r} and \eqref{eq:H_2r}, we get that the above results hold true if $(N/L) \ge \frac{4}{c}\log(NK/L)$.}

\begin{comment}
\section{Proof of Fact~\ref{fact:eigen}}
	Let $\bm v_k \in \mathbb{R}^{N}$ be the eigenvector of the product $\bm W_1\bm W_2 \in \mathbb{R}^{N \times N}$ corresponding to the eigenvalue $\bm \lambda_k$. It follows that 
	\begin{align} \label{eq:eigen}
	\bm W_1 \bm W_2 \bm v_k = \lambda_k \bm v_k.
	\end{align}
	 After multiplying $\bm W_2 \in \mathbb{R}^{K \times N}$ on both sides of \eqref{eq:eigen}, we get
	\begin{align*}
	\bm W_2\bm W_1 \bm W_2 \bm v_k &= \bm W_2\lambda_k \bm v_k, \\
	\implies \bm W_2\bm W_1 (\bm W_2 \bm v_k) &= \lambda_k (\bm W_2\bm v_k),\\
	\implies \bm W_2\bm W_1\bm w_k &= \lambda_k \bm w_k,
	\end{align*}
	where $\bm w_k = \bm W_2 \bm v_k \in \mathbb{R}^K$ is the eigenvector of $\bm W_2\bm W_1$ corresponding to the eigenvalue $\lambda_k$. Note that $\bm w_k = \bm W_2 \bm v_k \neq \bm 0$ since $\bm W_1 \bm W_2 \bm v_k = \lambda_k \bm v_k \neq \bm 0$. It implies that all the nonzero eigenvalues $\lambda_1,\ldots,\lambda_K$ of $\bm W_1 \bm W_2$  are the eigenvalues of $\bm W_2 \bm W_1$ also.
	
	This completes the proof.

%\section{Proof of Lemma \ref{lem:concentrationM1}} \label{supp:lem_matrixchernoff}
%Since the columns of $\bm M$ are assumed to be generated from a continuous distribution, we %have
%\begin{align*}
%\mathbb{E}[\bm H_r]  = \frac{N}{L}\mathbb{E}[\bm M_r(:,1)\bm M_r(:,1)^{\top}]
%\end{align*}
% and  $\mathbb{E}[\bm H_r]$ is positive-definite for every $r \in \{1,\dots,L\}$. Then we %have
%\begin{align*}
%\lambda_{K} (\mathbb{E}[\bm H_r])= O(N/L),\quad \lambda_{1} (\mathbb{E}[\bm H_r]) =O(N/L).
%\end{align*}
%We apply the following lemma from \cite{Panov2017consistent} in order to bound %$\lambda_1(\bm H_r)$ and $\lambda_K(\bm H_r)$:
%\begin{lemma} \cite{Panov2017consistent}\label{lem:concentrationM}
%	There exists constants $c$ and $C$ depending only on distribution of the columns of %$\bm M$ such that
%	\begin{align*}
%	{\sf Pr}\left(\lambda_{K}\left(\sum_{n=1}^{N/L}\bm M_r(:,n)\bm M_r(:,n)^{\top} \right) %\le c(N/L)\right) &\le K\exp(-c(N/4L)),\\
%	{\sf Pr}\left(\lambda_{1}\left(\sum_{n=1}^{N/L}\bm M_r(:,n)\bm M_r(:,n)^{\top} \right) %\ge C(N/L)\right) &\le \frac{K}{2^{CN/L}}.
%	\end{align*}
%\end{lemma}
%
%To simplify the bound on the probbaility, let us find the conditions at which the following %is satisfied:
%\begin{align}
%K\exp(-c(N/4L)) &\le (N/L)^{-1} \label{eq:KNL1},\\
%\quad\frac{K}{2^{CN/L}} &\le (N/L)^{-1}.\label{eq:KNL2}
%\end{align}
%
%The conditions \eqref{eq:KNL1} and \eqref{eq:KNL2} are equivalent to the following:
%\begin{align}
%(N/L) &\ge \frac{4}{c}\log(NK/L), \label{eq:NL1}\\
%(N/L) &\ge \frac{\log_2 e}{C}\log(NK/L). \label{eq:NL2}
%\end{align}
%We can immediately see that if \eqref{eq:NL1} is satisfied, \eqref{eq:NL2} is also %satisfied since $c \le C$.
%
%Therefore, 
%%applying condition \eqref{eq:NL1} in Lemma~\ref{lem:concentrationM},
%we get that if $(N/L) \ge \frac{4}{c}\log(NK/L)$, with probability atleast $1-(N/L)^{-1}$,
%\begin{align*}
%\lambda_{K}\left(\sum_{n=1}^{N/L}\bm M_r(:,n)\bm M_r(:,n)^{\top} \right) &\le c(N/L),\\
%\lambda_{1}\left(\sum_{n=1}^{N/L}\bm M_r(:,n)\bm M_r(:,n)^{\top} \right) &\ge C(N/L).
%\end{align*}
\end{comment}

\section{Proof of Lemma \ref{lem:Mestim} } \label{sup:spanoisy}
We consider the given noisy NMF model:
\begin{align*}
    {\widehat{\bm U}^{\top}} = {\bm O^{\top}\bm G^{\top}}\bm M+\bm N.
\end{align*}
{\black It is given that the assumptions in Proposition \ref{lem:Lm} hold true. This immediately means that $\bm M$ satisfies $\varepsilon$-separability (see Definition \ref{def:sep}), i.e., there exists a set of indices $ \bm \varLambda:=\{q_1,\dots,q_K\}$} such that
$
\bm M(:,{\black \bm \varLambda}) = \bm{I}_{K} + \bm{E},
$ where
$\bm{E} \in \mathbb{R}^{K \times K}$ is the error matrix with 
$
\|\bm{E}(:,k)\|_{2}  \le  \varepsilon
$
and $\bm{I}_{K}$ is the identity matrix of size $K\times K$. 
Without loss of generality, we let {\black $\bm \varLambda=\{1,\ldots,K\}$}. Therefore, we have
\begin{align*}
{\widehat{\bm U}^{\top}} &= {\bm O^{\top}\bm G^{\top}}\bm M+\bm N = 
 {\bm O^{\top}\bm G^{\top}} [\bm{I}_{K} + \bm E ~,~ \widetilde{\bm M}]+\bm N \\
&= {\bm O^{\top}\bm G^{\top}} [\bm{I}_{K} ,\widetilde{\bm M}] + [{\bm O^{\top}\bm G^{\top}} \bm{E} , \bm{0}]+{\bm N},
\end{align*}
where the zero matrix $\bm{0}$ has the same dimension of $\widetilde{\bm{M}}$.

Let $\widetilde{\bm N} = [{\bm O^{\top}\bm G^{\top}} \bm{E} , \bm{0}]+{\bm N}$.  This gives us the noisy NMF model as 
\begin{align} \label{eq:noisynmf_spa}
{\widehat{\bm U}^{\top}} = \underbrace{{\bm O^{\top}\bm G^{\top}}}_{\bm W} \underbrace{[\bm{I}_{K} ,\widetilde{\bm M}]}_{\bm M} + \widetilde{\bm N}.
\end{align}

Then, the below holds for any $k \in [K]$:
\begin{align}
\| \widetilde{\bm{N}}(:,k) \|_2 & \le \|\bm G \bm O\|_2 \|\bm{E}(:,k)\|_2+ \|{{\bm N}}(:,k)\|_2, \nonumber\\
& \le \sigma_{\rm max}(\bm G)\varepsilon + \zeta := \widetilde{\zeta}(\bm G), \label{n1}
\end{align}
where the first inequality is by applying triangle inequality and the second inequality is by using the orthogonality of $\bm O$ and also by using the given assumption $\|{{\bm N}}(:,k)\|_2 \le \zeta$.

We utilize the below lemma from \cite{Gillis2012} which characterizes the estimation accuracy of SPA under the NMF model \eqref{eq:noisynmf_spa}.

\begin{lemma}\cite{Gillis2012} \label{thm:spa}
	Consider the noisy model in \eqref{eq:noisynmf_spa} and suppose that $\bm W$ is full rank and $\bm M$ satisfies the simplex constraints, i.e., $\bm M \ge \bm 0 , \bm 1^{\top}\bm M = \bm 1^{\top}$. Let each column of $\widetilde{\bm N}$ satisfies the condition $\|\widetilde{\bm N}(:,k)\|_2  \le \widetilde{\zeta}$ with
	\begin{align} \label{eq:spabound}
	\widetilde{\zeta}\le {\sigma_{\min}(\bm W)}\varrho\widetilde{\kappa}^{-1}(\bm W).
	\end{align}
	Then the \texttt{SPA} Algorithm returns indices {\black $\widehat{\bm \varDelta} = \{\hat{q}_1,\dots,\hat{q}_K\}$} such that for each $k$,
	\begin{align} \label{eq:spagillis}
	\underset{\pi}{\min}~\|{\widehat{\bm U}}({\black \hat{q}_{k}},:)-\bm W(:,\pi(k))\|_2 \le \widetilde{\kappa}(\bm W)\widetilde{\zeta},
	\end{align}
	where $\widetilde{\kappa}(\bm W):=1+80\kappa^2(\bm W)$, $\varrho := \text{\rm min}\left(\frac{1}{2\sqrt{K-1}},\frac{1}{4}\right)$ and $\pi$ is certain permutation for $k \in [K]$.
\end{lemma}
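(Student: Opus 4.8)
The plan is to reconstruct the robustness analysis of \texttt{SPA} underlying \cite{Gillis2012}, viewing the $K$ columns of the full-rank matrix $\bm W$ as the vertices of a simplex and the noiseless data $\bm W\bm M$ as points inside it. Because $\bm M=[\bm I_K,\widetilde{\bm M}]$ (after the permutation that exposes the identity submatrix) is column-stochastic, each noiseless column $\bm W\bm M(:,j)=\sum_{k=1}^K\bm M(k,j)\,\bm W(:,k)$ is a convex combination of the $\bm W(:,k)$'s, so the clean data lie in $\conv{\bm W(:,1),\ldots,\bm W(:,K)}$, with every vertex realized exactly by one of the $K$ separable (identity) columns. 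First I would fix the noiseless baseline: because $f(\bm x)=\|\bm x\|_2^2$ is strictly convex and $\bm W$ has full column rank, the map $\bm m\mapsto\|\bm W\bm m\|_2^2$ is strictly convex on the simplex, so its maximum over the data lies at a vertex, and the first \texttt{SPA} selection (the maximal $\ell_2$-norm column) returns a column of $\bm W$ exactly. The orthogonal-projection (deflation) step then preserves the convex-hull structure of the residual, and a $K$-step induction recovers all vertices when $\widetilde{\bm N}=\bm 0$.

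Second, I would turn the noiseless argument into a quantitative selection margin. The key estimate is a strict norm gap: any point of the simplex whose barycentric weights are not concentrated near a single vertex has $\ell_2$-norm strictly below $\max_k\|\bm W(:,k)\|_2$, with the gap bounded below through $\sigma_{\min}(\bm W)$ (the vertices are well separated because $\bm W$ is full rank). I would make this explicit as a lower bound on $\max_k\|\bm W(:,k)\|_2-\|\bm W\bm m\|_2$ for $\bm m$ in the simplex away from $\{\bm e_k\}$, and then perturb it by $\widetilde\zeta$. Under the smallness hypothesis $\widetilde\zeta\le\sigma_{\min}(\bm W)\varrho\,\widetilde\kappa^{-1}(\bm W)$, this margin guarantees that the noisy maximal-norm index $\hat q_1$ is within $O(\widetilde\kappa(\bm W)\widetilde\zeta)$ of a genuine vertex, and in particular is never an interior point.

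The inductive deflation step is where I would concentrate effort, and I expect it to be the main obstacle. After selecting an approximate vertex and projecting the residual onto the orthogonal complement of the (noisy, unit-normalized) selected direction, I must show (i) the residual is again a near-separable matrix $\bm W^{(2)}\bm M^{(2)}+\widetilde{\bm N}^{(2)}$ whose clean factor is the deflation of $\bm W$, and (ii) the deflation neither collapses the surviving vertices nor inflates the effective condition number beyond a controlled factor. Bounding $\sigma_{\min}(\bm W^{(k)})$ from below and $\kappa(\bm W^{(k)})$ from above uniformly over $k\in[K]$ is delicate: the error committed when choosing the projection direction enters the geometry of the next iteration, and these single-step amplifications multiply over the recursion. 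This is exactly where the $\kappa^2(\bm W)$ dependence, and thus $\widetilde\kappa(\bm W)=1+80\kappa^2(\bm W)$, is forced.

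Finally I would assemble the induction: under $\widetilde\zeta\le\sigma_{\min}(\bm W)\varrho\,\widetilde\kappa^{-1}(\bm W)$ the margin and deflation bounds propagate to all $K$ steps, so \texttt{SPA} returns $\widehat{\bm\varDelta}=\{\hat q_1,\ldots,\hat q_K\}$ that matches the vertices up to a permutation $\pi$, with $\|\widehat{\bm U}(\hat q_k,:)-\bm W(:,\pi(k))\|_2\le\widetilde\kappa(\bm W)\widetilde\zeta$ for each $k$, as claimed. The constant $\varrho=\min(1/(2\sqrt{K-1}),1/4)$ would emerge from the worst-case margin at a single deflation step, while the numerical factor $80$ in $\widetilde\kappa$ would be the aggregate of the geometric constants accumulated across the margin and deflation estimates.
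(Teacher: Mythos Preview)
This lemma is not proved in the paper; it is quoted verbatim from \cite{Gillis2012} and used as a black box in the proof of Lemma~\ref{lem:Mestim}. There is therefore no ``paper's own proof'' to compare your proposal against. Your sketch is a faithful high-level outline of the Gillis--Vavasis argument (strict convexity of $\|\cdot\|_2^2$ on the simplex for the noiseless base case, a quantitative selection margin controlled by $\sigma_{\min}(\bm W)$, and an induction over the deflation steps with careful tracking of how the residual condition number degrades), but reproducing that analysis is outside the scope of this paper and not something the authors attempt.
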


%using \eqref{n1}, the bound \eqref{eq:spabound} in Lemma~\ref{thm:spa} can be written as
%\begin{align} \label{eq:cond_spa}
%{\sigma_{\rm max}(\bm W)\varepsilon} + \zeta \leq 
%{\sigma_{\min}(\bm W)}\varrho \widetilde{\kappa}(\bm W)^{-1} .
%\end{align}

%By using the assumption that $\varepsilon \le \frac{\varrho %}{2\kappa(\bm W)\widetilde{\kappa}}$, the condition %\eqref{eq:cond_spa} can be expressed as
%\begin{align} \label{eq:cond_spa2}
% \zeta \leq 
%\frac{{\sigma_{\min}(\bm W)}\varrho \widetilde{\kappa}(\bm %W)^{-1}}{2} .
%\end{align}
%
%Together with condition \eqref{eq:cond_spa2} and by substituting %$ \widetilde{\zeta}(\bm W)=\sigma_{\rm max}(\bm W)\varepsilon + %\zeta $ in \eqref{eq:spagillis}, we obtain the bound.% in Lemma %\ref{lem:spanoisy}.

   Therefore, we can apply Lemma \ref{thm:spa} to the the noisy model in \eqref{eq:noisynmf_spa} and obtain an estimate $\widehat{\bm G} = \widehat{\bm U}({\black \widehat{\bm \varDelta}},:)$ by applying \texttt{SPA} such that
\begin{align} \label{eq:estimG}
    \|\widehat{\bm G}-\bm \Pi\bm G\bm O \|_{\rm F} \le \sqrt{K}\widetilde{\kappa}( \bm G)\widetilde{\zeta}(\bm G).
\end{align}
In the above, we applied $\bm W = \bm O^{\top}\bm G^{\top}$, used the norm equivalence to convert from $\ell_2$-norm to the Frobenius norm, used the fact that $\sigma_{\min}(\bm G \bm O) = \sigma_{\min}(\bm G)$ and $\sigma_{\max}(\bm G \bm O) = \sigma_{\max}(\bm G)$ due to the orthogonality of the matrix $\bm O$ and $\bm \Pi$ is a permutation matrix such that $\bm \Pi = \underset{ \widetilde{\bm \Pi}}{{\rm arg~min}}~\|\widehat{\bm G}-\widetilde{\bm \Pi}\bm G\bm O\|_{\rm F}$. Also, the condition in Lemma \ref{thm:spa} can be represented as
\begin{align} \label{eq:cond_zetaepsilon1}
	\widetilde{\zeta}(\bm G) = \sigma_{\rm max}(\bm G)\varepsilon + \zeta \le \frac{\sigma_{\min}(\bm G)\varrho}{\widetilde{\kappa}(\bm G)}.
	\end{align}
%where we have used the fact that $\sigma_{\min}(\bm O^{\top}\bm G) = \sigma_{\min}(\bm G)$ and $\sigma_{\max}(\bm O^{\top}\bm G) = \sigma_{\max}(\bm G)$ due to the orthogonality of the matrix $\bm O$.

Next, we proceed to characterize the singular values of $\bm G$ to express the above bound in more intuitive way. To this end, we have the following lemma: 
\begin{lemma} \label{lem:sigmaUG}
The below relations hold:
\begin{align*}
          \sigma_{\max}(\bm G) &= 1/\sigma_{\min}([\bm M_T, \bm M_{T+1}]) \le \frac{1}{\alpha_{\min}}, \\ \sigma_{\max}(\bm G) &\ge \sigma_{\min}(\bm G) = 1/\sigma_{\max}([\bm M_T, \bm M_{T+1}])\ge \frac{1}{\sqrt{2K}\alpha_{\max}},\\ \kappa(\bm G) &\le \sqrt{2K}\gamma.%,~~  \sigma_{\max}(\bm U) \ge 1.
\end{align*}

\end{lemma}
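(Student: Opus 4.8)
The plan is to reuse the noiseless structure already established in Theorem~\ref{prop:subspace_estim} and in the proof of Lemma~\ref{lem:sigmamaxUr}. Recall that in the $\bm A=\bm P$ case, Algorithm~\ref{algo:proposed} initializes (lines~\ref{algoline:Ct}--\ref{algoline:Ut}) by taking the top-$K$ SVD of $\bm C_T = [\bm P_{T,m_T}^{\top},\, \bm P_{T+1,m_T}^{\top}]^{\top} = [\bm M_T, \bm M_{T+1}]^{\top}\bm B\bm M_{m_T}$, and Theorem~\ref{prop:subspace_estim} shows that the resulting semi-orthogonal factor $[\bm U_T^{\top},\, \bm U_{T+1}^{\top}]^{\top}$ satisfies $[\bm U_T^{\top},\, \bm U_{T+1}^{\top}]^{\top} = [\bm M_T, \bm M_{T+1}]^{\top}\bm G$ for exactly the same nonsingular $\bm G$ that appears in $\widehat{\bm U}=\bm M^{\top}\bm G$ (this is the identity used repeatedly in the proof of Lemma~\ref{lem:sigmamaxUr}). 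So the first step is simply to invoke this equality; semi-orthogonality of $[\bm U_T^{\top},\, \bm U_{T+1}^{\top}]^{\top}$ and its being tall both follow from the SVD construction together with ${\rm rank}(\bm M_\ell)={\rm rank}(\bm B)=K$ and $K\le |\mathcal{S}_\ell| = N/L$.

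The second step is to apply Fact~\ref{fact:orthU} with $\overline{\bm U}=[\bm U_T^{\top},\, \bm U_{T+1}^{\top}]^{\top}$, $\overline{\bm M}=[\bm M_T, \bm M_{T+1}]$, and $\overline{\bm G}=\bm G$. This immediately yields $\sigma_{\max}(\bm G)=1/\sigma_{\min}([\bm M_T, \bm M_{T+1}])$ and $\sigma_{\min}(\bm G)=1/\sigma_{\max}([\bm M_T, \bm M_{T+1}])$, which are the two equalities asserted in the lemma.

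The third step is to substitute the bounds on the singular values of the horizontally stacked membership matrix given by Lemma~\ref{lem:betaminmax}, namely $\sigma_{\min}([\bm M_T, \bm M_{T+1}])\ge \beta_{\min}=\alpha_{\min}$ and $\sigma_{\max}([\bm M_T, \bm M_{T+1}])\le \beta_{\max}=\sqrt{2K}\alpha_{\max}$. These give $\sigma_{\max}(\bm G)\le 1/\alpha_{\min}$ and $\sigma_{\min}(\bm G)\ge 1/(\sqrt{2K}\alpha_{\max})$. Finally, forming the ratio, $\kappa(\bm G)=\sigma_{\max}(\bm G)/\sigma_{\min}(\bm G)=\sigma_{\max}([\bm M_T, \bm M_{T+1}])/\sigma_{\min}([\bm M_T, \bm M_{T+1}])\le \sqrt{2K}\alpha_{\max}/\alpha_{\min}=\sqrt{2K}\gamma$, where the last equality is the definition $\gamma=\max_r\kappa(\bm M_r)=\alpha_{\max}/\alpha_{\min}$.

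There is essentially no technical obstacle here; the lemma is a short corollary of material already proved. The only point requiring care is the bookkeeping that the $\bm G$ in the statement is genuinely the matrix produced at the algorithm's initialization and that it is propagated unchanged through every \texttt{PairStitch} call — but this was already established in Theorem~\ref{prop:subspace_estim}, so this proof merely quotes it. Hence I expect the write-up to be just a few lines.
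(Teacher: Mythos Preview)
Your proposal is correct and follows essentially the same approach as the paper: invoke the semi-orthogonality of $[\bm U_T^{\top},\bm U_{T+1}^{\top}]^{\top}=[\bm M_T,\bm M_{T+1}]^{\top}\bm G$ from the SVD initialization, apply Fact~\ref{fact:orthU} to obtain the two equalities, and then substitute the bounds from Lemma~\ref{lem:betaminmax} to get the inequalities and the condition-number bound. The paper's proof is exactly this three-step argument.
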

\begin{proof}
Recall that, under noiseless case, Algorithm~\ref{algo:proposed} first directly estimates $\bm U_{T}$ and $\bm U_{T+1}$ by performing the top-$K$ SVD to ${\bm C}_T =  [\bm P_{T,m_T}^{\top}, \bm P_{T+1,m_T}^{\top}]^{\top}$ where
\begin{align} \label{eq:CTsvd}
\bm C_T = [{\bm U}_{T}^{\top}, {\bm U}_{T+1}^{\top}]^{\top}\bm \Sigma_T{\bm V^{\top}_{m_T}}.
   % \bm U_T = \bm M_T \bm \Theta \quad \text{and} \quad \bm U_{T+1} = \bm M_{T+1} \bm \Theta.
\end{align}

The above relation implies that,
$
[{\bm U}_{T}^{\top}, {\bm U}_{T+1}^{\top}]^{\top} =  [\bm M_T, \bm M_{T+1}]^{\top}\bm G
$
holds where $[{\bm U}_{T}^{\top}, {\bm U}_{T+1}^{\top}]^{\top}$ is semi-orthogonal.

Applying Fact~\ref{fact:orthU}, we can obtain
\begin{align*}
 \sigma_{\max}(\bm G) &= 1/\sigma_{\min}([\bm M_T, \bm M_{T+1}]) \quad \text{and}\\
  \sigma_{\min}(\bm G) &= 1/\sigma_{\max}([\bm M_T, \bm M_{T+1}]).
 \end{align*}
%  Utilizing {\black Lemma~\ref{lem:betaminmax}} on the above, we can further have
%  \begin{align*}
%      \sigma_{\max}(\bm G) \le 1/\beta_{\min} \quad \text{and} \quad \sigma_{\min}(\bm G) \ge 1/\beta_{\max}.
%  \end{align*}
 By applying Lemma~\ref{lem:betaminmax}, we can finally get
 \begin{align*}
      \sigma_{\max}(\bm G) \le 1/\alpha_{\min},& \quad \sigma_{\min}(\bm G) \ge 1/(\sqrt{2K}\alpha_{\max}) \\
        \kappa(\bm G) &\le \sqrt{2K}\gamma.
 \end{align*}

\end{proof}
Applying Lemma~\ref{lem:sigmaUG}, we have
\begin{align*}
\widetilde{\kappa}(\bm G)&=1+80\kappa^2(\bm G) \le 1+160K\gamma^2 := \widetilde{\kappa}(\bm M),\\
     \widetilde{\zeta}(\bm G)&=\sigma_{\rm max}(\bm G)\varepsilon + \zeta \le \frac{\varepsilon}{\alpha_{\min}} + \zeta .
\end{align*}
Hence the bound in \eqref{eq:estimG} can be written as
\begin{align} \label{eq:estimG1}
    \|\widehat{\bm G}-\bm \Pi\bm G\bm O \|_{\rm F} \le \sqrt{K}\widetilde{\kappa}(\bm M)({\varepsilon}/{\alpha_{\min}} + \zeta),
\end{align}
and the condition \eqref{eq:cond_zetaepsilon1} can be rewritten as:
\begin{align} \label{eq:cond_zetaepsilon2}
	%\zeta\le \frac{\varrho}{2\sqrt{2K}\alpha_{\max}\widetilde{\kappa}(\bm M)}\quad \text{and}\quad \varepsilon \le  \frac{\varrho }{2\sqrt{2K}\gamma\widetilde{\kappa}(\bm M)},
	\frac{\varepsilon}{\alpha_{\min}} + \zeta \le \frac{\varrho}{\sqrt{2K}\alpha_{\max}\widetilde{\kappa}(\bm M)}.
	\end{align}
%where $\widetilde{\kappa}(\bm M) = 1+160K\gamma^2$ and $\varrho=\text{\rm min}\left(\frac{1}{2\sqrt{K-1}},\frac{1}{4}\right)$.
Using the given assumptions on $\varepsilon$ and $\zeta$, we can see that \eqref{eq:cond_zetaepsilon2} gets satisfied.

%We also get $\widetilde{\zeta}(\bm G)=\sigma_{\rm max}(\bm G)\varepsilon + \zeta \le \frac{\varepsilon}{\alpha_{\min}} + \zeta$ by applying Lemma~\ref{lem:sigmaUG}. 
Now that we have $\bm G$ estimated as given by \eqref{eq:estimG1}, the next step is to estimate the matrix $\bm M$.
%Given the estimates of $\bm G$ and $\bm U$, 
Algorithm~\ref{algo:proposed} estimates $\bm M$ via least squares.
%We have characterized the estimation error bounds of $\bm U$ and $\bm G$ as given by \eqref{eq:Ubound} and \eqref{eq:lem_spa_mod}, respectively.
%\section{Proof of Lemma~\ref{lem:Mestim}} \label{app:Mestim}
To characterize this step, we consider the following lemma:
\begin{lemma}\cite{Wedin1973Perturbation}\label{lem:leastsquare1}
	Consider the noiseless model $\bm X = \bm W \bm H$. Suppose that $\widehat{\bm X}$ and $\widehat{\bm W}$ are the noisy estimates of $\bm X$ and $\bm W$, respectively.  Assume that there exist constants $\phi_1,\phi_2 > 0$ such that 
	$\|\widehat{\bm W}-\bm W\|_2 \le {\phi_1}\|\bm W\|_2,\quad \|\widehat{\bm X}-\bm X\|_2 \le \phi_2\|\bm X\|_2$ and $\phi_1\kappa(\bm W) < 1$.
	Then we have 
	$${\frac{\|\widehat{\bm H}-\bm H\|_2}{\|\bm H\|_2}} \le \frac{\kappa(\bm W)({\phi_1}+{\phi_2})}{1-{\phi_1}\kappa(\bm W)} + \kappa(\bm W)\phi_1,$$ where  $\widehat{\bm H}$ is the least squares solution to the problem $\widehat{\bm X}=\widehat{\bm W}\bm H$.
\end{lemma}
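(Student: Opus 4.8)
The statement is the classical least-squares / pseudo-inverse perturbation bound (due to Wedin), so the plan is to reconstruct its standard proof rather than invent something new. First I would record that, since $\bm W$ has full column rank and $\phi_1\kappa(\bm W)<1$, Weyl's perturbation inequality for singular values gives
$$\sigma_{\min}(\widehat{\bm W}) \ge \sigma_{\min}(\bm W) - \|\widehat{\bm W}-\bm W\|_2 \ge \sigma_{\min}(\bm W)\bigl(1-\phi_1\kappa(\bm W)\bigr) > 0,$$
so $\widehat{\bm W}$ is also full column rank and $\|\widehat{\bm W}^{\dagger}\|_2 = 1/\sigma_{\min}(\widehat{\bm W}) \le \bigl[\sigma_{\min}(\bm W)(1-\phi_1\kappa(\bm W))\bigr]^{-1}$. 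In the noiseless consistent model one has $\bm H = \bm W^{\dagger}\bm X$, and the least-squares solution is $\widehat{\bm H} = \widehat{\bm W}^{\dagger}\widehat{\bm X}$, which is the representation I would work with throughout.

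Next I would split the error by adding and subtracting $\widehat{\bm W}^{\dagger}\bm X$:
$$\widehat{\bm H} - \bm H = \widehat{\bm W}^{\dagger}\bigl(\widehat{\bm X}-\bm X\bigr) + \bigl(\widehat{\bm W}^{\dagger}-\bm W^{\dagger}\bigr)\bm X .$$
For the first term I would use $\|\widehat{\bm X}-\bm X\|_2 \le \phi_2\|\bm X\|_2 \le \phi_2\|\bm W\|_2\|\bm H\|_2$ together with the bound on $\|\widehat{\bm W}^{\dagger}\|_2$, producing a contribution of at most $\tfrac{\kappa(\bm W)\phi_2}{1-\phi_1\kappa(\bm W)}\|\bm H\|_2$. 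For the second term I would invoke the pseudo-inverse perturbation identity underlying Lemma~\ref{lem:pseudoinverse}: expand $\widehat{\bm W}^{\dagger}-\bm W^{\dagger}$ as $-\widehat{\bm W}^{\dagger}(\widehat{\bm W}-\bm W)\bm W^{\dagger}$ plus correction terms involving the projectors $\bm I - \bm W\bm W^{\dagger}$ and $\bm I - \widehat{\bm W}^{\dagger}\widehat{\bm W}$, and then use (i) $\bm X = \bm W\bm H \in {\sf range}(\bm W)$, so $(\bm I - \bm W\bm W^{\dagger})\bm X = \bm 0$, and (ii) the full column rank of $\widehat{\bm W}$, so $\bm I - \widehat{\bm W}^{\dagger}\widehat{\bm W} = \bm 0$. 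The dominant piece then reduces to $-\widehat{\bm W}^{\dagger}(\widehat{\bm W}-\bm W)\bm H$, of norm at most $\|\widehat{\bm W}^{\dagger}\|_2\,\phi_1\|\bm W\|_2\|\bm H\|_2 \le \tfrac{\kappa(\bm W)\phi_1}{1-\phi_1\kappa(\bm W)}\|\bm H\|_2$, while the remaining correction contributes the extra $\kappa(\bm W)\phi_1\|\bm H\|_2$. Summing the three contributions and dividing through by $\|\bm H\|_2$ yields the claimed inequality.

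The only genuinely delicate step is this handling of $\widehat{\bm W}^{\dagger}-\bm W^{\dagger}$ in the second term: one must use the correct Wedin identity and carefully track which range-complement / null-space pieces survive after right-multiplication by $\bm X$. The consistency $\bm X\in{\sf range}(\bm W)$ and the preserved full column rank of $\widehat{\bm W}$ are exactly what eliminate the potentially larger ($\kappa(\bm W)^2$-order) contributions and leave only $\kappa(\bm W)$-order terms with the stated constants. Everything else is routine: submultiplicativity of $\|\cdot\|_2$, Weyl's inequality, and the identity $\kappa(\bm W)=\|\bm W\|_2\|\bm W^{\dagger}\|_2 = \sigma_{\max}(\bm W)/\sigma_{\min}(\bm W)$.
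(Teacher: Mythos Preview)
The paper does not prove this lemma; it is quoted from \cite{Wedin1973Perturbation} and used as a black box, so there is no paper argument to compare against. Your reconstruction is the standard Wedin argument and is correct.

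One small inconsistency in your write-up: you correctly argue that in Wedin's identity both correction terms vanish---the $(\bm I-\widehat{\bm W}^{\dagger}\widehat{\bm W})$ piece because $\widehat{\bm W}$ retains full column rank, and the $(\bm I-\bm W\bm W^{\dagger})$ piece because $\bm X=\bm W\bm H\in{\sf range}(\bm W)$---yet you then say ``the remaining correction contributes the extra $\kappa(\bm W)\phi_1\|\bm H\|_2$.'' There is no remaining correction under these hypotheses; your own decomposition actually yields the sharper bound
\[
\frac{\|\widehat{\bm H}-\bm H\|_2}{\|\bm H\|_2}\ \le\ \frac{\kappa(\bm W)(\phi_1+\phi_2)}{1-\phi_1\kappa(\bm W)},
\]
which of course implies the stated (looser) inequality. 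The additive $\kappa(\bm W)\phi_1$ term in the lemma as written is redundant for this consistent model and presumably reflects Wedin's more general formulation where the residual $\bm X-\bm W\bm H$ need not be zero.
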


By fixing $ \bm X := \bm O^{\top}\bm U^{\top}$, $\bm W:= \bm O^{\top}\bm G^{\top} \bm \Pi^{\top}$ and $\bm H:= \bm \Pi\bm M$, we can apply Lemma~\ref{lem:leastsquare1}.
By applying the matrix norm equivalence and considering the fact that the matrix $\bm O$ is orthogonal, the constants $\phi_1$ and $\phi_2$ in Lemma~\ref{lem:leastsquare1} takes the following values:
\begin{align*}
    \phi_1 &=\sqrt{K}\widetilde{\kappa}( \bm M)({\varepsilon}/{\alpha_{\min}} + \zeta)/\sigma_{\max}(\bm G),~\phi_2 = \zeta/\sigma_{\max}(\bm U),
\end{align*}
where the first equality is from \eqref{eq:estimG1},  the second equality is by using the given assumption that $\|\widehat{\bm U}-\bm U\bm O\|_{\rm F} = \|\bm N\|_{\rm F} \le \zeta$, and $\widetilde{\kappa}(\bm M)=1+160\gamma^2$. %and $\widetilde{\zeta}(\bm G)=\sigma_{\rm max}(\bm G)\varepsilon + \zeta$ .

Then, by applying Lemma~\ref{lem:leastsquare1}, the estimation bound for $\bm M$ can be expressed as
\begin{align} \label{eq:Mbound1}
{\frac{\|\widehat{\bm M}-\bm \Pi\bm M\|_2}{\|\bm M\|_2}} \le \frac{\kappa(\bm G)({\phi_1}+{\phi_2})}{1-{\phi_1}\kappa(\bm G)} + \kappa(\bm G)\phi_1.
\end{align}

Using the given assumptions on $\varepsilon$ and $\zeta$, we can see that the denominator term $1-{\phi_1}\kappa(\bm G) $ is lower bounded:
%To simplify the above bound, let us assume that
\begin{align}
    1-{\phi_1}\kappa(\bm G) &=1-\left(\frac{\sqrt{K}\widetilde{\kappa}( \bm M)\kappa(\bm G)({\varepsilon}/{\alpha_{\min}} + \zeta)}{\sigma_{\max}(\bm G)}\right) ,\nonumber\\
    &= 1-\left(\frac{\sqrt{K}\widetilde{\kappa}( \bm M)({\varepsilon}/{\alpha_{\min}} + \zeta)}{\sigma_{\min}(\bm G)}\right) \nonumber\\
    &\ge 1-\left({\sqrt{2}K\alpha_{\max}\widetilde{\kappa}( \bm M)({\varepsilon}/{\alpha_{\min}} + \zeta)}\right) \nonumber\\
    &\ge 1/2.\label{eq:condphi}
\end{align}
where we have used the assumptions on $\varepsilon$ and $\zeta$ for the last inequality.
\begin{comment}
By using the assumption 
\begin{align} \label{eq:assumpzetaG}
\widetilde{\zeta}( \bm G) \le \frac{\sigma_{\min}(\bm G)}{2\sqrt{K}\widetilde{\kappa}( \bm G)},
\end{align}
the denominator term  $1-{\phi_1}\kappa(\bm G)$ can be bounded as below:
\begin{align} \label{eq:condphi}
    1-{\phi_1}\kappa(\bm G) = 1 - \frac{\sqrt{K}\widetilde{\kappa}( \bm G)\widetilde{\zeta}( \bm G) \kappa(\bm G)}{\sigma_{\max}(\bm G)} \ge 1/2.
\end{align}
\end{comment}
Note the result in \eqref{eq:condphi} also makes sure that the condition ${\phi_1}\kappa(\bm G) < 1$ in Lemma~\ref{lem:leastsquare1} is satisfied.

By applying \eqref{eq:condphi} in the bound \eqref{eq:Mbound1}, we get 
\begin{align}
{\frac{\|\widehat{\bm M}-\bm \Pi\bm M\|_2}{\|\bm M\|_2}}
&\le 2{\kappa(\bm G)({\phi_1}+{\phi_2})} + \kappa(\bm G)\phi_1 = \kappa(\bm G)(3\phi_1+2\phi_2) \nonumber\\
&= {\kappa(\bm G)\left(\frac{3\sqrt{K}\widetilde{\kappa}( \bm M)({\varepsilon}/{\alpha_{\min}} + \zeta)}{\sigma_{\max}(\bm G)}+\frac{2\zeta}{\sigma_{\max}(\bm U)}\right)}. \label{eq:Mbound2norm}
%&= {\kappa(\bm G)\sigma_{\max}(\bm G)\left(3\left(1+80\kappa^2( \bm G)\right) +2 \kapp(\bG)\sigma_{\max}(\bm G)\right)}\zeta
\end{align}
%Using Lemma~\ref{lem:sigmaUG}, we get bounds for singular values of $\bm G$.
Consider the below to lower bound $\sigma_{\max}(\bm U)$ where $\bm U^{\top} = \bm G^{\top}\bm M$:
 \begin{align}
     \sigma_{\max}(\bm U) &\ge \sigma_{\min}(\bm G)\sigma_{\max}(\bm M)\nonumber\\
     &= \frac{\sigma_{\max}(\bm M)}{\sigma_{\max}([\bm M_T, \bm M_{T+1}])} \ge 1, \label{eq:boundsingularU}
 \end{align}
 where we used Lemma~\ref{lem:sigmaUG} to obtain the first equality and used the fact that $\sigma_{\max}(\bm M) \ge \sigma_{\max}([\bm M_T, \bm M_{T+1}])$ for the last inequality.

Applying Lemma~\ref{lem:sigmaUG} and \eqref{eq:boundsingularU}, we have
%we get $\widetilde{\zeta}(\bm G)=\sigma_{\rm max}(\bm G)\varepsilon + \zeta \le \frac{\varepsilon}{\alpha_{\min}} + \zeta$, $\widetilde{\kappa}(\bm G)=1+80\kappa^2(\bm G) \le 1+160K\gamma^2$. These conditions combined with the results from Lemma~\ref{lem:sigmaUG}, we obtain
\begin{align*}
{\|\widehat{\bm M}-\bm \Pi\bm M\|_2} 
&\le %\sigma_{\max}(\bm M){\kappa(\bm G)\left(\frac{3\sqrt{K}\widetilde{\kappa}( \bm G)\widetilde{\zeta}( \bm G)}{\sigma_{\max}(\bm G)}+\frac{2\zeta}{\sigma_{\max}(\bm U)}\right)}\\
%&\le
\sigma_{\max}(\bm M)\left(3\sqrt{K}\widetilde{\kappa}(\bm M)\left(\frac{\varepsilon}{\alpha_{\min}}+\zeta\right)\sqrt{2K}\alpha_{\max}+2\sqrt{2K}\gamma\zeta\right)\\
&~= \sigma_{\max}(\bm M)\left(3\sqrt{2}K\widetilde{\kappa}(\bm M)\gamma\varepsilon+3\sqrt{2}K\widetilde{\kappa}(\bm M)\alpha_{\max}\zeta+2\sqrt{2K}\gamma\zeta\right)\\
&~\le \sigma_{\max}(\bm M)\left(3\sqrt{2}K\widetilde{\kappa}(\bm M)\gamma\varepsilon+4\sqrt{2}K\widetilde{\kappa}(\bm M)\gamma\zeta\right)\\
&~= \sigma_{\max}(\bm M)\sqrt{2}K\gamma\widetilde{\kappa}(\bm M)\left(3\varepsilon+4\zeta\right),
 %& = {\kappa(\bm M)\sigma_{\max}(\bm M)\left({3\left(1+80\kappa^2( \bm M)\right)\sqrt{K}\widetilde{\zeta}}{\sigma_{\max}(\bm G)} +2\zeta\right)}
\end{align*}
where $\widetilde{\kappa}(\bm M) = 1+160K\gamma^2$.

{\black 
\section{Successive Projection Algorithm (SPA)} \label{sup:spa}

%Under the separability assumption, efficient algorithms were proposed to provably identify the latent factors of the model of the form in \eqref{eq:SMF1}. One of such popular algorithms is \textit{successive projection algorithm} (SPA) which had appeared in the literature many times, proposed for different applications such as remote sensing and topic modeling \cite{VMAX,fu2014self,arora2012practical}. 
In this section, we detail the successive projection algorithm (\texttt{SPA}) \cite{Gillis2012,fu2014self,araujo2001successive} presented in Algorithm \ref{algo:SPA}.

Assume that the separability condition holds.
For the model $
\bm U^\T = \bm G^\T \bm M \in \mathbb{R}^{K \times N}
$, \texttt{SPA} estimates the anchor nodes $\bm \varLambda = \{q_1,\dots, q_K\}$ through a series of steps as follows:
\begin{subequations}\label{eq:spa}
	\begin{align}
	{q}_1& = \arg\max_{n\in[N]}~\left\|\bm U(n,:)\right\|_2^2, \label{eq:spa1}\\
	{q}_k& = \arg\max_{n\in[N]}~\left\|\bm P^\perp_{\overline{\bm G}(:,1:k-1)}\bm U(n,:)\right\|_2^2,~k>1, \label{eq:spa2}
	\end{align}
\end{subequations}
where 
$\overline{\bm G}(:,1:k-1)=[\bm U^{\top}(:,{q}_1),\ldots,\bm U^{\top}(:,{q}_{k-1})]$ and $\bm P^\perp_{\overline{\bm G}(:,1:k-1)}$ is a projector onto the orthogonal  complement of ${\rm range}(\overline{\bm G}(:,1:k-1))$.

\begin{algorithm}[t]
{\black
\caption{\small Successive Projection Algorithm (\texttt{SPA}) \cite{Gillis2012}}
\label{algo:SPA}
\begin{algorithmic}[1]
\small
	\Require $\bm U $, $K$
	%\STATE {\bfseries Output:} Estimated membership matrix $\widehat{\bm M}$.

\State Initialize $\widetilde{\bm U} \leftarrow \bm U $;

\ForEach{$k=1:K$}
    \State ${q}_k\leftarrow \arg\max_{n\in[N]}~\|\widetilde{\bm U}(n,:)\|_{\rm 2}^2$;\algorithmiccomment{Select anchor node }
    
    \State $\bm u_k \leftarrow \widetilde{\bm U}({q}_k,: )$;
    
    \State $\widetilde{\bm U} \leftarrow   \widetilde{\bm U}\left(\bm I_K - \frac{\bm u_k^{\top} \bm u_k}{\|\bm u_k\|_2}\right)  $;
\EndForEach

\State $\bm G \leftarrow \bm U(\{{q}_1,\dots,{q}_K\},:) $;

%which outputs index set $\Delta=\{l_1,\dots, l_K\}$;

% $\widehat{\bm E} \leftarrow \widehat{\bm U}(\Delta,:)$;
 
% $\widehat{\bm M} \leftarrow \arg\min_{\bm M \ge 0, \bm 1^{\top}\bm M= \bm 1^{\top}}~\|\widehat{\bm U}-\bm M^{\top} \widehat{\bm E}\|_{\rm F}^2$;
 
 \State \Return  $\bm G$.
 
 %{\red This seems to be a bit out of blue. The notations are inconsistent with the paper and not introduced. Do we need it here? I feel we do not need it. Did you promise to put it here?}
\end{algorithmic}
}
\end{algorithm}

Eq. \eqref{eq:spa1} indicates that the first anchor node ${q}_1$ can be identified by finding the index of the row of $\bm U$ that has the maximum $\ell_2$ norm. This can be understood by considering the below for any $n \in [N]$,
\begin{align*}
\|{\bm U}(n,:)\|_2 &= \left\|\sum_{k=1}^K\bm G(k,:)\bm M(k,n)\right\|_2 
\leq \sum_{k=1}^K\left\|\bm G(k,:)\bm M(k,n)\right\|_2\\
&= \sum_{k=1}^K\bm M(k,n)\left\|\bm G(k,:)\right\|_2 
\leq \max_{k=1,\ldots,K}\left\|\bm G(k,:)\right\|_2,
\end{align*}
where the first inequality is due to triangle inequality, the second equality is due to the non-negativity of $\bm M$ and the last inequality is by the sum-to-one condition on the rows of $\bm M$.
In the above, all the equalities hold simultaneously if and only if $\bm M(:,n)=\bm e_k^\T$ for a certain $k$, i.e., $n \in \{q_1,\dots, q_K\}$.

After identifying the first anchor node ${q}_1$, all the rows of $\bm U$ are projected onto the orthogonal complement of $\bm U({q}_1,:)$  such that the same index ${q}_1$ will not show up in the subsequent steps. From the projected columns, the next anchor node is then identified in a similar way as shown in Eq. \eqref{eq:spa2}. Following this procedure, \texttt{SPA} can identify the anchor node set $\bm \varLambda$ in $K$ steps.

The work in \cite{Gillis2012} showed that under the following noisy model 
\begin{equation}\label{eq:SMF2}
    \widehat{\bm U}^\T = \bm G^\T \bm M +\bm N ,~\bm M\geq \bm 0,~\bm 1^\T\bm M =\bm 1^\T,
\end{equation}
where $\widehat{\bm U}$ is the noisy estimate of $\bm U$, \texttt{SPA} can still provably identify $\bm M$ under the separability condition if the noise $\bm N$ has bounded norm (see Theorem \ref{thm:spa}). 
%We can further relax the conditions for identifiability by utilizing the result in \cite{ibrahim2019crowdsourcing} (see Theorem \ref{lem:Lm}). Specifically, the deviation from separability (i.e., $\varepsilon >0$ in $\varepsilon$-separability) can be modeled as an additional component in the noise term $\bm N$. Thus, if the overall noise is bounded, then SPA can still provably identify $\bm M$.

}

\section{Permutation Fixing Procedure for Baseline Algorithms} \label{sup:baselines}

For synthetic and real data experiments, we employ the procedure in Algorithm \ref{algo:fix_perm} for the baseline algorithms to align the permutations of the estimated $\widehat{\bm M}_\ell$ over different blocks. {\black Note that this aligning procedure is not from any of the baseline papers, but is only presented for benchmarking the baselines under the diagonal EQP pattern.} {\black To explain this procedure, let us consider the diagonal EQP pattern as shown in Fig. \ref{fig:EQP_align} (a toy example with $L=3$).  In order to learn the membership matrix $\bm M = [\bm M_1, \bm M_2, \bm M_3]$ using the baselines, we may run the baseline algorithm first on $\bm A_{1,2}$ and subsequently on $\bm A_{2,3}$ to obtain the below (assuming no estimation error):
\begin{align*}
    [\bm M_1, \bm M_2] &= \texttt{BaselineAlgorithm}(\bm A_{1,2})\\
    [\overline{\bm M}_2, \overline{\bm M}_3] &= \texttt{BaselineAlgorithm}(\bm A_{2,3})
\end{align*}
 where $\overline{\bm M}_2=\bm \Pi{\bm M}_2$ and $\overline{\bm M}_3= \bm \Pi{\bm M}_3$. Here, $\bm \Pi \in \{0,1\}^{K \times K}$ is a row permutation matrix which is unknown. In order to learn the complete membership matrix, i.e., $\bm M = [\bm M_1, \bm M_2, \bm M_3]$, we need to identify this row permutation matrix $\bm \Pi$. This can be achieved by using any permutation fixing algorithms such as Hungarian algorithm \cite{jonker1986improving} by inputting $\bm M_2$ and $\overline{\bm M}_2(=  \bm \Pi{\bm M}_2)$. Algorithm \ref{algo:fix_perm} uses this idea in an iterative fashion to learn $\bm M$ using the baseline algorithms for general $L$ case. 

		\begin{figure}[H]
		\black 
		\centering
			\includegraphics[scale=0.25]{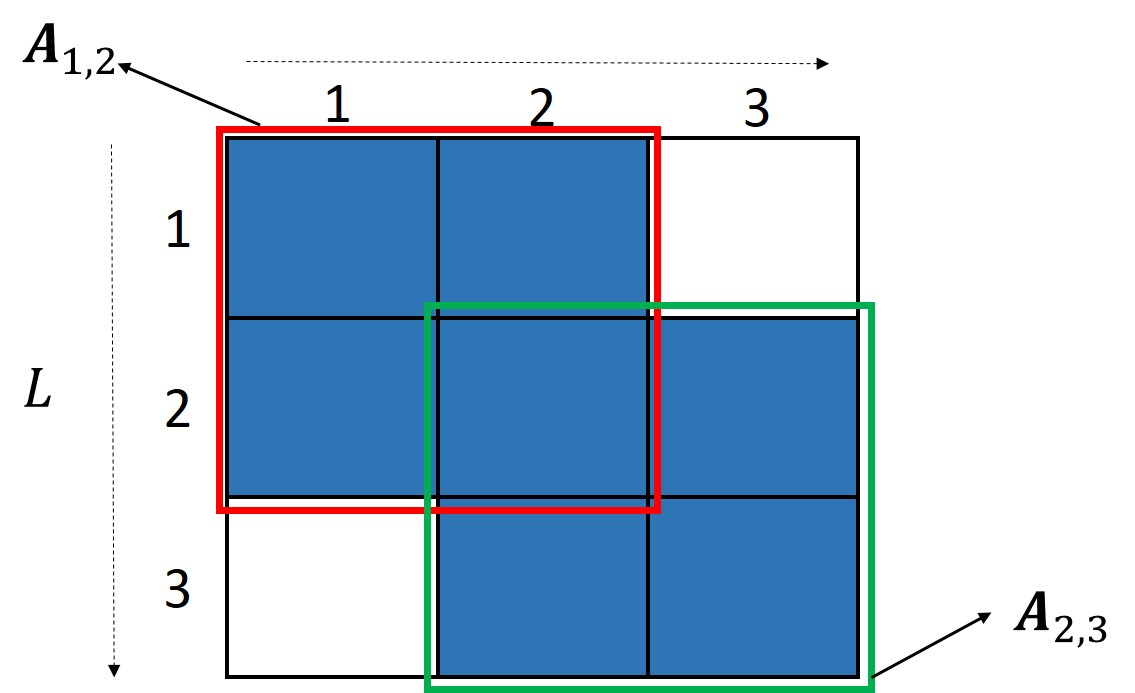}
			\caption{Adjacency matrix observed via diagonal EQP pattern ($L=3$). The blue shaded region represents the observed blocks in the adjacency matrix.}
			\label{fig:EQP_align}
		\end{figure}}

In Algorithm \ref{algo:fix_perm}, \texttt{BaselineAlgorithm} can be any graph clustering algorithm, for e.g., \texttt{GeoNMF}, {\black \texttt{SPACL},} \texttt{CD-MVSI}, \texttt{CD-BNMF}, \texttt{SC (unnorm.)}, \texttt{SC (norm.)}, \texttt{$k$-means} etc.
%{\black since you have used $\bm B$ as the interaction matrix, using $\bm B$ here is confusing in fact.}

\begin{algorithm}[H]
\small
{
\caption{\texttt{Permutation Fixing}}
\label{algo:fix_perm}
\begin{algorithmic}[1]
\Require  $L$,$K$,$\{\bm A_{\ell,\ell}\}_{\ell=1}^L$, $\{\bm A_{\ell,\ell+1}\}_{\ell=1}^{L-1}$, \texttt{BaselineAlgorithm}.
 
 \State $\bm \Pi \leftarrow \bm I_K$;
 
\ForEach{$\ell=1$ {\bfseries to} $L-1$}
	
 \State  ${\bm H_\ell} \leftarrow \begin{bmatrix} \bm A_{\ell,\ell} ~,~ \bm A_{\ell,\ell+1}\\
\bm A_{\ell,\ell+1}^{\top} ~,~ \bm A_{\ell+1,\ell+1}\end{bmatrix}$;
%			\begin{align*}
%			\bm A_m &\leftarrow\\
%			 &\arg\min_{{\bm 1}^\top\bm A_m =\bm 1^\top,\bm A_m\geq \bm 0}~\sum_{\ell\in{\cal S}_m} {\sf KL}\left(  \bm R_{m,\ell}||\bm A_m\bm D\bm A_\ell^\T \right);
%			 	\end{align*}

 \State  $[\overline{\bm M}_\ell, \overline{\bm M}_{\ell+1}] \leftarrow$ \texttt{BaselineAlgorithm}($\bm H_\ell$,$K$);
 
\If{$\ell>1$}
	
 \State  $\bm \Pi \leftarrow $ \texttt{HugarianAlgorithm}($\overline{\bm M}_\ell$,$\bm M_\ell^{(prev)}$)\cite{jonker1986improving};
\EndIf

 \State  $[{\bm M}_\ell , {\bm M}_{\ell+1}] \leftarrow \bm \Pi^{\top}[\overline{\bm M}_\ell ,\overline{\bm M}_{\ell+1}]$;

 \State  $\bm M_\ell^{(prev)} \leftarrow {\bm M}_{\ell+1}$;
 \EndForEach
 \State  $\widehat{\bm M}=[\bm M_1,\ldots,\bm M_L]$;

\State \Return   $\widehat{\bm M}$.

\end{algorithmic}
}
\end{algorithm}

\begin{table}[H]
\black 
  \centering
  \caption{$K=5$, $L=10$, $\bm \nu = [\frac{1}{5} \frac{1}{2} 1 \frac{1}{2} \frac{1}{5}]^{\top} $ and $\eta=0.1$}
   \resizebox{0.7\linewidth}{!}{
    \begin{tabular}{c|c|c|c|c|c}
    \hline
    {Graph Size} & {Metric} & \texttt{BeQuec} & \texttt{GeoNMF} & \texttt{CD-MVSI} & \texttt{CD-BNMF} \\
    \hline
    \hline
%     \multirow{3}[6]{*}{N=1000} & \textbf{MSE} & 0.7202 & 0.9317 & 0.7883 & 0.7628 \\
% \cline{2-6}          & \textbf{MRE} & 0.6906 & 0.9397 & 0.8179 & 0.7645 \\
% \cline{2-6}          & \textbf{SRC} & 0.3376 & 0.2068 & 0.1744 & 0.2471 \\
%     \hline
    \multirow{3}[6]{*}{2000} & {MSE} & \textbf{0.5992} & 0.8277 & 0.7403 & 0.6893 \\
\cline{2-6}          & {RE} & \textbf{0.5378} & 0.5695 & 0.8282 & 0.7194 \\
\cline{2-6}          & {SRC} & \textbf{0.4452} & 0.2807 & 0.2211 & 0.3074 \\
    \hline
    \multirow{3}[6]{*}{4000} & {MSE} & \textbf{0.3123} & 0.3704 & 0.7325 & 0.6662 \\
\cline{2-6}          & {RE} & \textbf{0.6213} & 0.8930 & 0.8127 & 0.7245 \\
\cline{2-6}          & {SRC} & \textbf{0.5908} & 0.5855 & 0.2165 & 0.3204 \\
    \hline
    \multirow{3}[6]{*}{8000} & {MSE} & \textbf{0.1804} & 0.1859 & 0.7090 & 0.6743 \\
\cline{2-6}          & {RE} & \textbf{0.3021} & 0.3597 & 0.8215 & 0.7251 \\
\cline{2-6}          & {SRC} & \textbf{0.7465} & 0.7417 & 0.2376 & 0.3019 \\
    \hline
    \multirow{3}[6]{*}{10000} & {MSE} & {0.1454} & \textbf{0.1435} & 0.7023 & 0.6626 \\
\cline{2-6}          & {RE} & \textbf{0.2865} & 0.3580 & 0.8198 & 0.7210 \\
\cline{2-6}          & {SRC} & \textbf{0.7715} & 0.7585 & 0.2490 & 0.3110 \\
    \hline
    \hline
    \end{tabular}%
    }
  \label{tab:synth_alpha2}%
\end{table}%

\begin{table}[H]
\black
  \centering
  \caption{$K=5$, $L=10$, $\bm \nu = [\frac{1}{5} \frac{1}{3} \frac{1}{2} \frac{1}{5} 1]^{\top} $ and $\eta=0.2$}
   \resizebox{0.7\linewidth}{!}{
    \begin{tabular}{c|c|c|c|c|c}
    \hline
    {Graph Size ($N$)} & {Metric} & \texttt{BeQuec} & \texttt{GeoNMF} & \texttt{CD-MVSI} & \texttt{CD-BNMF} \\
    \hline
    \hline
%     \multirow{3}[6]{*}{N=1000} & \textbf{MSE} & 0.8687 & 1.0436 & 0.8915 & 0.8543 \\
% \cline{2-6}          & \textbf{MRE} & 0.7334 & 0.9648 & 0.8234 & 0.8220 \\
% \cline{2-6}          & \textbf{SRC} & 0.2049 & 0.1496 & 0.1594 & 0.1477 \\
%     \hline
    \multirow{3}[6]{*}{2000} & {MSE} & \textbf{0.7742} & 0.8266 & 0.8793 & 0.8217 \\
\cline{2-6}          & {RE} & \textbf{0.7198} & 0.8494 & 0.8237 & 0.8104 \\
\cline{2-6}          & {SRC} & \textbf{0.2761} & 0.2737 & 0.1770 & 0.1669 \\
    \hline
    \multirow{3}[6]{*}{4000} & {MSE} & \textbf{0.5280} & 0.3613 & 0.8810 & 0.8320 \\
\cline{2-6}          & {RE} & \textbf{0.5868} & 0.5263 & 0.8336 & 0.8161 \\
\cline{2-6}          & {SRC} & \textbf{0.4894} & 0.5496 & 0.1655 & 0.1431 \\
    \hline
    \multirow{3}[6]{*}{8000} & {MSE} & \textbf{0.1733} & 0.2082 & 0.8875 & 0.8313 \\
\cline{2-6}          & {RE} & \textbf{0.3178} & 0.3829 & 0.8439 & 0.8170 \\
\cline{2-6}          & {SRC} & \textbf{0.6886} & 0.6618 & 0.1575 & 0.1329 \\
    \hline
    \multirow{3}[6]{*}{10000} & {MSE} & \textbf{0.1385} & 0.2794 & 0.8861 & 0.8344 \\
\cline{2-6}          & {RE} & \textbf{0.2869} & 0.4508 & 0.8444 & 0.8184 \\
\cline{2-6}          & {SRC} & \textbf{0.7124} & 0.6216 & 0.1517 & 0.1245 \\
    \hline
    \hline
    \end{tabular}%
    }
  \label{tab:synth_Bmedium}%
\end{table}%

\begin{table}[H]
\black 
  \centering
  \caption{$K=5$, $L=10$, $\bm \nu = [\frac{1}{5} \frac{1}{3} \frac{1}{2} \frac{1}{5} 1]^{\top} $ and $\eta=0.3$}
   \resizebox{0.7\linewidth}{!}{
    \begin{tabular}{c|c|c|c|c|c}
    \hline
    {Graph Size ($N$)} & {Metric} & \texttt{BeQuec} & \texttt{GeoNMF} & \texttt{CD-MVSI} & \texttt{CD-BNMF} \\
    \hline
    \hline
%     \multirow{3}[6]{*}{N=1000} & \textbf{MSE} & 0.8758 & 1.0906 & 0.8951 & 0.8703 \\
% \cline{2-6}          & \textbf{MRE} & 0.7571 & 0.9669 & 0.8144 & 0.8168 \\
% \cline{2-6}          & \textbf{SRC} & 0.2170 & 0.1247 & 0.1637 & 0.1262 \\
%     \hline
    \multirow{3}[6]{*}{2000} & {MSE} & \textbf{0.7838} & 0.9181 & 0.8811 & 0.8494 \\
\cline{2-6}          & {RE} & \textbf{0.7136} & 0.8997 & 0.8135 & 0.8182 \\
\cline{2-6}          & {SRC} & \textbf{0.2991} & 0.2246 & 0.1634 & 0.1281 \\
    \hline
    \multirow{3}[6]{*}{4000} & {MSE} & \textbf{0.5952} & 0.4236 & 0.8790 & 0.8403 \\
\cline{2-6}          & {RE} & \textbf{0.5970} & 0.6097 & 0.8274 & 0.8159 \\
\cline{2-6}          & {SRC} & \textbf{0.4367} & 0.5100 & 0.1537 & 0.1254 \\
    \hline
    \multirow{3}[6]{*}{8000} & {MSE} & \textbf{0.2748} & 0.3225 & 0.8643 & 0.8467 \\
\cline{2-6}          & {RE} & \textbf{0.4157} & 0.5030 & 0.8181 & 0.8211 \\
\cline{2-6}          & {SRC} & \textbf{0.6332} & 0.5834 & 0.1607 & 0.1025 \\
    \hline
    \multirow{3}[6]{*}{10000} & {MSE} & \textbf{0.2005} & 0.2770 & 0.8728 & 0.8456 \\
\cline{2-6}          & {RE} & \textbf{0.3385} & 0.4903 & 0.8330 & 0.8214 \\
\cline{2-6}          & {SRC} & \textbf{0.6820} & 0.6131 & 0.1548 & 0.1006 \\
    \hline
    \hline
    \end{tabular}%
    }
  \label{tab:synth_Bweak}%
\end{table}%

{\black 
\section{Additional Synthetic Data Experiments} \label{supp:more_exp}
In this section, we present additional synthetic data experiments. 

Tables \ref{tab:synth_alpha2}-\ref{tab:synth_Bweak} present the results averaged over 20 random trials for different $\bm \nu$'s (i.e., different Dirichlet parameters) and $\eta$'s.
We let entries of $\bm \nu$ to be different, which simulates the scenario where unbalanced clusters are present. The values of $\eta$ are varied to test different levels of interactions among the clusters.
In these tables, we follow the same diagonal EQP pattern as in the manuscript. Note that in most of the cases, the proposed method outperforms the baselines. 
 One can also see that all methods' performance deteriorates when $\eta$ increases from 0.1 to 0.3. Nonetheless, the proposed method keeps its margin against the baselines under all $\eta$'s.

}

%\putbib[refs]
%\end{bibunit}
\end{document}